\documentclass{article}


\PassOptionsToPackage{compress, round, sort}{natbib}
\usepackage[preprint]{neurips_2022}




\usepackage[utf8]{inputenc} 
\usepackage[T1]{fontenc}    
\usepackage{pgfplots}
\usepackage{amsthm}

\usepackage{newpxtext} 
\usepackage{textcomp} 
\usepackage[varg,bigdelims]{newpxmath}
\usepackage[scr=rsfso]{mathalfa}
\usepackage{bm} 
\usepackage[colorlinks=true,
linkcolor=purple,
citecolor=purple]{hyperref} 
\usepackage{url}            
\usepackage{booktabs}       
\usepackage{amsfonts}       
\usepackage{nicefrac}       
\usepackage{microtype}      
\usepackage{xcolor}         

\usepackage{booktabs}       
\usepackage{amsfonts}       
\usepackage{nicefrac}       
\usepackage{microtype}      
\usepackage{xcolor}         

\usepackage[linesnumbered, ruled, vlined]{algorithm2e}

\usepackage{todonotes}

\usepackage{amsmath}
\usepackage{amssymb}
\usepackage{mathtools, mleftright}
\mleftright
\usepackage{subcaption}

\usetikzlibrary{shapes.geometric}

\newenvironment{thmbis}[1]
  {%
   \addtocounter{theorem}{-1}%
   \begin{theorem}}
  {\end{theorem}}

\newtheorem{definition}{Definition}
\newtheorem{example}{Example}
\newtheorem{theorem}{Theorem}
\newtheorem{proposition}{Proposition}
\newtheorem{observation}{Observation}
\newtheorem{corollary}{Corollary}
\newtheorem{lemma}{Lemma}
\newtheorem{claim}{Claim}
\newtheorem{remark}{Remark}


\usepackage{amsmath,amsfonts,bm}









\def\eqref#1{equation~\ref{#1}}









\def\1{\bm{1}}








\def\vx{{\bm{x}}}
\def\vy{{\bm{y}}}
\def\vz{{\bm{z}}}



\DeclareMathAlphabet{\mathsfit}{\encodingdefault}{\sfdefault}{m}{sl}
\SetMathAlphabet{\mathsfit}{bold}{\encodingdefault}{\sfdefault}{bx}{n}














\newcommand\false{\mathbf{false}}
\newcommand\true{\mathbf{true}}

\newcommand{\ptime}{\text{\rm PTIME}}

\newcommand{\np}{\text{\rm NP}}

\newcommand{\M}{\mathcal{M}}

\newcommand{\es}{\mathbf{e}}

\newcommand{\cnf}{\mathsf{CNF}}

\tikzset{
    rt/.style={
		rectangle,
		fill = white,
		draw=black, 
		text centered,
		inner sep=0.5ex
		},
    rtt/.style={ 
    	rt,
    	inner sep=0.1ex
    	},
    ert/.style={ 
     	rt,
     	dashed
     	}, 
    ertt/.style={ 
        rtt,
        dashed
        }, 
    rect/.style={ 
        rectangle,
        fill = white,
        rounded corners,
        draw=black, 
        text centered,
        inner sep=0.8ex
        },
    rectw/.style={
        rect,
        draw=white
        },
    erect/.style={ 
    	rect,
    	dashed
    	},
    erectw/.style={ 
     	rectw,
     	dashed
     	},
    arrout/.style={
           ->,
           -latex,
           },
    arrin/.style={
           <-,
           latex-,
           },
    arrb/.style={
           <->,
           >=latex,
           }
}

\newcommand{\Comp}{\textsc{Comp}}
\newcommand{\feat}{\mathcal{F}}
\newcommand{\util}{\text{\sf utility}}
\newcommand{\prob}{\text{\sf Prob}}

\newcommand{\minimum}{\textsf{Compute-Minimum-SR}}
\newcommand{\minimal}{\textsf{Compute-Minimal-SR}}
\newcommand{\checksub}{\textsf{Check-Sub-SR}}
\newcommand{\checkmin}{\textsf{Check-Minimum-SR}}
\newcommand{\sbchecksub}{\textsf{SB-Check-SUB-SR}}
\newcommand{\counting}{\textsf{Count-Positive-Completions}}

\title{ On Computing Probabilistic Explanations \\
for 
Decision Trees}

%

\author{
	Marcelo Arenas$^{1,2,4}$, 
	Pablo Barcel\'o$^{2,4,5}$,
Miguel Romero$^{3,5}$, 
Bernardo Subercaseaux$^6$\\ \\
Authors are listed in alphabetical order\\
Correspondence at \texttt{bsuberca@cs.cmu.edu}.\\
{$^1$ Department of Computer Science, PUC Chile}\\
{$^2$ Institute for Mathematical and Computational Engineering, PUC Chile}\\
{$^3$ Faculty of Engineering and Science, UAI Chile}\\
{$^4$ Millenium Institute for Foundational Research on Data, Chile}\\
{$^5$ CENIA Chile}\\
{$^6$ Carnegie Mellon University, Pittsburgh, USA}
}

\begin{document}
\maketitle

\begin{abstract}
  Formal XAI (explainable AI) is a growing area that focuses on computing explanations with mathematical guarantees for the decisions made by ML models. Inside formal XAI, one of the most studied cases is that of explaining the
  choices taken by decision trees, as they are traditionally deemed as one of the most 
interpretable classes of models. Recent work has focused on studying the computation of \emph{sufficient reasons}, a kind of explanation in which given a decision tree $T$ and an instance $\vx$, one explains the decision $T(\vx)$ by providing a subset $\vy$ of the features of $\vx$ such that for any other instance $\vz$ compatible with $\vy$, it holds that  $T(\vz) = T(\vx)$, intuitively meaning that the features in $\vy$ are already enough to fully justify the classification of $\vx$ by $T$. 
It has been argued, however, that sufficient reasons constitute a restrictive notion of explanation. For such a reason, 
 the community has started to study their probabilistic counterpart, in which one requires that the probability of $T(\vz) = T(\vx)$ must be at least some value $\delta \in (0, 1]$, where $\vz$ is a random instance that is compatible with $\vy$. Our paper settles the computational complexity of $\delta$-sufficient-reasons over decision trees, showing that both (1) finding $\delta$-sufficient-reasons  that are minimal in size, and (2) finding $\delta$-sufficient-reasons that are minimal inclusion-wise, do not admit polynomial-time algorithms (unless $\ptime = \np$).
   This is in stark contrast with the deterministic case ($\delta = 1$) where inclusion-wise minimal sufficient-reasons are easy to compute. By doing this, we answer two open problems originally raised by Izza et al., and extend the hardness of explanations for Boolean circuits presented by W{\"a}ldchen et al. to the more restricted case of decision trees. On the positive side, we identify structural restrictions of decision trees that make the problem tractable, and show how SAT solvers might be able to tackle these problems in practical settings.
\end{abstract}

\newcommand{\tstar}{T^\star_\delta}

\section{Introduction}

\paragraph{Context.} 
The trust that 
AI models 
generate in people has been repetitively linked to our ability of \emph{explaining} the decision of said models~\citep{BarredoArrieta2020}, thus suggesting the area of \emph{explainable AI} (XAI) as fundamental for the deployment of trustworthy models. 
A sub-area of explainability that has received considerable attention over the last years,    
 showing quick progress in theoretical and practical terms, is that of {\em local} 
explanations, i.e., explanations for the outcome of a particular input to an ML model
 after the model has been trained. Several queries and scores 
have been proposed to specify explanations of this kind. These include, e.g., 
queries based on {\em prime implicants} \citep{shih2018symbolic} or {\em anchors} \citep{DBLP:conf/aaai/Ribeiro0G18}, which are parts of an instance that are sufficient to explain its classification, as well as scores that intend to quantify 
the impact of a single feature in the output of such a classification \citep{shaps,DBLP:conf/aaai/YanP21}. 
 
A remarkable achievement of this area of research has been the development of {\em formal} notions of explainability. 
The benefits brought about by this principled approach have been highlighted, in a very thorough and convincing way, in 
a recent survey by \citet{formal-xai}. A prime example of this kind of approach is given by {\em sufficient reasons}, which are also known 
as prime implicant explanations \citep{shih2018symbolic} or abductive explanations~\citep{Ignatiev2021}.

 Given an ML model $\M$ of dimension $n$ 
and a Boolean input instance $\vx \in \{0,1\}^n$, a sufficient reason for $\vx$ under $\M$ is a subset $\vy$ of the features of $\vx$, 
such that any instance $\vz$ compatible with $\vy$ receives the same classification result as $\vx$ on $\M$. 
In more intuitive words, $\vy$ is a sufficient reason for $\vx$ under $\M$ if the features in $\vy$ suffices to explain the output of $\M$ on $\vx$.  In the formal explainability approach, one then aims to find sufficient reasons $\vy$ that satisfy one of the following optimality criteria: (a) they are {\em minimum}, i.e., there are no sufficient reasons with 
fewer  features than $\vy$, or (b) they are {\em minimal}, i.e., there are no sufficient reasons that are strictly contained in $\vy$. 

\paragraph{Problem.} In the last few years, the XAI community has studied for which Boolean ML models 
the problem of computing (minimum or minimal) sufficient reasons is computationally tractable and for which it is computationally hard (see, e.g., 
\citep{DBLP:conf/nips/0001GCIN20,NEURIPS2020_b1adda14,DBLP:conf/icml/0001GCIN21}). It has been argued, however, that for practical applications
sufficient reasons might be too \emph{rigid}, as they are specified under worst-case conditions. That is, $\vy$ is a sufficient reason for $\vx$ under $\M$ if {\em every} ``completion''  
of $\vy$ is classified by $\M$ in the same way as $\vx$. As several authors have noted already, 
there is a natural way in which this notion can be relaxed in order to become more suitable for real-world explainability tasks: Instead of asking for each 
completion of $\vy$ to yield the same result as $\vx$ on $\M$, we could allow for a small fragment of the completions of $\vy$ to be classified differently than 
$\vx$ \citep{DBLP:journals/jair/WaldchenMHK21,Izza2021EfficientEW,DBLP:conf/ijcai/WangKB21}. More precisely, we would like to ensure that a random completion of $\vy$ is classified as $\vx$ with probability at least $\delta \in (0,1]$, a threshold that the recipient of the explanation controls. In such case, we call $\vy$ a {\em $\delta$-sufficient reason for $\es$ under $\M$}. 

The study of the cost of computing minimum $\delta$-sufficient reasons for expressive Boolean ML models based on propositional formulas 
was started by \citet{DBLP:journals/jair/WaldchenMHK21}. They show, in particular, that the decision 
problem of checking if $\vx$ admits a $\delta$-sufficient reason of a certain size $k$ under a model $\M$, where $\M$ is 
specified as a CNF formula, is ${\rm NP}^{\rm PP}$-complete. This result shows that the problem is very difficult for complex models, at least in theoretical terms. 
Nonetheless, it leaves the door open for obtaining tractability results over simpler Boolean models, starting from those which are often deemed to be ``easy to interpret'', e.g., 
{\em decision trees} \citep{lipton2018mythos,DBLP:journals/corr/abs-2010-11034,DBLP:conf/dsaa/GilpinBYBSK18}. In particular, the study of the cost of computing 
both minimum and minimal $\delta$-sufficient reasons for decision trees 
was initiated by \citet{Izza2021EfficientEW, https://doi.org/10.48550/arxiv.2205.09569}, but nothing beyond the fact that the problem lies in \np\ has been obtained.  Work by~\citet{blanc2021provably} has shown that it is possible to obtain efficient algorithms that succeed with a certain probability, and that instead of finding a smallest (either cardinality or inclusion-wise) $\delta$-sufficient reason, find $\delta$-sufficient reasons that are small compared to the \emph{average} size of $\delta$-sufficient reasons for the considered model.
  
\paragraph{Our results.} In this paper we provide an in-depth study of the complexity of the problem of minimum and minimal $\delta$-sufficient reasons for decision trees.

\begin{enumerate}

\item We start by pinpointing the exact computational complexity of these problems by showing that, under the assumption that $\ptime \neq \np$, none of them can be solved in polynomial time. We start with minimum $\delta$-sufficient reasons and show that the problem is hard even if $\delta$ is an arbitrary fixed constant in $(0,1]$. Our proof takes as basis the fact that, assuming $\ptime \neq \np$, the problem of computing minimum sufficient reasons for decision trees is not tractable \citep{NEURIPS2020_b1adda14}. The reduction, however, is non-trivial and requires several
  involved constructions and a careful analysis. The proof for 
minimal $\delta$-sufficient reasons is even more difficult, and the result more surprising, as in this case we cannot start from a similar problem over decision trees: 
computing minimal sufficient reasons over decision trees (or, equivalently, minimal $\delta$-sufficient reasons for $\delta = 1$) admits a simple polynomial time algorithm. Our result then implies that such a good behavior is lost when the input parameter $\delta$ is allowed to be smaller than $1$.

\item To deal with the high computational complexity of the problems, we look for structural restrictions of it 
that, at the same time, represent meaningful practical instances and ensure that these problems can be solved in polynomial time. 
The first restriction, called {\em bounded split number}, assumes there is a constant $c \geq 1$ such that, 
for each node $u$ of a decision tree $T$ of dimension $n$,  
the number of features that are mentioned in both $T_u$, the subtree of $T$ rooted at $u$, and $T - T_u$, the subtree of $T$ obtained by removing $T_u$, is at most $c$. 
We show that the problems of computing minimum and minimal $\delta$-sufficient reasons over decision trees with bounded split number can be solved in polynomial time. 
The second restriction is {\em monotonicity}. Intuitively, a Boolean 
ML model $\M$ is {\em monotone}, if the class of instances that are classified positively by $\M$ is closed under the operation of replacing $0$s with $1$s.  
For example, if $\M$ is of dimension 3 and it classifies the input $(1,0,0)$ as positive, then so it does for all the instances in $\{(1,0,1),(1,1,0),(1,1,1)\}$. 
We show that computing minimal $\delta$-sufficient reasons for monotone decision trees is a tractable problem. (This good behavior extends to any class of monotone
ML models for which counting the number of positive instances is tractable; e.g., monotone {\em free binary decision diagrams}
\citep{DBLP:journals/dam/Wegener04}).  

\item In spite of the intractability results we obtain in the paper, we show experimentally 
that our problems can be solved over practical instances by using SAT solvers. This requires finding 
efficient encodings of such problems as conjunctive normal form ($\cnf$) formulas, which we then check for satisfiability. 
This is particularly non-trivial for probabilistic sufficient reasons, as it requires dealing with the arithmetical nature of the probabilities involved through a Boolean encoding.  

\end{enumerate}  

\paragraph{Organization of the paper.}  

We introduce the main terminology used in the paper in Section
\ref{sec:Background}, and we define the problems of computing minimum
and minimal $\delta$-sufficient reasons in Section
\ref{sec:ProbabilisticSufficientReasons}.  The intractability of the
these problems is proved in Section \ref{sec:Complexity}, while some
tractable restrictions of them are provided in Section
\ref{sec:TractableCases}. Our Boolean encodings can be found in Section \ref{section:sat}. 
Finally, we discuss some future work in
Section \ref{sec_future}.

\section{Background}
\label{sec:Background}
An {\em instance} of dimension $n$, with $n \geq 1$, is a tuple $\vx
\in \{0,1\}^n$. We use notation $\vx[i]$ to refer to the $i$-th
component of this tuple, or equivalently, its $i$-th feature.
Moreover, we consider an abstract notion of a model of dimension $n$, and we define it as a Boolean function $\M : \{0,1\}^n \to \{0, 1\}$. That is, $\M$ assigns a Boolean value to each instance of dimension $n$, so that we focus on binary classifiers with Boolean input features. Restricting inputs and outputs to be Boolean makes our setting cleaner while still covering several relevant practical scenarios.

A {\em partial instance} of dimension $n$ is a tuple $\vy \in \{0,1,\bot\}^n$. Intuitively, if $\vy[i] = \bot$, then the 
value of the $i$-th feature is undefined. Notice that an instance is a particular case of a partial instance where all features are assigned value either $0$ or $1$. Given two partial instances $\vx$, $\vy$ of dimension $n$, we say that $\vy$ is {\em subsumed} by $\vx$, denoted by $\vy \subseteq \vx$, if for every $i \in \{1, \ldots, n\}$ such that $\vy[i] \neq \bot$, it holds that $\vy[i] = \vx[i]$. That is, $\vy$ is subsumed by $\vx$ if it is possible to obtain $\vx$ from $\vy$ by replacing some undefined values. Moreover, we say that $\vy$ is {\em properly subsumed} by $\vx$, denoted by $\vy \subsetneq \vx$, if $\vy \subseteq \vx$ and $\vy \neq \vx$. Notice that a partial instance $\vy$ can be thought of as a compact representation of the set of instances $\vz$ such that $\vy$ is subsumed by $\vz$, where such instances $\vz$ are called the {\em completions} of $\vy$ and are grouped in the set $\Comp(\vy)$.

A \emph{binary decision diagram} (BDD) of dimension $n$
is a rooted directed acyclic
graph~$\mathcal{M}$ with labels on edges and nodes such that: (i) each
leaf (a node with no outgoing edges) is labeled with~$\true$ or $\false$; (ii) each internal node (a
node that is not a leaf) is labeled with a feature $i \in \{1,\dots,n\}$; and
(iii) each internal node has two outgoing edges, one labeled~$1$ and
the other one labeled~$0$. Every instance $\vx \in \{0,1\}^n$ defines a
unique path $\pi_\vx = u_1 \cdots u_k$ from the root $u_1$ to a leaf
$u_k$ of $\M$ such that: if the label of
$u_i$ is $j \in \{1,\dots,n\}$, 
where $i \in \{1, \ldots, k-1\}$, then the edge from $u_i$ to $u_{i+1}$ is labeled with $\vx[j]$. Moreover, the instance $\vx$ is positive, denoted by
$\mathcal{M}(\vx) = 1$, if the label of $u_k$
is~$\true$; otherwise the instance $\vx$ is negative, which is denoted
by $\mathcal{M}(\vx) = 0$. 
A BDD~$\mathcal{M}$ is \emph{free} if for every path from the
root to a leaf, no two nodes on that path have the same label.
A \emph{decision tree} is simply a free BDD whose underlying directed acyclic
graph is a rooted tree.

\section{Probabilistic Sufficient Reasons}
\label{sec:ProbabilisticSufficientReasons}
Sufficient reasons are partial instances obtained by removing from an instance $\vx$ components that do not affect the final classification.
Formally, 
fix a dimension $n$.  Given a decision tree $T$, an instance
$\vx$, and a partial instance $\vy$ with $\vy \subseteq \vx$,
we call $\vy$ a {\em sufficient reason} for $\vx$ under $T$ if
$T(\vx) = T(\vz)$ for every $\vz \in \Comp(\vy)$.
In other words, the features of $\vy$ that take value either $0$ or $1$ explain the decision taken by $T$ on $\vx$, as $T(\vx)$ would not change if the remaining features (i.e., those that are undefined in $\vy$) were to change in $\vx$, thus implying that the classification $T(\vx)$ is a consequence of the features defined in $\vy$.  We say that a sufficient reason $\vy$ for $\vx$ under $T$ is {\em minimal}, if it is minimal under the order induced by $\subseteq$, that is, if there is no sufficient reason $\vy'$ for $\vx$ under $T$ such that $\vy' \subsetneq \vy$. Also, we define a {\em minimum} sufficient reason for $\vx$ under $T$ as a sufficient reason $\vy$ for $\vx$ under $T$ that maximizes the value $|\vy|_\bot := |\{ i \in \{1, \ldots, n\} \mid \vy[i] = \bot\}|$.

It turns out that minimal sufficient reasons can be computed efficiently for decision trees with a very simple algorithm, assuming a sub-routine to check whether a given partial instance is a sufficient reason (not necessarily minimal) of another given instance. As shown in~Algorithm~\ref{alg:minimal}, the idea of the algorithm is as follows: start with a candidate answer $\vy$ which is initially equal to $\vx$, the instance to explain, and maintain the invariant that $\vy$ is a sufficient reason for $\vx$, while trying to remove defined components from $\vy$ until no longer possible. It is not hard to see that one can check whether a partial instance $\vy$ is a sufficient reason for an instance $\vx$ in linear time over decision trees~\citep{DBLP:conf/kr/AudemardBBKLM21}.
This algorithm is well known (see e.g.,~\citep{DBLP:conf/kr/HuangII021}), and relies on the following simple observation, tracing back to~\cite{Goldsmith2005}.

\begin{observation}
	For any class of models $\mathfrak{C}$, If a partial instance $\vy$ of dimension $n$ is a sufficient reason for an instance $\vx$ under a model $\M \in \mathfrak{C}$, but not a minimal sufficient reason, then there is a partial instance $\hat{\vy}$ which is equal to $\vy$ except that $\hat{\vy}[i] = \bot, \vy[i] \neq \bot$ for some $i \in \{1, \ldots, n\}$ which is also a sufficient reason for $\vx$ under $\M$.
	\label{obs:remove-one}
\end{observation}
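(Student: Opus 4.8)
The plan is to exploit the simple structure of the hypothesis: ``not minimal'' hands us a strictly smaller sufficient reason $\vy'$, and all we must do is show we can reach a sufficient reason by undefining a \emph{single} feature of $\vy$. Essentially everything is driven by one monotonicity fact about completions, namely that $\va \subseteq \vb$ implies $\Comp(\vb) \subseteq \Comp(\va)$, which is just transitivity of $\subseteq$: if $\vz \in \Comp(\vb)$ then $\vb \subseteq \vz$, so $\va \subseteq \vz$, i.e.\ $\vz \in \Comp(\va)$.

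First I would unfold the hypothesis. Since $\vy$ is a sufficient reason for $\vx$ under $\M$ but is not minimal, by definition there is a partial instance $\vy'$ that is a sufficient reason for $\vx$ under $\M$ with $\vy' \subsetneq \vy$. Because $\vy' \subseteq \vy$ but $\vy' \neq \vy$, there must exist a coordinate $i \in \{1,\dots,n\}$ with $\vy'[i] = \bot$ and $\vy[i] \neq \bot$: indeed, at every coordinate where $\vy'$ is defined it agrees with $\vy$, so the only way the two tuples can differ is at a coordinate undefined in $\vy'$ and defined in $\vy$. Fix such an $i$, and define $\hat\vy$ by $\hat\vy[j] = \vy[j]$ for $j \neq i$ and $\hat\vy[i] = \bot$. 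By construction $\hat\vy$ has exactly the form required by the statement, so it remains only to check that $\hat\vy$ is a sufficient reason for $\vx$ under $\M$, which by definition means $\hat\vy \subseteq \vx$ together with $\M(\vz) = \M(\vx)$ for every $\vz \in \Comp(\hat\vy)$.

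The containment $\hat\vy \subseteq \vx$ is immediate from $\hat\vy \subseteq \vy \subseteq \vx$ and transitivity of $\subseteq$. For the classification condition I would verify $\vy' \subseteq \hat\vy$: at coordinate $i$ we have $\vy'[i] = \bot$, so no constraint is imposed, and at every other coordinate $\hat\vy[j] = \vy[j]$, so $\vy'[j] \neq \bot \Rightarrow \vy'[j] = \hat\vy[j]$ follows from $\vy' \subseteq \vy$. Applying the monotonicity fact gives $\Comp(\hat\vy) \subseteq \Comp(\vy')$. Since $\vy'$ is a sufficient reason, $\M(\vz) = \M(\vx)$ for all $\vz \in \Comp(\vy')$, hence in particular for all $\vz \in \Comp(\hat\vy)$. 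Therefore $\hat\vy$ is a sufficient reason for $\vx$ under $\M$, as desired.

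I do not expect a genuine obstacle here; the statement is a bookkeeping lemma about the subsumption lattice on partial instances, and uses nothing about decision trees or the class $\mathfrak{C}$ (which is why it is stated for arbitrary model classes). The only points requiring care are (i) selecting the coordinate $i$ as one where $\vy'$ is \emph{undefined}, rather than merely a coordinate where $\vy$ and $\vy'$ appear to ``differ'', and (ii) not forgetting that the definition of a sufficient reason carries an implicit $\hat\vy \subseteq \vx$ requirement that must also be discharged.
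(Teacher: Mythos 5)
Your proof is correct and complete: the choice of a coordinate where $\vy'$ is undefined but $\vy$ is defined, the verification that $\vy' \subseteq \hat{\vy} \subseteq \vx$, and the antitonicity of $\Comp(\cdot)$ with respect to $\subseteq$ are exactly the ingredients needed. The paper states this observation without proof (citing it as well known), and your argument is precisely the standard one it implicitly relies on.
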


\SetKwInput{KwInput}{Input}                
\SetKwInput{KwOutput}{Output}              

\SetKw{Break}{break}

\begin{algorithm}
	\KwInput{Decision tree $T$ and instance $\vx$, both of dimension $n$}
	\KwOutput{A minimal sufficient reason $\vy$ for $\vx$ under $T$.}
	\vspace{0.5em}
	$\vy \gets \vx$\\
	\While{true} {
	$\text{reduced} \gets \false$\\
		\For{$i \in \{1, \ldots, n\}$}{
			$\hat{\vy} \gets \vy$\\
			$\hat{\vy}[i] \gets \bot$\\
			\If{CheckSufficientReason$(T, \hat{\vy}, \vx)$}{
				$\vy \gets \hat{\vy}$\\
				$\text{reduced} \gets \true$\\
				\Break
			}
		}
		\If{($\neg \text{reduced}$) or $|\vy|_\bot = n$}{
			\Return $\vy$
		}
	}
	\caption{Minimal Sufficient Reason}
	\label{alg:minimal}
\end{algorithm}

The following theorem shows 
a stark contrast between the complexity of computing minimal and minimum sufficient reasons over decision trees. 

\begin{theorem}[\cite{NEURIPS2020_b1adda14}]
Assuming $\ptime \neq
\np$, there is no polynomial-time algorithm that, given a decision
tree $T$ and an instance $\vx$ of the same dimension, computes a
minimum sufficient reason for $\vx$ under $T$.
\label{thm:neurips2020}
\end{theorem}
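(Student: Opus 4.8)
The plan is to rule out a polynomial-time algorithm for computing minimum sufficient reasons over decision trees by showing that such an algorithm would solve the \textsc{Hitting Set} problem in polynomial time: given a ground set $\{1,\dots,n\}$, nonempty subsets $C_1,\dots,C_m\subseteq\{1,\dots,n\}$, and a bound $k$, decide whether some $H\subseteq\{1,\dots,n\}$ with $|H|\le k$ meets every $C_j$. The starting observation is a reformulation of sufficient reasons for decision trees: if $T(\vx)=1$, then $\vy\subseteq\vx$ is a sufficient reason for $\vx$ under $T$ iff, for every leaf $\ell$ labeled $\false$, the set of defined coordinates of $\vy$ contains some feature that the root-to-$\ell$ path tests with a value differing from $\vx$; calling this set $B_\ell$ the ``divergence set'' of $\ell$, the minimum sufficient reasons of $\vx$ are exactly the minimum-size hitting sets of $\{B_\ell : \ell\text{ a }\false\text{-leaf}\}$. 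So I only need to build, from a \textsc{Hitting Set} instance, a polynomial-size decision tree $T$ and instance $\vx$ whose $\false$-leaf divergence sets realize a collection with the same minimum hitting set as $\{C_1,\dots,C_m\}$.

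For the construction I would use features $\{1,\dots,n\}\cup\{g_1,\dots,g_m\}$ with fresh ``guard'' features $g_1,\dots,g_m$, set $\vx=\vzero$, and let $T$ be a ``spine with gadgets'': the spine is a path of nodes testing $g_1,\dots,g_m$ in order, each $0$-edge continuing the spine (the last $0$-edge reaching a $\true$-leaf) and the $1$-edge out of the $g_j$-node entering the gadget for $C_j$; the gadget for $C_j=\{i_1,\dots,i_r\}$ is a path testing $i_1,\dots,i_r$ in order, with every $0$-edge leading to a $\true$-leaf and the $1$-edge out of the last node leading to a $\false$-leaf. Because $\{1,\dots,n\}$ and $\{g_1,\dots,g_m\}$ are disjoint and each $C_j$ has no repeated element, no feature repeats on any root-to-leaf path, so $T$ is a legitimate (free) decision tree of size $O(m+\sum_j|C_j|)$, and $T(\vzero)=1$ since $\vzero$ runs down the spine to a $\true$-leaf. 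The only $\false$-leaves are the $m$ gadget-terminal ones, and the $j$-th such leaf is reached by setting $g_1,\dots,g_{j-1}$ to $0$, $g_j$ to $1$, and all of $C_j$ to $1$, so its divergence set against $\vzero$ is exactly $\{g_j\}\cup C_j$.

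Finally I would verify that the minimum hitting sets of $\{\,\{g_j\}\cup C_j : 1\le j\le m\,\}$ and of $\{C_1,\dots,C_m\}$ have the same size: one direction is immediate from $C_j\subseteq\{g_j\}\cup C_j$, and for the other, given a hitting set $H$ of the enlarged family one replaces each $g_j\in H$ by an arbitrary element of $C_j$, which still hits every enlarged set (dropping $g_j$ can only uncover $\{g_j\}\cup C_j$, now covered by the substitute) and has size at most $|H|$. Hence a minimum sufficient reason $\vy$ of $\vx$ under $T$ has exactly $\min\{|H| : H\text{ hits all }C_j\}$ defined features, i.e. $(n+m)-|\vy|_\bot$ equals that minimum, so one polynomial-time call to a supposed minimum-sufficient-reason algorithm decides the \textsc{Hitting Set} instance and thus forces $\ptime=\np$. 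I expect the main obstacle to be exactly the point the guard features address: a decision tree cannot share or merge subcomputations, so one cannot naively encode the CNF ``$\bigwedge_j\bigvee_{i\in C_j}(\text{feature }i)$'' as a small tree; the spine-of-guards layout is what lets the per-set gadgets be chained into a polynomial-size tree without ever repeating a feature on a path, and the small lemma of this paragraph is what ensures the guards leave the optimum intact.
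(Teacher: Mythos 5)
This theorem is stated in the paper as a citation to \citet{NEURIPS2020_b1adda14} with no proof given, and your blind reconstruction is correct and essentially the standard argument: the observation that, for $T(\vx)=1$, sufficient reasons are exactly the hitting sets of the divergence sets of the $\false$-leaves, combined with a spine-plus-gadgets tree whose divergence sets are $\{g_j\}\cup C_j$ and the guard-elimination lemma showing the optimum is preserved, yields a clean polynomial reduction from \textsc{Hitting Set}. The only caveat worth flagging is that Remark 1 of the paper relies on the cited proof using a \emph{monotone} decision tree, whereas your construction computes the complement of a monotone function (positive on $\vzero$, turned negative by raising features), so it could not be substituted for the cited proof if one wanted to preserve that remark --- but this does not affect the validity of your proof of the theorem as stated.
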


%
Arguably, the notion of sufficient reason is a natural notion of
explanation for the result of a classifier. However, such a concept
imposes a severe restriction by asking all completions of a partial
instance to be classified in the same way. To overcome this
limitation, a probabilistic generalization of sufficient reasons was
proposed by~\cite{DBLP:journals/jair/WaldchenMHK21} and
\cite{Izza2021EfficientEW}. More precisely,
this notion allows to settle a confidence $\delta \in (0,1]$ on the fraction of completions of a partial instance
that yield the same classification.
%

\begin{definition}[Probabilistic sufficient reasons]
Given a value $\delta \in (0, 1]$, a {\em $\delta$-sufficient reason} ($\delta$-SR for short) for an instance $\vx$ under a decision tree $T$ is a partial instance $\vy$ such that $\vy \subseteq \vx$ and
\[
\Pr{}_{\!{\vz}}[T(\vz) = T(\vx) \mid \vz \in \Comp(\vy)] \ := \ \frac{\left|\left\{ \vz \in \Comp(\vy) \mid T(\vz) = T(\vx)\right\}\right|}{2^{|\vy|_\bot}} \ \geq \ \delta. 
\]
\end{definition} 

Minimal and minimum $\delta$-sufficient reasons are defined analogously as in the case of minimal and minimum sufficient reasons.

\begin{example} \label{ex:psr} 
{\em Consider the decision tree $T$ over features $\{1,2,3\}$ shown in Figure \ref{fig:dt}  and the input instance $\vx = (1,1,1)$. Notice that $T(\vx) = 1$. 
For each $X \subseteq \{1,2,3\}$, we show the probability $p(X):= \Pr{}_{\!{\vz}}[T(\vz) = 1 \mid \vz \in \Comp(\vy_X)]$, where $\vy_X$ is the partial instance that 
is obtained from $\vx$ by fixing $\vy_X[i] = \bot$ for each $i \not\in X$.  We observe, for instance, that $\vx$ itself is neither a minimum nor a minimal $1$-SR for $\vx$ under $T$, as $\vy_{\{1,3\}} = (1,\bot,1)$ is also a 
$1$-SR. In turn, $\vy_{\{1,3\}}$ is both a minimal and a minimum $1$-SR for $\vx$ under $T$. The partial instance $\vy_{\{1,3\}}$ is not, however, a minimal or a minimum 
$\nicefrac{3}{4}$-SR for $\vx$ under $T$, as $\vy_{\{1\}} = (1,\bot,\bot)$ is also a $\nicefrac{3}{4}$-SR.
} 
\end{example} 

\begin{example}
	{\em 
	Consider the decision tree $T$ over features $\{1, 2, 3\}$ shown in Figure~\ref{fig:dt-2} and the input instance $\vx = (1,1,1)$. Notice that $T(\vx) = 1$. Exactly as in Example~\ref{ex:psr}, we display as well the probabilities $p(X)$ for each $X \subseteq \{1,2,3\}$. Interestingly, this example illustrates that Observation~\ref{obs:remove-one} does not hold when $\delta < 1$. Indeed, consider that $\vy_{\{1,2,3\}}$ is a $\nicefrac{5}{8}$-SR which is not minimal, as $\vy_{\emptyset}$ is also a $\nicefrac{5}{8}$-SR, but if we remove any single feature from $\vy_{\{1, 2, 3\}}$, we obtain a partial instance which is not a $\nicefrac{5}{8}$-SR.
	}
	\label{ex:psr-2}
\end{example}

As illustrated on Example~\ref{ex:psr-2}, it is not true in general that if $\vy' \subset \vy$ then 
\[
\Pr{}_{\!{\vz}}[T(\vz) = T(\vx) \mid \vz \in \Comp(\vy')] \leq \Pr{}_{\!{\vz}}[T(\vz) = T(\vx) \mid \vz \in \Comp(\vy)],
\]
which means that standard algorithms for finding minimal sets holding monotone predicates (see e.g., ) cannot be used to compute minimal $\delta$-SRs.

%

\begin{figure} 
\begin{tikzpicture}
\node[draw,circle] (1) {$1$};
\node[draw,circle] (2) at (1,-1) {$2$};
\draw[->] (1) -- (2) node [near end, above] {{\small 1}};
\node[draw,circle] (3) at (2,-2) {$3$};
\draw[->] (2) -- (3) node [near end, above] {{\small 1}};
\node[] (4) at (-1,-1) {$\false$};
\draw[->] (1) -- (4) node [near end, above] {{\small 0}};
\node[] (5) at (0,-2) {$\true$};
\draw[->] (2) -- (5) node [near end, above] {{\small 0}};
\node[] (6) at (1,-3) {$\false$};
\draw[->] (3) -- (6) node [near end, above] {{\small 0}};
\node[] (7) at (3,-3) {$\true$};
\draw[->] (3) -- (7) node [near end, above] {{\small 1}};
\node[] (8) at (2,0) {$T$};
\draw (5.5,0.2) rectangle (11.5,-2.5);
\node[] (8) at (7,-0.2) {$p(\emptyset) = \frac{3}{8}$};
\node[] (9) at (7,-0.8) {$p(\{1\}) = \frac{3}{4}$};
\node[] (10) at (7,-1.4) {$p(\{2\}) = \frac{1}{4}$};
\node[] (11) at (7,-2.0) {$p(\{3\}) = \frac{1}{2}$};
\node[] (12) at (10,-0.2) {$p(\{1,2\}) = \frac{1}{2}$};
\node[] (12) at (10,-0.8) {$p(\{1,3\}) = 1$};
\node[] (12) at (10,-1.4) {$p(\{2,3\}) = \frac{1}{2}$};
\node[] (12) at (10,-2.0) {$p(\{1,2,3\}) = 1$};
\end{tikzpicture} 
\caption{The decision tree $T$ and the values $p(X)$ from Example \ref{ex:psr}.}
\label{fig:dt}
\end{figure} 

\begin{figure} 
\centering
\begin{tikzpicture}
\node[draw,circle] (1) {$1$};
\node[draw,circle] (2) at (3,-1) {$2$};
\draw[->] (1) -- (2) node [near end, above] {{\small 1}};
\node[draw,circle] (3) at (4.5,-2) {$3$};
\draw[->] (2) -- (3) node [near end, above] {{\small 1}};
\node[draw, circle] (4) at (-2.5,-1) {$2$};
\draw[->] (1) -- (4) node [near end, above] {{\small 0}};
\node[draw, circle] (5) at (1.5,-2) {$3$};
\draw[->] (2) -- (5) node [near end, above] {{\small 0}};
\node[] (6) at (3.7,-3) {$\false$};
\draw[->] (3) -- (6) node [near end, above] {{\small 0}};
\node[] (7) at (5.3,-3) {$\true$};
\draw[->] (3) -- (7) node [near end, above] {{\small 1}};

\node[] (16) at (2.3, -3) {$\false$};
\draw[->] (5) -- (16) node [near end, above] {{\small 1}};

\node[] (17) at (0.7, -3) {$\true$};
\draw[->] (5) -- (17) node [near end, above] {{\small 0}};

\node[draw, circle] (18) at (-1,-2) {$3$};
\draw[->] (4) -- (18) node [near end, above] {{\small 1}};

\node[] (19) at (-4,-2) {$\true$};
\draw[->] (4) -- (19) node [near end, above] {{\small 0}};

\node[] (20) at (-1.8,-3) {$\true$};
\draw[->] (18) -- (20) node [near end, above] {{\small 0}};

\node[] (21) at (-0.3,-3) {$\false$};
\draw[->] (18) -- (21) node [near end, above] {{\small 1}};

\node[] (8) at (2,0.5) {$T$};
\draw (-2.5,-6.5) rectangle (3.5,-3.8);
\node[] (8) at (-1,-0.2-4) {$p(\emptyset) = \frac{5}{8}$};
\node[] (9) at (-1,-0.8-4) {$p(\{1\}) = \frac{1}{2}$};
\node[] (10) at (-1,-1.4-4) {$p(\{2\}) = \frac{1}{2}$};
\node[] (11) at (-1,-2.0-4) {$p(\{3\}) = \frac{1}{2}$};
\node[] (12) at (2,-0.2-4) {$p(\{1,2\}) = \frac{1}{2}$};
\node[] (13) at (2,-0.8-4) {$p(\{1,3\}) = \frac{1}{2}$};
\node[] (14) at (2,-1.4-4) {$p(\{2,3\}) = \frac{1}{2}$};
\node[] (15) at (2,-2.0-4) {$p(\{1,2,3\}) = 1$};
\end{tikzpicture} 
\caption{The decision tree $T$ and the values $p(X)$ from Example \ref{ex:psr-2}.}
\label{fig:dt-2}
\end{figure}
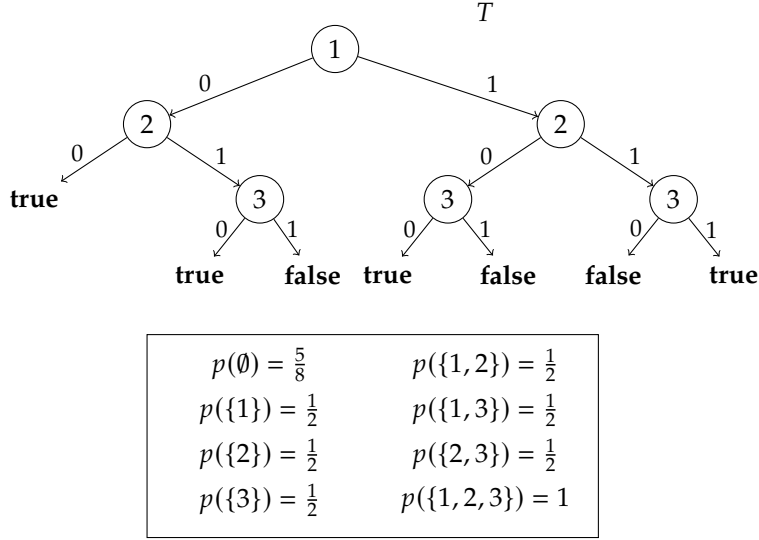

The problems of computing minimum and minimal $\delta$-SR on decision trees were 
defined and left open by~\citep{Izza2021EfficientEW, https://doi.org/10.48550/arxiv.2205.09569}. These problems are formally defined as follows.
%
\begin{center}
\fbox{\begin{tabular}{lp{8cm}}
{\small PROBLEM:} : & \minimum\\
{\small INPUT} : & A decision tree $T$ of dimension $n$, an instance $\vx$ of dimension $n$ and $\delta \in (0,1]$
\\ 
{\small OUTPUT} : & A minimum $\delta$-SR for $\vx$ under $T$  
\\
\end{tabular}}
\end{center}
\begin{center}
\fbox{\begin{tabular}{lp{8cm}}
{\small PROBLEM:} : & \minimal\\
{\small INPUT} : & A decision tree $T$ of dimension $n$, an instance $\vx$ of dimension $n$ and $\delta \in (0,1]$
\\ 
{\small OUTPUT} : & A minimal $\delta$-SR for $\vx$ under $T$  
\\
\end{tabular}}
\end{center}

\section{The Complexity of Probabilistic Sufficient Reasons on Decision Trees}
\label{sec:Complexity}
In what follows, we show that neither  \minimum\ nor \minimal\ can be solved in
polynomial time (unless \mbox{$\ptime = \np$}). We first consider the problem $\minimum$, and in fact prove a stronger result by considering the family of problems 
$\delta\text{-}\minimum$ where $\delta \in (0,1]$ is assumed to be fixed. 
More precisely, 
we obtain as a corollary 
of Theorem \ref{thm:neurips2020} that $1\text{-}\minimum$ cannot be solved efficiently. 
Moreover, a non-trivial 
modification of the proof of this theorem shows that this
negative result continues to hold for every fixed $\delta \in (0,1]$.

\begin{theorem} 
  Fix $\delta \in (0,1]$. Then assuming that $\ptime \neq \np$, there is no polynomial-time algorithm for {\em $\delta\text{-}\minimum$}.
\label{prop:delta-minimum-hardness}
\end{theorem}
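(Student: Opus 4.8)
## Proof Plan

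The plan is to reduce from $1\text{-}\minimum$ (which is not polynomial-time solvable by Theorem~\ref{thm:neurips2020}, reformulated in terms of minimum sufficient reasons) to $\delta\text{-}\minimum$ for an arbitrary fixed $\delta \in (0,1]$. The challenge is that a minimum $\delta$-SR is permitted to ``leak'' up to a $(1-\delta)$ fraction of completions into the wrong class, and this extra freedom could, a priori, make the minimum $\delta$-SR strictly smaller (more $\bot$'s) than any genuine sufficient reason; we must engineer the instance so that this slack is neutralized and the minimum $\delta$-SR of the gadget tree recovers the minimum sufficient reason of the original tree.

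First I would take an instance $(T,\vx)$ of $1\text{-}\minimum$, with $T$ of dimension $n$, and build a new decision tree $T'$ of dimension $n + m$ (for suitable $m$ depending only on $\delta$, e.g.\ $m = \lceil \log_2(1/(1-\delta)) \rceil$ or a polynomially larger value if a finer granularity is needed) over the original features $1,\dots,n$ plus fresh ``padding'' features $n+1,\dots,n+m$, together with an extended instance $\vx' = \vx \cdot \vone$. The idea is: $T'$ first reads all the padding features; along exactly one of the $2^m$ branches it behaves like the original $T$ on features $1,\dots,n$, and along every other branch it outputs the constant $T(\vx)$ (or a carefully chosen constant). Then for a candidate partial instance $\vy'$ that keeps a padding feature defined to match $\vx'$ but leaves some original features undefined, the probability of agreeing with $T(\vx)$ is governed by $T$'s behavior; but if $\vy'$ sets a padding feature to $\bot$, it picks up many ``free'' agreeing branches. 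I would tune the constants at the other $2^m - 1$ leaves and the threshold so that: (a) erasing padding features never helps reach $\delta$ unless the original-feature part is already a genuine sufficient reason of $T$, and (b) given the original part is a sufficient reason, erasing all $m$ padding features is forced (they contribute nothing), so $|\vy'|_\bot = |\vy|_\bot + m$ and minimizing size of the $\delta$-SR for $(T',\vx')$ is equivalent to finding a minimum sufficient reason for $(T,\vx)$. This is exactly the kind of ``involved construction and careful analysis'' the introduction advertises.

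The key steps, in order: (1) define the gadget tree $T'$ and the padding count $m = m(\delta)$ precisely, specifying what each of the $2^m$ top-level branches computes; (2) give a closed form for $\Pr_{\vz}[T'(\vz) = T(\vx) \mid \vz \in \Comp(\vy')]$ as a function of which padding features are fixed/erased and of the restriction of $\vy'$ to $\{1,\dots,n\}$ — this is a weighted average over the padding subcube, and the arithmetic has to come out so that the $\delta$ threshold ``snaps'' to the $1$-SR condition; (3) prove the forward direction: a minimum sufficient reason $\vy$ for $(T,\vx)$ yields a $\delta$-SR $\vy'$ for $(T',\vx')$ with $|\vy'|_\bot = |\vy|_\bot + m$; (4) prove the reverse direction: any $\delta$-SR $\vy'$ for $(T',\vx')$ can be transformed (without decreasing $|\cdot|_\bot$) into one that erases all padding features and whose original-feature part is a $1$-SR for $(T,\vx)$ — hence $|\vy'|_\bot \le |\vy^{\star}|_\bot + m$ where $\vy^{\star}$ is a minimum sufficient reason; (5) conclude that a polynomial-time algorithm for $\delta\text{-}\minimum$ would, via this polynomial-size reduction, solve $1\text{-}\minimum$, contradicting Theorem~\ref{thm:neurips2020}.

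The main obstacle I expect is step (4), the reverse direction — specifically ruling out ``cheating'' $\delta$-SRs that exploit the probabilistic slack in ways unrelated to $T$ being a sufficient reason: a partial instance could leave some original features defined and some padding features defined in a mixed pattern, collecting enough agreeing completions from the constant-output branches to cross $\delta$ without the $T$-branch being pure. The fix is to make the ``bad'' (disagreeing) mass on the $T$-branch large enough, when $T$ is not a sufficient reason on the surviving features, that no combination of constant branches can compensate; concretely, one chooses the constant at the non-$T$ branches and possibly replicates the $T$-branch (or scales the padding dimension with $n$) so that a single disagreeing completion in the $T$-part already pushes the overall probability below $\delta$. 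Making this quantitative bookkeeping tight while keeping $T'$ of polynomial size — and a valid decision tree (free BDD that is a tree), which forbids re-reading features, so the padding features really must be read once at the top — is the delicate part of the argument.
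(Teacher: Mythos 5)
Your high-level strategy --- pad $(T,\vx)$ with fresh features so that the acceptance probability of the padded tree sits at the threshold $\delta$ exactly when the restriction to the original features is a genuine sufficient reason --- is the same as the paper's, and your reduction target ($1$-$\minimum$, i.e.\ Theorem~\ref{thm:neurips2020}) is also the paper's. But the concrete gadget you specify (read $m$ padding features at the top, put $T$ on exactly \emph{one} of the $2^m$ branches, constants on the rest) has an arithmetic obstruction your plan does not resolve. With a single $T$-branch the total probability is $\frac{a}{2^m}+\frac{1}{2^m}\Pr[T(\vz)=1\mid\cdots]$ for an integer $a$, so separating $\Pr[T=1]=1$ from $\Pr[T=1]\le 1-2^{-n}$ forces the fractional part of $\delta 2^m$ to lie in $\{0\}\cup(1-2^{-n},1)$; for a generic fixed $\delta$ (e.g.\ $\delta=\nicefrac{2}{3}$, where that fractional part is always $\nicefrac{1}{3}$ or $\nicefrac{2}{3}$) no choice of $m$ achieves this. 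The paper escapes by making the dependence on $T$ \emph{multiplicative}: it builds a calibrator tree $T_\delta$ on $\Theta(n)$ fresh features with exactly $c=\lfloor\delta 2^{\Theta(n)}\rfloor$ accepting leaves and substitutes a copy of $T$ at \emph{every} accepting leaf, giving probability $\delta'\cdot\Pr[T=1]$ with $|\delta'-\delta|\le 2^{-\Theta(n)}$, and then attaches a separate additive gadget (a spine of $\ell=n+k+1+\lceil\log(1/\delta')\rceil$ nodes ending in a $\true$ leaf) whose bonus $2^{-\ell}$ is tuned to cover the rounding error $\delta-\delta'$ in the yes-case while staying strictly below the deficit $\delta'2^{-n}$ in the no-case. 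Your parenthetical ``replicate the $T$-branch'' gestures at this, but the plan as written does not carry it out, and without it the reduction fails.

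The second gap is your step (4). Setting the padding coordinates of $\vx'$ to $1$, you must exclude $\delta$-SRs that keep some padding features \emph{defined} so as to steer probability mass toward favorable branches --- in your layout, fixing padding features can concentrate mass on the $T$-branch itself, where $\Pr[T=1\mid\vy]\ge\delta$ may hold without $\vy$ being a $1$-SR. You name this danger but provide no mechanism against it. The paper needs a dedicated structural property here (the second half of its Lemma~\ref{lemma:T_delta}): the calibrator's reference instance $\vx^\dagger$ is taken to be the bitwise complement of the binary expansion of $c$, and an induction over the calibrator shows that fixing any subset of its features to their $\vx^\dagger$-values can only \emph{decrease} the acceptance probability, so defining padding features never helps; a separate counting bound ($2^{-(\ell-k)}$ on reaching the spine's $\true$ leaf when only $k$ features are defined) handles the additive gadget. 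Your proposal would need analogues of both facts for the reverse direction to go through.
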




Let us now look at the problem $\minimal$.
When $\delta =1$, this problem can be solved in polynomial time as stated in
Theorem \ref{thm:neurips2020}. However, it is conjectured by \cite{Izza2021EfficientEW} that assuming $\ptime \neq \np$, this positive behavior does not extend to the general problem $\minimal$,
in which $\delta$ is an input confidence parameter. Our main result confirms that this conjecture is correct.

\begin{theorem}
\label{thm:delta-minimal-hardness}
Assuming that $\ptime \neq \np$, there is no polynomial-time algorithm for {\em $\minimal$}.
\end{theorem}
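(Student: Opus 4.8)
The plan is to give a polynomial-time reduction from an NP-complete problem — for concreteness from SAT, or from the NP-hard decision problem underlying Theorem~\ref{thm:neurips2020} (given a decision tree $T'$, an instance $\vx'$ and a number $k$, decide whether $\vx'$ has a sufficient reason under $T'$ with at most $k$ defined features). Given such an instance, say a formula $\phi$, I would build in polynomial time a decision tree $T$, an instance $\vx$ with $T(\vx)=1$, and a rational $\delta\in(0,1]$ of polynomially many bits, together with a polynomial-time \emph{decoding} map that, applied to \emph{any} inclusion-minimal $\delta$-SR $\vy$ of $\vx$ under $T$, returns a satisfying assignment of $\phi$ whenever $\phi$ is satisfiable (and whatever it returns can be freely checked against $\phi$ otherwise, so deciding satisfiability follows). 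The reduction then calls the hypothetical polynomial-time algorithm for $\minimal$ once, decodes the answer, and verifies it. Note that $\vx$ is always a $\delta$-SR of itself, since $\Comp(\vx)=\{\vx\}$, so inclusion-minimal $\delta$-SRs always exist and $\minimal$ is a total problem; hence a single oracle call suffices, and a polynomial-time algorithm for $\minimal$ would put SAT in $\ptime$.

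The gadget is where all the work is, and it is driven by the non-monotonicity witnessed by Example~\ref{ex:psr-2}. The tree $T$ would carry, among others, a block of ``assignment'' features $p_1,\dots,p_n$ mirroring the variables of $\phi$, a small block of ``control'' features that gate $T$ between two regimes, and a block of ``padding'' features whose only role is to shift completion-counts so that the threshold lands in the right place. It should be engineered so that, writing $D(\vy)=\{i:\vy[i]\neq\bot\}$, a partial instance $\vy\subseteq\vx$ is a $\delta$-SR essentially iff the partial assignment it induces on $p_1,\dots,p_n$ can be extended to a satisfying assignment of $\phi$ \emph{and} $D(\vy)$ is not too small, the exact cut-off being realized by $\delta$. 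Since model counting over decision trees runs in polynomial time, all counts involved (hence the precise value of $\delta$ separating the two cases) are polynomial-time computable and of polynomial bit-size, so the instance produced is legitimate.

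Correctness has an easy half and a hard half. The easy half is decoding: from an inclusion-minimal $\delta$-SR one reads off in polynomial time which control features survive and which $p$-features are fixed, and either emits a total assignment or rejects. The hard half — and the main obstacle — is the structural claim that when $\phi$ is satisfiable, \emph{every} inclusion-minimal $\delta$-SR decodes to a genuine satisfying assignment, i.e., that there are no spurious minimal $\delta$-SRs. This is subtle precisely because $\delta$-SR-ness is not downward closed: a partial instance can become a $\delta$-SR only after deleting several defined features simultaneously, so a $\delta$-SR may be inclusion-minimal even though a strictly smaller $\delta$-SR exists elsewhere (exactly the behaviour of Example~\ref{ex:psr-2}). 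The padding features are introduced to neutralize this — placed so that any $\delta$-SR with a ``large'' $D(\vy)$ admits a simultaneous multi-feature deletion keeping it a $\delta$-SR, forcing every descent down toward partial instances that encode satisfying assignments — while the control features prevent that descent from collapsing all the way to a trivial partial instance in the satisfiable case. Establishing this requires a quantitative case analysis, tracking how $\Pr_{\vz}[T(\vz)=T(\vx)\mid \vz\in\Comp(\vy)]$ changes under deletion of an arbitrary subset of defined features via the additivity of leaf-counts along root-to-leaf paths of $T$, and calibrating the three feature blocks together with $\delta$ so that the only inclusion-minimal configurations, in both the satisfiable and unsatisfiable cases, are the intended ones. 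This is considerably harder than the reduction behind Theorem~\ref{prop:delta-minimum-hardness}, which could build on the already-intractable deterministic problem of Theorem~\ref{thm:neurips2020}, whereas here the $\delta=1$ version of $\minimal$ is tractable and offers no starting point.
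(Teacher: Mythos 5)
Your proposal is a plan rather than a proof: the entire technical content is deferred to a gadget that is described only by the properties it ``should be engineered'' to have. You correctly identify the central obstacle --- that $\delta$-SR-ness is not downward closed, so one must rule out spurious inclusion-minimal $\delta$-SRs --- but you neither construct the tree that achieves this nor prove the structural claim that every inclusion-minimal $\delta$-SR decodes to a satisfying assignment. That claim is precisely where the difficulty lives, and asserting that ``padding features are introduced to neutralize this'' without exhibiting them and carrying out the quantitative analysis leaves the argument with no verifiable content. As written, the reduction could not be checked for correctness even in principle.

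It is also worth noting that the paper's actual route sidesteps the strongest requirement you impose on yourself. Rather than demanding that \emph{all} minimal $\delta$-SRs encode solutions of a SAT instance, the paper first proves NP-hardness of an intermediate decision problem $\checksub$ (does some $\vy' \subsetneq \vy$ have acceptance probability at least that of $\vy$?), via a new problem {\sf Minimal-Expected-Clauses} reduced from Clique. It then Turing-reduces $\checksub$ to $\minimal$ by building a composite tree $T^\star = T_{\vy_0} \cup (T' \cap T_1)$ and, crucially, setting $\delta$ to be \emph{exactly} the acceptance probability of a distinguished partial instance $\vy^\star$; the only question the oracle must answer is whether its output equals $\vy^\star$ or lies strictly below it. This requires controlling the behaviour of the construction only relative to $\vy^\star$ (via the ``strongly-balanced'' property and the feature-doubling trick in $T'$), not characterizing the full set of minimal $\delta$-SRs as your plan would. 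If you want to pursue your direct-from-SAT architecture, you would need to supply the explicit gadget, the exact value of $\delta$, and a complete case analysis of how deleting arbitrary subsets of defined features moves the acceptance probability --- none of which is present.
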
 

\begin{proof}[Proof sketch]
We first show that the following decision problem, called 
$\checksub$, is \np-hard.
This problem takes as input a tuple $(T,\vx)$, for $T$ a decision tree of dimension $n$, and 
  $\vx \in \{0,1,\bot\}^n$ a partial instance, and the goal is to decide whether there is a partial instance $\vy \subsetneq \vx$ 
with $\Pr{}_{\!{\vz}}[T(\vz) = 1 \mid \vz \in \Comp(\vy)] \geq \Pr{}_{\!{\vz}}[T(\vz) = 1 \mid \vz \in \Comp(\vx)]$. 
We then show that if \minimal\ admits 
a polynomial time algorithm, then $\checksub$ is in \ptime, which contradicts the assumption that $\ptime \neq \np$. 
The latter reduction 
requires an involved construction exploiting certain properties of the hard instances for $\checksub$. 

To show that $\checksub$ is \np-hard, we use a polynomial time reduction from a decision problem over formulas in $\cnf$, called {\sf Minimal-Expected-Clauses}, 
and which we also show 
to be \np-hard. Both the \np-hardness of {\sf Minimal-Expected-Clauses} and 
the reduction from {\sf Minimal-Expected-Clauses} to $\checksub$ may be of independent interest. 

We now define the problem {\sf Minimal-Expected-Clauses}. 
Let $\varphi$ be a $\cnf$ formula over variables $X = \{x_1,\dots, x_n\}$. Partial assignments of the variables in $X$,  as well as
the notions of subsumption and completions over them, are defined in exactly the same way as for partial instances over features.   
%
%
For a partial assignment $\mu$ over $X$, 
we denote by $E(\varphi, \mu)$ the expected number of clauses of $\varphi$ satisfied by a random completion of $\mu$. 
We then consider the following problem for fixed $k\geq 2$ (recall that a $k$-$\cnf$ formula is a $\cnf$ formula where each clause has at most $k$ literals):
\begin{center}
\fbox{\begin{tabular}{ll}
\small{PROBLEM} : & {\sf $k$-Minimal-Expected-Clauses}
\\{\small INPUT} : & $(\varphi, \sigma)$, for $\varphi$ a $k$-$\cnf$ formula and $\sigma$ a partial assignment
\\ 
{\small OUTPUT} : & {\sf Yes}, if there is a partial assignment $\mu\subsetneq \sigma$  
such that $E(\varphi, \mu)\geq E(\varphi, \sigma)$ 
\\ & and {\sf No} otherwise
\end{tabular}}
\end{center}

We show that {\sf $k$-Minimal-Expected-Clauses} is \np-hard even for $k=2$, via a reduction from the well-known clique problem. 
Finally,  the reduction from {\sf $k$-Minimal-Expected-Clauses}  to $\checksub$ builds an instance $(T,\vx)$ from $(\varphi, \sigma)$ in a way that there is a direct correspondence
between partial assignments $\mu\subseteq \sigma$ and partial instances $\vy \subseteq \vx$, satisfying that 
\[
\Pr{}_{\!{\vx}}[T(\vz) = 1 \mid \vz \in \Comp(\vy)] = \frac{E(\varphi, \mu)}{m},
\]
where $m$ is the number of clauses  of $\varphi$. This implies that $(\varphi, \sigma)$ is a {\sf Yes}-instance for {\sf $k$-Minimal-Expected-Clauses} if and only if 
$(T,\vx)$ is a {\sf Yes}-instance for $\checksub$. 
\end{proof}

\section{Tractable Cases}
\label{sec:TractableCases}

We now study restrictions on decision trees that lead to polynomial time algorithms for $\minimum$ or $\minimal$, hence 
avoiding the general intractability results shown in the previous section. We identify two such restrictions: \emph{bounded split number} and \emph{monotonicity}. 

\subsection{Bounded split number}

Let $T$ be a decision tree of dimension $n$. 
For a set $U$ of nodes of $T$, we denote by $\feat(U)$ the set of features from $\{1,\dots, n\}$ labeling the nodes in $U$.
For each node $u$ of $T$, we denote by $N^{\downarrow}_u$ the set of nodes appearing in $T_u$, that is, the subtree of $T$ rooted at $u$. 
On the other hand, we denote by $N^{\uparrow}_u$ the set of nodes of $T$ minus the set of nodes $N^{\downarrow}_u$.  
We define the \emph{split number} of the decision tree $T$ to be
\[
\max_{\text{node $u$ in $T$}} \left|\feat\left(N^{\downarrow}_u\right) \cap \feat\left(N^{\uparrow}_u\right)\right|.
\]

Intuitively, the split number of a decision tree $T$ is a measure of the interaction (number of common features) between the subtrees of the form $T_u$ and their exterior. 
A small split number allows us to essentially treat each subtree $T_u$ independently (in particular, the left and right subtrees below any node), which in turn leads 
to efficient algorithms for the problems $\minimum$ and $\minimal$.

\begin{theorem}
Let $c\geq 1$ be a fixed integer. Both $\minimum$ and $\minimal$ can be solved in polynomial time for decision trees 
with split number at most $c$. 
\label{thm:split-number-poly}
\end{theorem}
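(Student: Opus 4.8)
The plan is to exploit the bounded split number to process the decision tree bottom-up, maintaining for each subtree $T_u$ a small "interface" table indexed by the possible assignments to the features in the cut set $C_u := \feat(N^{\downarrow}_u) \cap \feat(N^{\uparrow}_u)$, which has size at most $c$, hence at most $2^c$ entries — a constant. The key observation is that, for a partial instance $\vy \subseteq \vx$, the quantity $\Pr_{\vz}[T(\vz)=1 \mid \vz \in \Comp(\vy)]$ decomposes multiplicatively across subtrees once we condition on how the external features in $C_u$ are set by $\vy$: the number of positive completions inside $T_u$ depends only on (i) which features local to $T_u$ are fixed by $\vy$ and to what values, and (ii) the restriction of $\vy$ to $C_u$. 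So for each node $u$ and each assignment $\tau$ to $C_u$ (including the ``$\bot$'' option on each coordinate), I would tabulate, over all ways of choosing which local features to leave free, the achievable pairs (number of free local features, number of positive completions), or rather the Pareto frontier of such pairs — and crucially, for $\minimum$ we want to maximize the number of free features subject to a threshold on the positive-completion count, and for $\minimal$ we need enough information to detect removability of individual features.

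For $\minimum$ I would carry this out as a dynamic program: at a leaf, the table is trivial; at an internal node $u$ labeled by feature $j$, I combine the tables of the two children, branching on whether $\vy[j] = \vx[j]$ (fixed, so only one child's subtree is reachable and contributes, the other contributes its total count of instances) or $\vy[j]=\bot$ (both children reachable, counts add, and one extra free feature). Since $C_u$ has constant size and the relevant counts are integers bounded by $2^n$ (so polynomially many bits), each table has polynomially many relevant entries, and the merge at each node is polynomial; the total is polynomial in $n$ and the size of $T$. At the root, $C_{\text{root}} = \emptyset$, so I read off the entry maximizing $|\vy|_\bot$ subject to the positive-completion count being $\geq \delta \cdot 2^{|\vy|_\bot}$ (rephrasing the $\delta$-threshold as a linear inequality between the count and a power of two indexed by the number of free features, which the DP tracks jointly). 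Reconstructing the actual minimum $\delta$-SR is done by standard backpointers.

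For $\minimal$ the DP is used differently: rather than optimizing globally, I would use it as a subroutine to compute, for any candidate partial instance $\vy$, both $\Pr_{\vz}[T(\vz)=T(\vx)\mid \Comp(\vy)]$ and — for each fixed coordinate $i$ — the same probability for $\vy$ with coordinate $i$ released; since Observation \ref{obs:remove-one} fails for $\delta<1$, greedily removing one feature at a time is not valid, so instead I would argue that bounded split number lets us compute a minimal $\delta$-SR by a bottom-up sweep that, at each subtree, records the set of minimal ``local choices'' (which local features to fix) compatible with each interface assignment $\tau$ on $C_u$ and with the subtree contributing at least a prescribed count — again only polynomially many because the interface is constant-sized — and then glues these together at the root. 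The main obstacle, and the step I expect to require the most care, is exactly this $\minimal$ case: making precise what must be stored at each node so that (a) gluing preserves global minimality (no single removable feature is overlooked across the subtree boundary) and (b) the stored information stays polynomial. I expect the fix is to store, per interface assignment $\tau$, the full antichain of count-maximal local fixing-patterns (which has size bounded because count is monotone in the fixing-pattern on the relevant sense, so the antichain lives in a product of constantly-many small lattices once we quotient by the $2^c$ interface), and to verify that a pattern is globally minimal iff it is locally ``irredundant'' against every interface assignment its ancestors can realize — a finite check enabled by the constant cut size.
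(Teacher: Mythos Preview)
Your dynamic programming approach for $\minimum$ is essentially the paper's: for each node $u$, index the DP by the at-most-$c$ features in the cut set $C_u = \feat(N^\downarrow_u)\cap\feat(N^\uparrow_u)$ and by a size budget, and store the best achievable probability. The paper works with probabilities rather than raw counts, which sidesteps your somewhat muddled remark that when $\vy[j]$ is fixed ``the other [child] contributes its total count of instances'' --- in fact the unreached child contributes nothing to the probability; if you insist on tracking counts you must instead multiply by powers of two for free features that lie only in the unreached subtree. One detail you gloss over but the paper makes explicit: when merging the two children's tables you must synchronize on the features common to \emph{both} subtrees (the paper calls this set ${\sf Sync}(u)$), and this is where the split-number bound is used a second time, since ${\sf Sync}(u)\subseteq {\sf Int}(u_1)$ so $|{\sf Sync}(u)|\le c$.

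For $\minimal$ you are working much too hard. A minimum $\delta$-SR is automatically a minimal $\delta$-SR: if $\vy$ maximizes $|\vy|_\bot$ among all $\delta$-SRs for $\vx$, then no $\vy'\subsetneq\vy$ can also be a $\delta$-SR, since $\vy'\subsetneq\vy$ forces $|\vy'|_\bot > |\vy|_\bot$. So once $\minimum$ is solved in polynomial time, $\minimal$ is solved by the very same algorithm --- this is exactly what the paper does, in one sentence. Your proposed machinery of antichains of local fixing-patterns, irredundancy checks across subtree boundaries, and so on, is not needed; the step you yourself flag as ``the main obstacle'' simply does not arise.
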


\begin{proof}[Proof Sketch]
 It suffices to provide a polynomial time algorithm for $\minimum$. (The same algorithm works for $\minimal$  
as a minimum $\delta$-SR is in particular minimal.) In turn, using standard arguments, 
it is enough to provide a polynomial time algorithm for the following decision problem
$\checkmin$: Given a tuple $(T,\vx,\delta, k)$, where $T$ is a decision tree of dimension $n$, $\vx\in\{0,1,\bot\}^n$ is a partial instance, 
$\delta\in (0,1]$, and $k\geq 0$ is an integer, decide whether there is a partial instance $\vy \subseteq \vx$ such that $n-|\vy|_{\bot}\leq k$ and 
$\Pr{}_{\!{\vz}}[T(\vz) = 1\mid \vz \in \Comp(\vy)]\geq \delta$. 

In order to solve $\checkmin$ over an instance $(T,\vx,\delta, k)$, where $T$ has split number at most $c$, we apply dynamic programming 
over $T$ in a bottom-up manner. Let $Z\subseteq\{1,\dots, n\}$ be the set of features defined in $\vx$, that is, features $i$ with $\vx[i]\neq \bot$. 
Those are the features we could eventually remove when looking for $\vy$. 
For each node $u$ in $T$, we solve a polynomial number of subproblems over the subtree $T_u$. We define
\[
{\sf Int}(u):= \feat\left(N^{\downarrow}_u\right) \cap \feat\left(N^{\uparrow}_u\right)\cap Z \qquad \qquad {\sf New}(u):= \left(\feat\left(N^{\downarrow}_u\right)\setminus {\sf Int}(u)\right)\cap Z.
\]
In other words, ${\sf Int}(u)$ are the features appearing both inside and outside $T_u$, while ${\sf New}(u)$ are the features only inside $T_u$, 
that is, the new features introduced below $u$. Both sets are restricted to $Z$ as features not in $Z$ play no role in the process. 

Each particular subproblem is indexed by a possible size $s\in\{0,\dots, k\}$ and a possible set $J\subseteq {\sf Int}(u)$ and the goal is to compute the quantity:
\[
p_{u,s,J} := \max_{\vy\in\, \mathcal{C}_{u,s,J}} \Pr{}_{\!{\vz}}[T_u(\vz) = 1\mid \vz \in \Comp(\vy)],
\]
where $\mathcal{C}_{u,s,J}$ is the space of partial instances $\vy \subseteq \vx$  with $n-|\vy|_{\bot}\leq s$ and such that $\vy[i]=\vx[i]$ for 
$i\in J$ and $\vy[i]=\bot$ for $i\in {\sf Int}(u)\setminus J$. In other words, the set $J$ fixes the behavior on ${\sf Int}(u)$ (keep features in $J$, remove features in ${\sf Int}(u)\setminus J$) and hence the maximization occurs over choices on the set ${\sf New}(u)$ (which features are kept and which features are removed). The key idea is that $p_{u,s,J}$ can be computed inductively using the information already computed for the children $u_1$ and $u_2$ of $u$. 
Intuitively, this holds since the common features between $T_{u_1}$ and $T_{u_2}$ are at most $c$, which is a fixed constant, 
and hence we can efficiently synchronize the information stored for 
$u_1$ and $u_2$. Finally, to solve the instance $(T,\vx,\delta, k)$ we simply check whether $p_{r, k,\emptyset}\geq \delta$, for the root $r$ of $T$.
\end{proof}

\subsection{Monotonicity}

\emph{Monotonic} classifiers have been studied in the context of XAI as they often present tractable cases for different explanations, as shown by~\citet{pmlr-v139-marques-silva21a}. The computation of minimal sufficient reasons for monotone models was known to be in $\ptime$ since the work of~\cite{Goldsmith2005}. We show that this is also the case for computing minimal $\delta$-SRs under a mild assumption on the class of models. 
	
Let us define the ordering $\preceq$ for instances in $\{0, 1\}^n$ as follows:
\[
\vx \preceq \vz \quad \text{iff} \quad \vx[i] \leq  \vz[i], \text{ for all } i \in \{1,\dots, n\}.
\]
We can now define monotonicity as follows.

\begin{definition}
A model $\cal M$ of dimension $n$ is said to be monotone if for every pair of instances $\vx,\vz  \in \{0,1\}^n$, it holds that:
\[
   \vx  \preceq \vz \implies \M(\vx) \leq \M(\vz).
\]
\end{definition}

We now prove that the problem of computing minimal probabilistic sufficient reasons can be solved in polynomial time for 
any class $\frak C$ of monotone Boolean models for which the problem of counting positive completions can be solved efficiently. Formally, 
the latter problem is defined as follows: given a model $\M \in \frak C$ of dimension $n$ and a partial instance $\vy \in \{0,1,\bot\}^n$, 
compute $|\{ \vx \in \textsc{Comp}(\vy) \mid \M(\vx) = 1\}|$. We call this problem ${\frak C}\text{-}\counting$

\begin{theorem}
Let $\mathfrak{C}$ be a class of monotone Boolean models such that the
problem ${\mathfrak{C}}\text{-}\counting$ can be solved in polynomial time.
Then $\minimal$ can be solved in polynomial time over $\mathfrak{C}$.
\label{thm:monotone-poly}
\end{theorem}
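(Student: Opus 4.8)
The plan is to reduce the computation of a minimal $\delta$-SR to polynomially many calls of the oracle ${\mathfrak{C}}\text{-}\counting$, exploiting two structural properties of monotone models that fail in general (recall that Example~\ref{ex:psr-2} shows the ``erase one feature'' principle of Observation~\ref{obs:remove-one} is false once $\delta<1$). Fix $\M\in\mathfrak{C}$ and an instance $\vx$, and assume first that $\M(\vx)=1$; let $P=\{i:\vx[i]=1\}$. The first property is the following monotone refinement of Observation~\ref{obs:remove-one}: for any partial instance $\vy\subseteq\vx$ and any coordinate $i$ defined in $\vy$, if $\vx[i]=0$ then erasing $i$ (setting $\vy[i]$ to $\bot$) cannot decrease $\Pr_{\vz}[\M(\vz)=1\mid \vz\in\Comp(\vy)]$, whereas if $i\in P$ then erasing $i$ cannot increase it. Both halves follow from one observation: if $\hat{\vy}$ is the result of erasing $i$, then flipping coordinate $i$ to $1$ is a bijection from the ``$\vz[i]=0$'' half of $\Comp(\hat{\vy})$ onto the ``$\vz[i]=1$'' half that sends each $\vz$ to some $\vz'\succeq\vz$; hence, by monotonicity, the ``$\vz[i]=1$'' half contains at least as many positive completions as the other, and dividing the appropriate counts by $|\Comp(\hat{\vy})|=2|\Comp(\vy)|$ yields both inequalities (note that $\Comp(\vy)$ is precisely the ``$\vz[i]=\vx[i]$'' half).

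Two consequences follow. First, any minimal $\delta$-SR $\vy$ for $\vx$ must have $\vy[i]=\bot$ for every $i\notin P$, since otherwise erasing such an $i$ would preserve the $\delta$-SR property and contradict minimality; hence $\vy$ is determined by $S:=\{i\in P:\vy[i]=1\}$, and, writing $\vy_S$ for the partial instance that keeps exactly the coordinates in $S\subseteq P$ (set to $1$) and erases all others, one checks that $\vy_S$ is a minimal $\delta$-SR precisely when $S$ is a minimal element of the family $\mathcal{F}:=\{S\subseteq P:\vy_S\text{ is a }\delta\text{-SR}\}$. Second, $\mathcal{F}$ is \emph{upward-closed} (erasing a coordinate of $P$ only decreases the relevant probability) and nonempty: $\vx$ is the $\preceq$-minimum of $\Comp(\vy_P)$, so every completion of $\vy_P$ is positive and $P\in\mathcal{F}$. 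A minimal element of an upward-closed, membership-testable family is found by the standard greedy procedure: set $S:=P$, scan the elements of $P$ in a fixed order, and for the current $i$ replace $S$ by $S\setminus\{i\}$ whenever $S\setminus\{i\}\in\mathcal{F}$; upward-closedness guarantees the output is minimal in $\mathcal{F}$. Each membership test reduces to comparing ${\mathfrak{C}}\text{-}\counting(\M,\vy_{S'})$ with $\delta\cdot 2^{|\vy_{S'}|_\bot}$, i.e.\ a single oracle call, and there are at most $|P|\le n$ tests, so the algorithm is polynomial. The case $\M(\vx)=0$ is handled symmetrically, exchanging the roles of $0$ and $1$ throughout; equivalently, one applies the $\M(\vx)=1$ argument to the monotone model obtained by complementing every input bit together with the output, whose positive-completion counts are read off from those of $\M$.

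The step I expect to be the crux is the first property. Because the unrestricted one-feature-removal principle genuinely fails for $\delta<1$, the whole argument hinges on pinning down the exact situation in which it still holds (monotone model, erasing a coordinate whose value agrees with the ``side'' of $\M(\vx)$) and on observing that minimality forces every solution into precisely that situation, which is what lets us recover a monotone --- and hence greedily optimizable --- predicate. The remaining steps are routine bookkeeping: checking that the pairing used above is $\preceq$-monotone, that $\vy_P\in\mathcal{F}$, that minimality of $\vy_S$ matches minimality of $S$ in $\mathcal{F}$, and that all counting queries (including those arising in the dual reformulation for the $\M(\vx)=0$ case) stay within the claimed polynomial bound.
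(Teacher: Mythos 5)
Your proof is correct and follows essentially the same route as the paper: the heart of both arguments is the identical counting inequality obtained by pairing the two halves of $\Comp(\hat{\vy})$ via the bit-flip at $i$ and invoking monotonicity, and both conclude with a greedy feature-removal driven by the $\counting$ oracle. The only difference is organizational: the paper states this as Lemma~\ref{lemma:monotone} (a non-minimal $\delta$-SR always admits a one-coordinate-smaller $\delta$-SR) and runs the restart-scan greedy of Algorithm~\ref{alg:minimal-delta}, whereas you package the same inequality as upward-closure of the family $\mathcal{F}$ over the coordinates agreeing with $\M(\vx)$, which cleanly justifies a single-pass greedy.
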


\begin{proof}[Proof sketch]
Consider a partial instance $\vy$ of dimension $n$ and $i \in \{1,\dots, n\}$. Suppose $\vy[i] \neq \bot$. 
We write $\vy \setminus \{i\}$ for the partial instance $\vy'$ 
that is exactly equal to $\vy$ save for the fact that $\vy'[i] = \bot$.
 We make use of the following lemma, which is a probabilistic counterpart to Observation~\ref{obs:remove-one}.   

\begin{lemma} \label{lemma:monotone} 
Let $\mathfrak{C}$ be a class of monotone models, ${\cal M} \in \mathfrak{C}$ a model of dimension $n$, and $\vx \in \{0, 1\}^n$ an instance. 
Consider any $\delta \in (0,1]$. Then if $\vy \subseteq \vx$ is a $\delta$-SR for $\vx$ under $\cal M$ which is not minimal, then there is a partial instance $\vy \setminus \{i\}$, for some $i \in \{1,\dots,n\}$, that is also a $\delta$-SR for $\vx$ under $\cal M$.
\label{lemma:monotone-close-exp}
\end{lemma}

With this lemma we can prove Theorem~\ref{thm:monotone-poly}. In fact, 
consider the following algorithm, a slight variant of Algorithm~\ref{alg:minimal}: 

\begin{algorithm}
	\KwInput{Monotone model $\M$ and instance $\vx$, both of dimension $n$, together with $\delta \in (0, 1]$. Assume without loss of generality that $\M(\vx) = 1$.}
	\KwOutput{A $\delta$-SR $\vy$ for $\vx$ under $\M$.}
	\vspace{0.5em}
	$\vy \gets \vx$\\
	\While{true} {
		$\text{reduced} \gets \false$\\
		\For{$i \in \{1, \ldots, n\}$}{
			$\hat{\vy} \gets \vy \setminus \{i\}$\\
			\If{$|\{ \vx \in \Comp(\hat{\vy}) \mid \M(\vx) = 1 \}| \geq \delta 2^{|\hat{\vy}|_\bot}$}{
				$\vy \gets \hat{\vy}$\\
				$\text{reduced} \gets \true$\\
				\Break
			}
		}
		\If{($\neg \text{reduced}$) or $|\vy|_\bot = n$}{
			\Return $\vy$
		}
	}
	\caption{Compute a $\delta$-SR over a monotone model}
	\label{alg:minimal-delta}
\end{algorithm}

As by hypothesis $|\{ \vx \in \Comp(\hat{\vy}) \mid \M(\vx) = 1 \}|$  can be computed in polynomial time, the runtime of this algorithm is also polynomial, by the same analysis of Algorithm~\ref{alg:minimal}.

%
\end{proof}

As a corollary, the computation of minimal probabilistic sufficient reasons can be carried out in polynomial time not only over monotone decision trees, but 
also over monotone free BDDs. 

\begin{corollary} \label{coro:monotone}
The problem of computing minimal $\delta$-SRs can be solved in polynomial time over the class of monotone free BDDs. 
\end{corollary}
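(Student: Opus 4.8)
The plan is to derive the corollary as an immediate application of Theorem~\ref{thm:monotone-poly}. Since every monotone free BDD is in particular a monotone Boolean model, it suffices to take $\mathfrak{C}$ to be the class of monotone free BDDs and to verify the single nontrivial hypothesis of that theorem: that ${\mathfrak{C}}\text{-}\counting$ is in \ptime, i.e., given a free BDD $\M$ of dimension $n$ and a partial instance $\vy\in\{0,1,\bot\}^n$, one can compute $|\{\vx\in\Comp(\vy)\mid\M(\vx)=1\}|$ in time polynomial in the size of $\M$. It is worth emphasizing that monotonicity plays no role in this counting step---it enters the proof of Theorem~\ref{thm:monotone-poly} only through Lemma~\ref{lemma:monotone}, which licenses the greedy Algorithm~\ref{alg:minimal-delta}; exact model counting for free BDDs (read-once branching programs) is unconditionally tractable, a classical fact (cf.~\citep{DBLP:journals/dam/Wegener04}). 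I would nonetheless include the short argument for self-containedness.

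First I would condition $\M$ on the defined part of $\vy$: splice out every internal node labeled by a feature $i$ with $\vy[i]\neq\bot$, redirecting its incoming edges---or the entry point, if it is the root---to its $\vy[i]$-child. Since this only deletes nodes and reroutes edges, it cannot create a repeated label along a path, so the result is a free BDD $\M'$ whose internal nodes are labeled only by features in $V:=\{i\mid\vy[i]=\bot\}$, and clearly $|\{\vx\in\Comp(\vy)\mid\M(\vx)=1\}|=|\{\vx'\in\{0,1\}^{V}\mid\M'(\vx')=1\}|$; this conditioning is plainly polynomial. It then remains to count the accepting assignments of a free BDD $\M'$ over the variable set $V$. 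For every node $u$ of $\M'$ reachable from its root, let $B_u\subseteq V$ be the set of features labeling the nodes reachable from $u$ (including $u$ when $u$ is internal); these are computed bottom-up by $B_u=\{i\}\cup B_{u_0}\cup B_{u_1}$ at an internal node testing $i$ with children $u_0,u_1$. Processing nodes so that each is handled after both of its children, I would then compute
\[
N(u)=
\begin{cases}
1 & u\text{ a }\true\text{-leaf},\\
0 & u\text{ a }\false\text{-leaf},\\
2^{\,|B_u|-1-|B_{u_0}|}N(u_0)+2^{\,|B_u|-1-|B_{u_1}|}N(u_1) & u\text{ internal, children }u_0,u_1,
\end{cases}
\]
and return $2^{\,|V|-|B_r|}N(r)$, where $r$ is the root of $\M'$.

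The only substantive point is the correctness of this recursion. By induction along the processing order one shows that $N(u)$ equals the number of assignments $\alpha\colon B_u\to\{0,1\}$ that drive the run of $\M'$, started at $u$ and following $\alpha$, into a $\true$-leaf; the leaf cases are immediate. For an internal node $u$ testing $i$, freeness gives $i\notin B_{u_0}\cup B_{u_1}$, so once the value $\alpha(i)=b$ is fixed the run continues entirely within the sub-DAG rooted at $u_b$ and its outcome depends only on $\alpha$ restricted to $B_{u_b}$; hence the assignments with $\alpha(i)=b$ that reach a $\true$-leaf number exactly $N(u_b)\cdot 2^{|B_u|-1-|B_{u_b}|}$, the extra factor counting the features of $B_u\setminus(\{i\}\cup B_{u_b})$ that are never read (the exponent being nonnegative since $B_{u_b}\subseteq B_u\setminus\{i\}$). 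Summing over $b\in\{0,1\}$ gives the displayed recursion, and the final factor $2^{|V|-|B_r|}$ accounts for the features of $V$ not tested below the root. Every $N(u)$ is bounded by $2^{|V|}\le 2^{n}$, hence has $O(n)$ bits, so the whole routine runs in time polynomial in $|\M|$ and $n$; feeding it into Theorem~\ref{thm:monotone-poly} yields the corollary. (Should the proof of that theorem also require counts of \emph{negative} completions, the identical routine applies after swapping the leaf labels of $\M'$, which is still a free BDD.) I do not expect any genuine obstacle here beyond setting up this counting recursion and its justification precisely.
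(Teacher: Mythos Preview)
Your proposal is correct and follows exactly the route the paper intends: the corollary is stated as an immediate consequence of Theorem~\ref{thm:monotone-poly}, so the only thing to check is that ${\mathfrak{C}}\text{-}\counting$ is in \ptime\ for free BDDs, which the paper leaves as a known fact (citing \citep{DBLP:journals/dam/Wegener04}) and which you spell out with the standard condition-then-bottom-up-DP argument. Your added detail on the counting recursion is sound and more explicit than anything the paper provides.
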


\begin{remark}
		The proof of Theorem~\ref{thm:neurips2020} uses only a monotone decision tree, and thus we cannot expect to compute minimum $\delta$-sufficient reasons in polynomial time for any $\delta \in (0, 1]$. However, the proof of Theorem~\ref{prop:delta-minimum-hardness} uses decision trees that are not monotone, and thus a new proof would be required to prove $\np$-hardness for every fixed $\delta \in (0, 1]$.
\end{remark}

\section{SAT to the Rescue!}
\label{section:sat}

Despite the theoretical intractability results presented earlier on, many $\np$-complete problems can be solved over practical instances with the aid of \emph{SAT solvers}.

By definition, any $\np$-complete problem can be \emph{encoded} as a $\cnf$ satisfiability (SAT) problem, which is then solved by a highly optimized program, a \emph{SAT solver}. In particular, if a satisfying assignment is found for the $\cnf$ instance, one can translate such an assignment back to a solution for the original problem. In fact, this paradigm has been successfully used in the literature for other explainability problems~\citep{iisms-aaai22a, Ignatiev2021, explainTrees}.  
 The effectiveness of this approach is highly dependent on the particular encoding being used~\citep{handsat}, where the aspect that arguably impacts performance the most is the size of the encoding, measured as the number of clauses of the resulting $\cnf$ formula. 
 
 In this section we present an encoding that uses some standard automated reasoning techniques (e.g., \emph{sequential encondings}~\citep{Sinz2005}) combined with ad-hoc \emph{bit-blasting}~\citep{handsat}, where the arithmetic operations required for probabilistic reasoning are implemented as Boolean circuits and then encoded as $\cnf$ by manual Tseitin transformations. The appendix describes experimentation both over synthetic datasets and standard datasets (MNIST) and reports empirical results. A recent report by~\citet{https://doi.org/10.48550/arxiv.2205.09569}
  also uses automated reasoning for this problem, although through an SMT solver.

Let us consider the decision version of the \minimum\ problem, which can be stated as follows. 

\begin{center}
\fbox{\begin{tabular}{lp{8cm}}
{\small PROBLEM:} : & {\sf Decide-Minimum-SR}\\
{\small INPUT} : & A decision tree $T$ of dimension $n$, an instance $\vx$ of dimension $n$, an integer $k \leq n$,  and $\delta \in (0,1]$
\\ 

{\small OUTPUT} : & \textsf{Yes} if there is a $\delta$-SR $\vy$ for $\vx$ under $T$ with  $|\vy|_\bot \leq k$, and \textsf{No} otherwise.

\\
\end{tabular}}
\end{center}

This problem is $\np$-complete. Membership is already proven by~\cite{Izza2021EfficientEW}. 
Hardness follows directly from Theorem~\ref{prop:delta-minimum-hardness}, as if one were able to solve {\sf Decide-Minimum-SR} in polynomial time, then a simple binary search over $k$ would allow solving $\minimum$ in polynomial time.

\subsection{Deterministic Encoding}
Let us first propose an encoding for the particular case of $\delta = 1$. 
First, create Boolean variables $f_i$ for $i \in \{1, \ldots, n\}$, with $f_i$ representing that $\vy[i] \neq \bot$, where $\vy$ is the desired $\delta$-SR. Then, for every node $u$ of the tree $T$, create a variable $r_u$ representing that node $u$ is \emph{reachable} by a completion of $\vy$, meaning that there exists a completion of $\vy$ that goes through node $u$ when evaluated over $T$. We then want to enforce that: 
\begin{enumerate}
	\item $r_\textsc{root} = 1$, where $\textsc{root}$ is the root of $T$. This means that the root is always reachable.
	\item The desired $\delta$-SR $\vy$ satisfies $n - |\vy|_\bot \leq k$, meaning that 
	\[
		\sum_{i=1}^n f_i \leq k.
	\] 
	\item If $T(\vx) = 1$, and $F$ is the set of $\false$ leaves of $T$, then $r_\ell = 0$ for every $\ell \in F$. This means that if we want to explain a positive instance, the completions of the explanation $\vy$ must all be positive (recall we assume $\delta=1)$, and thus no $\false$ leaf should be reachable. Conversely, If $T(\vx) = 0$, and $G$ is the set of $\true$ leaves of $T$, then $r_\ell = 0$ for every $\ell \in G$.
	\item The semantics of reachability is \emph{consistent}: if a node $u$ is reachable, and its labeled with feature $i$, then if $f_i = 0$, meaning that feature $i$ is undefined in $\vy$, both children of $u$ should be reachable too. In case $f_i = 1$, only the child along the edge corresponding to $\vx[i]$ should be reachable.
\end{enumerate}

Let us analyze the size of this encoding. Condition 1 requires a single clause to be enforced. Condition 2 can be enforced with $O(nk)$ variables and clauses using the linear encoding of~\cite{Sinz2005}. Condition 3 requires at most one clause per leaf of $T$ and thus at most $O(|T|)$ clauses. Condition 4 can be implemented with a constant number of clauses per node. We thus incur in a total of $O(nk + |T|)$ variables and clauses, which is pretty much optimal considering $\Omega(n + |T|)$ is a lower bound on the representation of the input. 

Note as well that from a satisfying assignment we can trivially recover the desired explanation $\vy$ as
\[
	\vy[i] = \begin{cases}
		\vx[i] & \text{if } f_i =1,\\
		\bot &  \text{otherwise.}
	\end{cases}
\]

An efficient encoding supporting values of $\delta$ different than $1$ is significantly more challenging and involved, and thus we only provide an outline here. An exhaustive description, together with the implementation, is provided in the supplementary material.

\subsection{Probabilistic Encoding}

In order to encode that
\[
\Pr{}_{\!{\vz}}[T(\vz) = T(\vx) \mid \vz \in \Comp(\vy)] \	\geq \delta,
\]
we will directly encode that
\[
	\left|\{ \vz \in \Comp(\vy) \mid T(\vz) = T(\vx)\}\right| \geq \left\lceil \delta 2^{|\vy|_\bot} \right\rceil,
\]
where we assume for simplicity that the ceiling can be take safely, in order to have a value we can represent by an integer. This assumption is not crucial.
As before, we will have $f_i$ variables representing whether $\vy[i] \neq \bot$ or not, and enforce that 
\[
	\sum_{i=1}^n f_i \leq k.
\] Now define variables $c_j$ for $j \in \{0, \ldots, n\}$, such that $c_j$  is true exactly when $\sum_{i=1}^n f_i = j$. This can again be done efficiently via a linear encoding.
Let $t_i$ for $i \in \{0, \ldots, n\}$ a series of variables that represent the bits of an integer $t$. The integer $t$ will correspond to the value of $\left\lceil \delta 2^{|\vy|_\bot} \right\rceil$. Note that as $|\vy|_\bot$ can only take $n+1$ different values, there are only $n+1$ different values that $t$ can take. Moreover, the value of the $c_j$ variables completely determines the value of $t$, and thus we can manually encode how the $c_j$ variables determine the bits $t_i$.
We can now assume that we have access to $t$ through its bits $t_i$. Our goal now is to build a binary representation of 
\[
	\alpha = \left|\{ \vz \in \Comp(\vy) \mid T(\vz) = T(\vx)\}\right|,
\] 
so that we can then implement the condition $\alpha \geq t$ with a Boolean circuit on their bits.

In order to represent $\alpha$, we will decompose the number of instances $\vz$ according to the leaves of $T$ as follows. If $L$ is the set of leaves of $T$ whose label matches $T(\vx)$, then
\[
 \left|\{ \vz \in \Comp(\vy) \mid T(\vz) = T(\vx)\}\right| = \sum_{\ell \in L}  \left|\left\{ \vz \in \Comp(\vy) \mid T(\vz) \rightsquigarrow \ell)\right\}\right|,
\]
where notation $T(\vz) \rightsquigarrow \ell$ means that the evaluation of $\vz$ under $T$ ends in the leaf $\ell$. For a given leaf $\ell$, we can compute its \emph{weight}
\[
	w_\ell \coloneqq \left|\{ \vz \in \Comp(\vy) \mid T(\vz) \rightsquigarrow \ell)\}\right|
\]
as described next. Let $F_\ell$ be the set of labels appearing in the unique path from the root of $T$ down to $\ell$. Now let $u_\ell$ be the number of undefined features of $\vy$ along said unique path, and thus $u_\ell$ can be defined as follows.
\[
	u_\ell \coloneqq | F_\ell \setminus \{ i \mid f_i = 1\}|.
\]
The sets $F_\ell$ depend only on $T$ and thus can be precomputed. Therefore we can use a linear encoding again to define the values $u_\ell$. It is simple to observe now that 

\[
	w_\ell = 2^{u_\ell},
\]
which means that by representing the values $u_\ell$ we can trivially
represent the values $w_\ell$ in binary (they will simply consist of a $1$ in their $(u_\ell + 1)$-th position right-to-left), and then implement a Boolean addition circuit to compute 
\[
\alpha = \sum_{\ell} w_\ell.
\]
This concludes the outline of the encoding. Its number of variables and clauses can be easily seen to be at most $O(n^2 |T| + nk)$.

\section{Conclusions and Future Work} \label{sec_future} 

We have settled the complexity of two open problems in formal XAI, proving that both minimal and minimum size probabilistic explanations can be hard to obtain even for decision trees. These results further support the idea that decision trees may not always be interpretable~\citep{NEURIPS2020_b1adda14, lipton2018mythos, NEURIPS2021_60cb558c, Audemard}, while being the first  results to do so even in a probabilistic setting.
Our study has focused on Boolean models, and moreover it is based on the assumption that features are independent and identically distributed, with probability $\nicefrac{1}{2}$. 
Naturally, our hardness results in this restricted setting imply hardness in more general settings, but we leave as future work the study of tractable cases (e.g., bounded split-number trees, monotone classes that allow counting) under general, potentially correlated, distributions.

The results proven in this paper suggest that minimal (or minimum) explanations might be hard to obtain in practice even for decision trees, especially in problems where the feature space has large dimension. A way to circumvent the limitations proven in our work is to relax the guarantee of minimality, or introduce some probability of error, as done in the work of~\citet{blanc2021provably}. A promising direction of future research is to better understand the kind of guarantees and settings in which is still possible to obtain tractability.

Finally, our work leaves open some interesting technical questions:
\begin{enumerate}
\item What is the {\em parameterized} complexity of computing minimum $\delta$-SRs over decision trees, assuming that the parameter is  
the size of the explanation one is looking for?  It is not hard to see that $\mathrm{W}[2]$ hardness follows from our proofs (i.e., they are parameterized reductions all the way to Set Cover), but membership in any class is fully open. Parameterized complexity might be of particular interest for this class of problems as explanations might reasonably be expected to be very small in practice.
\item Does Theorem \ref{prop:delta-minimum-hardness} continue to hold for monotone decision trees? That is, is it the case that computing minimum $\delta$-SRs over monotone decision trees is hard for every fixed $\delta \in (0,1]$? 
\item Is it the case that the hardness of computation for minimal sufficient reasons holds for a {\em fixed} $\delta \in (0,1]$? If so, does it hold for every fixed such a $\delta$, 
or only for some? 
\item Is it possible to strengthen the hardness results in this paper and show that these problems cannot be approximated efficiently? Again by following our chain of reductions one can derive inapproximability results coming from the inapproximability of Set Cover (Theorem \ref{prop:delta-minimum-hardness}) or Clique (Theorem \ref{thm:delta-minimal-hardness}).
\item Is it possible to extend the positive behavior of decision trees with bounded split number to more powerful Boolean ML models such as free BDDs? 
\end{enumerate}

\newpage

\bibliographystyle{abbrvnat}
\bibliography{bibliography}

\newpage

%

\appendix

\section*{Appendix}

{\bf Organization.} The supplementary material is organized as
follows. In section \ref{app:prop:delta-minimum-hardness}, we provide
a proof that there is no polynomial-time algorithm for the problem of
computing minimum $\delta$-sufficient reasons (unless $\ptime = \np$),
even when $\delta$ is a fixed value. In section
\ref{app:thm:delta-minimal-hardness}, we provide a proof that there is
no polynomial-time algorithm for the problem of computing minimal
$\delta$-sufficient reasons (unless $\ptime = \np$). In particular, we
define in this section a decision problem that we prove to be
$\np$-hard in Section \ref{subsec:hardness-decision}, and then we show
in Section \ref{subsec:decision-to-computation} that this decision
problem can be solved in polynomial-time if there exists a
polynomial-time algorithm for the problem of computing minimal
$\delta$-sufficient reasons. In Section
\ref{app:thm:split-number-poly}, we provide a proof that minimum and
minimal $\delta$-sufficient reasons can be computed in polynomial-time
when the split number (defined in Section 5.1) is bounded. Finally, we
provide in Section \ref{app:lemma:monotone} a proof of Lemma 1, which
completes the proof of tractability of the problem of computing
minimal $\delta$-sufficient reasons for each class of monotone Boolean
models for which the problem of counting positive completions can be
solved in polynomial time. Finally, we describe in Section
\ref{sec:experiments} our experimental evaluation of the encodings
given in Section 6.



\section{Proof of Theorem \ref{prop:delta-minimum-hardness}}
\label{app:prop:delta-minimum-hardness}

\begin{thmbis}{prop:delta-minimum-hardness} 
  Fix $\delta \in (0,1]$. Then assuming that $\ptime \neq \np$, there is no polynomial-time algorithm for {\em $\delta\text{-}\minimum$}.
\end{thmbis}


Before we can prove this theorem, we will require some auxiliary
lemmas. Given rational numbers $a,b$ with $a \leq b$, recall that
notation $[a,b]$ refers to the set of rational numbers $x$ such that
$a \leq x \leq b$ (and likewise for $[a,b)$). 

\begin{lemma}
Fix $\delta \in (0, 1)$. Given as input an integer $n$ one can build in $n^{O(1)}$ time a decision tree $T_\delta$ of dimension $n$, such that
\[
    \left|\Pr_{\vz}\left[ T_\delta(\vz) = 1 \mid \vz \in \Comp(\bot^n)\right] - \delta \right| \leq \frac{1}{2^n},
\]
and moreover, there exists an instance $\vx^\dagger$ for $T_\delta$ such that every partial instance $\vy \subseteq \vx^\dagger$ holds
\[
\Pr_{\vz}\left[ T_\delta(\vz) = 1 \mid \vz \in \Comp(\vy)\right]  \leq \Pr_{\vz}\left[ T_\delta(\vz) = 1 \mid \vz \in \Comp(\bot^n)\right].
\]
\label{lemma:T_delta}
\end{lemma}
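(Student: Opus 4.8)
The plan is to realise $\delta$, up to $n$ bits of precision, as the fraction of accepting completions of a \emph{monotone} decision tree --- concretely an integer‑comparison (threshold) function --- and then to take $\vx^\dagger$ to be the all‑zeros instance, so that the required ``local optimality of $\bot^n$'' falls out of monotonicity for free.

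First I would set $m := \lfloor \delta 2^n \rfloor$, the integer whose $n$‑bit binary expansion is the length‑$n$ truncation of the binary expansion of $\delta$; since $0<\delta<1$ we have $0\le m\le 2^n-1$ and $0\le \delta - m/2^n < 2^{-n}$. If $m=0$, take $T_\delta$ to be the single‑leaf tree labelled $\false$ and the statement is trivial (all the probabilities are $0$), so assume $m\ge 1$ and put $\theta := 2^n - m$, with $n$‑bit binary representation $\theta_1\cdots\theta_n$ (most significant bit first). Then I would let $T_\delta$ be the $O(n)$‑node ``comb'' that reads $\vz$ as the integer $\langle\vz\rangle := \sum_{i=1}^n \vz[i]\,2^{n-i}$ and accepts iff $\langle\vz\rangle \ge \theta$: it queries features $1,2,\dots$ in order; at the node for feature $i$ (reached only when $\vz[j]=\theta_j$ for all $j<i$) it jumps to a $\true$ (resp.\ $\false$) leaf if $\vz[i]>\theta_i$ (resp.\ $\vz[i]<\theta_i$), and otherwise proceeds to the node for feature $i+1$, with a final $\true$ leaf after feature $n$ (since then $\langle\vz\rangle=\theta\ge\theta$). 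This is a free, tree‑shaped BDD of dimension $n$, built in $n^{O(1)}$ time, and $|\{\vz : T_\delta(\vz)=1\}| = 2^n-\theta = m$, so
\[
\Pr{}_{\!\vz}\!\big[T_\delta(\vz)=1 \mid \vz\in\Comp(\bot^n)\big] = \frac{m}{2^n},
\]
which differs from $\delta$ by less than $2^{-n}$, as required for the first claim.

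For the second claim I would take $\vx^\dagger := 0^n$. Since $\vz\preceq\vz'$ implies $\langle\vz\rangle\le\langle\vz'\rangle$, the tree $T_\delta$ is monotone, and the restriction of a monotone Boolean function to any subcube is again monotone. Any $\vy\subseteq 0^n$ fixes a set $S$ of coordinates to $0$, so it suffices to show that for every monotone $g$ and every $S$,
\[
\Pr{}_{\!\vz}\!\big[g(\vz)=1\mid \vz[i]=0\ \forall i\in S\big] \;\le\; \Pr{}_{\!\vz}\!\big[g(\vz)=1\big].
\]
This I would prove by induction on $|S|$: pick $j\in S$, let $h$ be the restriction of $g$ to $\{\vz[i]=0 : i\in S\setminus\{j\}\}$, which is monotone in the remaining coordinates; coupling over the free coordinates other than $j$ gives $\Pr[h=1\mid \vz[j]=0]\le \Pr[h=1\mid \vz[j]=1]$, hence $\Pr[h=1\mid \vz[j]=0]\le \Pr[h=1]$, and $\Pr[h=1] = \Pr[g=1\mid \vz[i]=0\ \forall i\in S\setminus\{j\}]$, which is $\le \Pr[g=1]$ by the inductive hypothesis. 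Applying this with $g=T_\delta$ yields the claimed inequality for every $\vy\subseteq\vx^\dagger$.

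The only real obstacle is discovering a single construction that simultaneously (i) has a prescribed, polynomially specifiable number of accepting completions and (ii) admits an instance $\vx^\dagger$ that is a worst case under \emph{every} subset of coordinate fixings; once one observes that the integer‑comparison function is monotone, has exactly $2^n-\theta$ accepting inputs, and is computable by an $O(n)$‑node decision tree, monotonicity handles (ii) essentially automatically.
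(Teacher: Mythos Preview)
Your proof is correct, and it takes a genuinely different route from the paper's own argument. Both constructions build an $O(n)$-node ``comb'' tree with exactly $m=\lfloor\delta\,2^n\rfloor$ accepting inputs, so the first claim is handled identically. The difference lies in \emph{which} Boolean function is chosen and how the second claim is argued. The paper's tree is \emph{not} monotone: it walks down the spine via $0$-edges and, at the node for bit $i$, sends the $1$-edge to $\true$ or $\false$ according to the $i$-th bit $\alpha_i$ of $c$; this accepts $\vz$ iff the highest set bit of $\vz$ happens to be a $1$-bit of $c$. The paper then takes $\vx^\dagger[i]=1-\alpha_i$ and proves the inequality by an explicit three-case induction over the subtrees $T_c^i$. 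Your construction instead realises the \emph{threshold} function $\langle\vz\rangle\ge 2^n-m$, which is monotone, so you can take $\vx^\dagger=0^n$ and get the second claim from the standard fact that conditioning a monotone function on any coordinate being $0$ cannot increase its acceptance probability. What this buys you is that the ad-hoc case analysis in the paper's induction is replaced by a one-line monotonicity/coupling argument; the paper's approach, on the other hand, exhibits a non-monotone example and foreshadows the later observation in the paper that the reduction in Theorem~\ref{prop:delta-minimum-hardness} does not preserve monotonicity.
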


\begin{proof}
Let $c = \lfloor \delta 2^n \rfloor$, and note that 
\(
\delta - \frac{1}{2^n} \leq \frac{c}{2^n} \leq \delta
\), and thus $|\delta - \frac{c}{2^n}| \leq \frac{1}{2^n}$.
This implies that we can prove the first part of the lemma by building in polynomial time a tree $T_c$ over $n$ variables, that has exactly $c$ different positive instances, as then its probability of accepting a random completion of $\bot^n$ will be exactly $\frac{c}{2^n}$. Note as well that $c < 2^n$ as $\delta < 1$.

As a first step, let us write $c$ in binary, obtaining 
\[
    c = \alpha_0 2^{0} + \alpha_1 2^{1} + \cdots + \alpha_{n-1}2^{n-1},
\]
with $\alpha_i \in \{0, 1\}$ for each $i$. 
Then to build $T_c$ start by creating $n$ vertices, labeled $0$ through $n-1$. These $n$ labels are the variables of $T_c$. For each $i \in \{1, \ldots, n-1\}$, connect vertex labeled $i$ to vertex labeled $i-1$ with a $0$-edge, making vertex labeled $n-1$ the root of $T_c$. 
Then, for each vertex with label $i \in \{0, \ldots, n-1\}$, set its $1$-edge towards a leaf with label $\true$ if $\alpha_{i} = 1$, and towards a leaf with label $\false$ if $\alpha_{i} = 0$. The $0$-edge of vertex labeled $0$ goes towards a leaf with label $\false$. Now let us count how many different positive instances does $T_c$ have. We can do this by summing over all true leaves of $T_c$. Each true leaf comes from a $1$-edge from a vertex labeled $i \in \{0, \ldots, n-1\}$. For every $i \in \{0, \ldots, n-1\}$, if the vertex labeled $i$ has a true leaf when following its $1$-edge, then the number of instances reaching that true leaf is exactly $2^{i}$, as the variables whose value is not determined by the path to that leaf are those with labels less than $i$, which are exactly $i$ variables. Therefore, the number of different positive instances of $T_c$ along a $1$-edge is the sum of $2^i$ for every $i$ such that $\alpha_i = 1$, which is exactly $c$.  An example is given in Figure~\ref{fig:T_delta}. This concludes the proof of the first part of the lemma as the construction is clearly polynomial in $n$. For the second part, let us build $\vx^\dagger$ by setting 
\(
    \vx^\dagger[i] = 1 - \alpha_{i}
\) for every $i \in \{0, \ldots, n-1\}$.
In the example presented in Figure~\ref{fig:T_delta}, we would build
\[
     \vx^\dagger = \begin{pmatrix} 1, & 1, & 0, & 0, & 1 \end{pmatrix}.
\]

We will now prove that for any $\vy \subseteq \vx^\dagger$, it holds that 
\[
\Pr_{\vz}\left[ T_c(\vz) = 1 \mid \vz \in \Comp(\vy)\right]  \leq \Pr_{\vz}\left[ T_c(\vz) = 1 \mid \vz \in \Comp(\bot^n)\right].
\]
We do this via a finite induction argument by strengthening our induction hypothesis; for $i \in \{0, \ldots, n-1\}$, let $T^i_c$ be the sub-tree of $T_c$ rooted at the vertex labeled $i$, and let us claim that for every $i \in \{0, \ldots, n-1\}$ we have that
\[
\Pr_{\vz}\left[ T^i_c(\vz) = 1 \mid \vz \in \Comp(\vy)\right]  \leq \Pr_{\vz}\left[ T^i_c(\vz) = 1 \mid \vz \in \Comp(\bot^n)\right],
\]
which implies what we want to show when taking $i = n-1$. The base case of the induction is when $i = 0$, in which case the claim trivially holds as if $\vy[0] = \bot$ we have equality, and if $\vy[0] \neq \bot$ then by construction 
\[
\Pr_{\vz}\left[ T^i_c(\vz) = 1 \mid \vz \in \Comp(\vy)\right] = 0.
\] 
 For the inductive case, let $i > 0$, and  proceed by cases; if $\vy[i] = \bot$, then by letting $t_i \in \{0, 1\}$ be an indicator variable for whether the leaf across the $1$-edge from vertex $i$ is labeled $\true$ we have that
\begin{align*}
\Pr_{\vz}\left[ T^i_c(\vz) = 1 \mid \vz \in \Comp(\vy)\right] &= \frac{1}{2}t_i + \frac{1}{2} \Pr_{\vz}\left[ T^{i-1}_c(\vz) = 1 \mid \vz \in \Comp(\vy)\right] \\
&\leq \frac{1}{2}t_i + \frac{1}{2} \Pr_{\vz}\left[ T^{i-1}_c(\vz) = 1 \mid \vz \in \Comp(\bot^n)\right]\\
&= \Pr_{\vz}\left[ T^i_c(\vz) = 1 \mid \vz \in \Comp(\bot^n)\right],
\end{align*}
where the inequality has used the inductive hypothesis. On the other hand, if $\vy[i] = 1$, that implies $\vx^\dagger[i] = 1$ and thus $\alpha_i = 0$, which means the leaf across the $1$-edge from vertex $i$ is labeled with $\false$, and thus
\[
Pr_{\vz}\left[ T^i_c(\vz) = 1 \mid \vz \in \Comp(\vy)\right] = 0,
\]
which trivially satisfies the claim. For the last case, if $\vy[i] = 0$, then $\vx^\dagger[i] = 0$ and thus $\alpha_i = 1$, which means the leaf across the $1$-edge from vertex $i$ is labeled with $\true$. Therefore we have

\begin{align*}
\Pr_{\vz}\left[ T^i_c(\vz) = 1 \mid \vz \in \Comp(\vy)\right] &= \Pr_{\vz}\left[ T^{i-1}_c(\vz) = 1 \mid \vz \in \Comp(\vy)\right] \\
&\leq \Pr_{\vz}\left[ T^{i-1}_c(\vz') = 1 \mid \vz \in \Comp(\bot^n)\right]\\
& \leq \frac{1}{2} + \frac{1}{2} \Pr_{\vz}\left[ T^{i-1}_c(\vz) = 1 \mid \vz \in \Comp(\bot^n)\right]\tag{as $\Pr[\cdot] \leq 1$}\\
&= \Pr_{\vz}\left[ T^{i}_c(\vz) = 1 \mid \vz \in \Comp(\bot^n)\right].
\end{align*}
This completes the induction argument, and thus we conclude the proof of the lemma.
\end{proof}
\begin{figure}
    \centering
    \begin{tikzpicture}
        \node[draw, circle] at (0,0) (a) {$4$};
        
        \node[draw, circle] at (-1,-1) (b) {$3$};
        
        \node[draw, circle] at (-2,-2) (c) {$2$};
        
        \node[draw, circle] at (-3,-3) (d) {$1$};
        
        \node[draw, circle] at (-4,-4) (e) {$0$};
        
        \node[] at (-5, -5) (f) {$\false$};
        
        \node[] at (-3, -5) (g) {$\false$};
        
        \node[] at (-2, -4) (h) {$\false$};
        
        \node[] at (-1, -3) (i) {$\true$};
        
         \node[] at (0, -2) (j) {$\true$};
         
         \node[] at (1, -1) (k) {$\false$};
        
        \draw[->] (a) -- (b) node [near end, above] {{\small 0}};
        \draw[->] (b) -- (c) node [near end, above] {{\small 0}};
        \draw[->] (c) -- (d)  node [near end, above] {{\small 0}};
        \draw[->] (d) -- (e)  node [near end, above] {{\small 0}};
        
        \draw[->] (e) -- (f)  node [near end, above] {{\small 0}};
        \draw[->] (e) -- (g)  node [near end, above] {{\small 1}};
        \draw[->] (d) -- (h)  node [near end, above] {{\small 1}};
        \draw[->] (c) -- (i) node [near end, above] {{\small 1}};
        \draw[->] (b) -- (j) node [near end, above] {{\small 1}};
        \draw[->] (a) -- (k) node [near end, above] {{\small 1}};
    \end{tikzpicture}
    \caption{Example of $T_c$, the tree constructed in the proof of Lemma~\ref{lemma:T_delta}, for $n=5$ and $\delta = \frac{2}{5}$. In this case $c = 12 = 2^2 + 2^3$. Note that $\Pr_{\vz}[T_c(\vz) = 1 \mid \vz \in \Comp(\bot^n)] = \frac{c}{2^n} =\frac{12}{32}$, and $\left|\frac{2}{5} - \frac{12}{32}\right| = \frac{1}{40} < \frac{1}{32}$.}
    \label{fig:T_delta}
\end{figure}
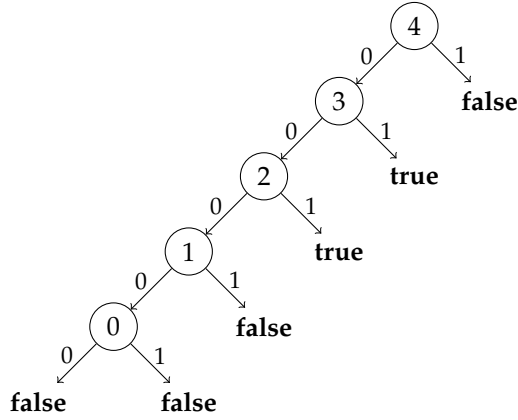

We are now ready to prove Proposition~\ref{prop:delta-minimum-hardness}. We will use notation $\log(x)$ to refer to the logarithm in base $2$ of $x$.

\begin{proof}[Proof of Theorem~\ref{prop:delta-minimum-hardness}]

We will prove that deciding whether a $\delta$-SR of size $k$ exists is $\np$-hard. We will reduce from the case $\delta=1$, proved $\np$-hard by~\citet{NEURIPS2020_b1adda14}. We assume of course that $\delta < 1$, as otherwise the result is already known.

Let $(T, \vx, k)$ be an input of the \textsf{Minimum Sufficient Reason} problem (i.e., $\delta = 1$), and let $n$ be the dimension of $T$ and $\vx$. Assume without loss of generality that $T(\vx) = 1$. If the given input of \textsf{Minimum Sufficient Reason} is positive, then there is a partial instance $\vy \subseteq \vx$ with $|\vy|_\bot \geq n-k$ such that $\Pr_{\vz}[T(\vz) = 1 \mid \vz \in \Comp(\vy)] = 1$, and otherwise for every partial instance $\vy \subseteq \vx$ with  $|\vy|_\bot \geq n-k$ it holds that 
\[
\Pr_{\vz}\left[T(\vz) = 0 \mid \vz \in \Comp(\vy)\right] \geq \frac{1}{2^n}.
\]

Let us build a tree $F_\delta$ with $3n +3 +\lceil\log(1/\delta)\rceil $ variables as follows. First build $T_\delta$ of dimension $2n + 3 + \lceil\log(1/\delta)\rceil$ by using Lemma~\ref{lemma:T_delta}, and then replace every true leaf of $T_\delta$ by a copy of $T$. Assume the $2n +3 + \lceil\log(1/\delta)\rceil $ variables of $T_\delta$ are disjoint from the $n$ variables that appear in $T$, and thus $F_\delta$ has the proposed number of variables. An example of the construction of $F_\delta$ is illustrated in Figure~\ref{fig:F}. 

Define
\[
   \delta' \coloneqq \Pr_{\vz}\left[T_\delta(\vz) = 1 \mid \vz \in \Comp\left(\bot^{2n + 2 + \lceil \log(1/\delta) \rceil }\right)\right],
\]
and recall that $|\delta' - \delta| \leq \frac{1}{2^{2n+3+\lceil \log(1/\delta)\rceil}}$.
Now, let us build a final decision tree $\tstar$ with $4n+k+4+\lceil\log(1/\delta')\rceil + \lceil\log(1/\delta)\rceil$ variables as follows. 
Create $\ell \coloneqq n+k+1+\lceil\log(1/\delta')\rceil$ vertices, labeled $r_i$ for $i \in \{1, \ldots, \ell\}$, and assume these labels are disjoint from the ones used in $F_\delta$.
 Let $r_1$ be the root of $\tstar$, and for each $i \in \{1, \ldots, \ell-1\}$, connect vertex labeled with $r_i$ to vertex labeled with $r_{i+1}$ using a $0$-edge. The $0$-edge from vertex labeled $r_{\ell}$ goes towards a leaf labeled with $\true$. 
 The $1$-edge from every vertex $r_i$ goes towards the root of a different copy of $F_\delta$. Note that this construction, illustrated in Figure~\ref{fig:tstar}, takes polynomial time. Now, consider the instance $\vx^\star$ that is defined (i) exactly as $\vx$ for the variables of $T$, (ii) exactly as in the instance $\vx^\dagger$ coming from Lemma~\ref{lemma:T_delta} for the variables of $T_\delta$ in $F_\delta$, and (iii) with all variables $r_i$ set to $0$.
Note that $\tstar(\vx^\star) = 1$. Now we prove both directions of the reduction separately. Assume fir that the instance $(T, \vx, k)$ is a positive instance for \textsf{Minimum Sufficient Reason}. Then we claim that there is a $\delta$-SR for $\tstar$ of size at most $k$. Indeed, let $\vy \subseteq \vx$ be a sufficient reason for $\vx$ under $T$ with at most $k$ defined components. Then consider the partial instance $\vy^\star \subseteq \vx^\star$, that is only defined in the components where $\vy$ is defined. Now let us study \( \Pr_{\vz}[\tstar(\vz') = 1 \mid \vz \in \Comp(\vy^\star)] \). The probability that $\vz$ ends up in the true leaf on the $0$-edge from vertex $r_{\ell}$ is $\frac{1}{2^{\ell}}$. In any other case, $\vz$ takes a path that goes into a copy of $F_\delta$, where its probability of acceptance is $\delta' \geq \delta - \frac{1}{2^{2n +3+\lceil \log(1/\delta)\rceil}}$  because of Lemma~\ref{lemma:T_delta} and using that $\vy^\star$ is undefined for all the variables of $T_c$. These two facts imply that 
\[
\Pr_{\vz}[\tstar(\vz) = 1 \mid \vz\in \Comp(\vy^\star)] \geq \frac{1}{2^{\ell}} + \delta - \frac{1}{2^{2n+3+\lceil \log(1/\delta)\rceil}}.
\]
Now consider that 
\begin{align*}
\delta &\leq \delta' +  \frac{1}{2^{2n +3+\lceil \log(1/\delta)\rceil}} \\
&\leq \delta' + \frac{1}{2^{2n+3}} \cdot \frac{1}{2^{\lceil \log(1/\delta)\rceil}}\\
&\leq \delta' + \frac{1}{2^{2n+3}} \cdot \frac{1}{2^{ \log(1/\delta)}}\\
&= \delta' + \frac{\delta}{2^{2n+3}},
\end{align*}
from where
\[
\delta \leq \delta' \left( 1- \frac{1}{2^{2n+3}}\right),
\]
and thus 
\begin{align*}
    \log(1/\delta) &\geq \log(1/\delta') + \log \left(1-\frac{1}{2^{2n+3}}
    \right)\\
    &= \log(1/\delta') - \log \left(\frac{2^{2n+3}}{2^{2n+3} - 1}
    \right)\\
    &= \log(1/\delta') - 2n - 3 + \log(2^{2n+3}-1) \\
    &\geq\log(1/\delta') -2n-3 + 2n + 2 \tag{using $2^{2n+3}-1 \geq 2^{2n+2}$}\\
    &= \log(1/\delta') - 1.
\end{align*}

From this we obtain that
\begin{align*}
    \ell &= n+k+1+\lceil\log(1/\delta')\rceil\\
    &\leq
    2n + 1 +\lceil\log(1/\delta')\rceil \\
    &\leq 2n+1 + \log(1/\delta') + 1\\
    &\leq 2n+3 + \log(1/\delta)\\ &\leq  2n+3 + \lceil\log(1/\delta)\rceil,
\end{align*}
which allows us to conclude that
\[
\Pr_{\vz}[\tstar(\vz) = 1 \mid \vz\in \Comp(\vy^\star)] \geq \frac{1}{2^{\ell}} + \delta - \frac{1}{2^{2n+3+\lceil \log(1/\delta)\rceil}} \geq \delta.
\]
On the other hand, if  $(T, \vx, k)$ is a negative instance for \textsf{Minimum Sufficient Reason}, consider any partial $\vy^\star$ with at most $k$ defined components, and note that by hypothesis we have that $\Pr_{\vz}[T(\vz) = 1 \mid \vz \in \Comp(\vy^\star)] \leq 1 - \frac{1}{2^n}$. This implies, together with the second part of Lemma~\ref{lemma:T_delta}, that 
\[
\Pr_{\vz}[F_\delta(\vz) = 1 \mid \vz \in \Comp(\vy^\star)] \leq \delta \left(1 - \frac{1}{2^n}\right),
\]
and thus subsequently
\[
\Pr_{\vz}[\tstar(\vz) = 1 \mid \vz \in \Comp(\vy^\star)] \leq \delta \left(1 - \frac{1}{2^n}\right) + \frac{1}{2^{\ell - k}},
\]
by using that with at most $k$ defined components in $\tstar$, the probability of reaching the true leaf across the $0$-edge from $r_\ell$ is at most $\frac{1}{2^{\ell-k}}$. 
To conclude, note that 
\begin{align*}
    \Pr_{vz}[\tstar(\vz) = 1 \mid \vz \in \Comp(\vy^\star)] &\leq \delta \left(1 - \frac{1}{2^n}\right) + \frac{1}{2^{\ell - k}}\\
    &= \delta - \frac{\delta}{2^n} +
    \frac{1}{2^{n+1+\lceil \log(1/\delta')\rceil}}\\
     &\leq \delta - \frac{\delta}{2^n} +
    \frac{1}{2^{n+1+ \log(1/\delta')}}\\
    &= \delta + \frac{(\delta'-\delta)-\delta}{2^{n+1}}\\
    &\leq \delta + 
    \frac{\frac{1}{2^{2n+3+\lceil\log(1/\delta)\rceil}} - \delta}{2^{n+1}}\\
    &\leq  \delta + 
    \frac{\frac{\delta}{2^{2n+3}} - \delta}{2^{n+1}}\\
    &= \delta + \delta\left(\frac{-1 + \frac{1}{2^{2n+3}}}{2^{n+1}} \right)\\
    &< \delta.
\end{align*}

We have thus concluded that $\vy^\star$ is a $\delta$-SR for $\vx^\star$ over $\tstar$ if and only if $(T, \vx, k)$ is a positive instance of \textsf{Minimum Sufficient Reason}, which completes our reduction.
\end{proof}

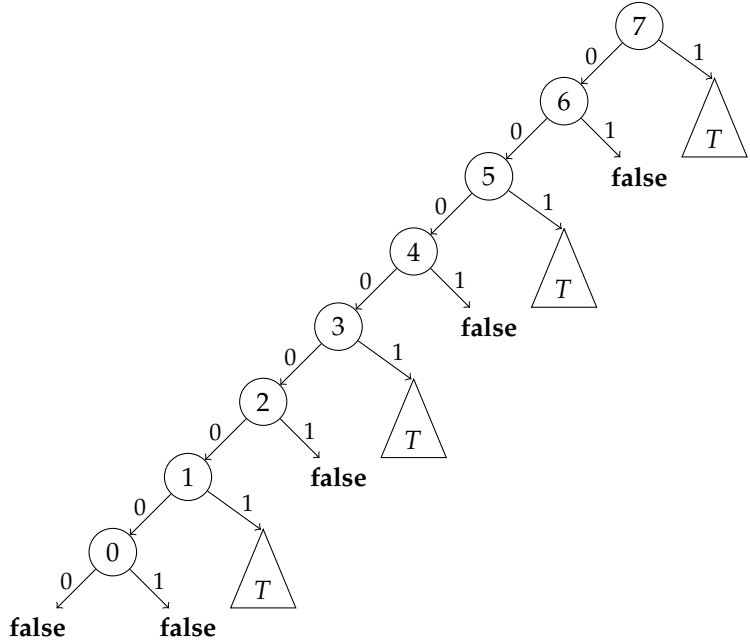
\begin{figure}
    \centering
    \begin{tikzpicture}
    	 \node[draw, circle] at (3,3) (a) {$7$};
    	 
    	  \node[draw, circle] at (2,2) (b) {$6$};
    	  
    	    	  \node[draw, circle] at (1,1) (c) {$5$};

        \node[draw, circle] at (0,0) (d) {$4$};
        
        \node[draw, circle] at (-1,-1) (e) {$3$};
        
        \node[draw, circle] at (-2,-2) (f) {$2$};
        
        \node[draw, circle] at (-3,-3) (g) {$1$};
        
        \node[draw, circle] at (-4,-4) (h) {$0$};
        
        \node[isosceles triangle, draw, shape border rotate=45] at (4, 1.5) (i) {$T$};
        
         \node[] at (3, 1) (j) {$\false$};
         
         \node[isosceles triangle, draw, shape border rotate=90] at (2, -0.5) (k) {$T$};
        
         \node[] at (1, -1) (l) {$\false$};
         
          \node[isosceles triangle, draw, shape border rotate=90] at (0, -2.5) (m) {$T$};
        
         \node[] at (-1, -3) (n) {$\false$};
         
          \node[isosceles triangle, draw, shape border rotate=90] at (-2, -4.5) (o) {$T$};
        
         \node[] at (-3, -5) (p) {$\false$};
         
          \node[] at (-5, -5) (q) {$\false$};

        \draw[->] (a) -- (b)  node [near end, above] {{\small 0}};
        
         \draw[->] (b) -- (c)  node [near end, above] {{\small 0}};
         
          \draw[->] (c) -- (d)  node [near end, above] {{\small 0}};
           \draw[->] (d) -- (e)  node [near end, above] {{\small 0}};
            \draw[->] (e) -- (f)  node [near end, above] {{\small 0}};
             \draw[->] (f) -- (g)  node [near end, above] {{\small 0}};
              \draw[->] (g) -- (h)  node [near end, above] {{\small 0}};
               \draw[->] (h) -- (q)  node [near end, above] {{\small 0}};
               
                \draw[->] (a) -- (3.97,2.3)  node [near end, above] {{\small 1}};
                	
                	\draw[->] (b) -- (j)  node [near end, above] {{\small 1}};
                	
                	\draw[->] (c) -- (1.97, 0.3)  node [near end, above] {{\small 1}};
                	
                	\draw[->] (d) -- (l)  node [near end, above] {{\small 1}};
                	
                	\draw[->] (e) -- (-0.03,-1.7)  node [near end, above] {{\small 1}};
                	
                	\draw[->] (f) -- (n)  node [near end, above] {{\small 1}};
                	
                	\draw[->] (g) -- (-2.03, -3.7)  node [near end, above] {{\small 1}};
                	
                	\draw[->] (h) -- (p)  node [near end, above] {{\small 1}};

    \end{tikzpicture}
    \caption{Illustration of the construction of $F_\delta$ for $\delta = \frac{2}{3}$ and $n=2$. Thus $2n + 3 + \lceil\log(1/\delta)\rceil=8$ and $c = 
   \left\lfloor \frac{2}{3} \cdot 2^8 \right\rfloor = 170 = 2^7 + 2^5 + 2^3 + 2^1$.}
    \label{fig:F}
\end{figure}

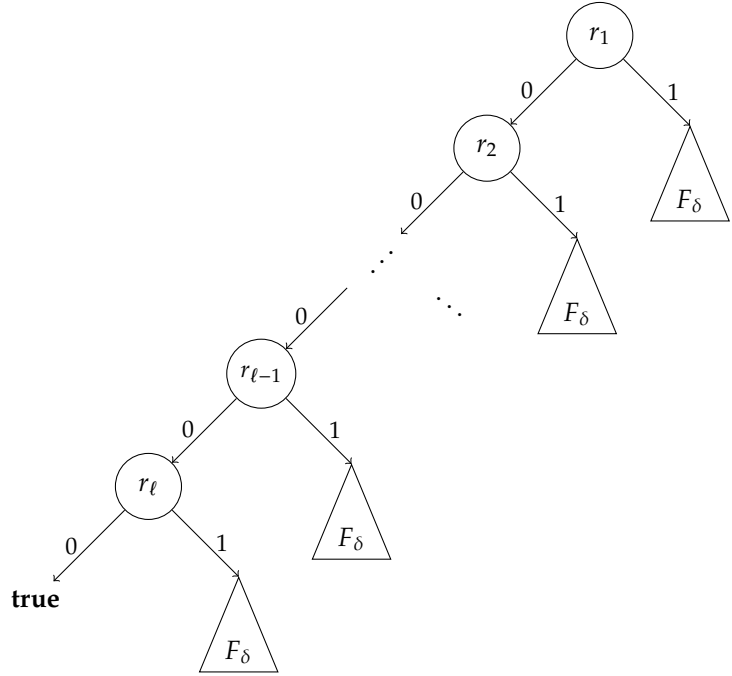
\begin{figure}
    \centering
    \begin{tikzpicture}
        \node[draw, circle, minimum size=2.5em] at (0,0) (a) {$r_1$};
        
        \node[draw, circle, minimum size=2.5em] at (-1.5,-1.5) (b) {$r_2$};
        
        \node[] at (-3,-3) (c) {\rotatebox{84}{$\ddots$}};
        
        \node[draw, circle,  minimum size=2.5em] at (-4.5,-4.5) (d) {\small$r_{\ell-1}$};
        
        \node[draw, circle,  minimum size=2.5em] at (-6,-6) (e) {$r_{\ell}$};
        
        \node[] at (-7.5, -7.5) (f) {$\true$};
        
        \node[isosceles triangle, draw, shape border rotate=90]  at (-4.8, -8.2) (g) {$F_\delta$};
        
        \node[isosceles triangle, draw, shape border rotate=90] at (-3.3, -6.7) (h) {$F_\delta$};
        
        \node[]  at (-2, -3.5) (i) {$\ddots$};
        
         \node[isosceles triangle, draw, shape border rotate=90] at (-0.3, -3.7) (j) {$F_\delta$};
         
         \node[isosceles triangle, draw, shape border rotate=90]  at (1.2, -2.2) (k) {$F_\delta$};
        
        \draw[->] (a) -- (b) node [near end, above] {{\small 0}};
        
         \draw[->] (b) -- (c) node [near end, above] {{\small 0}};
         
          \draw[->] (c) -- (d) node [near end, above] {{\small 0}};
          
          \draw[->] (d) -- (e) node [near end, above] {{\small 0}};
          
         \draw[->] (e) -- (f) node [near end, above] {{\small 0}};
         
         \draw[->] (a) -- (1.2, -1.2) node [near end, above] {{\small 1}};
         
            \draw[->] (b) -- (-0.3, -2.7) node [near end, above] {{\small 1}};
            
                \draw[->] (d) -- (-3.3, -5.7) node [near end, above] {{\small 1}};
                	
                	 \draw[->] (e) -- (-4.8, -7.2) node [near end, above] {{\small 1}};

    \end{tikzpicture}
    \caption{Illustration of the construction of $\tstar$. Recall that $\ell = n + k + 1 + \lceil \log(1/\delta')\rceil$.}
    \label{fig:tstar}
\end{figure}

\section{Proof of Theorem \ref{thm:delta-minimal-hardness}}
\label{app:thm:delta-minimal-hardness}

\begin{thmbis}{thm:delta-minimal-hardness}
Assuming that $\ptime \neq \np$, there is no polynomial-time algorithm for {\em $\minimal$}.
\end{thmbis}

We start by explaining the high-level idea of the proof.  First, we will show that the following decision problem, called 
$\checksub$, is \np-hard.

\begin{center}
\fbox{\begin{tabular}{ll}
\small{PROBLEM} : & $\checksub$
\\{\small INPUT} : & $(T, \vy)$, for a decision tree $T$ of dimension $n$, and partial instance $\vy \in \{0,1,\bot\}^n$ \\
{\small OUTPUT} : & {\sf Yes}, if there is a partial instance $\vy' \subsetneq \vy$ such that \\
& $\Pr{}_{\!{\vz}}[T(\vz) = 1 \mid \vz \in \Comp(\vy')] 
\geq \Pr{}_{\!{\vz}}[T(\vz) = 1 \mid \vz \in \Comp(\vy)]$, \\ 
& and {\sf No} otherwise
\end{tabular}}
\end{center}

We then show that if \minimal\ admits 
a polynomial time algorithm, then $\checksub$ is in \ptime, which contradicts the assumption that $\ptime \neq \np$. 
The latter reduction 
requires an involved construction exploiting certain properties of the hard instances for $\checksub$. 

To show that $\checksub$ is \np-hard, we use a polynomial time reduction from a decision problem over formulas in CNF, called {\sf Minimal-Expected-Clauses}, 
and which we also show 
to be \np-hard. Both the \np-hardness of {\sf Minimal-Expected-Clauses} and 
the reduction from {\sf Minimal-Expected-Clauses} to $\checksub$ may be of independent interest. 

We now define the problem {\sf Minimal-Expected-Clauses}. 
Let $\varphi$ be a CNF formula over variables $X = \{x_1,\dots, x_n\}$. 
Assignments and partial assignments of the variables in $X$,  as well as
the notions of subsumption and completions over them, are defined in exactly the same way as for partial instances over features.   
%
%
For a partial assignment $\mu$ over $X$, 
we denote by $E(\varphi, \mu)$ the expected number of clauses of $\varphi$ satisfied by a random completion of $\mu$. 
We then consider the following problem for fixed $k\geq 2$ (recall that a $k$-CNF formula is a CNF formula where each clause has at most $k$ literals):
\begin{center}
\fbox{\begin{tabular}{ll}
\small{PROBLEM} : & {\sf $k$-Minimal-Expected-Clauses}
\\{\small INPUT} : & $(\varphi, \sigma)$, for $\varphi$ a $k$-CNF formula and $\sigma$ a partial assignment
\\ 
{\small OUTPUT} : & {\sf Yes}, if there is a partial assignment $\mu\subsetneq \sigma$  
such that $E(\varphi, \mu)\geq E(\varphi, \sigma)$ 
\\ & and {\sf No} otherwise
\end{tabular}}
\end{center}

We show that {\sf $k$-Minimal-Expected-Clauses} is \np-hard even for $k=2$, via a reduction from the well-known clique problem. 
Finally,  the reduction from {\sf $k$-Minimal-Expected-Clauses}  to $\checksub$ builds an instance $(T,\vy)$ from $(\varphi, \sigma)$ in a way that there is a direct correspondence
between partial assignments $\mu\subseteq \sigma$ and partial instances $\vy'\subseteq \vy$, satisfying that 
\[
\Pr{}_{\!{\vz}}[T(\vz) = 1 \mid \vz \in \Comp(\vy')] = \frac{E(\varphi, \mu)}{m},
\]
where $m$ is the number of clauses  of $\varphi$. This implies that $(\varphi, \sigma)$ is a {\sf Yes}-instance for {\sf $k$-Minimal-Expected-Clauses} if and only if 
$(T,\vy)$ is a {\sf Yes}-instance for $\checksub$. 

Below in Section~\ref{subsec:hardness-decision}, we show the \np-hardness of {\sf $k$-Minimal-Expected-Clauses} and the reduction from {\sf $k$-Minimal-Expected-Clauses} to $\checksub$, obtaining the \np-hardness of $\checksub$. We conclude 
in Section~\ref{subsec:decision-to-computation} with the reduction from $\checksub$ to $\minimal$, obtaining Theorem~\ref{thm:delta-minimal-hardness}. 

\subsection{Hardness of the decision problem}
\label{subsec:hardness-decision}

We start with some simple observations regarding the number $E(\varphi, \mu)$ for a CNF formula $\varphi$ and a partial assignment $\mu$. 
By linearity of expectation, we can write $E(\varphi, \mu)$ as the sum
\begin{equation}
E(\varphi, \mu) = \sum_{\text{$C$ clause of $\varphi$}} \prob_{C,\mu}, 
\label{eq:expectation}
\end{equation}
where $\prob_{C,\mu}$ is the probability that a random completion of $\mu$ satisfies the clause $C$. 

In turn, the probabilities $\prob_{C,\mu}$ can be easily computed as:
\begin{itemize}
\item $\prob_{C,\mu}=1$, if there is a positive literal $x$ in $C$ with $\mu(x)=1$; or there is a negative literal $\neg x$ in $C$ with $\mu(x)=0$.
\item $\prob_{C,\mu}=1-\frac{1}{2^\eta}$, where $\eta$ is the number of literals in $C$ of the form $x$ or $\neg x$ with $\mu(x) = \bot$. \\
\end{itemize}

Finally, note that for an assignment $\sigma$, $E(\varphi, \sigma)$ is simply the number of clauses of $\varphi$ satisfied by $\sigma$. 

Now we are ready to show our first hardness result:

\begin{proposition}
\label{prop:auxiliar-hard}
{\sf $k$-Minimal-Expected-Clauses} is NP-hard even for $k=2$. 
\end{proposition}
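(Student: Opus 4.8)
**Proof plan for Proposition~\ref{prop:auxiliar-hard} ($k$-Minimal-Expected-Clauses is NP-hard for $k=2$, via a reduction from Clique).**

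The plan is to reduce from the decision problem \textsc{Clique}: given a graph $G=(V,E)$ and an integer $t$, decide whether $G$ contains a clique of size $t$. First I would set up the variable set. For each vertex $v\in V$ introduce a Boolean variable $x_v$, plus (likely) a batch of auxiliary ``padding'' variables $y_1,\dots,y_p$ whose sole purpose is to tune cardinalities and break ties; the target partial assignment $\sigma$ will set every $x_v$ to some fixed value (say $x_v=1$ for all $v$, encoding ``take all vertices'') and leave the padding variables undefined, or vice versa — the precise choice is dictated by the arithmetic below. The key design principle, dictated by the formula for $E(\varphi,\mu)$ recalled just above the proposition, is that a clause $C$ contributes $1$ to $E(\varphi,\mu)$ when it is already satisfied by the defined part of $\mu$, and contributes $1-2^{-\eta}$ when it has $\eta$ undefined literals and is not yet satisfied. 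So undefining a variable that appears in a currently-satisfied clause \emph{loses} expectation, while undefining a variable in an as-yet-unsatisfied clause can only \emph{gain} expectation (by at most $2^{-\eta}$, shrinking toward $1$). I want to build $\varphi$ so that the only way to undefine variables from $\sigma$ without net loss is to undefine exactly a set of $x_v$'s forming (the complement of) a clique of the right size.

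The core gadget: for each non-edge $\{u,v\}\notin E$ put in a $2$-clause that is violated precisely when both $x_u$ and $x_v$ are ``active'' (undefined, in the direction that would let a random completion fail it). Concretely, if $\sigma$ sets all $x_v=1$ and we think of ``undefining $x_v$'' as ``removing $v$ from the chosen set'', I would add, for every non-edge $\{u,v\}$, the clause $(\lnot x_u \lor \lnot x_v)$ — which is \emph{satisfied} by $\sigma$ (contributing $1$), but if both $x_u,x_v$ are undefined it drops to $1-\tfrac14$, a loss of $\tfrac14$; if only one of them is undefined it stays satisfied (contributes $1$). Then undefining a set $S\subseteq V$ of $x$-variables costs exactly $\tfrac14\cdot|\{\text{non-edges inside }S\}|$ from these clauses, which is $0$ iff $S$ induces a clique. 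To make \emph{some} beneficial move available exactly when a clique of size $t$ exists, I would add a second family of clauses — over the padding variables, or mixing one padding variable with the $x_v$'s — engineered so that undefining a size-$t$ set of $x$'s together with all padding variables yields a fixed positive gain $g$ from unsatisfied long clauses, and I would choose the number of padding variables and the clause structure so that $g$ strictly exceeds the ``budget'' $0$ of the clique case but any smaller clique, or any non-clique set, or any set that touches a non-edge, ends up with net change $<0$. The arithmetic here — balancing the $1-2^{-\eta}$ gains against the $2^{-\eta}$ losses and against the exact $\tfrac14$ penalties, and making sure a $t$-clique is forced (not a $(t{-}1)$-clique plus slack) — is the fiddly part; I would pick the padding-gadget so its gain is an integer or a dyadic with large denominator, and scale the penalty clauses by adding multiple parallel copies of each non-edge clause so the $\tfrac14$ becomes $\tfrac14 m'$ for a chosen multiplicity $m'$, giving enough room to separate the cases cleanly.

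Finally I would verify the two directions: (i) if $G$ has a $t$-clique $K$, then $\mu = \sigma$ restricted by undefining $\{x_v: v\in K\}$ (and the padding variables) satisfies $\mu\subsetneq\sigma$ and $E(\varphi,\mu)\ge E(\varphi,\sigma)$, because the non-edge clauses lose nothing (no non-edge inside $K$) and the padding gadget supplies a nonnegative gain; (ii) conversely, if $E(\varphi,\mu)\ge E(\varphi,\sigma)$ for some $\mu\subsetneq\sigma$, a case analysis on which variables $\mu$ undefines shows the undefined $x$-set must induce a clique (else a strictly positive $\tfrac14 m'$-penalty appears that the bounded padding gain cannot offset) and must have size at least $t$ (else the gain gadget does not ``switch on'' enough and the move is a net loss), yielding a $t$-clique in $G$. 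The whole construction is clearly polynomial, and every clause has at most $2$ literals if the padding gadget is itself built from $2$-clauses (which can be arranged, possibly at the cost of more auxiliary variables via a standard Tseitin-style breakdown of any longer clause into $2$-clauses, taking care that such breakdowns preserve the relevant expectations up to the chosen scaling). I expect the main obstacle to be exactly this last point — forcing the \emph{size} $t$ rather than merely ``some clique'', while keeping all clauses width $\le 2$ and keeping the target $\sigma$ a genuine full assignment — so I would likely spend most of the real proof nailing down the padding-variable count $p$, the clause multiplicities, and the resulting strict inequalities in the ``No'' direction.
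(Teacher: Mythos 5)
Your plan starts from the right place (reduce from \textsc{Clique}, set $\sigma$ to the all-ones assignment, penalize non-edges with $2$-clauses whose satisfaction probability drops by $\tfrac14$ when both endpoints are undefined), and your reading of the formula for $E(\varphi,\mu)$ is correct. Two problems remain, one minor and one fatal to the proposal as written.

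The minor one: with $\sigma(x_v)=1$ for all $v$, the clause $(\lnot x_u \lor \lnot x_v)$ is \emph{falsified} by $\sigma$, not satisfied; undefining both endpoints would then \emph{raise} its probability from $0$ to $\tfrac34$. The non-edge penalty clause you want is $(x_u \lor x_v)$, which is satisfied by $\sigma$ and drops to $\tfrac34$ only when both endpoints are undefined. This is a sign slip and easily repaired.

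The fatal one: the entire difficulty of the reduction lives in the ``gain'' gadget that you leave unconstructed, and the padding-variable mechanism you sketch for it cannot work. A gadget whose gain $g$ is triggered by undefining auxiliary variables supplies a reward that is independent of \emph{which} and \emph{how many} $x_v$'s are undefined; in particular, $\mu$ could undefine only the padding variables (a legal $\mu \subsetneq \sigma$), collect $g>0$, incur no non-edge penalty, and make every instance a {\sf Yes}-instance. More generally, a set-independent reward cannot force the undefined vertex set to have size at least $t$. The paper's construction avoids auxiliary variables entirely and instead makes the reward \emph{per-vertex and degree-dependent}: each vertex $x$ carries the unit clause $(\lnot x)$ repeated $\deg(x)-\tfrac{k-1}{2}$ times (gain $\tfrac12$ per copy when $x$ is undefined), balanced against the clauses $(x \lor \lnot y)$ over the edges at $x$, which cost $\tfrac12$ for each defined neighbor and only $\tfrac14$ for each undefined neighbor. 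The net contribution of an undefined vertex is then $\tfrac{r}{4}-\tfrac{k-1}{4}$ where $r$ is its number of undefined neighbors, which is nonnegative only when $r\ge k-1$; summed over the undefined set this forces it to be a clique of size at least $k$, with the heavily weighted non-edge clauses ruling out non-cliques. This per-vertex balancing is the key idea your plan is missing, and your own closing paragraph concedes that the size-forcing arithmetic is unresolved. (Your suggestion of splitting longer clauses Tseitin-style is also unsound here, since such rewritings do not preserve the expected number of satisfied clauses under random completions; the paper never needs it because every clause in its construction already has width at most $2$.)
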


\begin{proof}
We reduce from the clique problem. Recall this problem asks, given a graph $G$ and an integer $k\geq 3$, whether there 
is a \emph{clique} of size $k$, that is, a set $K$ of $k$ vertices such that there is an edge between any pair of distinct vertices from $K$. Let $G$ be a graph and $k\geq 3$. We can assume without loss of generality that $k$ is odd and the degree of every vertex $x$ of $G$, denoted by $\deg(x)$, is at least $k-1$; if $k$ is even we can consider the equivalent instance given by the graph $G'$ that extends $G$ with a fresh node connected via an edge with all the other nodes and $k'=k+1$. 
On the other hand, we can iteratively remove vertices of degree less than $k-1$ as those cannot be part of any clique of size $k$.
We define an instance $(\varphi, \sigma)$ for {\sf $2$-Minimal-Expected-Clauses} as follows. The variables of $\varphi$ are the nodes of $G$. For each variable $x$ we have the following clauses in $\varphi$:
\begin{itemize}
\item A clause $A_x=(\neg x)$. This clause $A_x$ is repeated $\frac{k-1}{2} + \deg(x) - (k-1)$ times in $\varphi$. Note this quantity is always a positive integer. 
\item A set of clauses $\mathcal{B}_x = \{(x\lor \neg y_1), \dots, (x\lor \neg y_{\deg(x)})\}$, where $y_1,\dots, y_{\deg(x)}$ are the neighbors of $x$ in $G$. Each clause in $\mathcal{B}_x$ appears only once in $\varphi$. 
\end{itemize}
Additionally, for each set $\{x,y\}$ where $x\neq y$ and $\{x,y\}$ is \emph{not} an edge in $G$, we have a clause $Z_{x,y}=(x\lor y)$ repeated $4e$ times in $\varphi$, where $e$ is the number of edges in $G$. 

We define the assignment $\sigma$ such that $\sigma(x) = 1$, for all variables $x$ of $\varphi$. 


For an arbitrary partial assignment $\mu$ to the variables of $\varphi$, with $\mu\subseteq \sigma$, we define
\[
 {\util}_{\varphi, \sigma}(\mu):= E(\varphi, \mu) - E(\varphi, \sigma).
 \]
 In particular, the instance $(\varphi,\sigma)$ is a {\sf Yes}-instance for {\sf $2$-Minimal-Expected-Clauses} 
 if and only if there is $\mu\subsetneq \sigma$ with ${\util}_{\varphi, \sigma}(\mu) \geq 0$. 
 By equation (\ref{eq:expectation}), we can write
 \[
{\util}_{\varphi, \sigma}(\mu) = \sum_{\text{$C$ clause of $\varphi$}} {\util}_{\varphi, \sigma}(\mu, C),
\]
where ${\util}_{\varphi, \sigma}(\mu, C)$ is defined as:
\[
{\util}_{\varphi, \sigma}(\mu, C) := \prob_{C, \mu} - \prob_{C, \sigma}.
\]
We have that:
\[
\prob_{C, \sigma}
=
\begin{cases}
0 & \text{if $C=A_x$ for some variable $x$}\\
1 & \text{if $C\in \mathcal{B}_x$ for some variable $x$ or $C=Z_{x,y}$ for some set $\{x,y\}$}
\end{cases}
\]
On the other hand, for the probability $\prob_{C,\mu}$ we have the following:
\begin{enumerate}
\item Assume $C=A_x=(\neg x)$ for some variable $x$. Then $\prob_{C,\mu}$ is 
\begin{enumerate}
\item $\frac{1}{2}$, if  $\mu(x)=\bot$ (and hence ${\util}_{\varphi, \sigma}(\mu, C)=\frac{1}{2}$), and 
\item  $0$ otherwise (then ${\util}_{\varphi, \sigma}(\mu, C)= 0$).
\end{enumerate}
\item Suppose $C=(x\lor \neg y)\in \mathcal{B}_x$ for some variable $x$. Then $\prob_{C,\mu}$ is 
\begin{enumerate}
\item $\frac{3}{4}$ if $\mu(x) = \bot$ and $\mu(y)=\bot$ (and hence ${\util}_{\varphi, \sigma}(\mu, C)= -\frac{1}{4}$), 
\item $\frac{1}{2}$ if $\mu(x) = \bot$ and $\mu(y)=1$ (and then ${\util}_{\varphi, \sigma}(\mu, C)= -\frac{1}{2}$), and 
\item $1$ otherwise (then ${\util}_{\varphi, \sigma}(\mu, C)= 0$).
\end{enumerate}
\item Suppose $C=Z_{x,y}=(x\lor y)$ for some set $\{x,y\}$. Then $\prob_{C,\mu}$ is 
\begin{enumerate}
\item $\frac{3}{4}$ if $\mu(x) = \bot$ and $\mu(y)=\bot$ (and hence ${\util}_{\varphi, \sigma}(\mu, C)= -\frac{1}{4}$), and 
\item $1$ otherwise (then ${\util}_{\varphi, \sigma}(\mu, C)= 0$).  
\end{enumerate}
\end{enumerate}

We now show the correctness of our construction.
Suppose $G$ has a clique $K$ of size $k\geq 3$. 
Let $\mu$ be the partial assignment that sets $\mu(x)=\bot$ if $x\in K$ and $\mu(x)=1$ if $x\notin K$. 
Note that $\mu\subsetneq \sigma$. 
We claim that ${\util}_{\varphi, \sigma}(\mu)=0$ and hence $(\varphi, \sigma)$ is a {\sf Yes}-instance.
Let $C$ be a clause in $\varphi$. If $C$ is of the form $Z_{x,y}$, then ${\util}_{\varphi, \sigma}(\mu, C) = 0$. 
Indeed, by construction, $\{x,y\}$ is not an edge, and since $K$ is a clique, then $\mu(x)=1$ or $\mu(y)=1$. 
This means we are always in case 3(b) above.
If $x\notin K$ and $C$ is of the form $A_x$ or belongs to $\mathcal{B}_x$, then ${\util}_{\varphi, \sigma}(\mu, C) = 0$, 
since $\mu(x) = 1$ and hence we fall either in case 1(b) or 2(c) above. 
It follows that ${\util}_{\varphi, \sigma}(\mu)$ is the sum of the utilities of all the clauses involved with variables $x\in K$. That is:
\begin{equation}
\label{eq:util1}
{\util}_{\varphi, \sigma}(\mu) = \sum_{x\in K} \left[\left(\frac{k-1}{2} + \deg(x) - (k-1)\right)\, {\util}_{\varphi, \sigma}(\mu, A_x) + \sum_{C\in \mathcal{B}_x} {\util}_{\varphi, \sigma}(\mu, C) \right].
\end{equation}
Take $x\in K$. Then ${\util}_{\varphi, \sigma}(\mu, A_x) = \frac{1}{2}$ as $\mu(x)=\bot$, 
and then case 1(a) applies. On the other hand, for a clause $C\in \mathcal{B}_x$ we have two cases:
\begin{itemize}
\item $C=(x\lor \neg y)$ for $y\in K$. In this case, ${\util}_{\varphi, \sigma}(\mu, C)= -\frac{1}{4}$ as we are in case 2(a) above. 
\item $C=(x\lor \neg y)$ for $y\notin K$. In this case, ${\util}_{\varphi, \sigma}(\mu, C)= -\frac{1}{2}$ as we are in case 2(b) above. 
\end{itemize}
Moreover, note that the first case occurs exactly for $k-1$ clauses in $\mathcal{B}_x$, as $x$ has precisely $k-1$ neighbors in the clique $K$. The second case occurs exactly for $\deg(x)-(k-1)\geq 0$ clauses in $\mathcal{B}_x$. 
Replacing in equation (\ref{eq:util1}), we obtain:
\begin{align*}
{\util}_{\varphi, \sigma}(\mu) & = \sum_{x\in K} \left(\frac{k-1}{4} + \frac{\deg(x)}{2} - \frac{k-1}{2}\right) + \left(-\frac{k-1}{4} -\frac{\deg(x)}{2} + \frac{k-1}{2}\right)\\
& = 0.
\end{align*}
We conclude that $(\varphi, \sigma)$ is a {\sf Yes}-instance. 

Suppose now that there is a partial assignment $\mu$, with $\mu\subsetneq \sigma$ and  ${\util}_{\varphi, \sigma}(\mu) \geq 0$. 
Let $K$ be the set of variables $x$ such that $\mu(x)=\bot$. For $x\notin K$ and $C=A_x$ or $C\in \mathcal{B}_x$, we have ${\util}_{\varphi, \sigma}(\mu, C) = 0$, as we are in cases 1(b) or 2(c) above. Then we can write:
\begin{align}
{\util}_{\varphi, \sigma}(\mu) & = \sum_{x\in K} \left[\left(\frac{k-1}{2} + \deg(x) - (k-1)\right)\, {\util}_{\varphi, \sigma}(\mu, A_x) + \sum_{C\in \mathcal{B}_x} {\util}_{\varphi, \sigma}(\mu, C) \right] \nonumber \\
& + \sum_{\text{$\{x,y\}$ non-edge}} 4e\, ({\util}_{\varphi, \sigma}(\mu, Z_{x,y})).
\label{eq:util2}
\end{align}
We claim that $|K|\geq k$. 
Towards a contradiction, suppose $|K|=\ell<k$. 
As ${\util}_{\varphi, \sigma}(\mu, Z_{x,y})\leq 0$ for every pair $\{x,y\}$, the last term in equation (\ref{eq:util2}) is $\leq 0$, and then:
\begin{equation}
{\util}_{\varphi, \sigma}(\mu) \leq \sum_{x\in K} \left[\left(\frac{k-1}{2} + \deg(x) - (k-1)\right)\, {\util}_{\varphi, \sigma}(\mu, A_x) + \sum_{C\in \mathcal{B}_x} {\util}_{\varphi, \sigma}(\mu, C) \right].
\label{eq:util3}
\end{equation}
Take $x\in K$. Following the same argument as before, we have that ${\util}_{\varphi, \sigma}(\mu, A_x) = \frac{1}{2}$ and for a clause $C\in \mathcal{B}_x$ we have the two cases:
\begin{itemize}
\item $C=(x\lor \neg y)$ for $y\in K$, and ${\util}_{\varphi, \sigma}(\mu, C)= -\frac{1}{4}$. 
\item $C=(x\lor \neg y)$ for $y\notin K$, and ${\util}_{\varphi, \sigma}(\mu, C)= -\frac{1}{2}$. 
\end{itemize}
Let say the first case occurs precisely for $r$ clauses from $\mathcal{B}_x$. Then:
\begin{equation}
\sum_{C\in \mathcal{B}_x} {\util}_{\varphi, \sigma}(\mu, C) = -\frac{r}{4} - \frac{\deg(x)-r}{2} = \frac{r}{4} - \frac{\deg(x)}{2}.
\label{eq:util4}
\end{equation}
Note that $r\leq \ell-1$ and from equation (\ref{eq:util4}) we obtain (recall $\ell < k$):
\begin{equation}
\sum_{C\in \mathcal{B}_x} {\util}_{\varphi, \sigma}(\mu, C) \leq  \frac{\ell-1}{4} - \frac{\deg(x)}{2} < \frac{k-1}{4} - \frac{\deg(x)}{2}\nonumber.
\end{equation}
Replacing in equation (\ref{eq:util3}), we obtain:
\begin{align*}
{\util}_{\varphi, \sigma}(\mu) & < \sum_{x\in K} \left[\left(\frac{k-1}{4} + \frac{\deg(x)}{2} - \frac{k-1}{2}\right) +   \frac{k-1}{4} - \frac{\deg(x)}{2}\right]\\
& = \sum_{x\in K} \left[\frac{\deg(x)}{2} - \frac{k-1}{4} +   \frac{k-1}{4} - \frac{\deg(x)}{2}\right]\\
& = 0.
\end{align*}
We conclude that ${\util}_{\varphi, \sigma}(\mu) < 0$ which is a contradiction. Hence $|K|\geq k$.

Finally, we show that $K$ is a clique. By contradiction, assume there is a pair $\{\tilde{x},\tilde{y}\}$ such that $\tilde{x}\neq \tilde{y}$, $\tilde{x},\tilde{y}\in K$ and $\{\tilde{x},\tilde{y}\}$ is not an edge in $G$. Then there is a clause $Z_{\tilde{x},\tilde{y}}$ which is repeated $M$ times in $\varphi$.  Since $\mu(\tilde{x})=\bot$ and $\mu(\tilde{y})=\bot$, we have ${\util}_{\varphi, \sigma}(\mu, Z_{\tilde{x},\tilde{y}})= -\frac{1}{4}$, as we are in case 3(a) above. As ${\util}_{\varphi, \sigma}(\mu, Z_{x,y})\leq 0$ for all pairs $\{x,y\}$, we obtain:
\begin{equation*}
\sum_{\text{$\{x,y\}$ non-edge}} 4e\, ({\util}_{\varphi, \sigma}(\mu, Z_{x,y})) \leq 4e\, ({\util}_{\varphi, \sigma}(\mu, Z_{\tilde{x},\tilde{y}})) \leq - e
\end{equation*}
For $x\in K$, since ${\util}_{\varphi, \sigma}(\mu, C)\leq 0$, for all $C\in \mathcal{B}_x$, we have $\sum_{C\in \mathcal{B}_x} {\util}_{\varphi, \sigma}(\mu, C) \leq 0$ and hence:  
\begin{equation*}
\sum_{x\in K} \sum_{C\in \mathcal{B}_x} {\util}_{\varphi, \sigma}(\mu, C)\leq 0
\end{equation*}
On the other hand, for $x\in K$, we have ${\util}_{\varphi, \sigma}(\mu, A_x)=\frac{1}{2}$. Combining all this with equation~(\ref{eq:util2}) we obtain:
\begin{align*}
{\util}_{\varphi, \sigma}(\mu) & \leq \sum_{x\in K} \left(\frac{\deg(x)}{2} - \frac{k-1}{4}\right)\, - e\\
& < \sum_{x\in K} \frac{\deg(x)}{2}\, - e \qquad\qquad \text{(since $k\geq 3$)}\\
&  \leq \sum_{\text{$x$ in $G$}} \frac{\deg(x)}{2}\, - e\\
& = 0.
\end{align*}
We conclude that 
${\util}_{\varphi, \sigma}(\mu) < 0$, and thus obtain a contradiction. Hence $G$ contains a clique of size $k$. 
\end{proof}

We now provide the reduction from {\sf $2$-Minimal-Expected-Clauses} to $\checksub$, 
showing the hardness of the latter problem. 

\begin{proposition}
\label{prop:main-hardness}
$\checksub$ is $\np$-hard. 
\end{proposition}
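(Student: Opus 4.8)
The plan is to establish Proposition~\ref{prop:main-hardness} by a polynomial-time reduction from {\sf $2$-Minimal-Expected-Clauses}, which is $\np$-hard by Proposition~\ref{prop:auxiliar-hard}; this is the step announced in the proof sketch of Theorem~\ref{thm:delta-minimal-hardness}, where the target correspondence is $\Pr_{\vz}[T(\vz)=1\mid\vz\in\Comp(\vy')]=E(\varphi,\mu)/m$. Starting from an instance $(\varphi,\sigma)$ with $\varphi$ a $2$-CNF over $X=\{x_1,\dots,x_n\}$ having clauses $C_1,\dots,C_m$, I would first normalize $m$ to a power of two by appending copies of the tautological clause $(x_1\vee\neg x_1)$ until the count reaches $m':=2^{\lceil\log m\rceil}\le 2m$. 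Since a tautology is satisfied by every completion of every partial assignment, this adds the same constant $m'-m$ to $E(\varphi,\mu)$ for every $\mu$, so the padded instance is equivalent; hence I may assume $m=2^t$.

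Next I would build a decision tree $T$ of dimension $n+t$ over the formula features $x_1,\dots,x_n$ together with $t$ fresh \emph{selector} features $s_1,\dots,s_t$. The top part of $T$ is the complete binary tree of depth $t$ whose level-$i$ nodes are all labeled $s_i$; its $2^t=m$ leaves are put in bijection with the clauses, and to the leaf of $C_j$ I attach a constant-size gadget that reads only the (at most two, distinct) variables occurring in $C_j$ and outputs $\true$ exactly when $C_j$ is satisfied (the gadget of a tautological clause being a single $\true$ leaf). Every root-to-leaf path of $T$ reads $s_1,\dots,s_t$ once each and then at most two distinct formula features, so $T$ is free and is a legitimate decision tree of size $O(m)$. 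The reduction then outputs $(T,\vy)$, where $\vy$ is the partial instance with $\vy[x_i]=\sigma(x_i)$ for all $i$ and $\vy[s_j]=\bot$ for all $j$.

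The heart of the argument is the identity that, for any partial instance $\vy''\subseteq\vy$ --- which, because the selectors are already $\bot$ in $\vy$, is exactly the same data as a partial assignment $\mu\subseteq\sigma$ on the formula features --- one has
\[
\Pr_{\vz}\bigl[T(\vz)=1 \mid \vz\in\Comp(\vy'')\bigr] \;=\; \frac{1}{m}\sum_{j=1}^{m}\prob_{C_j,\mu} \;=\; \frac{E(\varphi,\mu)}{m},
\]
using Equation~(\ref{eq:expectation}) for the last step. To prove it, observe that a uniform completion $\vz$ of $\vy''$ draws the selectors i.i.d.\ uniform and independently of the formula features, so the selectors route $\vz$ to a uniformly random clause index $j$; conditioned on $j$, the formula features of $\vz$ remain a uniform completion of $\mu$, and the gadget of $C_j$ outputs $\true$ with probability exactly $\prob_{C_j,\mu}$ since it inspects only the variables of $C_j$. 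Averaging over $j$ gives the claim, and specializing to $\mu=\sigma$ yields $\Pr_{\vz}[T(\vz)=1\mid\vz\in\Comp(\vy)]=E(\varphi,\sigma)/m$.

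Given the identity, correctness is immediate: a witness $\vy'\subsetneq\vy$ for $\checksub$ is precisely a partial assignment $\mu\subsetneq\sigma$ with $E(\varphi,\mu)/m\ge E(\varphi,\sigma)/m$, i.e.\ with $E(\varphi,\mu)\ge E(\varphi,\sigma)$, which is exactly a witness that $(\varphi,\sigma)$ is a Yes-instance of {\sf $2$-Minimal-Expected-Clauses}; and the whole construction runs in polynomial time, so $\checksub$ is $\np$-hard. I expect the only subtle points to be the power-of-two normalization (one must check the padding preserves the comparison, which works because a tautological clause contributes a constant to $E$ on both sides rather than a scaling factor) and the verification that $T$ is free (which is what forces the selectors to be fresh and each clause gadget to touch only that clause's variables); the probabilistic identity itself is then a routine conditioning computation.
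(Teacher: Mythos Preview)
Your proof is correct and follows essentially the same approach as the paper: both reduce from {\sf $2$-Minimal-Expected-Clauses} by placing a perfect binary ``selector'' tree on top of per-clause gadgets so that the acceptance probability equals $E(\varphi,\mu)/m$. The only notable differences are that the paper labels each of the $m-1$ internal selector nodes with a \emph{distinct} fresh feature (yielding dimension $n+m-1$) rather than reusing one selector per level as you do (dimension $n+\log m$), and the paper simply assumes $m$ is a power of two without spelling out your tautological-clause padding; neither difference affects correctness.
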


\begin{proof}
We reduce from {\sf $2$-Minimal-Expected-Clauses}. 
Let $(\varphi, \sigma)$ be an instance of {\sf $2$-Minimal-Expected-Clauses}. 
Let $m$ be the number of clauses of $\varphi$ and assume that $\varphi$ has $n$ variables $x_1,\dots, x_n$. 
Without loss of generality we assume that $m$ is a power of $2$. 
Define a decision tree $T$ of dimension $n+m-1$ as follows. 
Start with a \emph{perfect} binary tree $S$ of depth $\log_2 m$, that is, 
each internal node has two children, and each leaf is at depth $\log_2 m$. 
In particular, $S$ has $m$ leaves and $m-1$ internal nodes. 
All the internal nodes of $S$ are labeled with a different fresh feature from $\{n+1,\dots, n+m-1\}$. 
For each clause $C$ in $\varphi$, pick a different leaf $\ell_{C}$ of $S$. 
It is easy to see that for each clause $C$ we can define a decision tree $S_C$ over the features $\{1,\dots, n\}$ such that for every assignment $\mu:\{x_1,\dots, x_n\}\to\{0,1\}$ to the variables of $\varphi$, the corresponding instance $\vx\in\{0,1\}^n$ where $\vx[i]=\mu(x_i)$ satisfies that $S_C(\vx)=1$ if and only if $\mu$ satisfies $C$.  
The decision tree $T$ is obtained from $S$ by identifying for each clause $C$, the leaf $\ell_C$ with the root 
of the decision tree $S_C$. 

For any partial assignment $\mu:\{x_1,\dots, x_n\}\to\{0,1,\bot\}$ for $\varphi$, we denote by $\vy_{\mu}$ the partial instance of dimension ${n+m-1}$ such that 
$\vy_{\mu}[i] = \mu(x_i)$ for every $i\in\{1,\dots, n\}$ and $\vy_{\mu}[i]=\bot$ for every $i\in\{n+1,\dots, n+m-1\}$. 
The output of the reduction is $(T,\vy_\sigma)$. 
Observe that the transformation from $\mu$ to $\vy_\mu$ is a bijection between the sets $\{\mu\mid \mu\subseteq \sigma\}$ 
and $\{\vy_\mu \mid \vy_\mu \subseteq \vy_\sigma\}$. 
By construction, for any partial assignment $\mu\subseteq \sigma$, we have:
\[
\Pr{}_{\!{\vz}}[T(\vz) = 1 \mid \vz \in \Comp(\vy_\mu)] = \frac{E(\varphi, \mu)}{m}.
\]
Hence  $(\varphi, \sigma)$ is a {\sf Yes}-instance of {\sf $2$-Minimal-Expected-Clauses} if and only if $(T,\vy_{\sigma})$ is a {\sf Yes}-instance of $\checksub$.
\end{proof}

\begin{remark}
We can assume that the instance $(T,\vy_\sigma)$ constructed in the proof of Proposition~\ref{prop:main-hardness}, satisfies that
\[
\Pr{}_{\!{\vz}}[T(\vz) = 1 \mid \vz \in \Comp(\vy_\sigma)]  > \frac{1}{2}.
\] 
\label{remark:balance}
\end{remark}
Indeed, the above probability is simply $\frac{E(\varphi, \sigma)}{m}$, where $m$ is the number of clauses of $\varphi$. 
On the other hand, from the proof of Proposition~\ref{prop:auxiliar-hard}, we can choose $\sigma$ such that 
$\sigma(x_i) = 1$ for every variable $x_i \in\{x_1,\dots, x_n\}$ of $\varphi$. It follows that $E(\varphi, \sigma)$ 
is simply the number of clauses satisfied by $\sigma$, which are all the clauses in $\mathcal{B}_x$ for some variable $x$, 
and all the clauses of the form $Z_{x,y}$. Note that the total number of clauses from the sets $\mathcal{B}_x$ is greater that the 
total number clauses of the form $A_x$, and hence $\frac{E(\varphi, \sigma)}{m}>\frac{1}{2}$. 
Indeed, there are $\deg(x)$ clauses in $\mathcal{B}_x$, and summing over all the variables $x$, we obtain $2e$, where $e$ are 
the number of edges in the graph $G$. On the other hand, each clause $A_x$ is repeated $\frac{k-1}{2} + \deg(x) - (k-1)= \deg(x) - \frac{k-1}{2}$ times. Taking the sum over all the variables $x$, we obtain $2e - n \left(\frac{k-1}{2}\right) < 2e$. This property will be useful in the Section~\ref{subsec:decision-to-computation}.

\subsection{From hardness of decision to hardness of computation}
\label{subsec:decision-to-computation}

\newcommand{\dis}{\mathbb{U}}


We will show a Turing-reduction from a variant of $\checksub$ to $\minimal$, thus establishing that the latter cannot be solved in polynomial time unless $\mathrm{P} = \mathrm{NP}$.

For the sake of readability, given a partial instance $\vy$, in this
proof we use notation $\vz \sim \dis(\vy)$ to indicate that $\vz$
is generated uniformly at random from the set $\Comp(\vy)$. For
instance, we obtain the following simplification by using this
terminology:
\[
\Pr_{\!{\vz}}[T(\vz) = 1 \mid \vz \in \Comp(\vy)] \ = \
\Pr_{\vz \sim \dis(\vy)}[T(\vz) = 1]
\]
We will require a particular kind of hard instances for the
$\checksub$ in order to make our reduction work. In particular, we now
define the notion of \emph{strongly-balanced} inputs, which
intuitively captures the idea that defined features in a partial
instance $\vy$ appear at the same depth in different branches of a the
decision tree $T$. In order to make this definition precise, consider
that every path $\pi$ from the root to a leaf in a decision tree can
be identified with a sequence of labels $s_\pi$ corresponding to the
labels of the nodes of $\pi$, where the last label of $\pi$ is either
$\true$ or $\false$. We use notation $s_\pi[i]$ for the $i$-th label
in the sequence $s_\pi$.
With this notation, we
can introduce the following definition.

\begin{definition}
  Given a decision tree $T$ of dimension $d$ and $\vy \in \{0,1,\bot\}^n$ a partial instance, we say that the pair $(T, \vy)$ is strongly-balanced if
	\[
	\Pr_{\vz \sim \dis(\vy)} \left[T(\vz) = 1 \right] > \frac{1}{2},
	\]        
  and there exists $k \in \mathbb{N}$ such that for every root-to-leaf path $\pi$ in $T$, the sequence $s_\pi$ satisfies 
	\[
		 \vy[s_\pi[i]] = \bot \iff i \leq k.
	\] 
\end{definition}
If $(T,\vy)$ is strongly-balanced, then there exists a unique value $k
\in \mathbb{N}$ that satisfies the second condition of the
definition. We denote this value by $u(T,\vy)$. In particular, if $\vy \in
\{0,1\}^n$, then $(T,\vy)$ is strongly-balanced and $u(T,\vy) = 0$.

Now let us define the following problem.

\begin{center}
\fbox{\begin{tabular}{lp{10cm}}
\small{PROBLEM} : & $\sbchecksub$
\\{\small INPUT} : & $(T,\vy)$, for $T$ a decision tree of dimension $n$ and $\vy \in \{0,1,\bot\}^n$ a partial instance, where $(T, \vy)$ is strongly-balanced.
\\ 
{\small OUTPUT} : & {\sf Yes}, if there is a partial instance $\vy'\subsetneq \vy$ 
such that $\Pr_{\vz \sim \dis(\vy')} \left[T(\vz) = 1\right] \geq \Pr_{\vz \sim \dis(\vy)} \left[T(\vz) = 1\right]$, and {\sf No} otherwise.
\end{tabular}}
\end{center}

One can now check that the proof of Proposition~\ref{prop:main-hardness} directly proves NP-hardness for this problem, and thus we can reduce from it to prove hardness for the computation variant.
Indeed, the first part of the definition of strongly-balanced follows from Remark~\ref{remark:balance}. 
The second part follows from the fact that the construction in the proof of Proposition~\ref{prop:main-hardness} 
starts with a perfect binary tree $S$.

\begin{lemma}
If there is a polynomial time algorithm for $\minimal$, then there is a polynomial time algorithm for $\sbchecksub$.
\label{lemma:computation-harder-decision}
\end{lemma}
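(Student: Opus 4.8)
The plan is to prove the lemma by a Turing reduction: assuming a polynomial-time algorithm $\mathcal{A}$ for $\minimal$, I will decide $\sbchecksub$ in polynomial time. First, a reformulation. Given a strongly-balanced instance $(T,\vy)$, set $\delta := \Pr_{\vz \sim \dis(\vy)}[T(\vz)=1]$ (so $\delta > \tfrac12 > 0$) and fix any completion $\vx \in \Comp(\vy)$ with $T(\vx)=1$; such an $\vx$ exists since $\delta>0$ and can be produced in polynomial time by descending a root-to-leaf path consistent with the defined features of $\vy$. Then $\vy$ is itself a $\delta$-SR for $\vx$ under $T$ (with equality), and $(T,\vy)$ is a {\sf Yes}-instance of $\sbchecksub$ if and only if $\vy$ is \emph{not} a minimal $\delta$-SR for $\vx$ under $T$. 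So it suffices to decide, knowing that $\vy$ is a $\delta$-SR for $\vx$ under $T$, whether it is minimal.

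The heart of the argument is to transform $(T,\vx,\delta)$, in polynomial time, into an augmented triple $(T^\dagger,\vx^\dagger,\delta^\dagger)$ together with a distinguished partial instance $\vy^\dagger \subseteq \vx^\dagger$ (the ``image of $\vy$'') such that: (i) $\vy^\dagger$ is a $\delta^\dagger$-SR for $\vx^\dagger$ under $T^\dagger$; (ii) \emph{every} $\delta^\dagger$-SR for $\vx^\dagger$ under $T^\dagger$ is a sub-instance of $\vy^\dagger$; and (iii) for each sub-instance of $\vy^\dagger$ obtained by un-setting some of the features $\vy^\dagger$ defines, the corresponding acceptance probability under $T^\dagger$ is a strictly increasing affine function of the analogous acceptance probability under $T$ for the matching sub-instance of $\vy$. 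Properties (ii)--(iii) make the poset of $\delta^\dagger$-SRs for $\vx^\dagger$ under $T^\dagger$ order-isomorphic to the poset of $\delta$-SRs for $\vx$ under $T$ that are contained in $\vy$; in particular $\vy^\dagger$ is the \emph{unique} $\delta^\dagger$-SR (equivalently, the unique minimal one) precisely when $\vy$ is a minimal $\delta$-SR for $\vx$ under $T$.

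Given this construction, the decision procedure for $\sbchecksub$ is immediate: run $\mathcal{A}$ on $(T^\dagger,\vx^\dagger,\delta^\dagger)$ to obtain a minimal $\delta^\dagger$-SR $\vw$; by (ii) we have $\vw \subseteq \vy^\dagger$, and by the isomorphism $\vw = \vy^\dagger$ exactly when $\vy$ is minimal. Hence output {\sf No} if $\vw = \vy^\dagger$ and {\sf Yes} otherwise. Correctness follows from the properties above, and the running time is polynomial because $\mathcal{A}$ runs in polynomial time and the transformation is computable in polynomial time; this contradicts $\ptime \neq \np$ via the $\np$-hardness of $\sbchecksub$ already available (Remark~\ref{remark:balance} together with the perfect-tree shape used in Proposition~\ref{prop:main-hardness}).

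The remaining — and, I expect, genuinely hard — step is building $T^\dagger$, and here I would exploit the structure guaranteed by strong-balancedness: the top $u := u(T,\vy)$ levels of $T$ form a perfect binary tree over features undefined in $\vy$, with the original subtrees $T_1,\dots,T_{2^u}$ over the defined features hanging off its $2^u$ leaf slots, so that for any sub-instance $\vy'$ of $\vy$ one has $\Pr_{\vz\sim\dis(\vy')}[T(\vz)=1] = 2^{-u}\sum_j q_j$, with $q_j$ the conditional acceptance probability of $T_j$. The idea is (a) to re-attach the $T_j$'s beneath a fresh gadget whose internal features, relative to the chosen $\vx^\dagger$, behave like the chain of Lemma~\ref{lemma:T_delta} — defining any of them only lowers the overall acceptance probability — so that no $\delta^\dagger$-SR ever defines one of them; (b) to prepend an ``escape chain'' of fresh guard features, in the spirit of the chain of $r_i$'s in the construction of $\tstar$ in the proof of Theorem~\ref{prop:delta-minimum-hardness}, arranged so that removing a guard can only increase the acceptance probability, forcing every minimal $\delta^\dagger$-SR to leave all guards undefined; and (c) to choose $\delta^\dagger$ — a rational with polynomially bounded denominator, obtained through Lemma~\ref{lemma:T_delta} to realize the target probabilities up to an exponentially small error — so that $\vy^\dagger$ lands exactly on the threshold, whence defining any auxiliary feature drops the probability strictly below $\delta^\dagger$, yielding (ii), while (iii) holds by the affine relation produced by the escape chain. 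The delicate point is reconciling (a) — whose gadget must still reproduce the averaged quantity $2^{-u}\sum_j q_j$ as a function of which defined features are kept — with the monotone-decreasing behaviour of Lemma~\ref{lemma:T_delta}, whose inductive argument is tailored to chains whose branches reach constant leaves rather than nontrivial subtrees; it is precisely the hypotheses that $(T,\vy)$ is strongly-balanced and that $\delta > \tfrac12$ that keep this bookkeeping tight enough to close.
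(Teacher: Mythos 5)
Your high-level reduction template is the same as the paper's: build an augmented instance $(T^\dagger,\vx^\dagger,\delta^\dagger)$ with a distinguished $\vy^\dagger$ so that the minimal $\delta^\dagger$-SRs of $\vx^\dagger$ correspond exactly to the sub-instances of the original $\vy$, call the assumed algorithm once, and test whether its output equals $\vy^\dagger$. Your opening reformulation (taking $\delta$ to be the probability achieved by $\vy$ and asking whether $\vy$ is a non-minimal $\delta$-SR) is also correct. But the proof is not closed: the entire technical content of the lemma lies in actually constructing $T^\dagger$ and verifying properties (i)--(iii), and you explicitly leave that step open, ending with an acknowledged ``delicate point'' you do not resolve. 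Two concrete issues. First, property (ii) as you state it --- \emph{every} $\delta^\dagger$-SR is a sub-instance of $\vy^\dagger$ --- is unachievable: $\vx^\dagger$ itself is always a $\delta^\dagger$-SR (its acceptance probability is $1$) and is never subsumed by a proper $\vy^\dagger$. What is needed, and what is genuinely hard, is that every \emph{minimal} $\delta^\dagger$-SR is subsumed by $\vy^\dagger$; in the paper this is the long forward-direction case analysis ruling out explanations that define any auxiliary feature. Second, the gadgets you propose (chains in the style of Lemma~\ref{lemma:T_delta} and an ``escape chain'' of guards as in the proof of Theorem~\ref{prop:delta-minimum-hardness}) are not the mechanism that makes this work, and you yourself note that the inductive monotonicity argument of Lemma~\ref{lemma:T_delta} does not survive hanging nontrivial subtrees off the chain.

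For comparison, the paper's construction avoids your difficulty entirely by a different device: each feature $i$ that is \emph{undefined} in $\vy$ is duplicated into a pair $i,i'$, and the tree is rewritten so that at every such node half the probability mass is diverted into freshly created ``artificial'' leaves that are $\true$ and $\false$ with equal conditional probability regardless of which features are kept. This makes defining $i$ or $i'$ provably unhelpful and yields the affine relation between the new and old acceptance probabilities (your property (iii)) via conditioning on landing in a ``natural'' leaf. A separate block of $m$ fresh features $b_j$, combined through a threshold-$2$ tree $T_1$ and a disjunct $T_{\vy_0}$ accepting exactly the completions of the intended explanation, pins $\delta^\dagger$ to an exactly computable dyadic value and forces minimal explanations to leave the $b_j$'s undefined. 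If you want to complete your argument, you would need to either carry out this kind of construction in full or find a genuinely different gadget with a complete proof of the minimality-forcing property; as written, the lemma is not proved.
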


\begin{proof}

Let us enumerate the features in $T$ as $1, \ldots, n$. Also, let $S$ be the set of features defined in $\vy$, that is, $\vy[i] \neq \bot \iff i \in S$.
We will first build a decision tree $T'$ of dimension $2n - |S|$, with the following features:

\begin{enumerate}
	\item Create a feature $i$ for $i \in S$.

	\item For every $i \in \{1, \ldots, n\} \setminus S$ create features $i$ and $i'$.
\end{enumerate}
Note that this amounts to the promised number of features.
We will build $T'$ through a recursive process $\mathcal{R}$ defined next. First, note that any decision tree can be described inductively as either a $\true$/$\false$ leaf, or a tuple $(r, L, R)$, where $r$ is the root node, $L$ is a decision tree whose root is the left child of $r$, and $R$ is a decision tree whose root is the right child of $r$.
We can now define $\mathcal{R}$ as a recursive procedure that when called with argument $\tau$ proceed as follows:
\begin{enumerate}
	\item If $\tau$ is a leaf then simply return $\tau$.
	\item If $\tau = (r, L, R)$, and node $r$ is labeled with feature $i \in S$, then simply return 
		\[ (r,\, \mathcal{R}(L),\, \mathcal{R}(R)).\]
	\item If $\tau = (r, L, R)$, and node $r$ is labeled with feature $i \in \{1, \ldots, n \} \setminus S$, then return the following decision tree:
	\[\left(r, \, (u, \mathcal{R}(L), \false), \, (v, \true, \mathcal{R}(R))\right),
	\]
	where nodes $u$ and $v$ are new nodes, both labeled with feature $i'$.
\end{enumerate}

\begin{figure}
	\begin{subfigure}{\textwidth}
		\centering
		\begin{tikzpicture}
			\node[draw, circle] at (0,0) (a) {$2$};
			
			\node[draw, circle] at (-1,-1) (b) {$3$};
			
			\node[draw, circle] at (1,-1) (c) {$3$};
			
			\node[draw, circle] at (-1.5,-2) (d) {$1$};
			\node[draw, circle] at (-0.5,-2) (e) {$4$};
			\node[draw, circle] at (0.5,-2) (f) {$4$};
			\node[draw, circle] at (1.5,-2) (g) {$1$};
			
			\node[] at (-3.0,-3) (h) {$\true$};
			\node[] at (-2.2,-3) (i) {$\false$};
			\node[] at (-1.3,-3) (j) {$\false$};
			\node[] at (-0.5,-3) (k) {$\true$};
			
			\node[] at (0.4,-3) (l) {$\true$};
			\node[] at (1.2,-3) (m) {$\false$};
			\node[] at (2.1,-3) (n) {$\false$};
			\node[] at (2.9,-3) (o) {$\true$};

			\draw[->] (a) -- (b) node [near end, above] {{\small 0}};
			\draw[->] (a) -- (c) node [near end, above] {{\small 1}};
			\draw[->] (b) -- (d) node [near end, above] {{\small 0}};
			\draw[->] (b) -- (e) node [near end, above] {{\small 1}};
			\draw[->] (c) -- (f) node [near end, above] {{\small 0}};
			\draw[->] (c) -- (g) node [near end, above] {{\small 1}};
			
			\draw[->] (d) -- (h) node [near end, above] {{\small 0}};
			\draw[->] (d) -- (i) node [near end,  right] {{\small 1}};
			
			\draw[->] (e) -- (j) node [near end, above] {{\small 0}};
			\draw[->] (e) -- (k) node [near end, above right] {{\small 1}};
			
			\draw[->] (f) -- (l) node [near end, above left] {{\small 0}};
			\draw[->] (f) -- (m) node [near end, above] {{\small 1}};
			
			\draw[->] (g) -- (n) node [near end,  left] {{\small 0}};
			\draw[->] (g) -- (o) node [near end, above] {{\small 1}};

		\end{tikzpicture}
		\caption{Original decision tree $T$.}
	\end{subfigure}
	
	\vspace{1em}
	\begin{subfigure}{\textwidth}
		\centering
		\begin{tikzpicture}
			\node[draw, circle] at (0,0) (a) {$2$};
			
			\node[draw, circle] at (-1,-1) (b) {$2'$};
			
			\node[draw, circle] at (1,-1) (c) {$2'$};

			\node[draw, circle] at (-2,-2) (d) {$3$};
			
%
			
			\node[] at (-0.7,-2) (g) {$\false$};
			
			\node[] at (0.7,-2) (h) {$\true$};
			
				\node[draw, circle] at (2,-2) (i) {$3$};
			
			\node[draw, circle] at (1,-3) (j) {$3'$};
			
			\node[draw, circle] at (3,-3) (k) {$3'$};
			
			\node[draw, circle] at (-3,-3) (e) {$3'$};
			
			\node[draw, circle] at (-1,-3) (f) {$3'$};
			
%
			
			\node[draw, circle] at (0.5,-4) (p) {$4$};
			
			\node[] at (1.5,-4) (q) {$\false$};
			
			\node[] at (2.5,-4) (r) {$\true$};
			
			\node[draw, circle] at (3.5,-4) (s) {$1$};
			
			\node[draw, circle] at (0.5-4,-4) (p2) {$1$};
			
			\node[] at (1.5-4,-4) (q2) {$\false$};
			
			\node[] at (2.5-4,-4) (r2) {$\true$};
			
			\node[draw, circle] at (3.5-4,-4) (s2) {$4$};
			
			\node[] at (-4, -5) (l1) {$\true$};
			
			\node[] at (-3, -5) (l2) {$\false$};
			
			\node[] at (-1.5, -5) (l3) {$\false$};
			
			\node[] at (-0.5, -5) (l4) {$\true$};
			
			\node[] at (0.5, -5) (l5) {$\false$};
			
			\node[] at (1.5, -5) (l6) {$\true$};
			
			\node[] at (3, -5) (l7) {$\false$};
			
			\node[] at (4, -5) (l8) {$\true$};

			\draw[->] (a) -- (b) node [near end,  above] {{\small 0}};
			\draw[->] (a) -- (c)  node [near end,  above] {{\small 1}};
			\draw[->] (b) -- (d)  node [near end,  above] {{\small 0}};
			\draw[->] (b) -- (g)  node [near end,  above right] {{\small 1}};
			\draw[->] (c) -- (i)  node [near end,  above] {{\small 1}};
			\draw[->] (c) -- (h)  node [near end,  above left] {{\small 0}};
			\draw[->] (d) -- (e)  node [near end,  above] {{\small 0}};
			\draw[->] (d) -- (f)  node [near end,  above] {{\small 1}};
			\draw[->] (i) -- (j)  node [near end,  above] {{\small 0}};
			\draw[->] (i) -- (k)  node [near end,  above] {{\small 1}};
			\draw[->] (j) -- (p)  node [near end,  above] {{\small 0}};
			\draw[->] (j) -- (q)  node [near end,  above] {{\small 1}};
			\draw[->] (k) -- (r)  node [near end,  above] {{\small 0}};
			\draw[->] (k) -- (s)  node [near end,  above] {{\small 1}};
			\draw[->] (e) -- (p2)  node [near end,  above] {{\small 0}};
			\draw[->] (e) -- (q2)  node [near end,  above] {{\small 1}};
			\draw[->] (f) -- (r2)  node [near end,  above] {{\small 0}};
			\draw[->] (f) -- (s2)  node [near end,  above] {{\small 1}};
			\draw[->] (p2) -- (l1)  node [near end,  above] {{\small 0}};
			\draw[->] (p2) -- (l2)  node [near end,  above] {{\small 1}};
			\draw[->] (s2) -- (l3)  node [near end,  above] {{\small 0}};
			\draw[->] (s2) -- (l4)  node [near end,  above right] {{\small 1}};
			\draw[->] (p) -- (l5)  node [near end,  above left] {{\small 0}};
			\draw[->] (p) -- (l6)  node [near end,  above] {{\small 1}};
			\draw[->] (s) -- (l7)  node [near end,  above] {{\small 0}};
			\draw[->] (s) -- (l8)  node [near end,  above] {{\small 1}};
		\end{tikzpicture}
		\caption{Resulting decision tree $T' = \mathcal{R}(T)$.}
	\end{subfigure}
\caption{Illustration of the recursive process $\mathcal{R}$ over an example where $\vy = (
	 0, \, \bot, \, \bot, \, 0
         ).$ Note that the pair $(T, \vy)$ is strongly-balanced.
}
\label{fig:illustration_doubling_tree}
\end{figure}
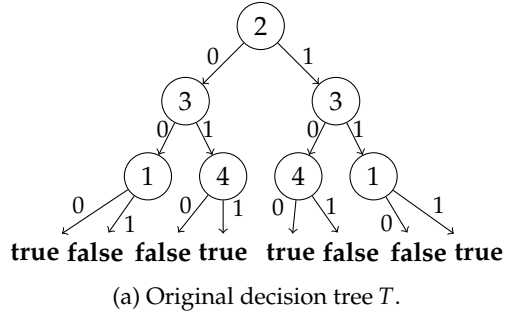
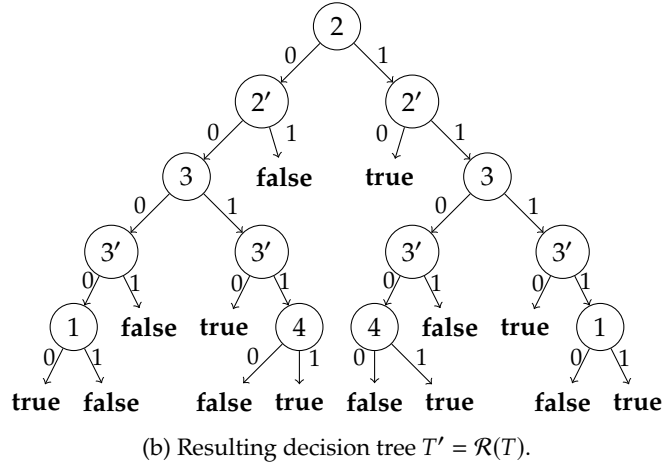

        
As anticipated, $T' = \mathcal{R}(T)$. An example illustrating the process is depicted in Figure~\ref{fig:illustration_doubling_tree}. 
Now we will create a tree $T''$ of dimension $2n-|S|+m$ that on top of the previous features incorporates features $b_j$ for each $j \in \{1, \ldots, m\}$, where $m$ is an integer we will specify later on. In order to construct $T''$, we start by defining $\vy_0$ as the partial instance of dimension $2n-|S|+m$ such that $\vy_0[i] = \vy[i]$ for every $i \in S$ and 
\[
	\vy_0\left[b_j\right] = 0, \quad  \forall j \in \{1,\ldots, m\},
\]
with the remaining components of $\vy_0$ being left undefined.
Let $T_{\vy_0}$ be a tree of dimension $2n-|S|+m$ that accepts exactly the completions of $\vy_0$; this can be trivially done by creating a tree that accepts exactly the features that are defined in $\vy_0$, and then observing that when running an instance whose feature space is a superset of this, then the instance will be accepted if and only if it is a completion of $\vy_0$. Now let $T_1$ be a tree of dimension $2n+|S| +m$ that implements the following Boolean formula:
\[
	\phi = \sum_{j=1}^m b_j \geq 2.
\]

\begin{claim}
	Decision tree $T_1$, implementing the function $\phi$, can be constructed in polynomial time.
	\label{claim:2-tribes}
\end{claim}
\begin{proof}[Proof of Claim~\ref{claim:2-tribes}]
This proof can be easily done by a direct construction. Indeed, consider the following Boolean formulas:
\begin{align*}
	f(x_1, \ldots, x_n) &\coloneqq \sum_{i=1}^n x_i \geq 2,\\
g(x_1, \ldots, x_n) &\coloneqq \sum_{i=1}^n x_i \geq 1 = \bigvee_{i=1}^n x_i.
\end{align*}

We then note that
\[
	f(x_1, \ldots, x_n) = \left[\neg x_n \land f(x_1, \ldots, x_{n-1})\right] \lor
	\left[x_n \land g(x_1, \ldots, x_{n-1})\right],
\]
and thus we can build a decision tree for $f$ recursively as illustrated in Figure~\ref{fig:illustration_t_1}.
Note that $g(x_1, \ldots, x_{k})$ can be trivially implemented by a decision tree of size $O(k)$. Thus the recursive equation characterizing the size $\alpha(n)$ of a decision tree for $f(x_1, \ldots, x_n)$, is simply
\[
	\alpha(n) = 1 + \alpha(n-1) + O(n),
\]
from where we get $\alpha(n) \in O(n^2)$, thus concluding the proof of the claim.

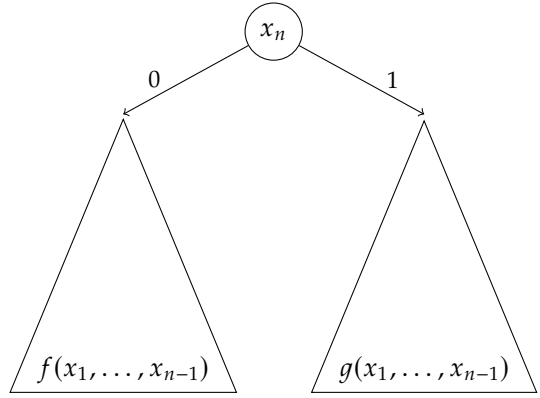
\begin{figure}
	\centering
	\begin{tikzpicture}
	\node[] (tag) at (-3, 1) {$f(x_1, \ldots, x_n)$};
		\node[draw, circle] (a) at (0,0) {$x_n$};
		\node[draw, isosceles triangle, shape border rotate=90] (b) at (-2, -4.5) {$f(x_1, \ldots, x_{n-1})$};
		
		\node[draw, isosceles triangle, shape border rotate=90] (c) at (2, -4.5) {$g(x_1, \ldots, x_{n-1})$};
		
		\draw[->] (a) -- (-2, -1.1) node [near end, above] {{\small 0}};
		
		\draw[->] (a) -- (2, -1.1) node [near end, above] {{\small 1}};
	\end{tikzpicture}
	\caption{Illustation of the construction for Claim~\ref{claim:2-tribes}.}
	\label{fig:illustration_t_1}
\end{figure}

%
%
%
\end{proof}

Now, let us build an instance $\vx$ of dimension $2n-|S|+m$ as follows. For each $i \in S$ let $\vx[i] = \vy[i]$, thus ensuring $\vx$ will be a completion of $\vy$. Then for each $j \in \{1, \ldots, m\}$ let $\vx[b_j] = 0$, and finally for each $i \in \{1, \ldots, n \} \setminus S$, let $\vx[i] = 0$ and  $\vx[i'] = 1$.

For example, if $\vy = \begin{pmatrix}
 0, & \bot, & \bot, & 1
\end{pmatrix}$, and $m = 3$ then 
\[
	\vx = \begin{pmatrix}
		0, & 0, & 1, & 0, & 1, & 1, & 0, & 0, & 0
	\end{pmatrix},
\]
where the features $b_j$ have been placed at the end of the vector.

Let $\vy^\star$ be the partial instance of dimension $2n-|S|+m$ such that $\vy^\star[i] = \vy[i]$ for every $i \in S$, and undefined otherwise. Let us abuse notation and assume now that $T'$ has dimension $2n-|S|+m$, even though it only explicitly uses the first $2n-|S|$ features, as this would make it compatible with other decision trees and instance we have constructed.
Finally, let $T^\star$ be the decision tree defined as 
\[
T^\star  = T_{\vy_0} \cup (T' \cap T_1),
\]
and note that the union and intersection of decision trees can be computed in polynomial time through a standard algorithm (see e.g.,~\cite{Wegener2000}).

Let us now define 
\[
\delta \coloneqq \Pr_{\vz \sim \dis(\vy^\star)} \left[(T' \cap T_1)(\vz) = 1 \right].
\]

We now claim that the result of $\minimal$$(T^\star, \vx, \delta)$ is different from $\vy^\star$ if and only if $(T, \vy)$ is a positive instance of $\sbchecksub$. But before we can prove this, we will need some intermediary tools and claims that we develop next. 

Let us start by distinguishing two kinds of leaves of $T'$. Let us say that the  leaves of $T'$ introduced in step 1 of the recursive procedure $\mathcal{R}$ are \emph{natural}, while those introduced in step 3 are \emph{artificial}. We denote by $\mathcal{N}$ the set of natural  leaves of $T'$ and by $\mathcal{A}$ the set of artificial leaves of $T'$. Moreover, let $\mathcal{N}_t, \mathcal{N}_f$ represent the $\true$ and $\false$ natural leaves, and define $\mathcal{A}_t, \mathcal{A}_f$ analogously.  We will also use $T'_{\downarrow \vz}$ to denote the leaf where instance $\vz$ ends when evaluated over tree $T'$.
With this notation, $T'(\vz) = 1$ is equivalent to $T'_{\downarrow \vz} \in \mathcal{A}_t \cup \mathcal{N}_t$. We now make the following claims.

\begin{claim}
	For every partial instance $\vy'^\star \subseteq \vy^\star$, it holds that 
	\begin{eqnarray*}
	  \Pr_{\vz \sim \dis(\vy'^\star)} \left[T'_{\downarrow \vz} \in \mathcal{A}_t \mid T'_{\downarrow \vz} \in \mathcal{A} \right] \ = \
          \Pr_{\vz \sim \dis(\vy'^\star)} \left[T'_{\downarrow \vz} \in \mathcal{A}_f \mid T'_{\downarrow \vz} \in \mathcal{A} \right] \ = \ \frac{1}{2}.
	\end{eqnarray*}
	\label{claim:2}
\end{claim} 
\begin{proof}[Proof of Claim~\ref{claim:2}]
 Observe that every leaf $\ell \in \mathcal{A}$ has a parent node $v$ in $T'$ labeled with some feature $i'$ whose parent node $u$ in $T'$ is labeled with feature $i$. Let $G(\ell) = u$ be said the grandparent of $\ell$, and assume that $G^{-1}(u) = \{ \ell' \mid G(\ell') = u\}$. With this notation, we can split the set $\mathcal{A}$ as follows:
\[
	\mathcal{A} \ = \ \bigcup_{\text{node } u \text{ with label } i \not \in S} \left\{ T'_{\downarrow \vz} \mid G\left(T'_{\downarrow \vz}\right) = u\right\},
\]
where the union is actually disjoint. Thus, we have that for every partial instance $\vy'^\star \subseteq \vy^\star$:
\begin{align*}
  \Pr_{\vz \sim \dis(\vy'^\star)} \left[T'_{\downarrow \vz} \in \mathcal{A}_t \mid T'_{\downarrow \vz} \in \mathcal{A} \right]  =  \hspace{22em}\\
   \sum_{\text{node } u \text{ with label } i \not \in S} \; \Pr_{\vz \sim \dis(\vy'^\star)} \left[T'_{\downarrow \vz} \in \mathcal{A}_t \mid T'_{\downarrow \vz} \in \mathcal{A} \cap G^{-1}(u) \right] \cdot
  \Pr_{\vz \sim \dis(\vy'^\star)} \left[ T'_{\downarrow \vz} \in \mathcal{A} \cap G^{-1}(u) \mid T'_{\downarrow \vz}  \in \mathcal{A} \right],
\end{align*}
but we have the following equation for the last term 
\[
\Pr_{\vz \sim \dis(\vy'^\star)} \left[T'_{\downarrow \vz} \in \mathcal{A}_t \mid T\
'_{\downarrow w} \in \mathcal{A} \cap G^{-1}(u) \right] \ = \ \frac{1}{2},
\]
as the event is equivalent to $\vz[i] = 1, \vz[i'] = 0$, and this is equally likely to the complement event $\vz[i] = 0, \vz[i'] = 1$, given that $\vy'^\star[i] = \vy'^\star[i'] = \bot$. 
Therefore
\begin{multline*}
  \Pr_{\vz \sim \dis(\vy'^\star)}\left[T'_{\downarrow \vz} \in \mathcal{A}_t \mid T'_{\downarrow \vz} \in \mathcal{A} \right]
  \ =\\
  \sum_{\text{node } u \text{ with label } i \not \in S} \frac{1}{2} \cdot \Pr_{\vz \in \dis(\vy'^\star)} \left[ T'_{\downarrow \vz} \in \mathcal{A} \cap G^{-1}(u) \mid T'_{\downarrow \vz}  \in \mathcal{A} \right] \ =\\
  \frac{1}{2} \cdot \sum_{\text{node } u \text{ with label } i \not \in S} \Pr_{\vz \in \dis(\vy'^\star)} \left[ T'_{\downarrow \vz} \in \mathcal{A} \cap G^{-1}(u) \mid T'_{\downarrow \vz}  \in \mathcal{A} \right] \ = \ \frac{1}{2}.
\end{multline*}
\end{proof}

\begin{claim}
	Given a partial instance $\vy' \subseteq \vy$, we can naturally define $\vy'^\star$ as the partial instance of dimension $2n-|S|+m$ that matches $\vy'$ on its defined features, and holds $\vy'^\star[i] = \vy'^\star[i'] = \bot$ for every feature $i$ such that $\vy'[i] = \bot$. Then it holds that
	\[
	\Pr_{\vz \sim \dis(\vy')} \left[T(\vz) =  1 \right] \ = \ \Pr_{\vz \sim \dis(\vy'^\star)} \left[T'_{\downarrow \vz} \in \mathcal{N}_t  \mid T'_{\downarrow \vz} \in \mathcal{N} \right].
	\]
\label{claim:5}
\end{claim}
\begin{proof}[Proof of Claim~\ref{claim:5}]
Given that the resulting leaf is natural, for every node $u$ of $T'$
such that $u$ is labeled with feature $i \not\in S$, the tuple
$(u,L,R)$ was considered when constructing $T'$ and the path of $w$ in
$T'$ goes through $u$, it must be the case that $\vz[i] = \vz[i'] = 0$ or
$\vz[i] = \vz[i'] = 1$, as otherwise $T'_{\downarrow \vz} \in \mathcal{A}$.
But these two alternatives are equally likely by definition of $T'$.
Thus, by using a simple induction argument, for every leaf $\ell$ of
$T$ with a corresponding natural leaf $\ell'$ of $T'$, it holds that
\[
\Pr_{\vz \sim \dis(\vy')} \left[T_{\downarrow \vz} =  \ell \right] \ = \ \Pr_{\vz \sim \dis(\vy'^\star)}  \left[T'_{\downarrow \vz} \ = \ \ell'  \mid T'_{\downarrow \vz} \in \mathcal{N} \right],
\]
from where the claim immediately follows.
\end{proof}

\begin{claim} By choosing $m \geq \max\{2u(T,\vy)+2n,9\}$,
  we have that
	\[
		\Pr_{\vz \sim \dis(\vy^\star)} \left[(T' \cap T_1)(\vz) = 1\right] \ > \ \frac{1}{2}.
	\]
	\label{claim:3}
\end{claim}
\begin{proof}[Proof of Claim~\ref{claim:3}]
First, consider that for $T_1$, we have that
\[
\Pr_{\vz \sim \dis(\vy^\star)} \left[T_1(\vz) = 1\right] \ = \ 1 - \left(\frac{1}{2}\right)^m - m\left(\frac{1}{2}\right)^m =  1 - (m+1) \left(\frac{1}{2}\right)^m,
\]
while on the other hand
\begin{eqnarray*}
	\Pr_{\vz \sim \dis(\vy^\star)} \left[T'(\vz) = 1\right] &=& \Pr_{\vz \sim \dis(\vy^\star)}\left[T'_{\downarrow \vz} \in \mathcal{A}_t\right] +
	\Pr_{\vz \sim \dis(\vy^\star)}[T'_{\downarrow \vz}  \in \mathcal{N}_t]\\
	&=& \Pr_{\vz \sim \dis(\vy^\star)}[T'_{\downarrow \vz}  \in \mathcal{A}_t \mid T'_{\downarrow \vz}  \in \mathcal{A}] \cdot \Pr_{\vz \sim \dis(\vy^\star)}[T'_{\downarrow \vz} \in \mathcal{A}]\\
	&& \hspace{30pt} +  \Pr_{\vz \sim \dis(\vy^\star)}[T'_{\downarrow \vz}  \in \mathcal{N}_t \mid T'_{\downarrow \vz}  \in \mathcal{N}] \cdot \Pr_{\vz \sim \dis(\vy^\star)}[T'_{\downarrow \vz} \in \mathcal{N}]\\
	&=& \frac{1}{2} \cdot \Pr_{\vz \sim \dis(\vy^\star)}[T'_{\downarrow \vz} \in \mathcal{A}]\\
        && \hspace{30pt} + \Pr_{\vz \sim \dis(\vy^\star)}[T'_{\downarrow \vz}  \in \mathcal{N}_t \mid T'_{\downarrow \vz}  \in \mathcal{N}] \cdot \Pr_{\vz \sim \dis(\vy^\star)}[T'_{\downarrow \vz} \in \mathcal{N}],
\end{eqnarray*}
where the last equality uses Claim~\ref{claim:2}. Let us now show that\[
\Pr_{\vz \sim \dis(\vy^\star)} [T'_{\downarrow \vz}  \in \mathcal{N}_t \mid T'_{\downarrow \vz}  \in \mathcal{N}] \ \geq \ \frac{1}{2} + \left(\frac{1}{2}\right)^{n}.
\]
By Claim~\ref{claim:5} this is the same as showing that
\[
	\Pr_{\vz \sim \dis(\vy)}[T(\vz) = 1] \ \geq \ \frac{1}{2} + \left(\frac{1}{2}\right)^{n},
\]
which is guaranteed by the definition of the $\sbchecksub$ problem, as we know that 
\[
	\Pr_{\vz \sim \dis(\vy)}[T(\vz) = 1] \ > \ \frac{1}{2},
\]
and also that $\Pr_{\vz \sim \dis(\vy)}[T(\vz) = 1]$ must be of the form $\left(\frac{k}{2^n}\right)$ with $k \in \mathbb{N}$, given that $n$ is the dimension of $T$.

Now, consider that 
\[
	\Pr_{\vz \sim \dis(\vy^\star)} [T'_{\downarrow \vz} \in \mathcal{A}] \ = \ 1-\left(\frac{1}{2}\right)^{u(T,\vy)}.
\]
Notice that this holds because $T$ is strongly-balanced, so falling
into a natural leaf in $T'$ requires going through $u(T,\vy)$ layers
without choosing an artificial leaf, which happens with probability
$\frac{1}{2}$ at each layer. Thus, we have that
\begin{eqnarray*}
\Pr_{\vz \sim \dis(\vy^\star)}[T'(\vz) = 1]  &\geq &\frac{1}{2}\left(1-\left(\frac{1}{2}\right)^{u(T,\vy)}\right) +
\left(\frac{1}{2} + \left(\frac{1}{2}\right)^{n}\right) \left(\frac{1}{2}\right)^{u(T,\vy)}\\
&=& \frac{1}{2} + \left(\frac{1}{2}\right)^{n + u(T,\vy)}.
\end{eqnarray*}
Moreover, given that $T'$ and $T_1$ impose restrictions over disjoint sets of features, we have that
\begin{eqnarray*}
  \Pr_{\vz \sim \dis(\vy^\star)} \left[T'(\vz) = 1 \mid T_1(\vz) = 1\right]  &=& \Pr_{\vz \sim \dis(\vy^\star)}\left[T'(\vz) = 1\right].
\end{eqnarray*}
Putting together all the previous results, we obtain that 
\begin{align*}
  \Pr_{\vz \sim \dis(\vy^\star)} \left[(T' \cap T_1)(\vz) = 1\right] &= \Pr_{\vz \sim \dis(\vy^\star)}  \left[T'(\vz) = 1 \mid T_1(\vz) = 1\right] \cdot \Pr_{\vz \sim \dis(\vy^\star)}\left[T_1(\vz) = 1\right]\\
  &= \Pr_{\vz \sim \dis(\vy^\star)}\left[T'(\vz) = 1\right] \cdot \Pr_{\vz \sim \dis(\vy^\star)}\left[T_1(\vz) = 1\right]\\
	&\geq \left(\frac{1}{2} + \left(\frac{1}{2}\right)^{n + u(T,\vy)}\right) \left(1 - (m+1) \left(\frac{1}{2}\right)^m \right)\\
	&= \frac{1}{2} - (m+1)\left(\frac{1}{2}\right)^{m+1}  + \left(\frac{1}{2}\right)^{n+u(T,\vy)} - (m+1)\left(\frac{1}{2}\right)^{m+n+u(T,\vy)}\\
	&\geq \frac{1}{2} - (m+1)\left(\frac{1}{2}\right)^{m} + \left(\frac{1}{2}\right)^{n+u(T,\vy)}\\
	&\geq \frac{1}{2} - \left(\frac{1}{2}\right)^{m - \lceil\log (m+1)\rceil}+ \left(\frac{1}{2}\right)^{n+u(T,\vy)}.
\end{align*}
But we are assuming $m \geq \max\{2n + 2u(T,\vy),9\}$, which implies that $m \geq 2n + 2u(T,\vy)$. Hence, we have that
\[
m - \lceil\log (m+1)\rceil > n +u(T,\vy),
\]
as $m - \lceil\log (m+1)\rceil > \frac{m}{2}$ since $m \geq 9$.
We conclude that
\begin{eqnarray*}
  \frac{1}{2} - \left(\frac{1}{2}\right)^{m - \lceil\log (m+1)\rceil}+ \left(\frac{1}{2}\right)^{n+u(T,\vy)} &>& \frac{1}{2},
\end{eqnarray*}
from which the claim follows. 
\end{proof}

\begin{claim}
	For every partial instance $\vy'^\star \subseteq \vy^\star$, it holds that 
	\begin{eqnarray*}
		\Pr_{\vz \sim \dis(\vy'^\star)} \left[T'_{\downarrow \vz} \in \mathcal{A}\right] & = & \Pr_{\vz \sim \dis(\vy^\star)} \left[T'_{\downarrow \vz} \in \mathcal{A} \right],\\
		\Pr_{\vz \sim \dis(\vy'^\star)} \left[T'_{\downarrow \vz} \in \mathcal{N}\right] & = & \Pr_{\vz \sim \dis(\vy^\star)} \left[T'_{\downarrow \vz} \in \mathcal{N}\right].
	\end{eqnarray*}
	\label{claim:4}
\end{claim}
\begin{proof}[Proof of Claim~\ref{claim:4}]
We only need to prove that $\Pr_{\vz \sim \dis(\vy'^\star)}[T'_{\downarrow
    \vz} \in \mathcal{A}] = \Pr_{\vz \sim \dis(\vy^\star)} [T'_{\downarrow
    \vz} \in \mathcal{A}]$. As shown in the proof of Claim \ref{claim:3}, this
follows from the strongly-balanced property of $T$.
\end{proof}

With these claims we can finally prove the reduction is correct. That
is, we will show that $\minimal(T^\star, \vx, \delta)$ is different
from $\vy^\star$ if and only if $(T, \vy)$ is a positive instance of
$\sbchecksub$.

\paragraph{Forward direction.} Assume the result of $\minimal(T^\star, \vx, \delta)$ is some partial instance $\vy'^\star$ different from $\vy^\star$. Note immediately that it is not possible that $\vy^\star \subsetneq \vy'^\star$ as by definition of $\delta$, we have that
\[
	\Pr_{\vz \sim \dis(\vy^\star)} \left[ T^\star(\vz) = 1\right]
        \ \geq \ \delta,
\]
which would contradict the minimality of $\vy'^\star$. Let us first prove that $\vy'^\star \subseteq \vy^\star$. As a first step, we show that $\vy'^\star[i] = \vy'^\star[i'] = \bot$ for every $i \not \in S$. We do this by way of contradiction, assuming first that $\vy'^\star[i] = 0$ or $\vy'^\star[i'] = 1$ for some $i \not \in S$, and exposing how either case generates a contradiction.\footnote{Recall that $\vx[i] = 0$ and $\vx[i'] = 1$, so if $\vy'^\star[i] \neq \bot$ or $\vy'^\star[i'] \neq \bot$, then $\vy'^\star[i] = 0$ or $\vy'^\star[i'] = 1$ as $\vy'^\star \subseteq \vx$.}  
 
	\begin{enumerate}
		\item If there is some feature $i \not \in S$ such that $\vy'^\star[i] = 0$, let us define $\vy^\dagger$ to be equal to $\vy'^\star$ except that $\vy^\dagger[i] = \bot$. This means that $\vy^\dagger \subsetneq \vy'^\star$. Moreover, let $\vy^\dagger_1$ be equal to $\vy'^\star$ except that $\vy^\dagger_1[i] = 1$.  We will now show that $\vy^\dagger$ is also a valid output of the computation problem, which will contradict the minimality of $\vy'^\star$. Indeed, given that $(T_{\vy_0} \cap T_1)(\vz) = 0$ for every instance $\vz$, it holds that
		\begin{align*}
		\Pr_{\vz \sim \dis(\vy^\dagger)}\left[ T^\star(\vz) = 1\right] &= \Pr_{\vz \sim \dis(\vy^\dagger)}\left[ T_{\vy_0}(\vz) = 1 \right]  	
			+
			\Pr_{\vz \sim \dis(\vy^\dagger)}\left[ (T' \cap T_1)(\vz) = 1 \right]  \\
			&= \Pr_{\vz \sim \dis(\vy'^\star)}\left[ T_{\vy_0}(\vz) = 1 \right]
			+  
			\Pr_{\vz \sim \dis(\vy^\dagger)}\left[ (T' \cap T_1)(\vz) = 1 \right],
				\end{align*}
				and we can then observe that the events $T'(\vz) = 1$ and $T_1(\vz) = 1$ are independent, thus implying that
				\[
			\Pr_{\vz \sim \dis(\vy^\dagger)}\left[ (T' \cap T_1)(\vz) = 1 \right] \ = \ \Pr_{\vz \sim \dis(\vy^\dagger)}\left[ T'(\vz) = 1 \right]  \cdot \Pr_{\vz \sim \dis(\vy^\dagger)}\left[T_1(\vz) = 1 \right].
				\]
				By construction of $\vy^\dagger$, we also have that
				\[
			\Pr_{\vz \sim \dis(\vy^\dagger)}\left[T_1(\vz) = 1 \right] = 	\Pr_{\vz \sim \dis(\vy'^\star)}\left[T_1(\vz) = 1 \right].				\]
		Thus, it now suffices to show that 
		\[
			\Pr_{\vz \sim \dis(\vy^\dagger_1)} \left[ T'(\vz) = 1 \right] \geq 	\Pr_{\vz \sim \dis(\vy'^\star)}\left[ T'(\vz) = 1 \right], 
		        \]
                        as this implies by definition of $\vy'^\star$, $\vy^\dagger$ and $\vy^\dagger_1$ that 
                        \[
                        \Pr_{\vz \sim \dis(\vy^\dagger)} \left[ T'(\vz) = 1 \right] \geq    \Pr_{\vz \sim \dis(\vy'^\star)}\left[ \
T'(\vz) = 1 \right],
                        \]
                        which in turn implies by the previous discussion that
		\[
		\Pr_{\vz \sim \dis(\vy^\dagger)} \left[ T^\star(\vz) = 1 \right] \ \geq \ \Pr_{\vz \sim \dis(\vy'^\star)} \left[ T^\star(\vz) = 1 \right] \ \geq \ \delta,
		\]
                and leads to a contradiction.
                
		In order to prove that $\Pr_{\vz \sim \dis(\vy^\dagger_1)} [ T'(\vz) = 1] \geq \Pr_{\vz \sim \dis(\vy'^\star)}[T'(\vz) = 1]$, we will consider two cases, either $\vy'^\star[i'] = 1$ or $\vy'^\star[i'] = \bot$.
		\begin{enumerate}
			\item If $\vy'^\star[i'] = 1$, then we have that
			\[
			\Pr_{\vz \sim \dis(\vy^\dagger_1)} \left[ T'(\vz) = 1 \right] \geq \Pr_{\vz \sim \dis(\vy'^\star)} \left[ T'(\vz) = 1 \right],			\]
			as for every node $u$ in $T'$ labeled with $i$, any completion $\vz$ of $\vy'^\star$ that goes through $u$ will be rejected by construction (landing in an artificial $\false$ leaf), and for paths of $T'$ that do not go through a node labeled $u$ there is no difference between completions of $\vy'^\star$ and those for $\vy^\dagger_1$.
		      \item  If $\vy'^\star[i'] = \bot$, then we have that for every node $u$ in $T'$ which corresponds to $(u, L, R)$ according to the recursive definition of a decision tree, and is labeled with $i$, the probability of acceptance of a random completion $\vz$ conditioned on its path going through $u$, which is denoted by
                        $\vz \rightsquigarrow u$, is as follows:
			\[
			\Pr_{\vz \sim \dis(\vy'^\star)}\left[ T'(\vz) = 1 \mid \vz \rightsquigarrow u \right] = \frac{1}{2} \cdot	\Pr_{\vz \sim \dis(\vy'^\star)} \left[ L(\vz) = 1 \mid \vz \rightsquigarrow u \right],
			\] 
			while on the other hand we have
			\begin{multline*}
			  \Pr_{\vz \sim \dis(\vy^\dagger_1)} \left[ T'(\vz) = 1 \mid \vz \rightsquigarrow u \right] \ = \ \frac{1}{2} + \frac{1}{2} \cdot \Pr_{\vz \sim \dis(\vy^\dagger_1)} \left[ R(\vz) = 1 \mid \vz \rightsquigarrow u \right].
			\end{multline*}
			By considering that
			\[
			\frac{1}{2} \ \geq \  	\frac{1}{2} \cdot \Pr_{\vz \sim \dis(\vy'^\star)} \left[ L(\vz) = 1 \mid \vz \rightsquigarrow u \right],
			\]
			we conclude that
			\[
				\Pr_{\vz \sim\dis(\vy^\dagger_1)} \left[ T'(\vz) = 1 \mid \vz \rightsquigarrow u \right] \ \geq \ \frac{1}{2} \cdot \Pr_{\vz \sim \dis(\vy'^\star)} \left[ L(\vz) = 1 \mid \vz \rightsquigarrow u \right]. 
			\]
                        from which we conclude that
			\[
				\Pr_{\vz \sim\dis(\vy^\dagger_1)} \left[ T'(\vz) = 1 \mid \vz \rightsquigarrow u \right] \ \geq \ \Pr_{\vz \sim \dis(\vy'^\star)} \left[ T'(\vz) = 1 \mid \vz \rightsquigarrow u \right]. 
			        \]
                                Therefore, by considering that $\vz \rightsquigarrow u_1$ and $\vz \rightsquigarrow u_2$ are disjoint events of $u_1$, $u_2$ are distinct nodes of $T'$ with labeled $i$, we have that 
			\begin{align*}
			  \Pr_{\vz \sim \dis(\vy^\dagger_1)} \left[ T'(\vz) = 1 \right] &=
                          \sum_{\substack{u \text{ is a node of } T' \\ \text{ with label } i}} \Pr_{\vz \sim \dis(\vy^\dagger_1)} \left[ T'(\vz) = 1 \mid \vz \rightsquigarrow u \right] \cdot \Pr_{\vz \sim \dis(\vy^\dagger_1)} \left[ \vz \rightsquigarrow u \right]\\
                          &\geq \sum_{\substack{u \text{ is a node of } T' \\ \text{ with label } i}} \Pr_{\vz \sim \dis(\vy'^\star)} \left[ T'(\vz) = 1 \mid \vz \rightsquigarrow u \right] \cdot \Pr_{\vz \sim \dis(\vy^\dagger_1)} \left[ \vz \rightsquigarrow u \right]\\
                          &= \sum_{\substack{u \text{ is a node of } T' \\ \text{ with label } i}} \Pr_{\vz \sim \dis(\vy'^\star)} \left[ T'(\vz) = 1 \mid \vz \rightsquigarrow u \right] \cdot \Pr_{\vz \sim \dis(\vy'^\star)} \left[ \vz \rightsquigarrow u \right]\\
                          &= \Pr_{\vz \sim \dis(\vy'^\star)} \left[ T'(\vz) = 1 \right] 
			\end{align*}
			This concludes the proof of this case.
		\end{enumerate}
	      \item It remains to analyze the case when $\vy'^\star[i] = \bot$ and $\vy'^\star[i'] = 1$, which can be proved in the same way as the previous case $\vy'^\star[i] = 0$ and $\vy'^\star[i'] = \bot$.
	\end{enumerate}

After this case analysis, we can safely assume that $\vy'^\star[i] = \vy'^\star[i'] = \bot$ for every $i \not \in S$. We will now show that $\vy'^\star[b_j] = \bot$ for every $j \in \{1, \ldots, m\}$. To see this, consider that in general it could be that $\vy'^\star$ \emph{forces} a certain number $k$ of features $b_j$ to get value $0$, meaning that there is a set $K \subseteq \{1, \ldots, m\}$ with $|K| = k$ such that $\vy'^\star[b_j] = 0$ for $j \in K$. We will argue that $k = 0$. Let us start by arguing that $k \leq m-2$. Indeed, assume expecting a contradiction that $k > m-2$, then by definition 
\[
\Pr_{\vz \sim \dis(\vy'^\star)} \left[ T_1(\vz) = 1 \right] = 0,
\]
and thus
\[
\Pr_{\vz \sim \dis(\vy'^\star)} \left[ T^\star(\vz) = 1 \right] =  \Pr_{\vz \sim \dis(\vy'^\star)} \left[ T_{\vy_0}(\vz) = 1 \right], 
\]
but 
\[
\Pr_{\vz \sim \dis(\vy'^\star)} \left[ T_{\vy_0}(\vz) = 1 \right] \leq \frac{1}{2},
\]
as $\vy'^\star$ cannot be a superset of $\vy^\star$, and thus at least one feature $i$ of $\vy^\star$ is undefined in $\vy'^\star$, and the event $\vz[i] = \vy^\star[i]$, which happens with probability $\frac{1}{2}$, is required for $T_{\vy_0}(\vz) = 1$. But by definition of $\delta$, if $\vy'^\star$ is the output of the computational problem, then its probability of acceptance is at least 
\[
	\Pr_{\vz \sim \dis(\vy^\star)} \left[(T' \cap T_1)(\vz) = 1\right],
\]  
and this probability is greater than $\frac{1}{2}$ (Claim~\ref{claim:3}), and thus we have a contradiction. We now safely assume $k \leq m-2$ and thus $m-k \geq 2$. 
Observe that as at least one component of $\vy^\star$ is undefined in $\vy'^\star$, we have
\[
	\Pr_{\vz \sim \dis(\vy'^\star)} \left[ T_{\vy_0}(\vz) = 1 \right] \leq \frac{1}{2} \cdot \left(\frac{1}{2}\right)^{m-k}, 
\]
and thus
\[
	\Pr_{\vz \sim \dis(\vy'^\star)} \left[(T' \cap T_1)(\vz) = 1 \right] \geq \delta - \frac{1}{2} \cdot \left(\frac{1}{2}\right)^{m-k},
\]
which, considering that 
\[
	\Pr_{\vz \sim \dis(\vy'^\star)} \left[T_1(\vz) = 1 \right]= 1 - \left( \frac{1}{2}\right)^{m-k} - (m-k)\left(\frac{1}{2}\right)^{m-k},
\]
implies that 
\[
\Pr_{\vy \sim \dis(\vz'^\star)} \left[T'(\vz) = 1 \right] \geq \frac{\delta - \frac{1}{2} \cdot \left(\frac{1}{2}\right)^{m-k}}{1 - \left( \frac{1}{2}\right)^{m-k} - (m-k)\left(\frac{1}{2}\right)^{m-k}},
\]
as $T'(\vz) = 1$ and $T_1(\vz) = 1$ are independent events.
We now show that the RHS of the previous equation is greater than $\delta$. Indeed, for ease of notation set $r \coloneqq (m-k+1)$ and note how the RHS can be rewritten as
\[
\frac{\delta - \left(\frac{1}{2}\right)^r}{1 - r\left(\frac{1}{2}\right)^{r-1}}.
\]
Now consider that as $m-k \geq 2$ we have that $r \geq 3$ and thus $2^{r-1} > r$, which implies $r\left(\frac{1}{2}\right)^{r-1} < 1$, and thus the denominator of the previous equation is positive, implying that what we want to show is equivalent to
\[
	\delta - \left(\frac{1}{2}\right)^r > \delta \left(1 - r\left(\frac{1}{2}\right)^{r-1} \right),
\]
which is in turn equivalent to
\[
\delta r \left(\frac{1}{2}\right)^{r-1} > \left(\frac{1}{2}\right)^{r},
\]
but as by Claim~\ref{claim:3} we have $\delta > \frac{1}{2}$, and $r \geq 3$, the previous equation is trivially true. We have therefore showed that
\[
\Pr_{\vz \sim \dis(\vy'^\star)} \left[T'(\vz) = 1 \right] > \delta.
\] 

Now let $\vy^\ominus$ be the partial instance such that $\vy^\ominus[i] = \vy'^\star[i]$ for every $i \in S$, and is undefined in all other features. Note that $\vy^\ominus \subseteq \vy'^\star$ and also $\vy^\ominus \subseteq \vy^\star$. If $\vy^\ominus = \vy'^\star$, then $\vy'^\star \subseteq \vy^\star$ which is what we are hoping to prove. 
So we now assume $\vy^\ominus \subsetneq \vy'^\star$ expecting a contradiction. Note that  $T'$ does not use the $b_j$ features at all, and therefore we have that
\[
\Pr_{\vz \sim \dis(\vy^\ominus)} \left[T'(\vz) = 1 \right] = \Pr_{\vz \sim \dis(\vy'^\star)} \left[T'(\vz) = 1 \right]  > \delta.
\]
We will use this to prove that $\vy^\ominus$ would have been a valid outcome of the computing problem, thus contradicting the minimality of $\vy'^\star$. Indeed,
\begin{align*}
	\Pr_{\vz \sim \dis(\vy^\ominus)} \left[T^\star(\vz) = 1 \right] &> \Pr_{\vz \sim \dis(\vy^\ominus)}\left[T'(\vz) = 1 \right] \cdot \Pr_{\vz \sim \dis(\vy^\ominus)}\left[T_1(\vz) = 1 \right], 
\end{align*}
and note that as 
\[
\Pr_{\vz \sim \dis(\vy^\ominus)}\left[T'(\vz) = 1 \right] > \delta,
\]
it must be the case that 
\[
\Pr_{\vz \sim \dis(\vy^\ominus)}\left[T'(\vz) = 1 \right] \geq \delta + \left(\frac{1}{2}\right)^{2n - |S|},
\]
as only $2n - |S|$ features appear as labels in $T'$ and, thus, the completion probability of any partial instance over $T'$ must be an integer multiple of $\left(\frac{1}{2}\right)^{2n - |S|}$. Now let us abbreviate $2n- |S|$ as $\ell$ and choose $m \geq 2\ell$. We thus have that
\begin{align*}
	\Pr_{\vz \sim \dis(\vy^\ominus)}\left[T^\star(\vz) = 1 \right] &\geq \Pr_{\vz \sim \dis(\vy^\ominus)}\left[T'(\vz) = 1\right] \cdot \Pr_{\vz \sim \dis(\vy^\ominus)}\left[T_1(\vz) = 1 \right]\\
	&\geq \left(\delta + \left(\frac{1}{2}\right)^{2n - |S|}\right) \left(1 - (m+1)\left(\frac{1}{2}\right)^{m} \right)\\
	&\geq \left(\delta + \left(\frac{1}{2}\right)^{2n - |S|}\right) \left(1 - (2\ell+1)\left(\frac{1}{2}\right)^{2\ell} \right)\\
	&= \delta - \delta(2\ell+1)\left(\frac{1}{2}\right)^{2\ell} + \left(\frac{1}{2}\right)^\ell - (2\ell+1)\left(\frac{1}{2}\right)^{3\ell}\\
	&\geq  \delta - (2\ell+1)\left(\frac{1}{2}\right)^{2\ell} + \left(\frac{1}{2}\right)^\ell - (2\ell+1)\left(\frac{1}{2}\right)^{2\ell}\\
	&= \delta - (4\ell+2)\left(\frac{1}{2}\right)^{2\ell} + \left(\frac{1}{2}\right)^\ell\\
	&= \delta + \left(\frac{1}{2}\right)^\ell\left(1- (4\ell+2)\left(\frac{1}{2}\right)^{\ell}\right),
\end{align*}
where the last parenthesis is positive for $\ell \geq 5$, which can be assumed without loss of generality as otherwise the original instance of the decision problem would have constant size. We have thus concluded that
\begin{align}
  \label{eq:ominus-delta}
\Pr_{\vz \sim \dis(\vy^\ominus)} \left[T^\star(\vz) = 1 \right] \geq \Pr_{\vz \sim \dis(\vy^\ominus)} \left[(T' \cap T_1)(\vz) = 1 \right] \geq \delta,
\end{align}
thus showing that $\vy^\ominus$ is a valid outcome for the computing problem, which contradicts the minimality of $\vy'^\star$. This in turn implies that $\vy'^\star = \vy^\ominus$, and thus subsequently that $\vy'^\star \subseteq \vy^\star$. Let us now show how  by combining Claims~\ref{claim:2}, \ref{claim:5} and \ref{claim:4}, we can conclude the forward direction entirely. Indeed, note that the trivial equality 
\[
	\Pr_{\vz \sim \dis(\vy^\star)}[T_1(\vz) = 1] = \Pr_{\vz \sim \dis(\vy'^\star)}[T_1(\vz) = 1]
\]
implies that 
\[
	\Pr_{\vz \sim \dis(\vy^\star)}[T'(\vz) = 1]  \ \leq \ \Pr_{w \sim \dis(\vy'^\star)} [T'(\vz) = 1],
\]
as we already have proved that $\Pr_{\vz \sim \dis(\vy'^\star)} \left[(T' \cap T_1)(\vz) = 1 \right] \geq \delta$ by \eqref{eq:ominus-delta} and the fact that $\vy^\ominus = \vy'^\star$, and we have that $\delta = \Pr_{\vz \sim \dis(\vy^\star)}[(T' \cap T_1)(\vz) = 1]$. 
We can use Claims~\ref{claim:2}, \ref{claim:5} and \ref{claim:4} to conclude that
\begin{align*}
	\Pr_{\vz \sim \dis(\vy)}[T(\vz) = 1] &=  \Pr_{\vz \sim \dis(\vy^\star)} [T'_{\downarrow \vz} \in \mathcal{N}_t  \mid T'_{\downarrow \vz} \in \mathcal{N} ]\\
	&= \frac{{\displaystyle \Pr_{\vz \sim \dis(\vy^\star)}[T'_{\downarrow \vz} \in \mathcal{N}_t]}}{{\displaystyle \Pr_{\vz \sim \dis(\vy^\star)}[T'_{\downarrow \vz} \in \mathcal{N}]}}\\
	&= \frac{{\displaystyle \Pr_{\vz \sim \dis(\vy^\star)}[T'_{\downarrow \vz} \in \mathcal{N}_t ]}}{{\displaystyle \Pr_{\vz \sim \dis(\vy'^\star)}[T'_{\downarrow \vz} \in \mathcal{N}]}} \\
	&= \frac{{\displaystyle \Pr_{\vz \sim \dis(\vy^\star)}[T'(\vz) = 1 ] - \Pr_{\vz \sim \dis(\vy^\star)}[T'_{\downarrow \vz} \in \mathcal{A}_t]}}{{\displaystyle \Pr_{\vz \sim \dis(\vy'^\star)}[T'_{\downarrow \vz} \in \mathcal{N}]}}\\
	&  =  \frac{{\displaystyle \Pr_{\vz \sim \dis(\vy^\star)}[T'(\vz) = 1  ] - \Pr_{\vz \sim \dis(\vy^\star)}[T'_{\downarrow \vz} \in \mathcal{A}_t  \mid T'_{\downarrow \vz} \in \mathcal{A}] \cdot \Pr_{\vz \sim \dis(\vy^\star)}[T'_{\downarrow \vz} \in \mathcal{A}] } }{{\displaystyle \Pr_{\vz \sim \dis(\vy'^\star)}[T'_{\downarrow \vz} \in \mathcal{N}]}}\\
	&= \frac{{\displaystyle \Pr_{\vz \sim \dis(\vy^\star)}[T'(\vz) = 1  ] - \Pr_{\vz \sim \dis(\vy'^\star)}[T'_{\downarrow \vz} \in \mathcal{A}_t  \mid T'_{\downarrow \vz} \in \mathcal{A}] \cdot \Pr_{\vz \sim \dis(\vy'^\star)}[T'_{\downarrow \vz} \in \mathcal{A}]}  }{{\displaystyle \Pr_{\vz \sim \dis(\vy'^\star)}[T'_{\downarrow \vz} \in \mathcal{N}]}}\\
	&\leq  \frac{{\displaystyle \Pr_{\vz \sim \dis(\vy'^\star)}[T'(\vz) = 1  ] - \Pr_{\vz \sim \dis(\vy'^\star)}[T'_{\downarrow \vz} \in \mathcal{A}_t  \mid T'_{\downarrow \vz} \in \mathcal{A}] \cdot \Pr_{\vz \sim \dis(\vy'^\star)}[T'_{\downarrow \vz} \in \mathcal{A}]}  }{{\displaystyle \Pr_{\vz \sim \dis(\vy'^\star)}[T'_{\downarrow \vz} \in \mathcal{N}]}}\\
	&=  \frac{{\displaystyle \Pr_{\vz \sim \dis(\vy'^\star)}[T'(\vz) = 1  ] - \Pr_{\vz \sim \dis(\vy'^\star)}[T'_{\downarrow \vz} \in \mathcal{A}_t]}  }{{\displaystyle \Pr_{\vz \sim \dis(\vy'^\star)}[T'_{\downarrow \vz} \in \mathcal{N}]}}\\
	&=  \frac{{\displaystyle \Pr_{\vz \sim \dis(\vy'^\star)}[T'_{\downarrow \vz} \in \mathcal{N}_t]}  }{{\displaystyle \Pr_{\vz \sim \dis(\vy'^\star)}[T'_{\downarrow \vz} \in \mathcal{N}]}}\\
	&= \Pr_{\vz \sim \dis(\vy'^\star)}[T'_{\downarrow \vz} \in \mathcal{N}_t \mid T'_{\downarrow \vz} \in \mathcal{N}]\\
	&= \Pr_{\vz \sim \dis(\vy')}[T(\vz) = 1],
\end{align*}
where $\vy'$ is the partial instance of dimension $n$ such that $\vy'[i] = \vy'^\star[i]$ for every $i$ such that $\vy'^\star[i] \neq \bot$, and $\vy'$ is undefined in all other features. By this definition, $\vy' \subsetneq \vy$ as we had $\vy'^\star \subsetneq \vy^\star$ (because by assumption $\vy'^\star \neq \vy^\star$), and thus we have effectively proved that the instance $(T,z)$ is a positive instance of $\sbchecksub$. This concludes the proof of the forward direction.

\paragraph{Backward direction.} Assume the instance $(T,\vy)$ is a positive instance of $\sbchecksub$ and, thus, there exists some $\vy' \subsetneq \vy$ such that 
\[
	\Pr_{\vz \sim \dis(\vy')}\left[T(\vz) = 1\right] \geq \Pr_{\vz \sim \dis(\vy)}\left[T(\vz) = 1 \right]. 
\]
Define $\vz'^\star$ of the dimension of $T^\star$ based on $\vy'$ by setting $\vy'^\star[i] = \vy'[i]$ for every $i$ such that $\vy'[i] \neq \bot$, and leave the rest of $\vy'^\star$ undefined. Note that this definition immediately implies $\vy'^\star \subsetneq \vy^\star$.
By Claim~\ref{claim:5} the previous equation  implies that
\[
	\Pr_{\vz \sim \dis(\vy'^\star)}\left[T'_{\downarrow \vz} \in \mathcal{N}_t  \mid T'_{\downarrow \vz} \in \mathcal{N} \right] \geq \Pr_{\vz \sim \dis(\vy^\star)}\left[T'_{\downarrow \vz} \in \mathcal{N}_t  \mid T'_{\downarrow \vz} \in \mathcal{N} \right], 
\]
which implies in turn that 
\[
\frac{{\displaystyle \Pr_{\vz \sim \dis(\vy'^\star)}[T'_{\downarrow \vz} \in \mathcal{N}_t]}}{{\displaystyle \Pr_{\vz \sim \dis(\vy'^\star)}[T'_{\downarrow \vz} \in \mathcal{N}]}} \geq \frac{{\displaystyle \Pr_{\vz \sim \dis(\vy^\star)}[T'_{\downarrow \vz} \in \mathcal{N}_t]}}{{\displaystyle \Pr_{\vz \sim \dis(\vy^\star)}[T'_{\downarrow \vz} \in \mathcal{N}]}}.
\]
By Claim~\ref{claim:4} the denominators of the previous inequality are equal and, thus, 
\[
\Pr_{\vz \sim \dis(\vy'^\star)}\left[T'_{\downarrow \vz} \in \mathcal{N}_t\right] \geq \Pr_{\vz \sim \dis(\vy^\star)}\left[T'_{\downarrow \vz} \in \mathcal{N}_t \right], 
\]
from where 
\begin{align*}
  \hspace{-40pt}\Pr_{\vz \sim \dis(\vy'^\star)}&\left[T'_{\downarrow \vz} \in \mathcal{N}_t\right] +\ \Pr_{\vz \sim \dis(\vy^\star)}\left[T'_{\downarrow \vz} \in \mathcal{A}_t \right] \ \geq \\
  &\Pr_{\vz \sim \dis(\vy^\star)}\left[T'_{\downarrow \vz} \in \mathcal{N}_t \right]+ \Pr_{\vz \sim \dis(\vy^\star)}\left[T'_{\downarrow \vz} \in \mathcal{A}_t \right] \ = \
\Pr_{\vz \sim \dis(\vy^\star)}\left[T'(\vz) = 1\right].
\end{align*}
But combining Claims~\ref{claim:2} and~\ref{claim:4} we have that
\[
\Pr_{\vz \sim \dis(\vy^\star)}\left[T'_{\downarrow \vz} \in \mathcal{A}_t\right] = \Pr_{\vz \sim \dis(\vy'^\star)}\left[T'_{\downarrow \vz} \in \mathcal{A}_t  \right],
\]
which when combined with the previous equation gives us
\[
\Pr_{\vz \sim \dis(\vy'^\star)}\left[T'(\vz) = 1  \right] \geq \Pr_{\vz \sim \dis(\vy^\star)}\left[T'(\vz) = 1 \right],
\]
and using again that 
\[
\Pr_{\vz \sim \dis(\vy'^\star)}\left[T_1(\vz) = 1\right] =\Pr_{\vz \sim \dis(\vy^\star)}\left[T_1(\vz) = 1\right],
\]
we obtain that 
\[
\Pr_{\vz \sim \dis(\vy'^\star)}\left[(T' \cap T_1)(\vz) = 1 \right] \geq \Pr_{\vz \sim \dis(\vy^\star)}\left[(T' \cap T_1)(\vz) = 1 \right] = \delta.
\]
Finally, by observing that 
\[
\Pr_{\vz \sim \dis(\vy'^\star)}\left[T^\star(\vz) = 1 \right]
\ \geq \ \Pr_{\vz \sim \dis(\vy'^\star)}\left[(T' \cap T_1)(\vz) = 1 \right] \ \geq \ \delta,
\]
we have that $\vy'^\star$ is a valid output for the computational problem, and give it is a strict subset of $\vy^\star$, the result of $\minimal(T^\star, \vx, \delta)$ cannot be equal to $\vy^\star$. This concludes the backward direction, and with it the entire proof is complete.
\end{proof}

\section{Proof of Theorem \ref{thm:split-number-poly}}
\label{app:thm:split-number-poly}
\begin{thmbis}{thm:split-number-poly}
Let $c\geq 1$ be a fixed integer. Both $\minimum$ and $\minimal$ can be solved in polynomial time for decision trees 
with split number at most $c$. 
\end{thmbis}

\begin{proof}
 It suffices to provide a polynomial time algorithm for $\minimum$. (The same algorithm works for $\minimal$  
as a minimum $\delta$-SR is in particular minimal.) In turn, using standard arguments, 
it is enough to provide a polynomial time algorithm for the following decision problem
$\checkmin$: Given a tuple $(T,\vy,\delta, k)$, where $T$ is a decision tree of dimension $n$, $\vy\in\{0,1,\bot\}^n$ is a partial instance, 
$\delta\in (0,1]$, and $k\geq 0$ is an integer, decide whether there is a partial instance $\vy'\subseteq \vy$ such that $n-|\vy'|_{\bot}\leq k$ (i.e., $\vy$ has at most $k$ defined components) and 
$\Pr{}_{\!{\vz}}[T(\vz) = 1\mid \vz \in \Comp(\vy')]\geq \delta$.

In order to solve $\checkmin$ over an instance $(T,\vy,\delta, k)$, where $T$ has split number at most $c$, we apply dynamic programming 
over $T$ in a bottom-up manner. Let $Z\subseteq\{1,\dots, n\}$ be the set of features defined in $\vy$, that is, features $i$ with $\vy[i]\neq \bot$. 
Those are the features we could eventually remove when looking for $\vy'$. 
For each node $u$ in $T$, we solve a polynomial number of subproblems over the subtree $T_u$. We define
\[
{\sf Int}(u):= \feat\left(N^{\downarrow}_u\right) \cap \feat\left(N^{\uparrow}_u\right)\cap Z \qquad \qquad {\sf New}(u):= \left(\feat\left(N^{\downarrow}_u\right)\setminus {\sf Int}(u)\right)\cap Z.
\]
In other words, ${\sf Int}(u)$ are the features appearing both inside and outside $T_u$, while ${\sf New}(u)$ are the features only inside $T_u$, 
that is, the new features introduced below $u$. Both sets are restricted to $Z$ as features not in $Z$ play no role in the process. 

Each particular subproblem is indexed by a possible size $s\in\{0,\dots, k\}$ and a possible set $J\subseteq {\sf Int}(u)$
with $|J|\leq s$ and the goal is to compute the quantity:
\[
p_{u,s,J} := \max_{\vy'\in\, \mathcal{C}_{u,s,J}} \Pr{}_{\!{\vz}}[T_u(\vz) = 1\mid \vz \in \Comp(\vy')],
\]
where $\mathcal{C}_{u,s,J}$ is the space of partial instances $\vy'\subseteq \vy$  with $n-|\vy'|_{\bot}\leq s$ and such that $\vy'[i]=\vy[i]$ for 
$i\in J$ and $\vy'[i]=\bot$ for $i\in {\sf Int}(u)\setminus J$. In other words, the set $J$ fixes the behavior on ${\sf Int}(u)$ (keep features in $J$, remove features in ${\sf Int}(u)\setminus J$) and hence the maximization occurs over choices on the set ${\sf New}(u)$ (which features are kept and which features are removed). The key idea is that $p_{u,s,J}$ can be computed inductively using the information already computed for the children $u_1$ and $u_2$ of $u$. 
Intuitively, this holds since the common features between $T_{u_1}$ and $T_{u_2}$ are at most $c$, which is a fixed constant, 
and hence we can efficiently synchronize the information stored for 
$u_1$ and $u_2$. Finally, to solve the instance $(T,\vy,\delta, k)$ we simply check whether $p_{r, k,\emptyset}\geq \delta$, for the root $r$ of $T$.

Formally, let us define for a set $H\subseteq Z$, the partial instance $\vy_{H}\in\{0,1,\bot\}^n$ such that $\vy_{H}[i]=\vy[i]$ for every $i\in H$,
and $\vy_{H}[i]=\bot$ for every $i\not\in H$. In particular, $\vy_{H}\subseteq \vy$. 
Then we can write $p_{u,s,J}$ as 
\[
p_{u,s,J} = \max_{\substack{K\subseteq {\sf New}(u)\\ |K|\leq\, s-|J|}} \Pr{}_{\!{\vz}}[T_u(\vz) = 1\mid \vz \in \Comp(\vy_{J\cup K})].
\]

Let $u_1$ and $u_2$ be the children of $u$. We have that ${\sf New}(u)$ is the disjoint union of:
\[
{\sf New}(u) = {\sf New}(u_1) \cup {\sf New}(u_2) \cup {\sf Sync}(u),
\]
where ${\sf Sync}(u):={\sf New}(u) \cap \left(\feat\left(N^{\downarrow}_{u_1}\right)\cap \feat\left(N^{\downarrow}_{u_2}\right)\right)$. 
In other words, the features in ${\sf Sync}(u)$ are the features that are in both $T_{u_1}$ and $T_{u_2}$ but
not outside $T_{u}$. We conclude by explaining the computation of $p_{u,s,J}$. We consider the following cases:
\begin{enumerate}
\item The feature $i$ labeling $u$ is in $J$. This means we have to keep feature $i$. If $\vy[i]=0$, then to compute 
$p_{u,s,J}$ we can simply look at $u_1$ (the left child). Note that ${\sf Int}(u_1)$ is the disjoint union of 
${\sf Int}(u_1)\cap {\sf Int}(u)$ and ${\sf Sync}(u)$. Then 
\[
p_{u,s,J} = \max_{\substack{J'\subseteq {\sf Sync}(u)\\ |J'|\leq s - |{\sf Int}(u_1)\cap J|}} p_{u_1,s, ({\sf Int}(u_1)\cap J) \cup J'}.
\]
This computation can be done in polynomial time as ${\sf Sync}(u)\leq c$ and then there are a constant number
of possible $J'\subseteq {\sf Sync}(u)$. The case when $\vy[i]=1$ is analogous, taking $u_2$ instead of $u_1$. 
\item The feature $i$ labeling $u$ is either outside $Z$ or belongs to ${\sf Int}(u)\setminus J$. This means feature $i$ is undefined. Again, we have that ${\sf Int}(u_1)$ is the disjoint union of 
${\sf Int}(u_1)\cap {\sf Int}(u)$ and ${\sf Sync}(u)$. Similarly, ${\sf Int}(u_2)$ is the disjoint union of 
${\sf Int}(u_2)\cap {\sf Int}(u)$ and ${\sf Sync}(u)$. Then
\[
p_{u,s,J} = \max_{\substack{J'\subseteq {\sf Sync}(u)\\|J'|\leq s - |J|}} 
\max_{\substack{0\leq s_1,s_2\leq s\\ s_1+s_2\leq s- |J|-|J'|}}
\frac{1}{2}\cdot p_{u_1,s_1, ({\sf Int}(u_1)\cap J) \cup J'} + \frac{1}{2}\cdot p_{u_2,s_2, ({\sf Int}(u_2)\cap J) \cup J'}.
\] 
Again, this can be done in polynomial time as ${\sf Sync}(u)\leq c$.
\item Finally, the remaining case is that the feature $i$ labeling $u$ is in ${\sf New}(u)$. In that case we have the two possibilities: either we keep feature $i$ or we remove it. If $s-|J|=0$, then the only possible choice is to remove the feature $i$, and hence $p_{u,s,J}$ is computed exactly as in case (2). If $s-|J| > 0$. Then we take the maximum between the cases when we keep feature $i$ and the case when we remove feature $i$. For the latter,  
$p_{u,s,J}$ is computed exactly as in case (2). For the former, we compute $p_{u,s,J}$ in a similar way as in case (1). More precisely, if $\vy[i]=0$, then:
\[
p_{u,s,J} = \max_{\substack{J'\subseteq {\sf Sync}(u)\\ |J'|\leq s-1 - |{\sf Int}(u_1)\cap J|}} p_{u_1,s-1, ({\sf Int}(u_1)\cap J) \cup J'}.
\]
The case $\vy[i]=1$ is analogous.
\end{enumerate}
\end{proof}

\section{Proof of Lemma \ref{lemma:monotone}}
\label{app:lemma:monotone}


{\bf Lemma 1.} {\em 
Let $\mathfrak{C}$ be a class of monotone models, ${\cal M} \in \mathfrak{C}$ a model of dimension $n$, and $\vx \in \{0, 1\}^n$ an instance. Consider any $\delta \in (0,1]$. Then if $\vy \subseteq \vx$ is a $\delta$-SR for $\vx$ under $\cal M$ which is not minimal, then there is a partial instance $\vy' \coloneqq \vy \setminus \{i\}$, for some $i \in \{1,\dots,n\}$, such that $\vy'$ is a $\delta$-SR for $\vx$ under $\cal M$.}

\begin{proof}

Note that, if ${\cal M}(\vx) = 1$ then we can safely assume that for every $i$ where $\vy[i] \neq \bot$ it holds that $\vy[i] = 1$, as otherwise if $\vy[i^\star] = 0$ for some $i^\star$, then the lemma trivially holds by setting $\vy' = \vy \setminus \{i^\star\}$ because of monotonicity. Similarly, if ${\cal M}(\vx) = 0$ then we can safely assume that for every $i$ where $\vy[i] \neq \bot$ it holds that $\vy[i] = 0$.

As by hypothesis $\vy$ is not minimal, there exists a $\delta$-SR  $\vy^\star \subsetneq \vy$ that minimizes $|\vy^\star|_\bot$. We will prove that $|\vy^\star|_\bot = |\vy|_\bot + 1$, from where the lemma immediately follows.

Assume for the sake of a contradiction that $|\vy^\star|_\bot > |\vy|_\bot + 1$. Then, there must exist a feature
	$i^\star$ that $\vy^\star[i^\star] = \bot \neq \vy[i^\star]$, and such that $\vy^\star \cup \{i^\star\} \neq \vy$, where $\vy^\star \cup \{i^\star\}$ is defined as
	\[
		(\vy^\star \cup \{i^\star\})[i] = \begin{cases}
 \vy[i^\star] & \text{if } i = i^\star\\
 \vy^\star[i^\star] & \text{otherwise.}	
 \end{cases}
	\]
	Similarly we denote $\vy^\star \cup (i^\star \to \alpha)$, with $\alpha \in \{0, 1\}$, the partial instance defined as 
	\[
	(\vy^\star \cup (i^\star \to \alpha))[i] = \begin{cases}
\alpha & \text{if } i = i^\star\\
 \vy^\star[i^\star] & \text{otherwise.}	
 \end{cases}
	\]
We now claim that $\vy^\star \cup \{i^\star\}$ is also a $\delta$-SR for $\vx$ under $\mathcal{M}$, which will contradict the minimality of $\vy^\star$, as $|\vy^\star \cup \{i^\star\}|_\bot < |\vy^\star|$. 
Let us denote by $C(\mathcal{M}, \vy)$ the number of completions $\vz \in \Comp(\vy)$ such that $\mathcal{M}(\vz) = 1$. Now there are two cases, if $\mathcal{M}(\vx) = 1$ then 
\begin{align*}
C(\mathcal{M}, \vy^\star) &= C(\mathcal{M}, \vy^\star \cup (i^\star \to 0)) + C(\mathcal{M}, \vy^\star \cup (i^\star \to 1)) \\
&\leq 2 C(\mathcal{M}, \vy^\star \cup (i^\star \to 1)), \tag{Because of monotonicty}
\end{align*}
from where 
\[
\frac{C(\mathcal{M}, \vy^\star \cup (i^\star \to 1))}{2^{|\vy^\star \cup (i^\star \to 1)|_\bot}} = \frac{C(\mathcal{M}, \vy^\star \cup (i^\star \to 1))}{2^{|\vy^\star|_\bot -1}} \geq \frac{C(\mathcal{M}, \vy^\star)}{2 \cdot 2^{|\vy^\star|_\bot -1}} \geq \delta,
\]
which implies that $ \vy^\star \cup (i^\star \to 1)$ is also a $\delta$-SR (note that $ \vy^\star \cup (i^\star \to 1) = \vy^\star \cup \{ i^\star\}$ because of the initial observation), contradicting the minimality of $\vy^\star$. Similarly, if $\mathcal{M}(\vx) = 0$, then
\begin{align*}
C(\mathcal{M}, \vy^\star) &= C(\mathcal{M}, \vy^\star \cup (i^\star \to 0)) + C(\mathcal{M}, \vy^\star \cup (i^\star \to 1)) \\
&\geq 2 C(\mathcal{M}, \vy^\star \cup (i^\star \to 0)), \tag{Because of monotonicty}
\end{align*}

from where 
\begin{align*}
\frac{2^{|\vy^\star \cup (i^\star \to 0)|_\bot} - C(\mathcal{M}, \vy^\star \cup (i^\star \to 0))}{2^{|\vy^\star \cup (i^\star \to 0)|_\bot}} &= \frac{2^{|\vy^\star|_\bot -1} - C(\mathcal{M}, \vy^\star \cup (i^\star \to 0))}{2^{|\vy^\star|_\bot -1}}\\
 &\geq  1 - \frac{C(\mathcal{M}, \vy^\star)}{2 \cdot 2^{|\vy^\star|_\bot -1}} \geq \delta,
\end{align*}
thus implying that $\vy^\star \cup (i^\star \to 0)$ is also a $\delta$-SR for $\vx$ under $\mathcal{M}$, which again contradicts the minimality of $\vy^\star$.

\end{proof}

\section{Experiments}
\label{sec:experiments}

This section presents some experimental results both for the deterministic encoding $(\delta = 1)$ and for the general probabilistic encoding $(\delta < 1)$. 

\paragraph{Datasets} For testing the deterministic encoding we use the classical MNIST dataset~\citep{deng2012mnist}, binarizing features by simply setting to black all pixels of value less than 128. For testing the general probabilistic encoding we build a dataset of 5x5 images that are either  \emph{tall} or \emph{wide}, and the task is to predict the kind of a given rectangle. This idea is based on the dataset built by~\citet{choi2017compiling} for illustrating sufficient reasons.\footnote{Under the name of PI-explanations. To the best of our knowledge their dataset is not published and thus we recreated it.} 

\paragraph{Training decision trees} We use \texttt{scikit-learn}~\citep{scikit-learn} to train decision trees. In order to accelerate the training, the \texttt{splitter} parameter is set to \texttt{random}. Also, due to the natural class unbalance on the MNIST dataset, we set the parameter \texttt{class\_weight} to \texttt{balanced}.

\paragraph{Hardware} All our experiments have been run on a personal computer with the following specifications: MacBook Pro (13-inch, M1, 2020), Apple M1 processor, 16 GB of RAM.

\paragraph{Solver}
 We use \emph{CaDiCaL}~\citep{BiereFazekasFleuryHeisinger-SAT-Competition-2020-solvers}, a standard CDCL based solver. 
 In order to find $k^\star$, the minimum $k$ for which an explanation of size $k$ exists one can either proceed by using a MaxSAT solver, directly to minimize the number of features used in the explanation, or use a standard SAT solver and do a search over $k$ to find the minimum size for which an explanation exists. After testing both approaches we use the latter as it showed to be more efficient in most cases. Instead of using binary search to find the $k^\star$, we use doubling search. This is because a single instance with $k = \frac{n}{2}$ at the start of a binary search can dominate the complexity, and often $k^\star \ll n$.

\paragraph{Deterministic results} Given the compactness of the deterministic encoding, with ~$O(nk + |T|)$~clauses, it is feasible to use it for MNIST instances, for which $n = 28 \times 28 = 784$. Tables~\ref{table:det-neg} and~\ref{table:det-pos}	exhibit results obtained for this dataset when recognizing digit $1$. Figures~\ref{fig:experiments-positive-1} and~\ref{fig:experiments-negative-1} exhibit minimum sufficient reasons for positive and negative instance (respectively) on a decision tree for recognizing the digit $1$. Figures~\ref{fig:experiments-positive-3} and~\ref{fig:experiments-negative-3} show examples when recognizing the digit $3$, and finally Figures~\ref{fig:experiments-positive-9} and~\ref{fig:experiments-negative-9} exhibit examples when recognizing the digit $9$. Figure~\ref{fig:plot-linearity-1} shows empirically how time scales linearly with $k^\star$, the size of the minimum sufficient reason found.


\paragraph{Probabilistic results} Because of the complexity of the encoding, we test over the synthetic dataset described above in which the dimension is only $5 \times 5 = 25$. Table~\ref{table:rand} summarizes the results obtained for this dataset. We emphasize the following observations:
\begin{enumerate}
	\item Computing probabilistic sufficient reasons through the general probabilistic encoding (i.e., $\delta < 1$) is less efficient than computing deterministic ones, even by several orders of magnitude.
	\item As the value of $\delta$ approaches one, the size of the minimum $\delta$-SR approaches $k^\star$, the size of the minimum sufficient reason for the given instance. On the other hand, as $\delta$ decreases the size of the minimum $\delta$-SR goes to 0. This trade-off implies that $\delta$ can be used to control the size of the obtained explanation.
	\item The time per explanation increases significantly as $\delta$ approaches $1$, even though the encoding itself does not get any larger, implying that the resulting $\cnf$ is more challenging. Interestingly enough, for $\delta = 1$ the deterministic encoding is very efficient, thus suggesting a discontinuity. It remains a challenging problem to compute $\delta$-SRs for $\delta < 1$ in a way that at least matches the efficiency of the case $\delta = 1$.
\end{enumerate}

\begin{table}
	\caption{Experimental results for the probabilistic encoding over positive instances of tall rectangles. Each datapoint is the average of 3 instances.}
	\begin{center}
	\begin{tabular}{ccccc}\toprule
	$\delta$ & Size of smallest explanation & Time & Number of leaves & Accuracy of the tree \\  \midrule 
0.6 & 0.0 & 0.105s & 20 & 0.854\\ 
0.7 & 1.0 & 0.362s & 20 & 0.854\\ 
0.8 & 2.0 & 0.669s & 20 & 0.854\\ 
0.9 & 3.0 & 1.551s & 20 & 0.854\\ 
0.95 & 3.0 & 1.409s & 20 & 0.854\\
1.0 & 3.0 & 0.032s & 20 & 0.854\\ 
\midrule
0.6 & 0.0 & 0.183s & 30 & 0.965\\ 
0.7 & 1.0 & 0.567s & 30 & 0.965\\ 
0.8 & 1.0 & 0.578s & 30 & 0.965\\ 
0.9 & 3.67 & 5.377s & 30 & 0.965\\ 
0.95 & 5.67 & 12.27s & 30 & 0.965\\
1.0 & 6.67 & 0.037s & 30 & 0.965\\
\midrule
0.6 & 2.0 & 1.003s & 40 & 0.986\\ 
0.7 & 3.0 & 2.259s & 40 & 0.986\\ 
0.8 & 4.0 & 3.507s & 40 & 0.986\\ 
0.9 & 6.0 & 10.318s & 40 & 0.986\\ 
0.95 & 7.0 & 16.387s & 40 & 0.986\\
1.0 & 10.0 & 0.046s & 40 & 0.986\\
\midrule
0.6 & 2.67 & 1.992s & 50 & 1.0\\ 
0.7 & 4.33 & 6.111s & 50 & 1.0\\ 
0.8 & 5.0 & 7.216s & 50 & 1.0\\ 
0.9 & 7.0 & 26.58s & 50 & 1.0\\ 
0.95 & 7.67 & 33.129s & 50 & 1.0\\ 
1.0 & 8.33 & 0.044s & 50 & 1.0\\ 
\bottomrule
	\end{tabular}
	\end{center}
	
	\label{table:prob-results}
\end{table}

\begin{table}
\caption{Experimental results for the deterministic encoding over negative instances of digit $1$ in MNIST. Each datapoint corresponds to the average of 10 instances.}
	\begin{center}
	\begin{tabular}{cccc}\toprule
	Number of leaves & Size of smallest explanation & Time & Accuracy of the tree \\  \midrule
	100 & 3.6 & 0.17s & 0.988\\
125 & 3.1 & 0.141s & 0.989\\
150 & 3.7 & 0.196s & 0.989\\
175 & 4.1 & 0.216s & 0.99\\
200 & 4.3 & 0.255s & 0.991\\
225 & 3.9 & 0.269s & 0.991\\
250 & 4.1 & 0.304s & 0.992\\
275 & 4.1 & 0.334s & 0.992\\
300 & 4.4 & 0.329s & 0.993\\
325 & 4.0 & 0.292s & 0.993\\
350 & 4.3 & 0.327s & 0.993\\
375 & 4.0 & 0.283s & 0.993\\
400 & 5.2 & 0.337s & 0.993\\
425 & 6.0 & 0.346s & 0.993\\
450 & 6.8 & 0.495s & 0.993\\
475 & 6.4 & 0.401s & 0.993\\
500 & 5.8 & 0.497s & 0.993\\
\bottomrule
	\end{tabular}
	\end{center}
	\label{table:det-neg}
\end{table}

\begin{table}
\caption{Experimental results for the deterministic encoding over positive instances of digit $1$ in MNIST. Each datapoint corresponds to the average of 10 instances.}
	\begin{center}
	\begin{tabular}{cccc}\toprule
	Number of leaves & Size of smallest explanation & Time & Accuracy of the tree \\  \midrule
100 & 17.0 & 1.238 & 0.988\\
125 & 17.5 & 1.077 & 0.989\\
150 & 18.4 & 1.058 & 0.989\\
175 & 17.9 & 1.082 & 0.99\\
200 & 19.0 & 1.001 & 0.991\\
225 & 21.4 & 1.188 & 0.991\\
250 & 25.0 & 1.405 & 0.992\\
275 & 23.7 & 1.183 & 0.992\\
300 & 31.2 & 1.603 & 0.993\\
325 & 31.3 & 1.504 & 0.993\\
350 & 28.5 & 1.365 & 0.993\\
375 & 30.9 & 1.547 & 0.993\\
400 & 32.1 & 2.429 & 0.993\\
425 & 34.3 & 2.188 & 0.993\\
450 & 34.3 & 2.115 & 0.993\\
475 & 42.5 & 2.533 & 0.993\\
500 & 43.6 & 2.614 & 0.993\\
\bottomrule
	\end{tabular}
	\end{center}
		\label{table:det-pos}
\end{table}

\begin{table}
\caption{Experimental results for the randomized encoding, for $\delta = \frac{3}{4}$.}
	\begin{center}
	\begin{tabular}{cccc}\toprule
	number of leaves & size of smallest explanation & time & accuracy of the tree \\ \midrule
		15 & 4 & 30.44s & 0.86\\
	16 & 1 & 2.86s & 0.84\\
	17 & 2 & 6.91s & 0.90\\
	18 & 4 & 14.59s & 0.90\\
	19 & 3 & 14.11s & 0.88\\
	20 & 2 & 6.92s & 0.90\\ \bottomrule
	\end{tabular}
	\end{center}
				\label{table:rand}
\end{table}

\begin{figure}
	\centering
	\begin{tikzpicture}
		\begin{axis}[
    xlabel={Number of leaves},
    ylabel={Time per explanation [s]},
    xmin=80, xmax=520,
    ymin=0, ymax=3,
    legend pos=north west,
    ymajorgrids=true,
    grid style=dashed,
]
\addplot[
    color=blue,
    mark=square,
    ]
    coordinates {
		(100, 0.17)
		(125, 0.141)
		(150, 0.196)
		(175, 0.216)
		(200, 0.255)
		(225, 0.269)
		(250, 0.304)
		(275, 0.334)
		(300, 0.329)
		(325, 0.292)
		(350, 0.327)
		(375, 0.283)
		(400, 0.337)
		(425, 0.346)
		(450, 0.495)
		(475, 0.401)
		(500, 0.497)
	};
	\legend{Negative instance, Positive instance}

	\addplot[
    color=red,
    mark=triangle,
    legend=d,
    ]
    coordinates {
		(100, 	1.238)
		(125, 1.077)
		(150, 1.058)
		(175, 1.082)
		(200, 1.001)
		(225, 1.188)
		(250, 1.405)
		(275, 1.183)
		(300, 1.603)
		(325, 1.504)
		(350, 1.365)
		(375, 1.547)
		(400, 2.429)
		(425, 2.188)
		(450, 2.115)
		(475, 2.533)
		(500, 2.614)
	};

	\end{axis}
	\end{tikzpicture}
	\caption{Time for computing a minimum sufficient reason $(\delta = 1)$ as a function of decision tree size. All datapoints correspond to an average of 10 different instances for decision trees trained to recognize the digit $1$ in the MNIST dataset.}
	\label{fig:plot-time-1}
\end{figure}

\begin{figure}
	\centering
	\begin{tikzpicture}
		\begin{axis}[
    xlabel={Number of leaves},
    ylabel={Size of MSR},
    xmin=80, xmax=520,
    ymin=0, ymax=50,
    legend pos=north west,
    ymajorgrids=true,
    grid style=dashed,
]

\addplot[
    color=blue,
    mark=square,
    ]
    coordinates {
		(100, 3.6)
		(125, 3.1)
		(150, 3.7)
		(175, 4.1)
		(200, 4.3)
		(225, 3.9)
		(250, 4.1)
		(275, 4.1)
		(300, 4.4)
		(325, 4.0)
		(350, 4.3)
		(375, 4.0)
		(400, 5.2)
		(425, 6.0)
		(450, 6.8)
		(475, 6.4)
		(500, 5.8)
	};
	\legend{Negative instance, Positive instance}

	\addplot[
    color=red,
    mark=triangle,
    legend=d,
    ]
    coordinates {
		(100, 	17.0)
		(125, 17.5)
		(150, 18.4)
		(175, 17.9)
		(200, 19.0)
		(225, 21.4)
		(250, 25.0)
		(275, 23.7)
		(300, 31.2)
		(325, 31.3)
		(350, 28.5)
		(375, 30.9)
		(400, 32.1)
		(425, 34.3)
		(450, 34.3)
		(475, 42.5)
		(500, 43.6)
	};

	\end{axis}
	\end{tikzpicture}
	\caption{Size of minimum sufficient reasons $(\delta = 1)$ as a function of decision tree size. All datapoints correspond to an average of 10 different instances for decision trees trained to recognize the digit $1$ in the MNIST dataset.}
	\label{fig:plot-explanation-size-1}
\end{figure}

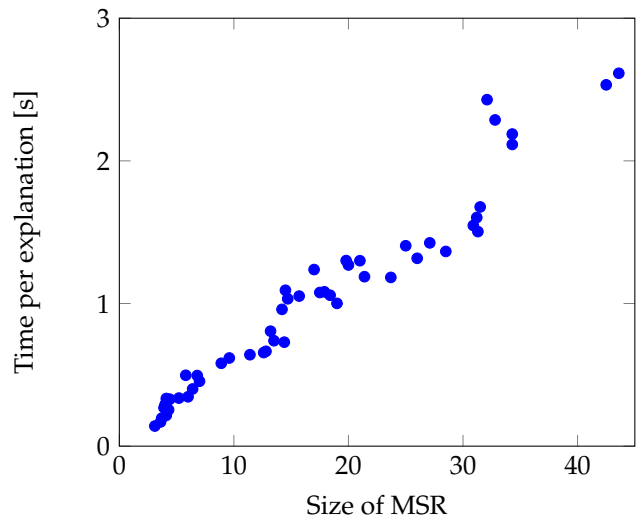
\begin{figure}
	\centering
	\begin{tikzpicture}
		\begin{axis}[
    xlabel={Size of MSR},
    ylabel={Time per explanation [s]},
    xmin=0, xmax=45,
    ymin=0, ymax=3,
    legend pos=north west,
    grid style=dashed,
]
\addplot[
    color=blue,
    only marks,
    ]
    coordinates {
    	(3.1, 0.141)
		(3.6, 0.17)
		(3.7, 0.196)
		(3.9, 0.269)
		(4.0, 0.292)
		(4.0, 0.283)
		(4.1, 0.216)
		(4.1, 0.304)
		(4.1, 0.334)
		(4.3, 0.255)
		(4.3, 0.327)
		(4.4, 0.329)
		(5.2, 0.337)
		(5.8, 0.497)
		(6.0, 0.346)
		(6.4, 0.401)		
		(6.8, 0.495)
		(7.0, 0.455)
		(8.9, 0.581)
		(9.6, 0.618)
		(11.4, 0.641)
		(12.6, 0.656)
		(12.8, 0.665)
		(13.2, 0.806)
		(13.5, 0.739)
		(14.2, 0.959)
		(14.4, 0.729)
		(14.5, 1.093)
		(14.7, 1.033)
		(15.7, 1.052)
		(17.0, 1.238)
		(17.5, 1.077)
		(17.9, 1.082)
		(18.4, 1.058)
		(19.0, 1.001)
		(19.8, 1.301)
		(20, 1.27)
		(21, 1.30)
		(21.4, 1.188)
		(23.7, 1.183)
		(25.0, 1.405)
		(26.0, 1.317)
		(27.1, 1.425)
		(28.5, 1.365)
		(30.9, 1.547)
		(31.2, 1.603)
		(31.3, 1.504)
		(31.5, 1.677)
		(32.1, 2.429)
		(32.8, 2.287)
		(34.3, 2.188)
		(34.3, 2.115)
		(42.5, 2.533)
		(43.6, 2.614)
	};
	
	\end{axis}
	\end{tikzpicture}
	\caption{Relationship between the size of the Minimum Sufficient Reason and the time it takes to obtain it.}
	\label{fig:plot-linearity-1}
\end{figure}

\begin{figure}
\centering
		\begin{subfigure}{.3\textwidth}
			\centering 
			\includegraphics[scale=0.1]{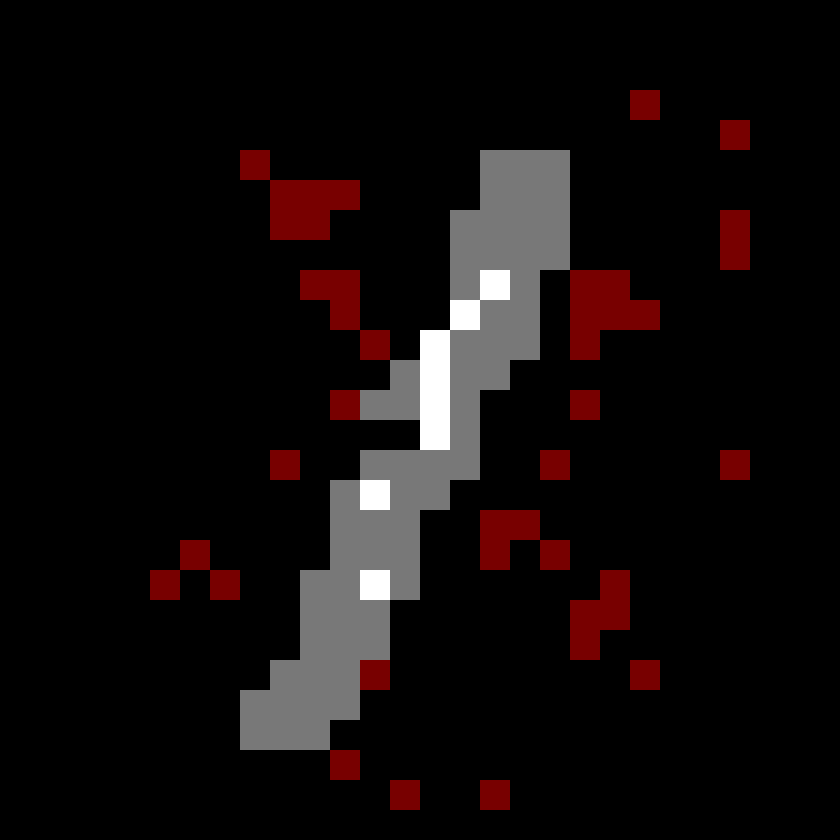}
			\caption{$k^\star = 49$}
		\end{subfigure}  
		\begin{subfigure}{.3\textwidth}
			\centering
			\includegraphics[scale=0.1]{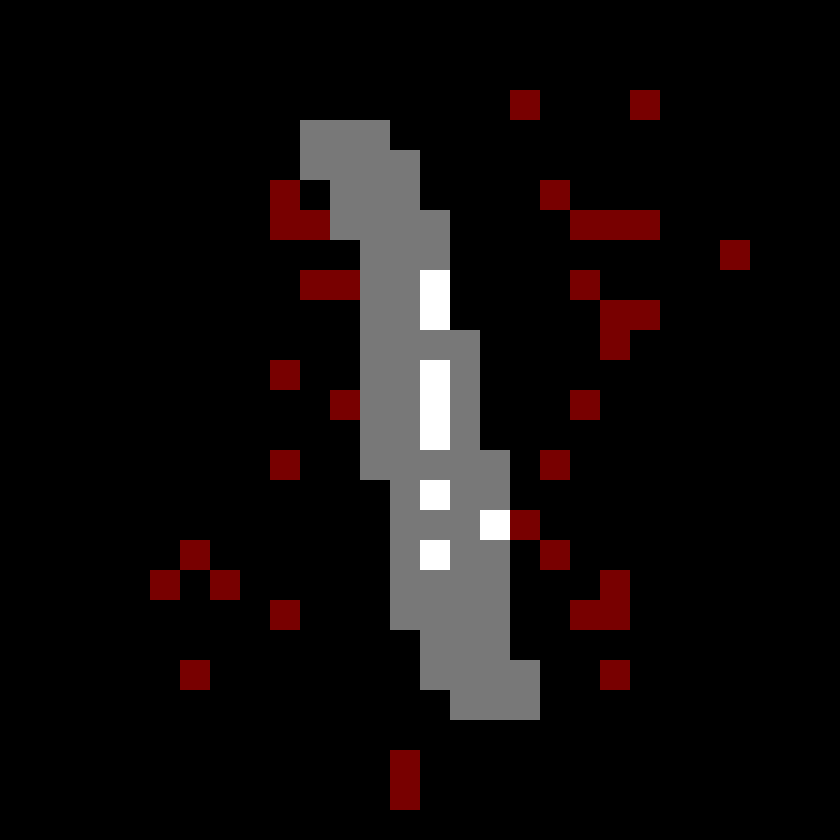}
			\caption{$k^\star = 42$}
		\end{subfigure}
		\begin{subfigure}{.3\textwidth}
			\centering
			\includegraphics[scale=0.1]{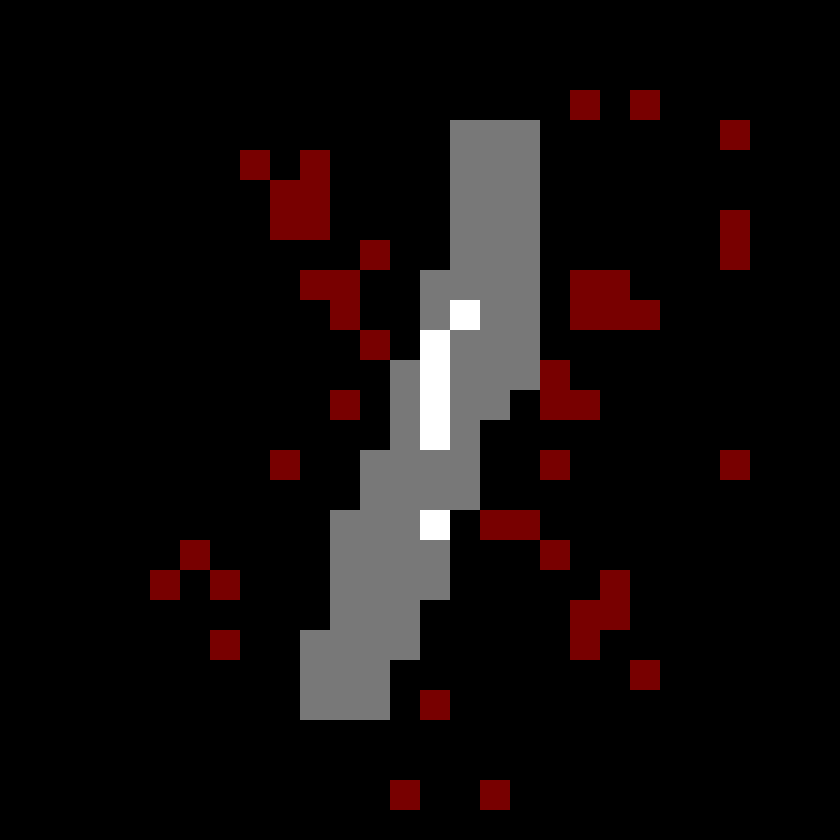}
			\caption{$k^\star = 49$}
		\end{subfigure}

		\begin{subfigure}{.3\textwidth}
			\centering
			\includegraphics[scale=0.1]{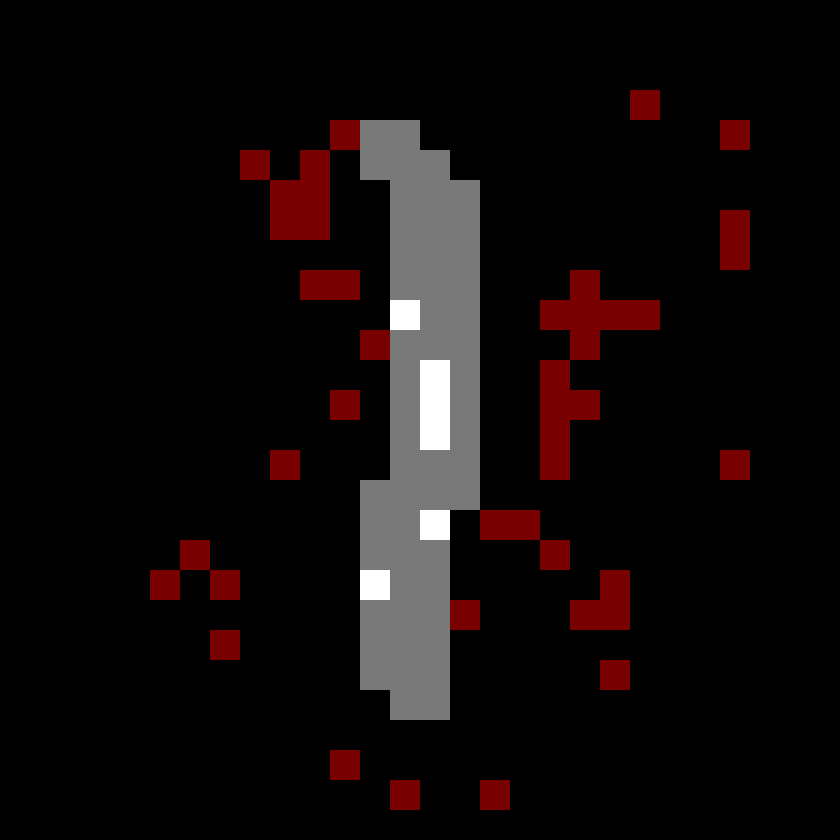}
			\caption{$k^\star = 49$}
		\end{subfigure}
		\begin{subfigure}{.3\textwidth}
			\centering
			\includegraphics[scale=0.1]{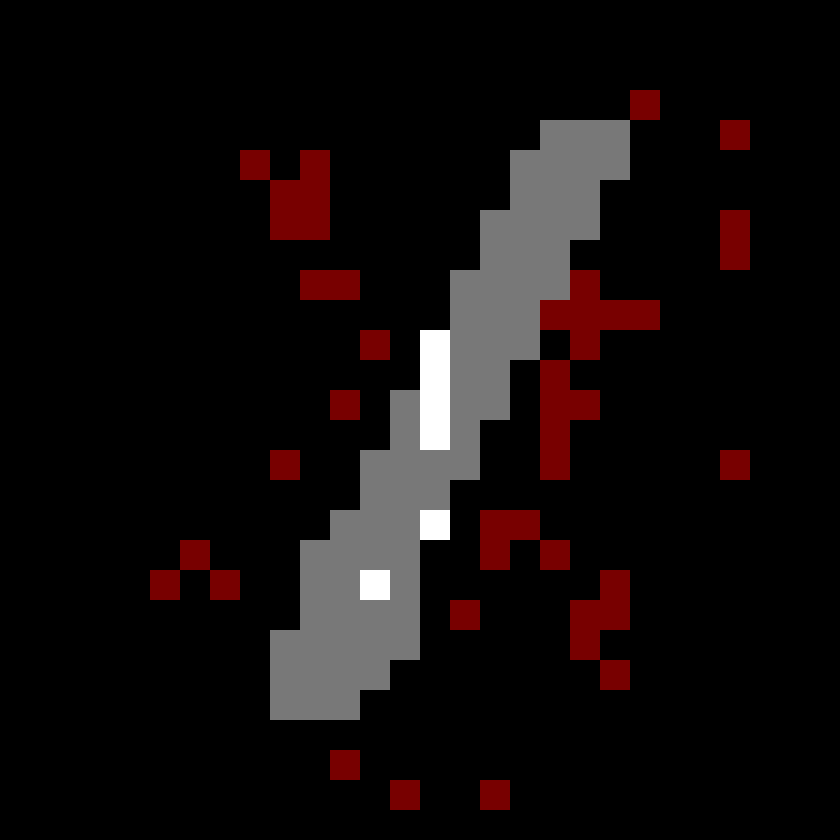}
			\caption{$k^\star = 49$}
		\end{subfigure}
		\begin{subfigure}{.3\textwidth}
			\centering
			\includegraphics[scale=0.1]{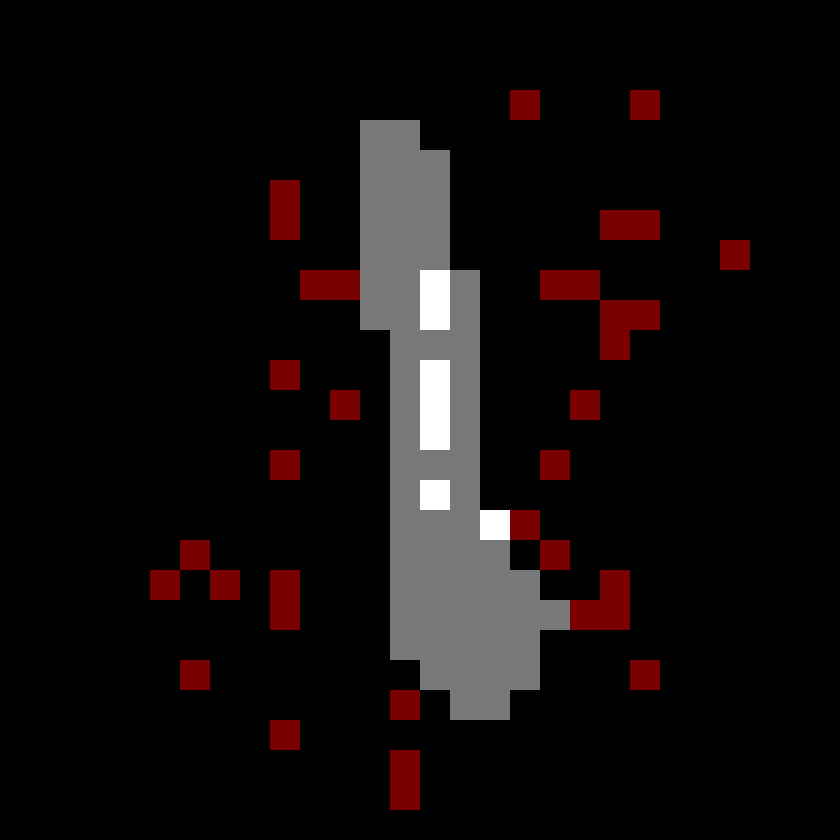}
			\caption{$k^\star = 42$}
		\end{subfigure}
		
		\begin{subfigure}{.3\textwidth}
			\centering
			\includegraphics[scale=0.1]{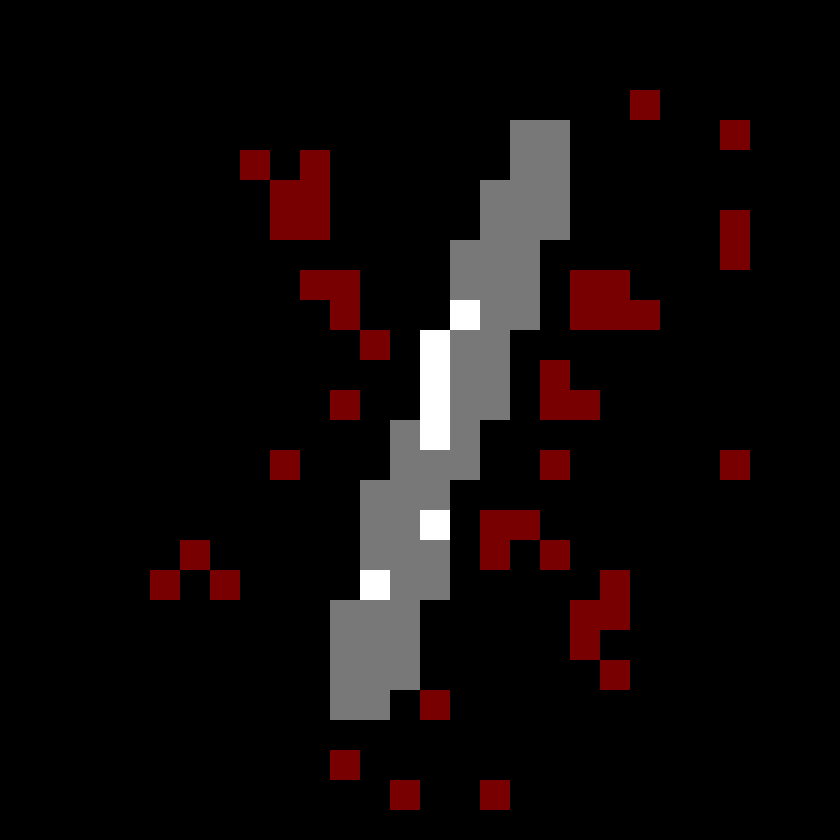}
			\caption{$k^\star = 49$}
		\end{subfigure}
		\begin{subfigure}{.3\textwidth}
			\centering
			\includegraphics[scale=0.1]{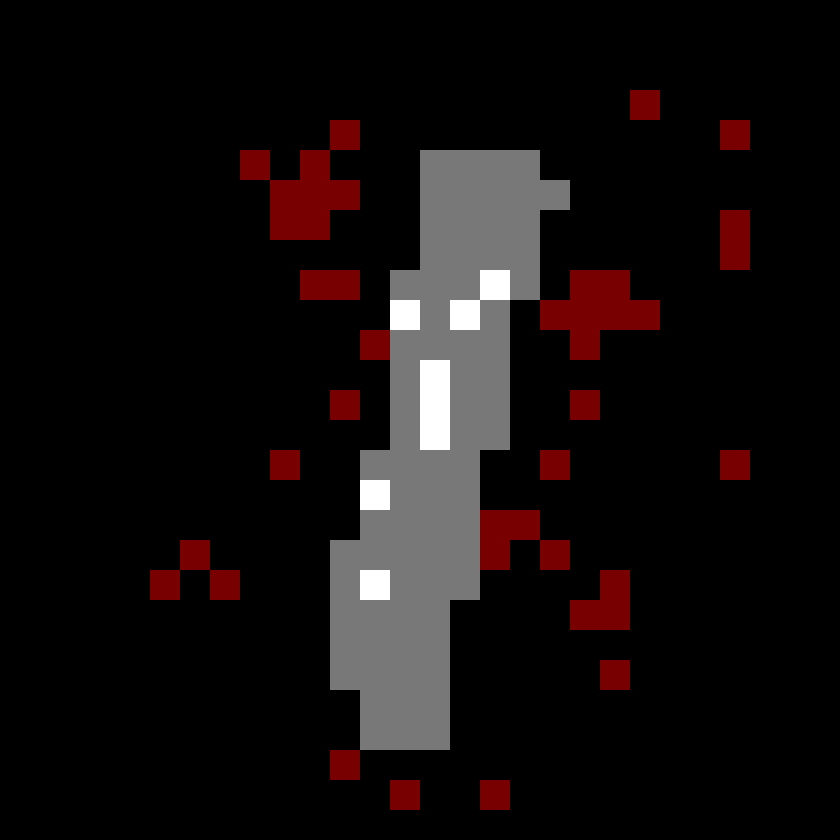}
			\caption{$k^\star = 49$}
		\end{subfigure}
		\begin{subfigure}{.3\textwidth}
			\centering
			\includegraphics[scale=0.1]{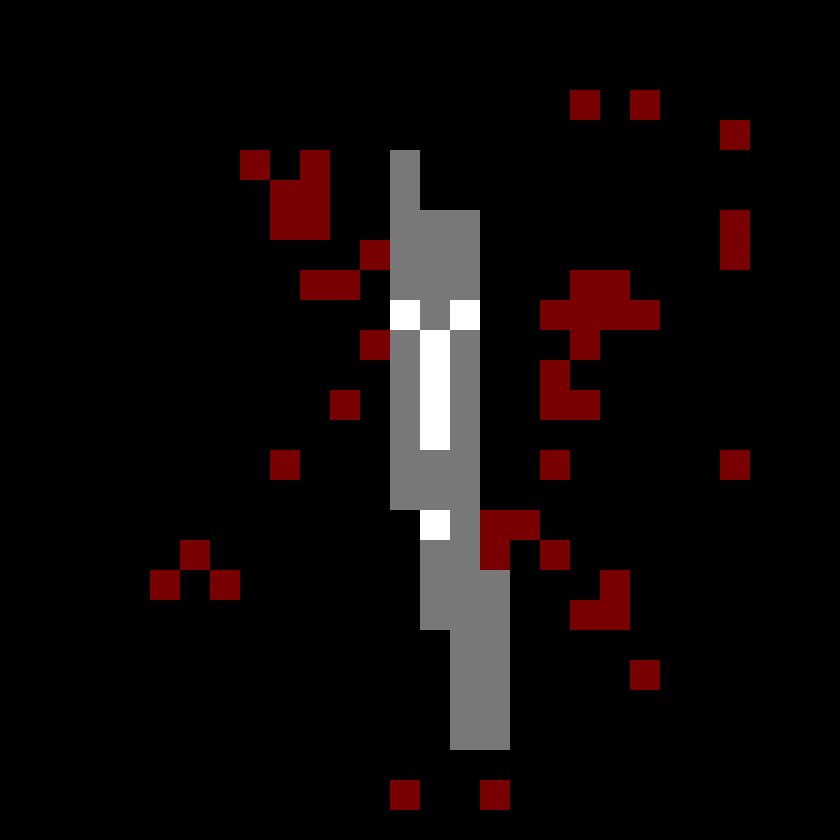}
			\caption{$k^\star = 49$}
		\end{subfigure}
		
		\begin{subfigure}{.3\textwidth}
			\centering
			\includegraphics[scale=0.1]{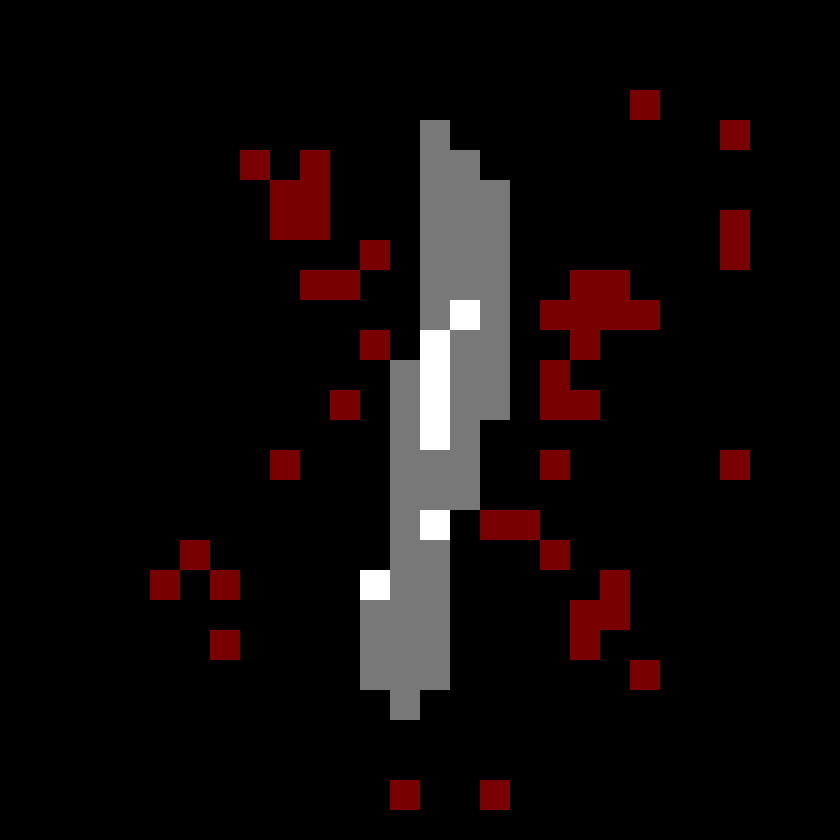}
			\caption{$k^\star = 49$}
		\end{subfigure}
		\begin{subfigure}{.3\textwidth}
			\centering
			\includegraphics[scale=0.1]{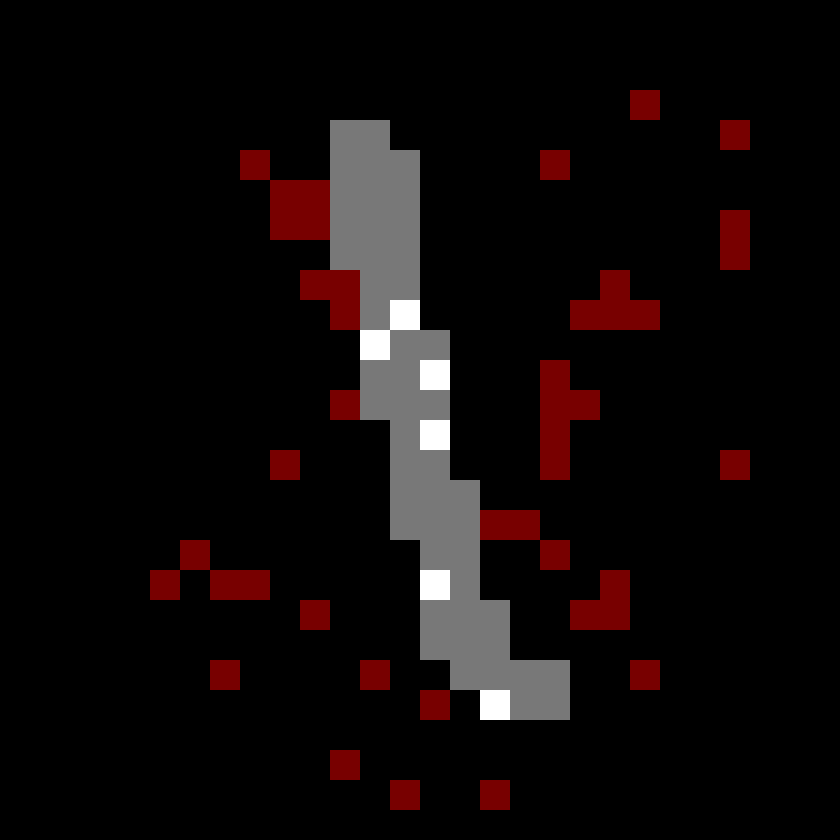}
			\caption{$k^\star = 49$}
		\end{subfigure}
		\begin{subfigure}{.3\textwidth}
			\centering
			\includegraphics[scale=0.1]{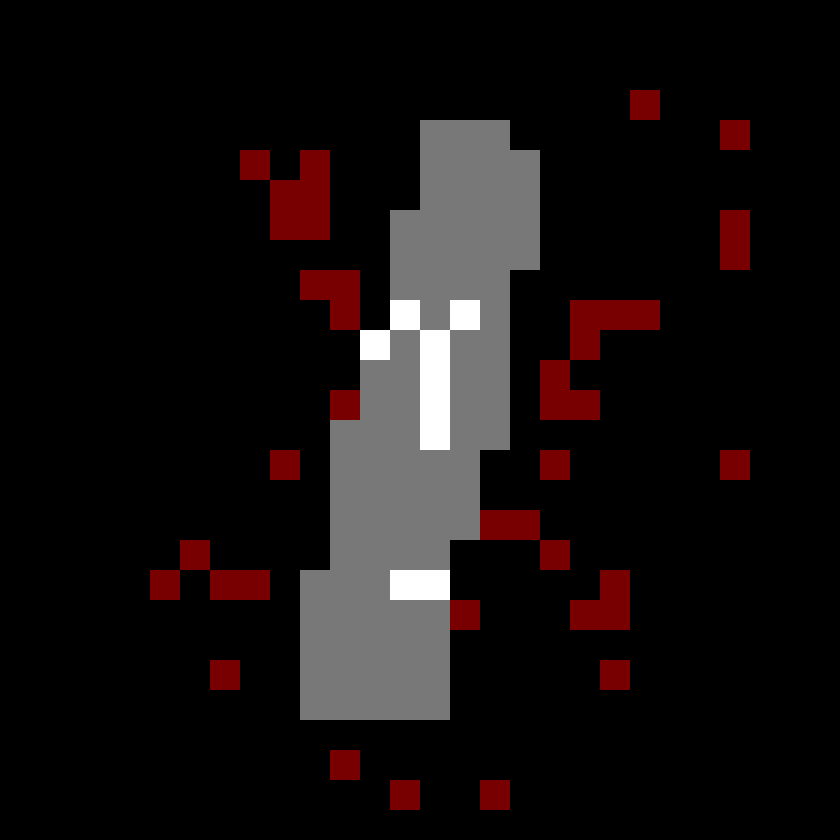}
			\caption{$k^\star = 49$}
		\end{subfigure}

		\caption{Examples of \emph{Minimum Sufficient Reasons} over the MNIST dataset. All instances are (correctly predicted) positive instances for a decision tree of $591$ leaves that detects the digit $1$. Light pixels of the original image are depicted in grey, and the light pixels of the original image that are part of the minimum sufficient reason are colored white. Dark pixels that are part of the minimum sufficient reason are colored with red. Individual captions denote the size of the minimum sufficient reasons with $k^\star$.}
		\label{fig:experiments-positive-1}
\end{figure}

\begin{figure}
\centering
		\begin{subfigure}{.3\textwidth}
			\centering 
			\includegraphics[scale=0.1]{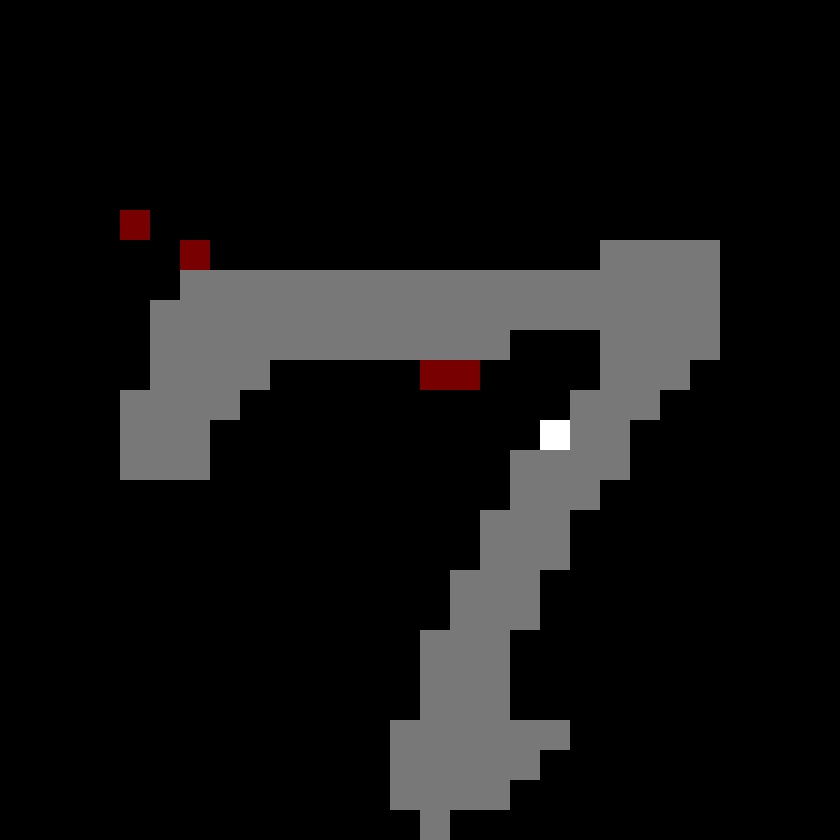}
			\caption{$k^\star = 5$}
		\end{subfigure}  
		\begin{subfigure}{.3\textwidth}
			\centering
			\includegraphics[scale=0.1]{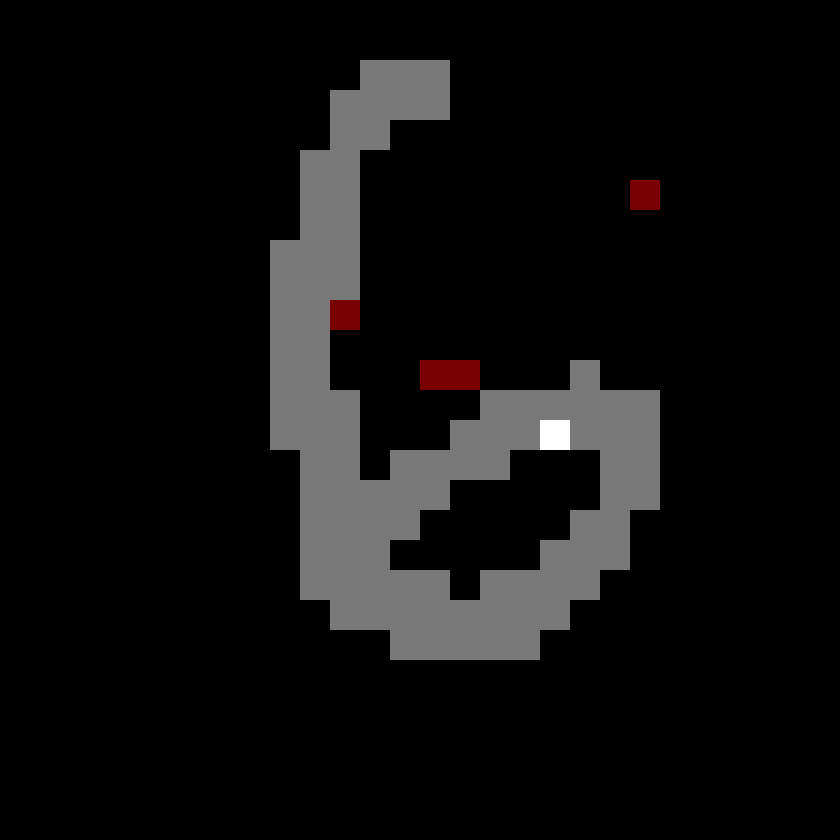}
			\caption{$k^\star = 5$}
		\end{subfigure}
		\begin{subfigure}{.3\textwidth}
			\centering
			\includegraphics[scale=0.1]{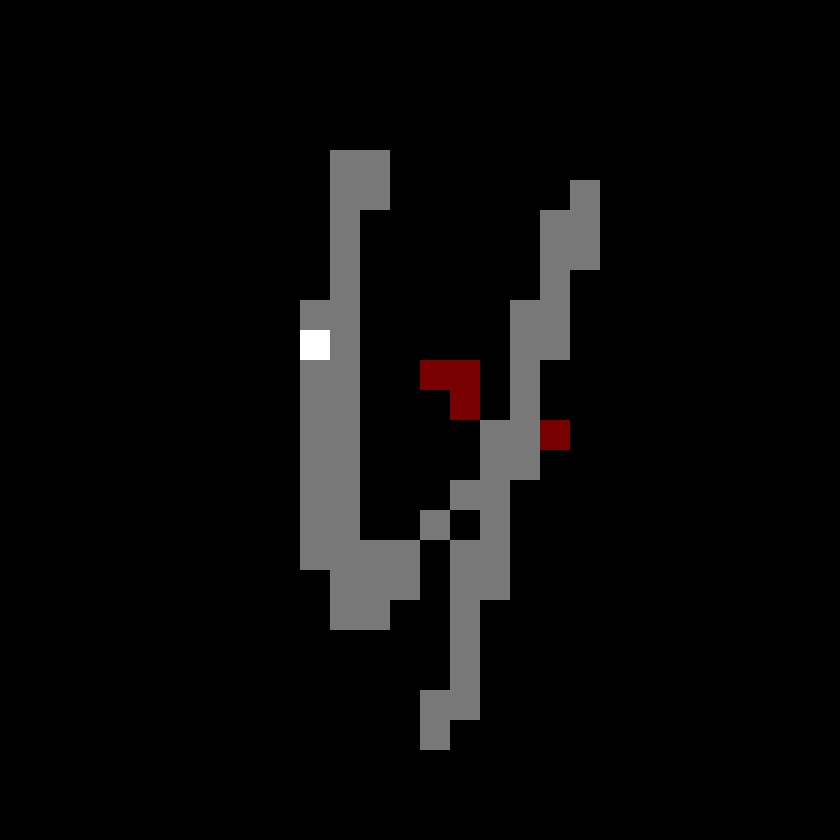}
			\caption{$k^\star = 5$}
		\end{subfigure}

		\begin{subfigure}{.3\textwidth}
			\centering
			\includegraphics[scale=0.1]{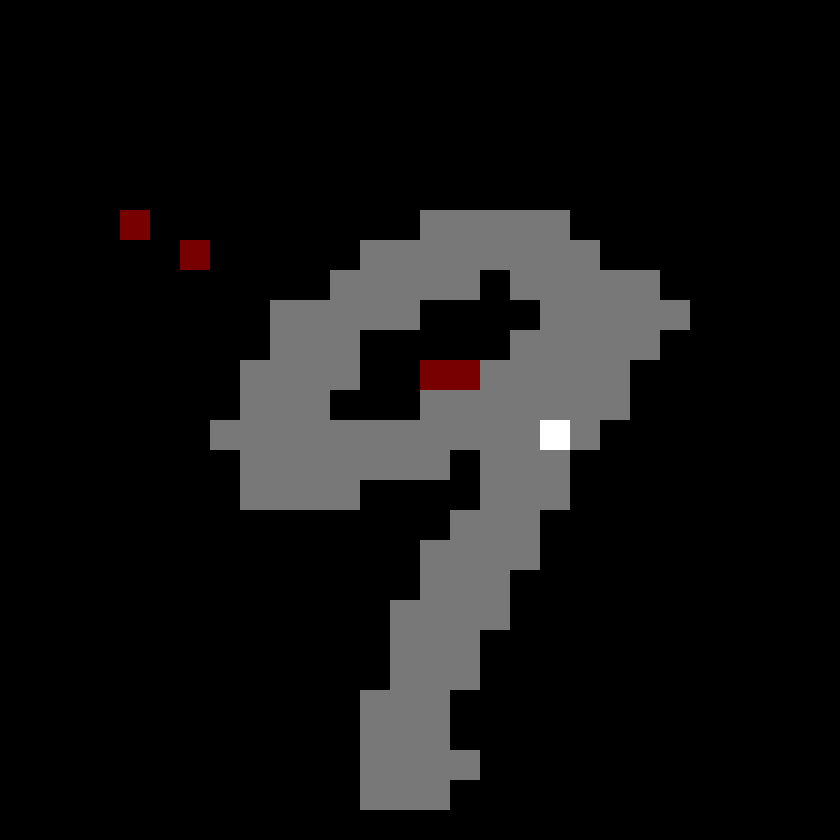}
			\caption{$k^\star = 5$}
		\end{subfigure}
		\begin{subfigure}{.3\textwidth}
			\centering
			\includegraphics[scale=0.1]{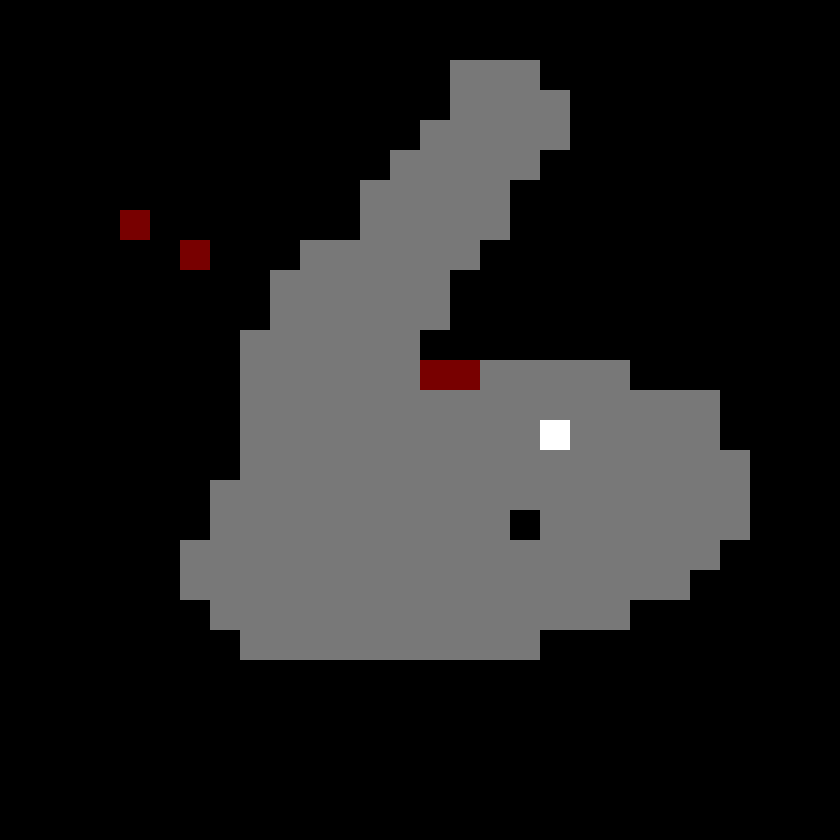}
			\caption{$k^\star = 5$}
		\end{subfigure}
		\begin{subfigure}{.3\textwidth}
			\centering
			\includegraphics[scale=0.1]{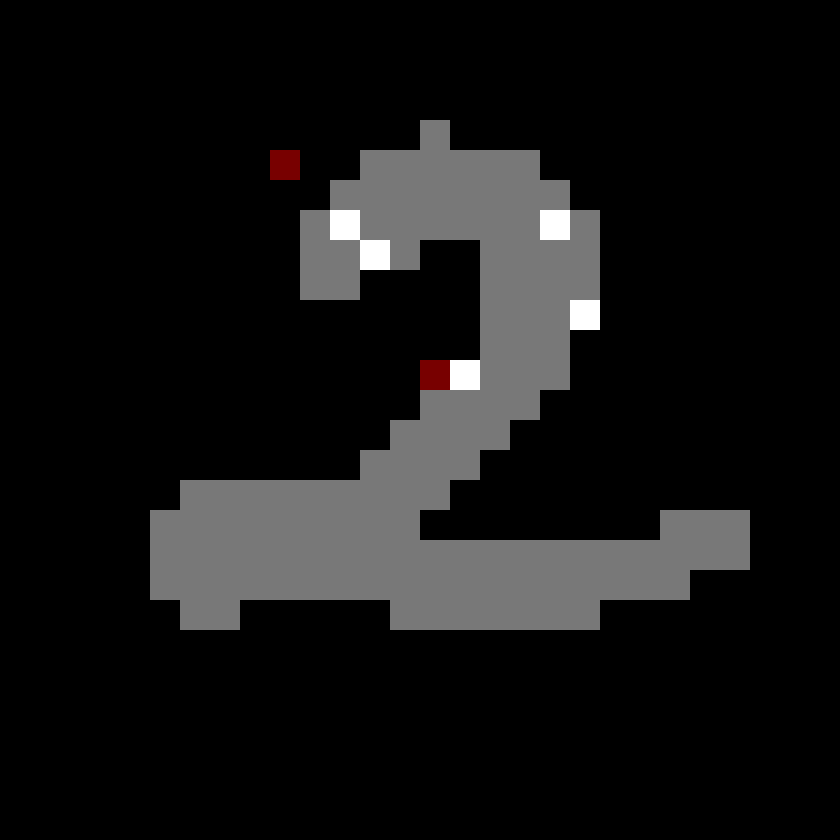}
			\caption{$k^\star = 7$}
		\end{subfigure}
		
		\begin{subfigure}{.3\textwidth}
			\centering
			\includegraphics[scale=0.1]{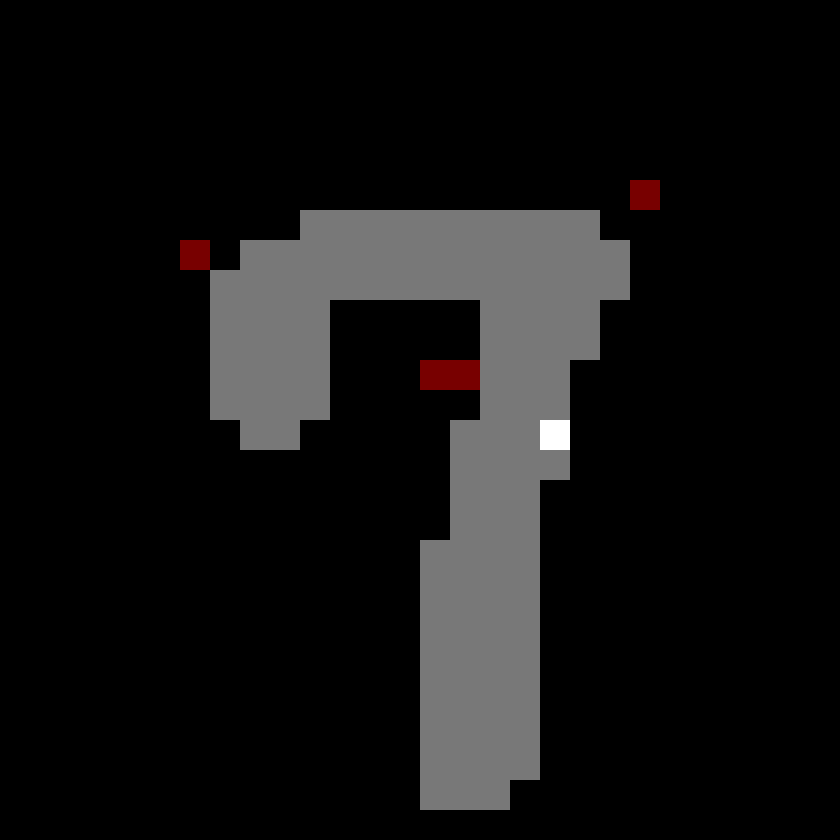}
			\caption{$k^\star = 5$}
		\end{subfigure}
		\begin{subfigure}{.3\textwidth}
			\centering
			\includegraphics[scale=0.1]{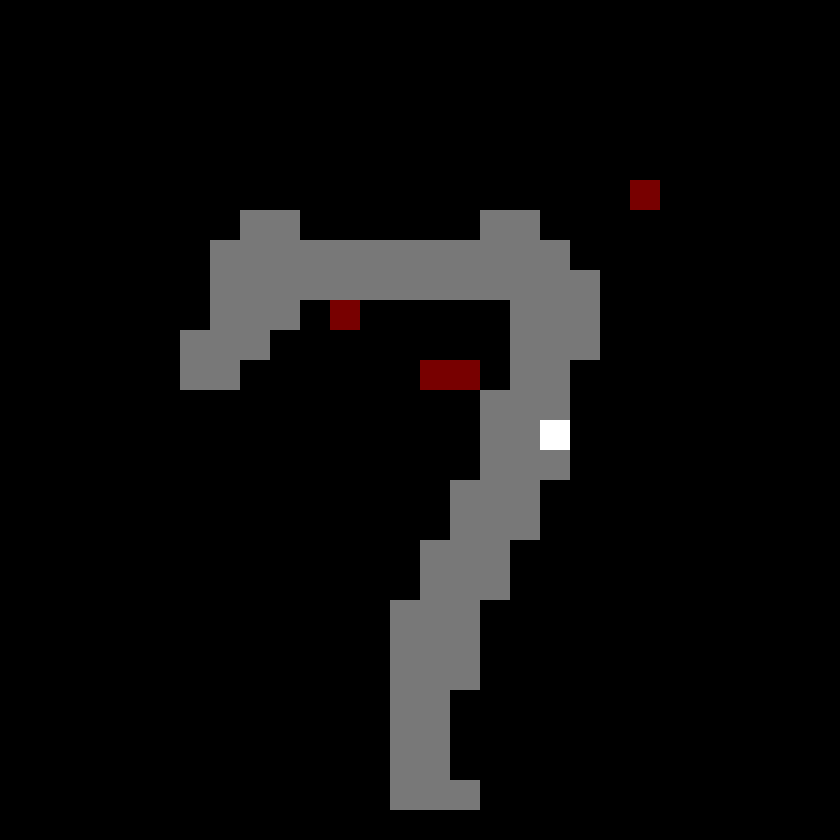}
			\caption{$k^\star = 5$}
		\end{subfigure}
		\begin{subfigure}{.3\textwidth}
			\centering
			\includegraphics[scale=0.1]{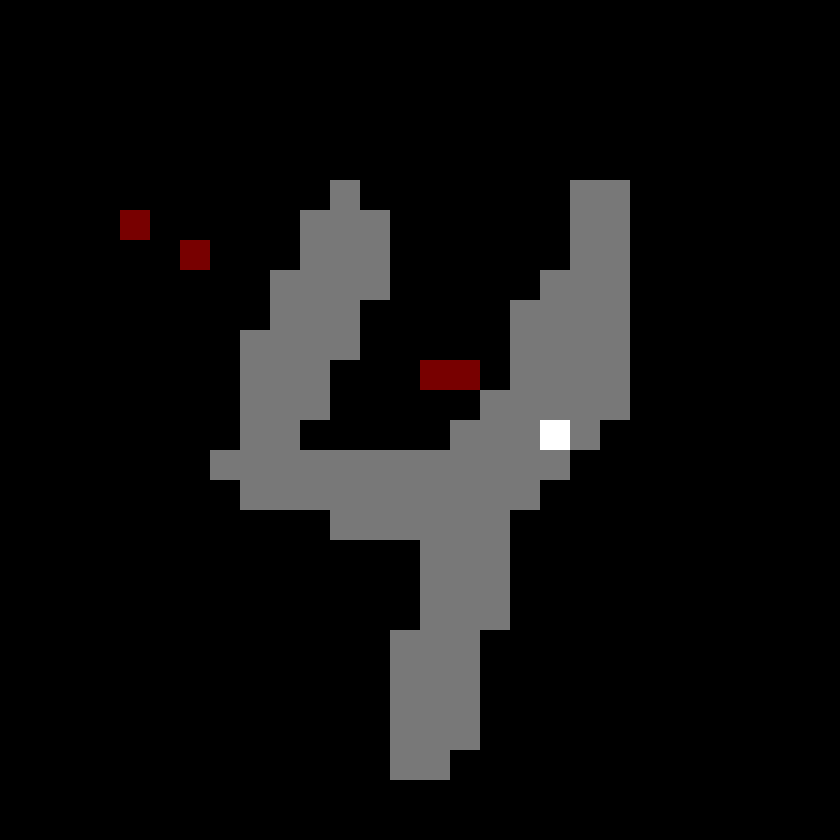}
			\caption{$k^\star = 5$}
		\end{subfigure}
		
		\begin{subfigure}{.3\textwidth}
			\centering
			\includegraphics[scale=0.1]{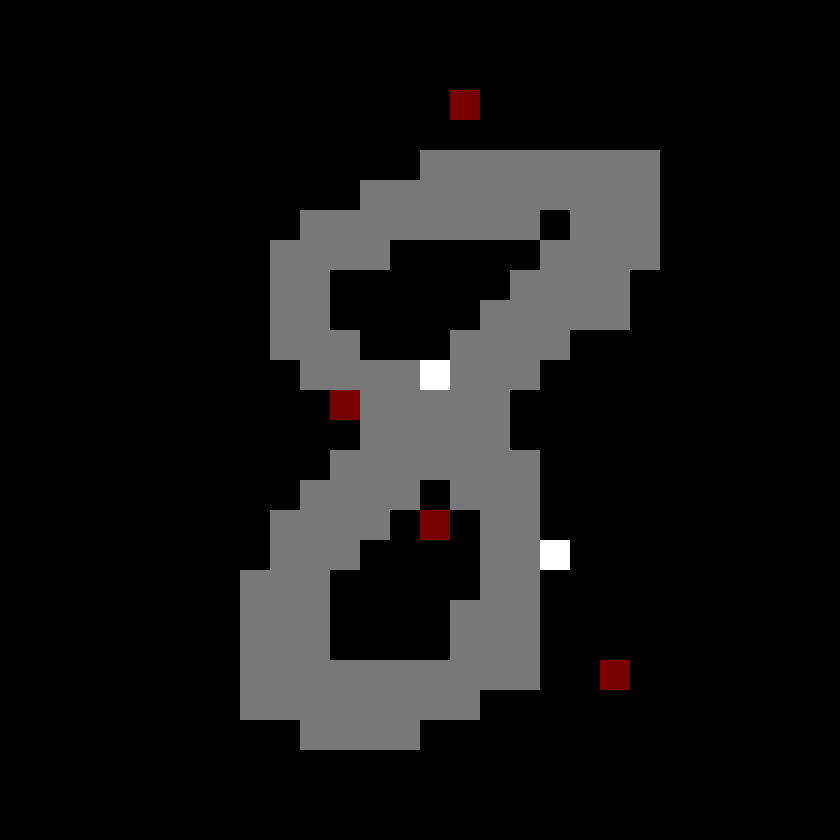}
			\caption{$k^\star = 6$}
		\end{subfigure}
		\begin{subfigure}{.3\textwidth}
			\centering
			\includegraphics[scale=0.1]{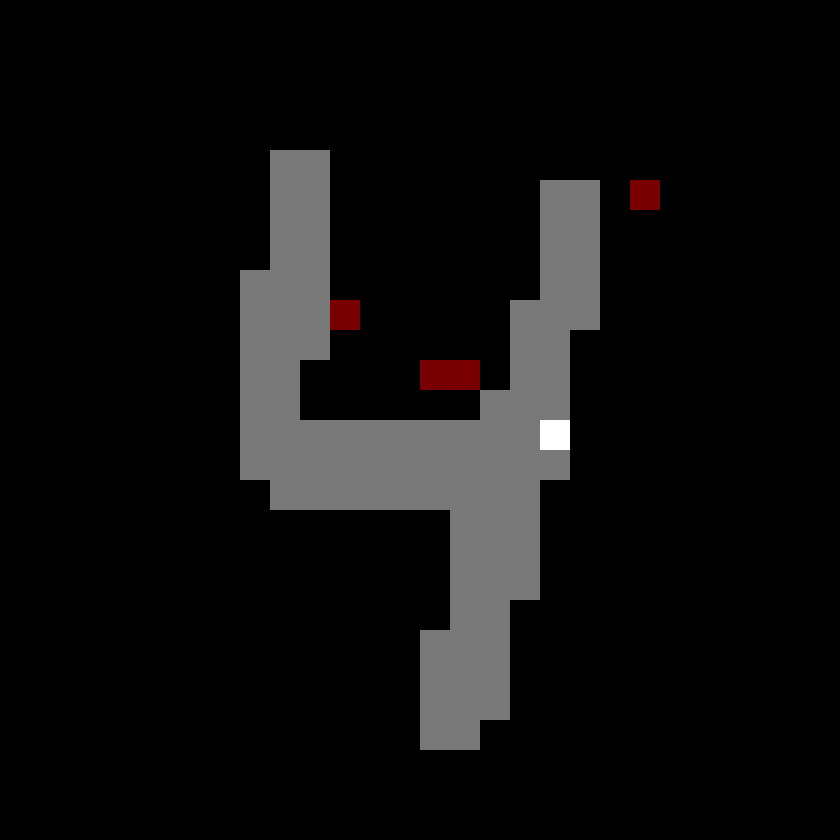}
			\caption{$k^\star = 5$}
		\end{subfigure}
		\begin{subfigure}{.3\textwidth}
			\centering
			\includegraphics[scale=0.1]{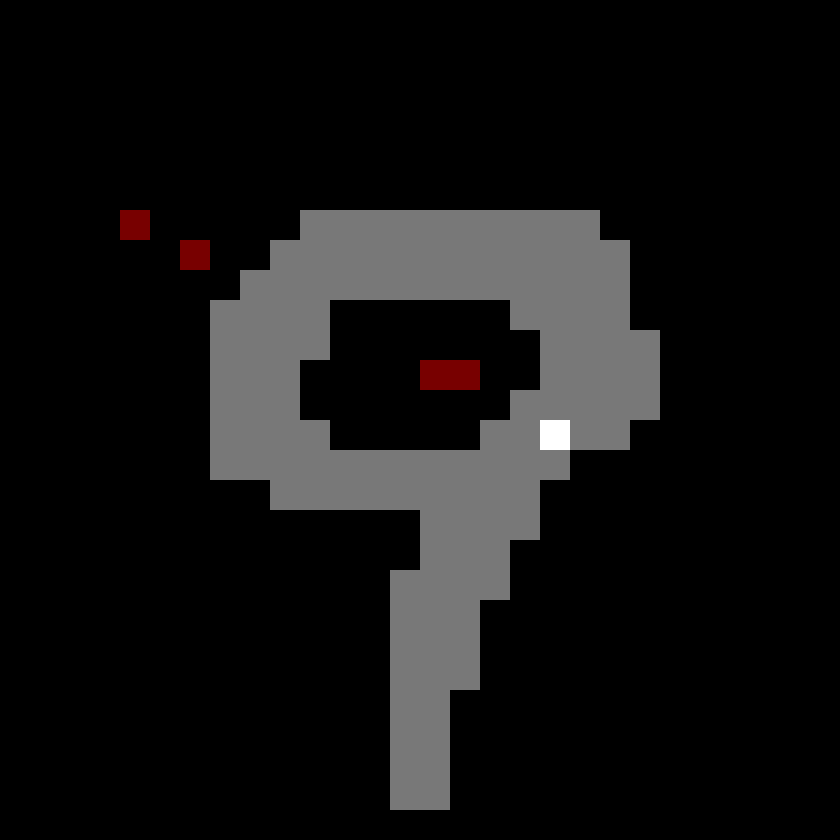}
			\caption{$k^\star = 5$}
		\end{subfigure}

		\caption{Examples of \emph{Minimum Sufficient Reasons} over the MNIST dataset. All instances are correctly predicted negative instances for a decision tree of $591$ leaves that detects the digit $1$.  Light pixels of the original image are depicted in grey, and the light pixels of the original image that are part of the minimum sufficient reason are colored white. Dark pixels that are part of the minimum sufficient reason are colored with red. Individual captions denote the size of the minimum sufficient reasons with $k^\star$.}
		\label{fig:experiments-negative-1}
\end{figure}

\begin{figure}
\centering
		\begin{subfigure}{.3\textwidth}
			\centering 
			\includegraphics[scale=0.1]{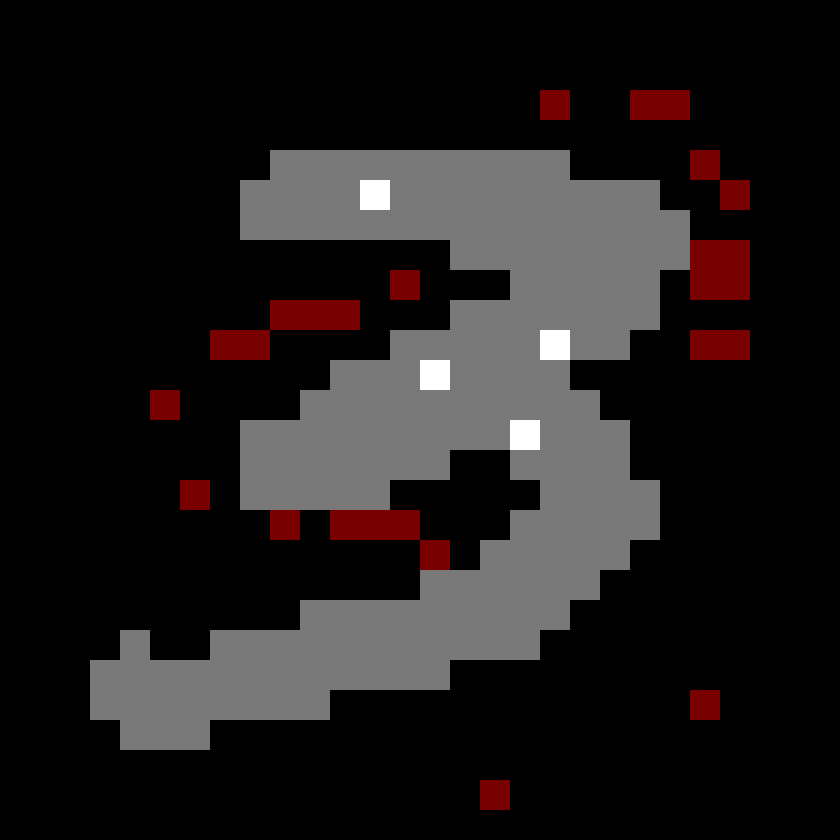}
			\caption{$k^\star = 30$}
		\end{subfigure}  
		\begin{subfigure}{.3\textwidth}
			\centering
			\includegraphics[scale=0.1]{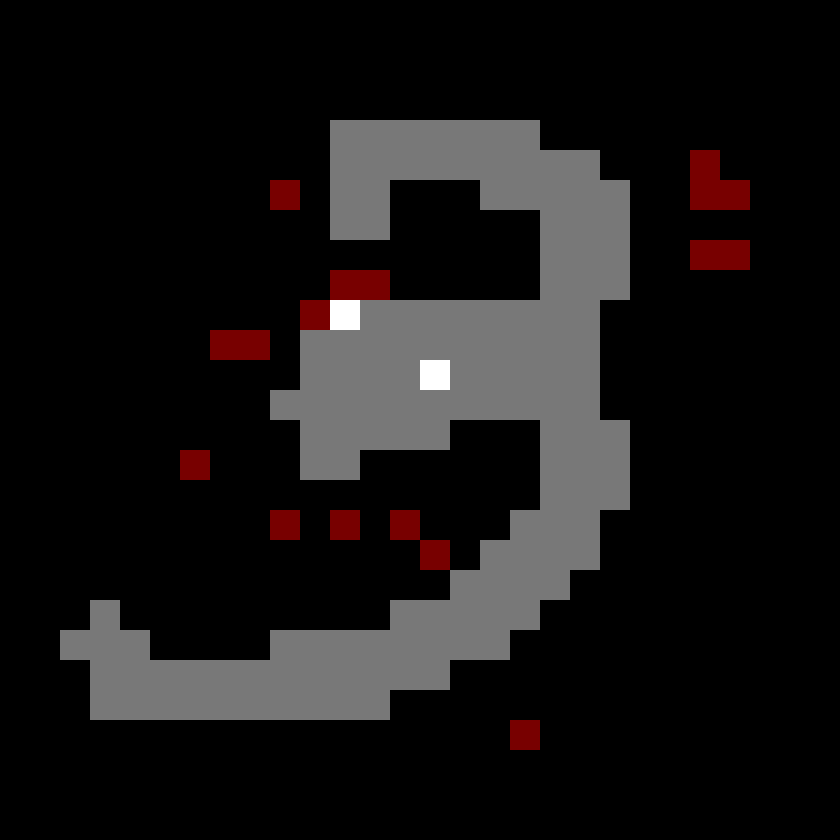}
			\caption{$k^\star = 19$}
		\end{subfigure}
		\begin{subfigure}{.3\textwidth}
			\centering
			\includegraphics[scale=0.1]{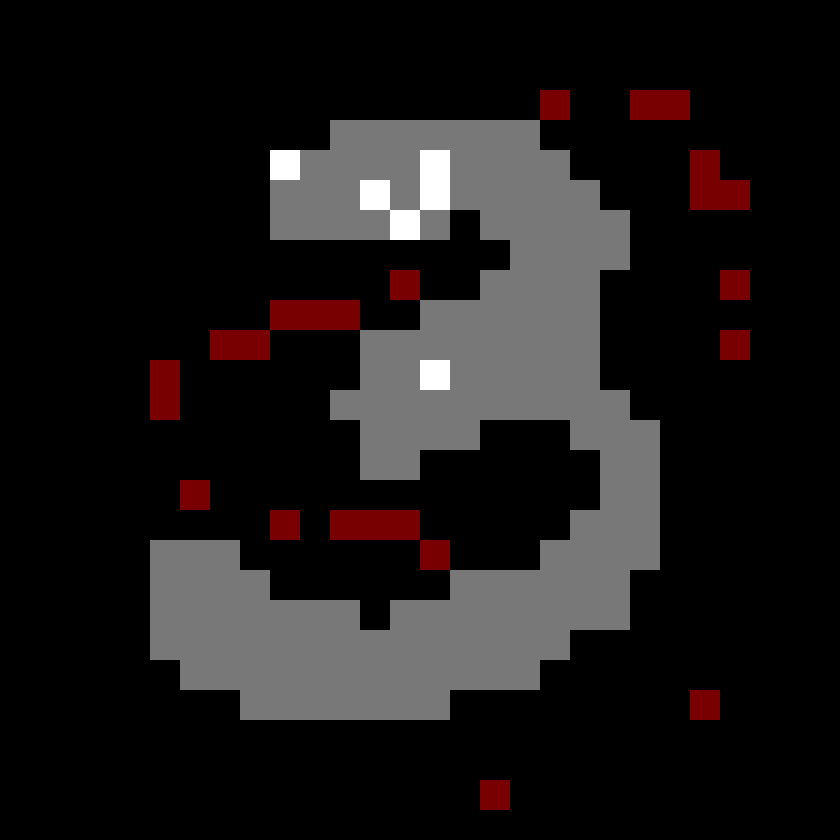}
			\caption{$k^\star = 30$}
		\end{subfigure}

		\begin{subfigure}{.3\textwidth}
			\centering
			\includegraphics[scale=0.1]{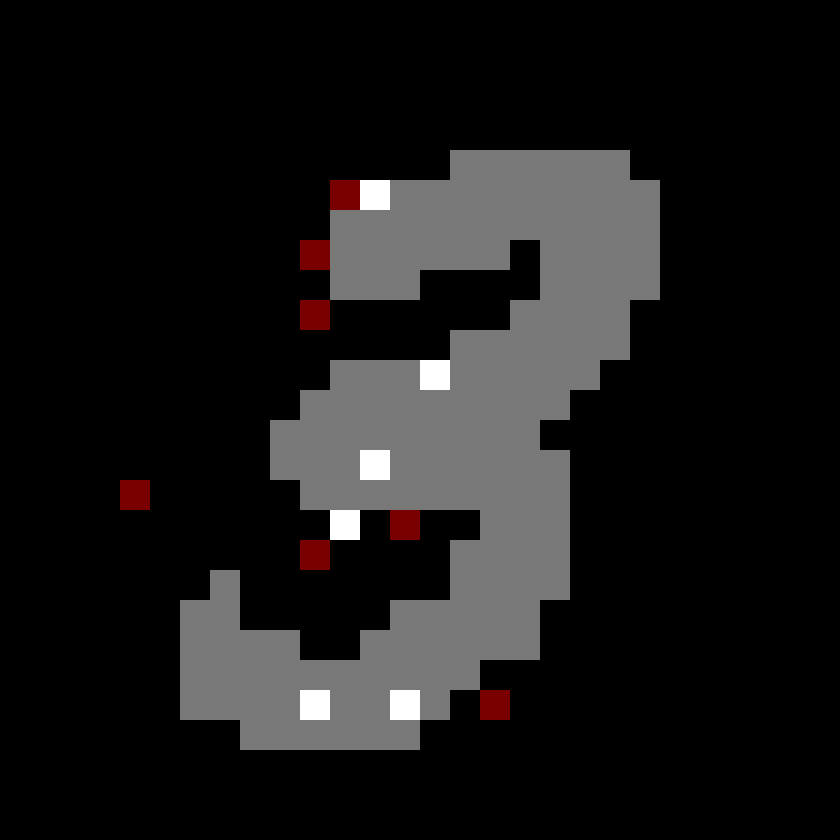}
			\caption{ $k^\star = 13$}
		\end{subfigure}
		\begin{subfigure}{.3\textwidth}
			\centering
			\includegraphics[scale=0.1]{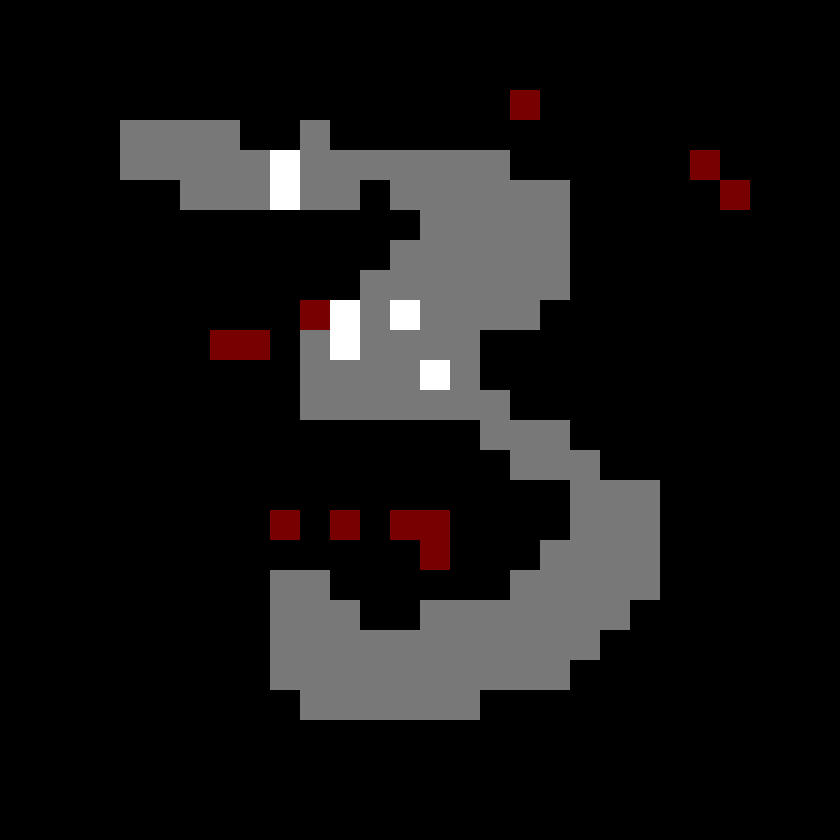}
			\caption{$k^\star = 17$}
		\end{subfigure}
		\begin{subfigure}{.3\textwidth}
			\centering
			\includegraphics[scale=0.1]{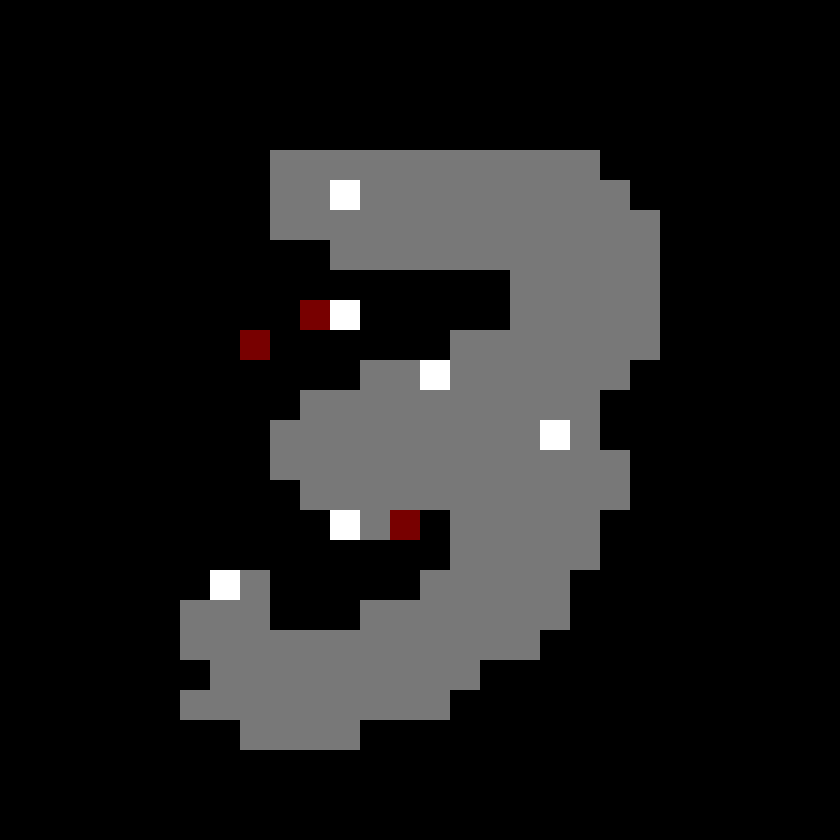}
			\caption{(Misclassified) $k^\star = 9$}
		\end{subfigure}
		
		\begin{subfigure}{.3\textwidth}
			\centering
			\includegraphics[scale=0.1]{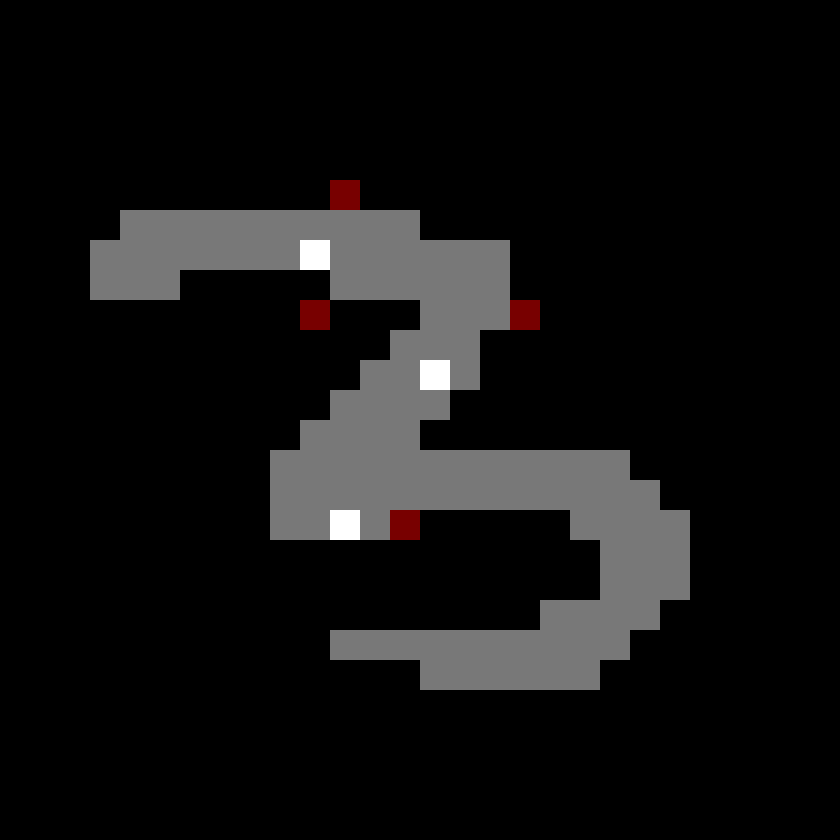}
			\caption{(Misclassified) $k^\star = 7$}
		\end{subfigure}
		\begin{subfigure}{.3\textwidth}
			\centering
			\includegraphics[scale=0.1]{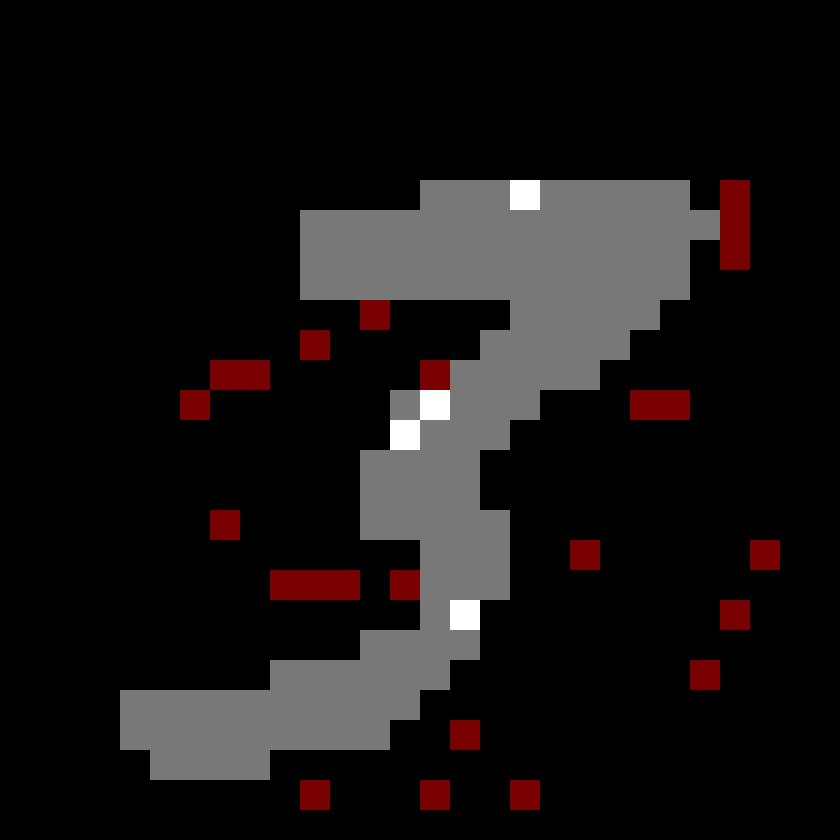}
			\caption{$k^\star = 28$}
		\end{subfigure}
		\begin{subfigure}{.3\textwidth}
			\centering
			\includegraphics[scale=0.1]{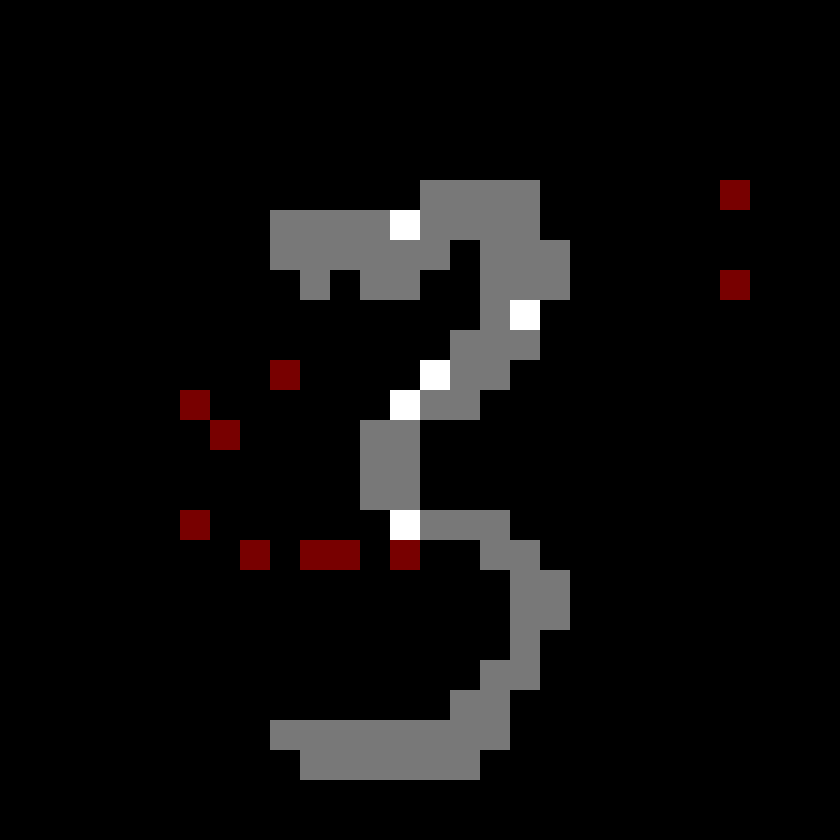}
			\caption{$k^\star = 15$}
		\end{subfigure}
		
		\begin{subfigure}{.3\textwidth}
			\centering
			\includegraphics[scale=0.1]{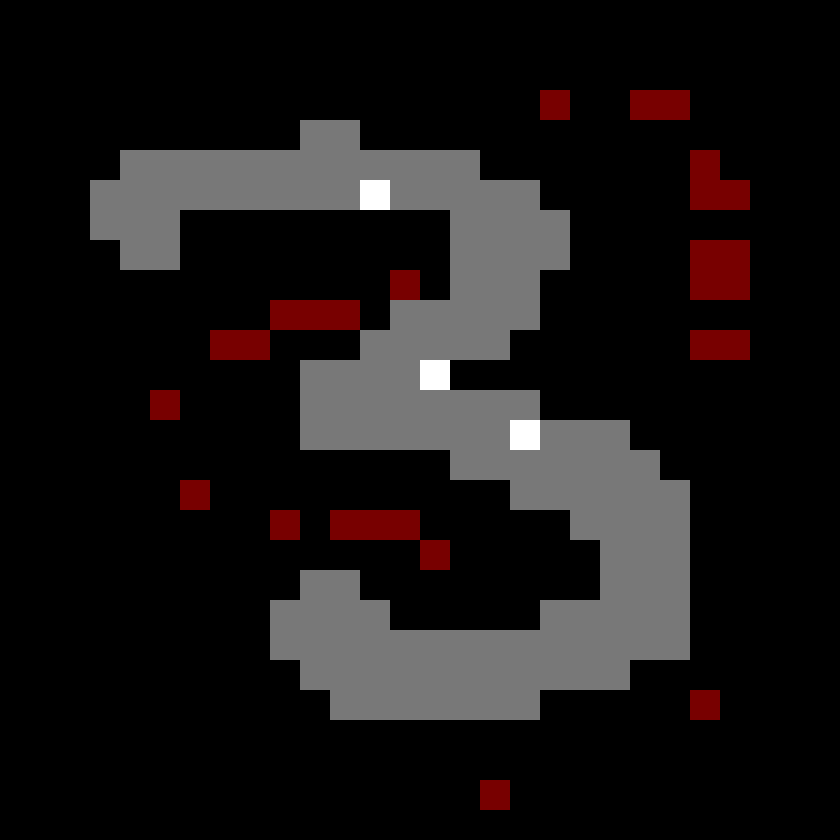}
			\caption{$k^\star = 30$}
		\end{subfigure}
		\begin{subfigure}{.3\textwidth}
			\centering
			\includegraphics[scale=0.1]{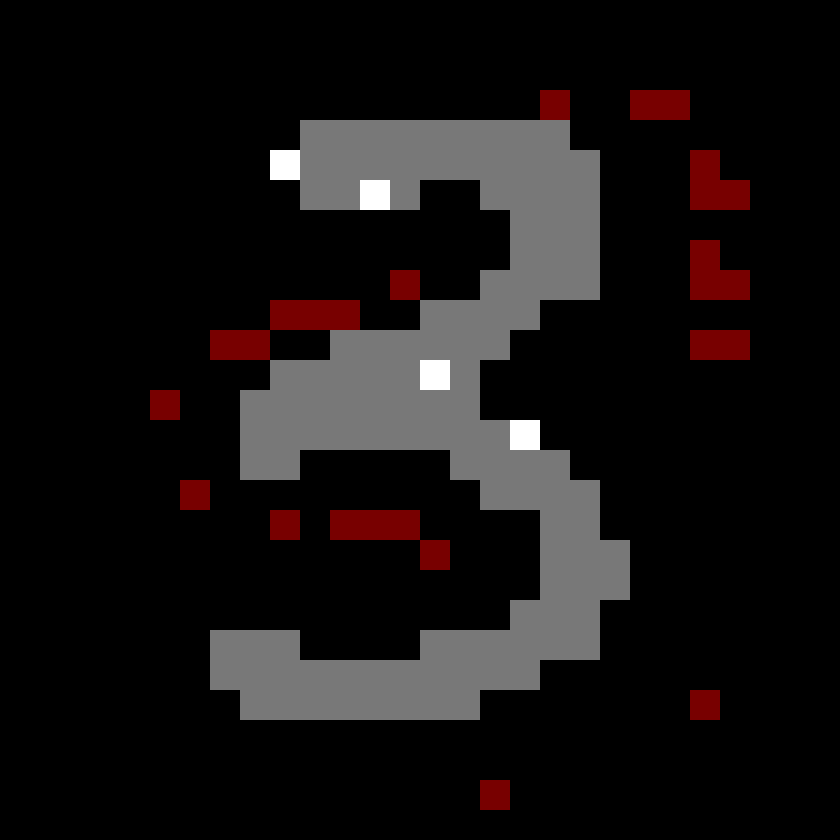}
			\caption{$k^\star = 30$}
		\end{subfigure}
		\begin{subfigure}{.3\textwidth}
			\centering
			\includegraphics[scale=0.1]{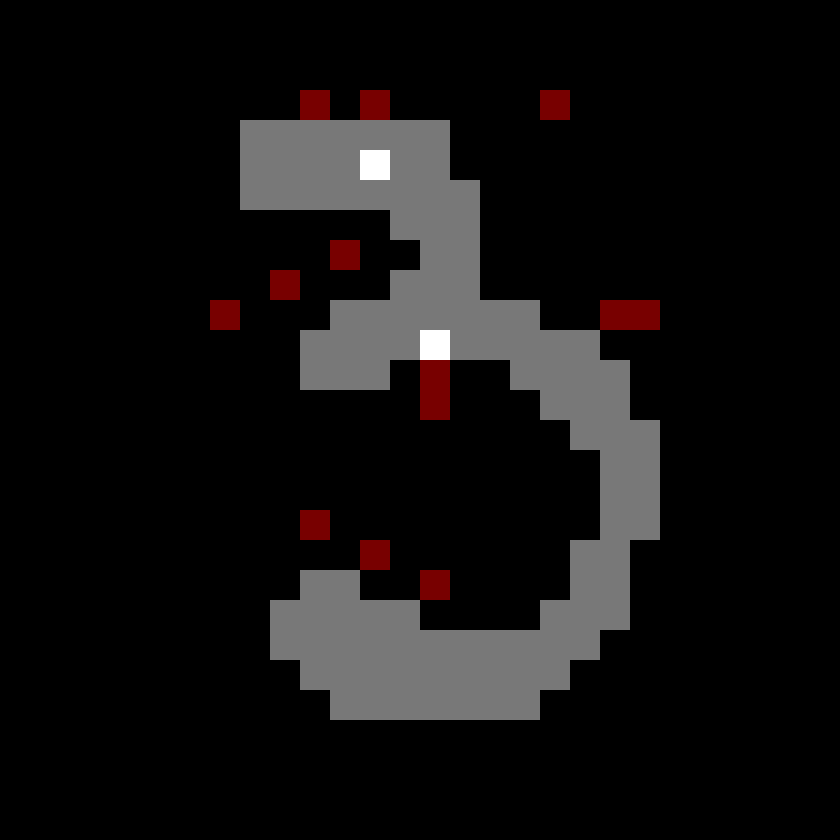}
			\caption{$k^\star = 15$}
		\end{subfigure}

		\caption{Examples of \emph{Minimum Sufficient Reasons} over the MNIST dataset. All images correspond to positive instances for a decision tree of $1486$ leaves that detects the digit $3$. Two instances are misclassified. Light pixels of the original image are depicted in grey, and the light pixels of the original image that are part of the minimum sufficient reason are colored white. Dark pixels that are part of the minimum sufficient reason are colored with red. Individual captions denote the size of the minimum sufficient reasons with $k^\star$.}
		\label{fig:experiments-positive-3}
\end{figure}

\begin{figure}
\centering
		\begin{subfigure}{.3\textwidth}
			\centering 
			\includegraphics[scale=0.1]{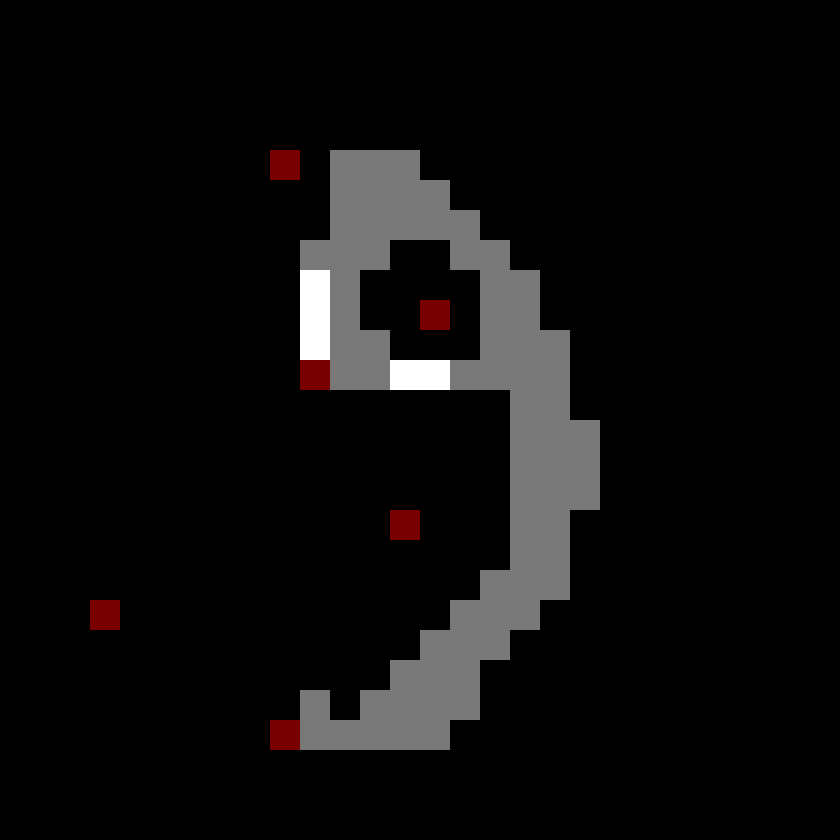}
			\caption{$k^\star = 11$}
		\end{subfigure}  
		\begin{subfigure}{.3\textwidth}
			\centering
			\includegraphics[scale=0.1]{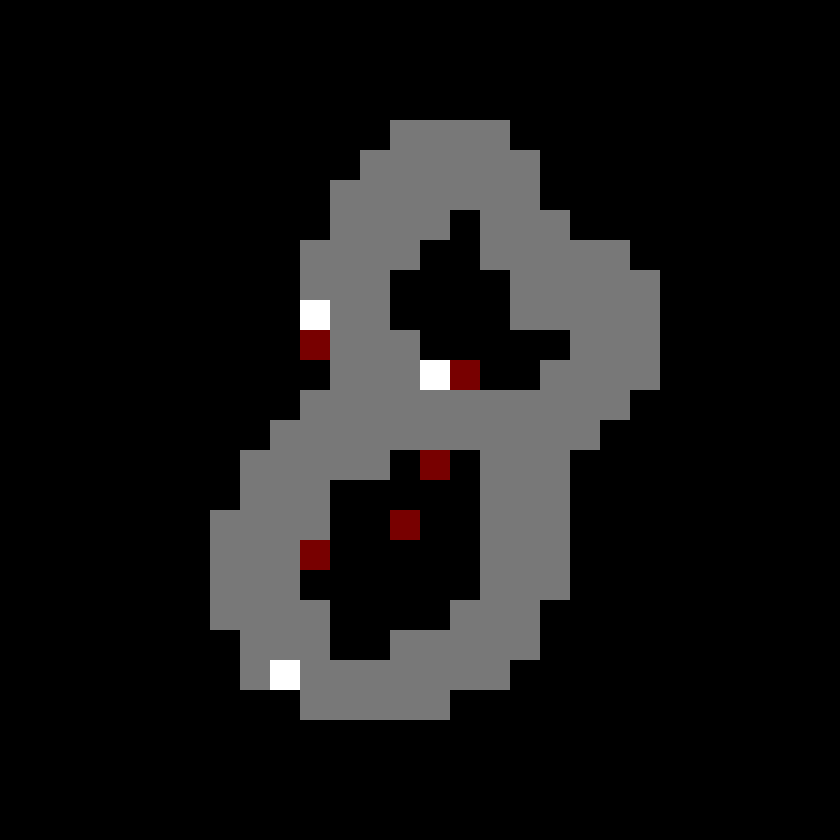}
			\caption{$k^\star = 8$}
		\end{subfigure}
		\begin{subfigure}{.3\textwidth}
			\centering
			\includegraphics[scale=0.1]{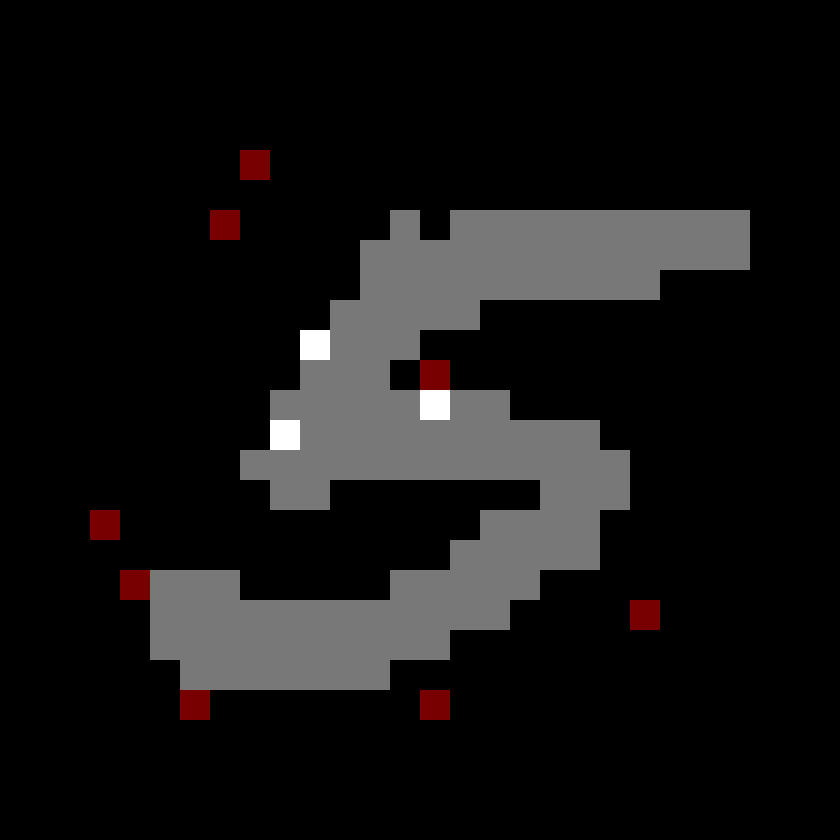}
			\caption{$k^\star = 11$}
		\end{subfigure}

		\begin{subfigure}{.3\textwidth}
			\centering
			\includegraphics[scale=0.1]{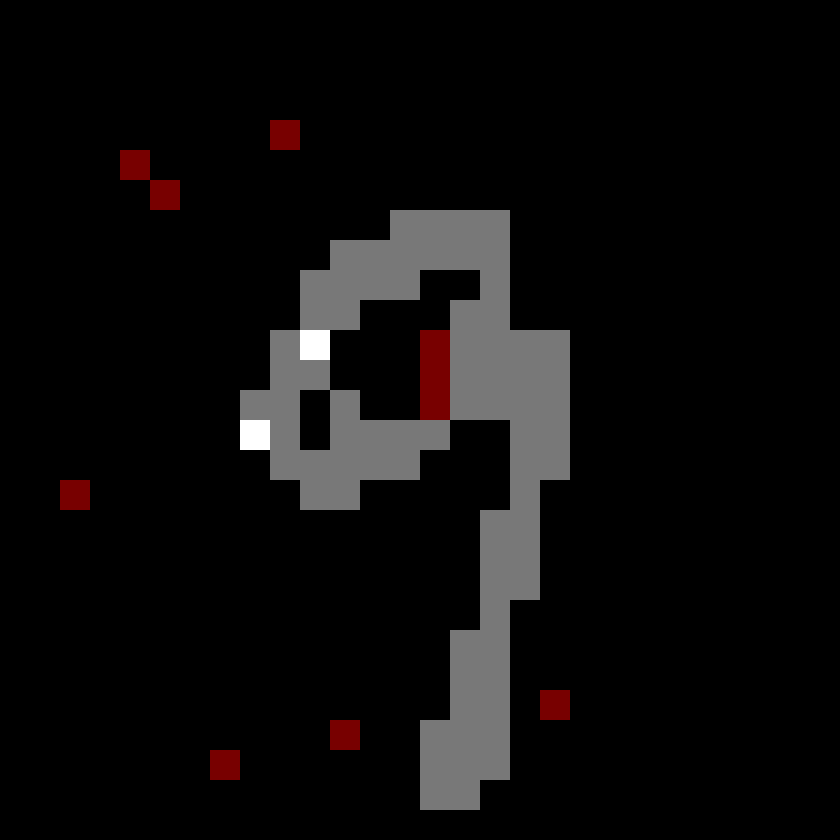}
			\caption{ $k^\star = 12$}
		\end{subfigure}
		\begin{subfigure}{.3\textwidth}
			\centering
			\includegraphics[scale=0.1]{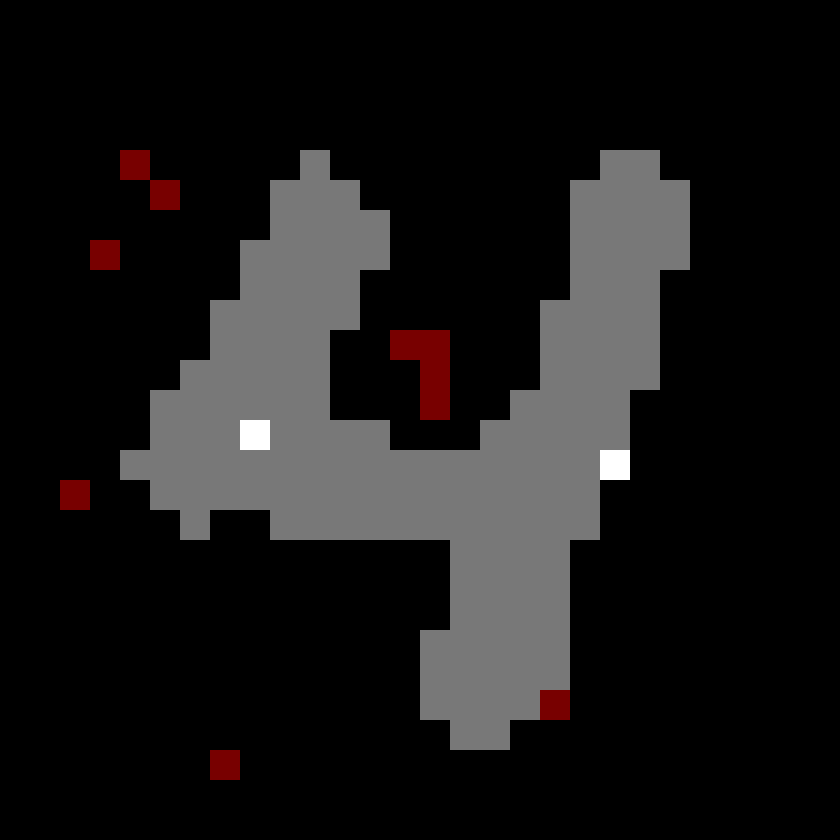}
			\caption{$k^\star = 12$}
		\end{subfigure}
		\begin{subfigure}{.3\textwidth}
			\centering
			\includegraphics[scale=0.1]{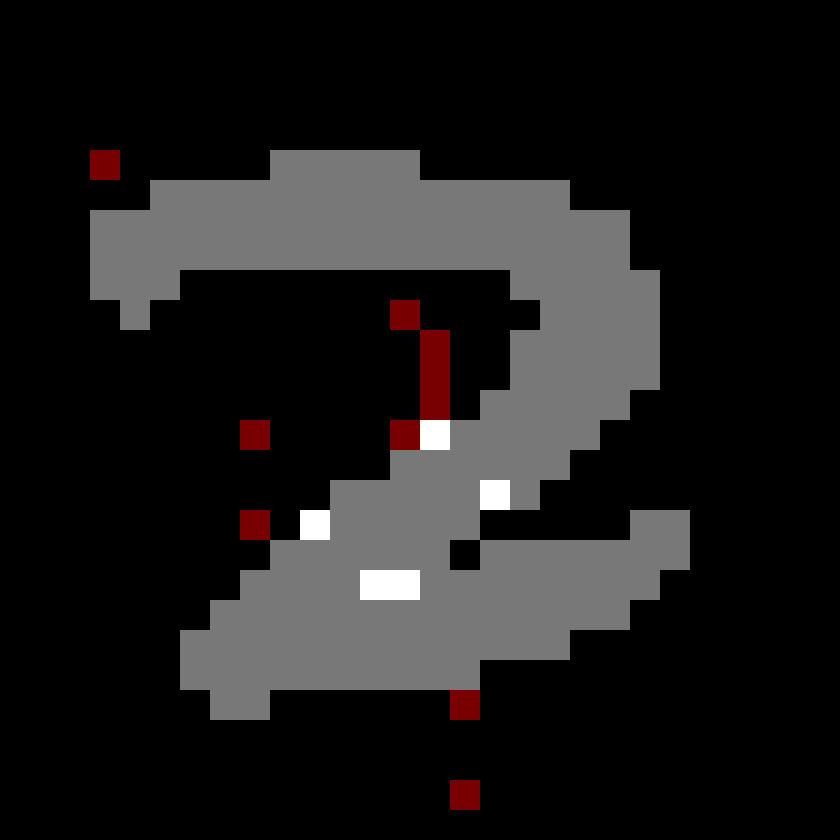}
			\caption{$k^\star = 15$}
		\end{subfigure}
		
		\begin{subfigure}{.3\textwidth}
			\centering
			\includegraphics[scale=0.1]{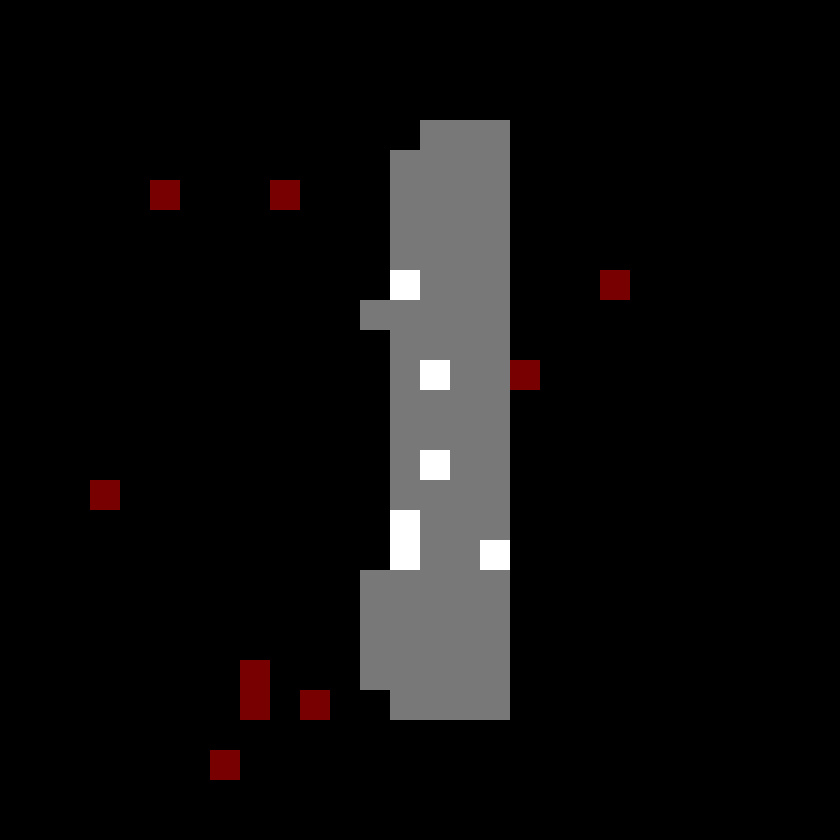}
			\caption{($k^\star = 15$}
		\end{subfigure}
		\begin{subfigure}{.3\textwidth}
			\centering
			\includegraphics[scale=0.1]{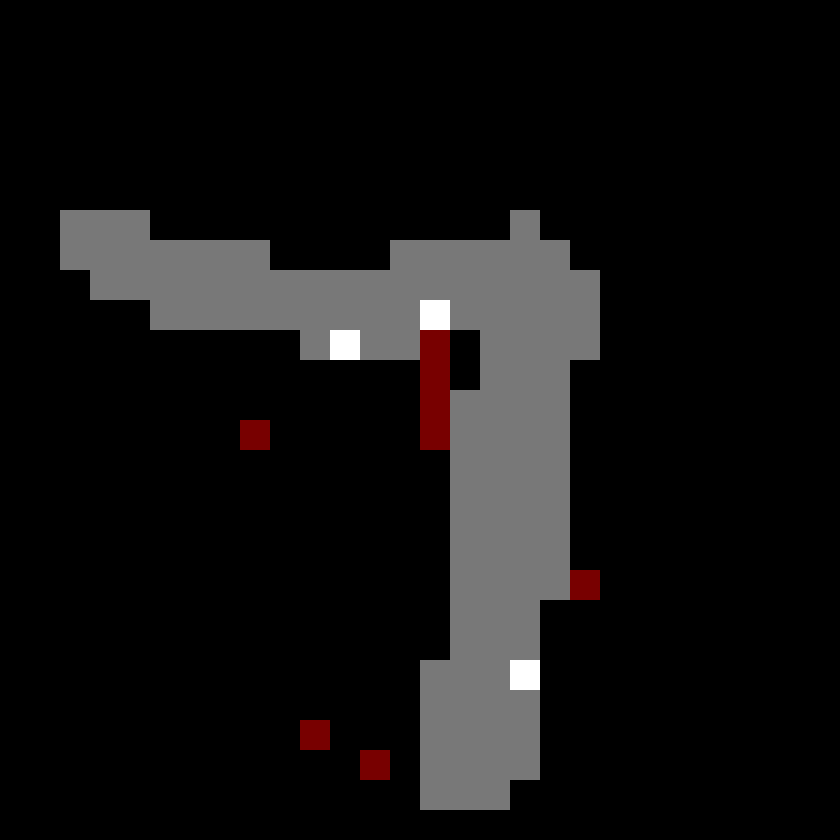}
			\caption{$k^\star = 11$}
		\end{subfigure}
		\begin{subfigure}{.3\textwidth}
			\centering
			\includegraphics[scale=0.1]{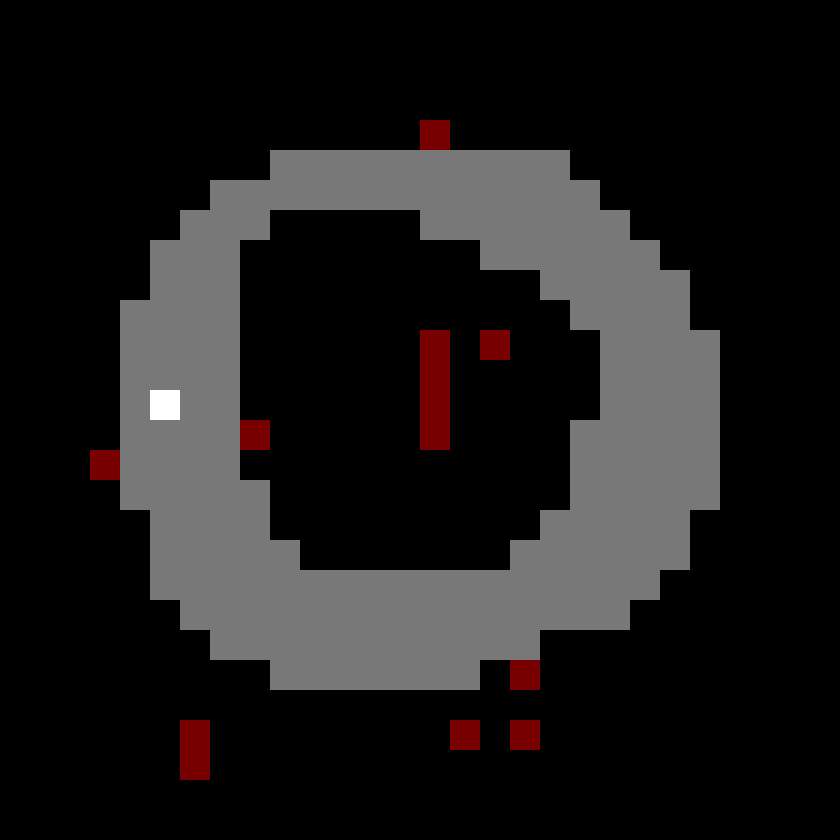}
			\caption{$k^\star = 14$}
		\end{subfigure}
		
		\begin{subfigure}{.3\textwidth}
			\centering
			\includegraphics[scale=0.1]{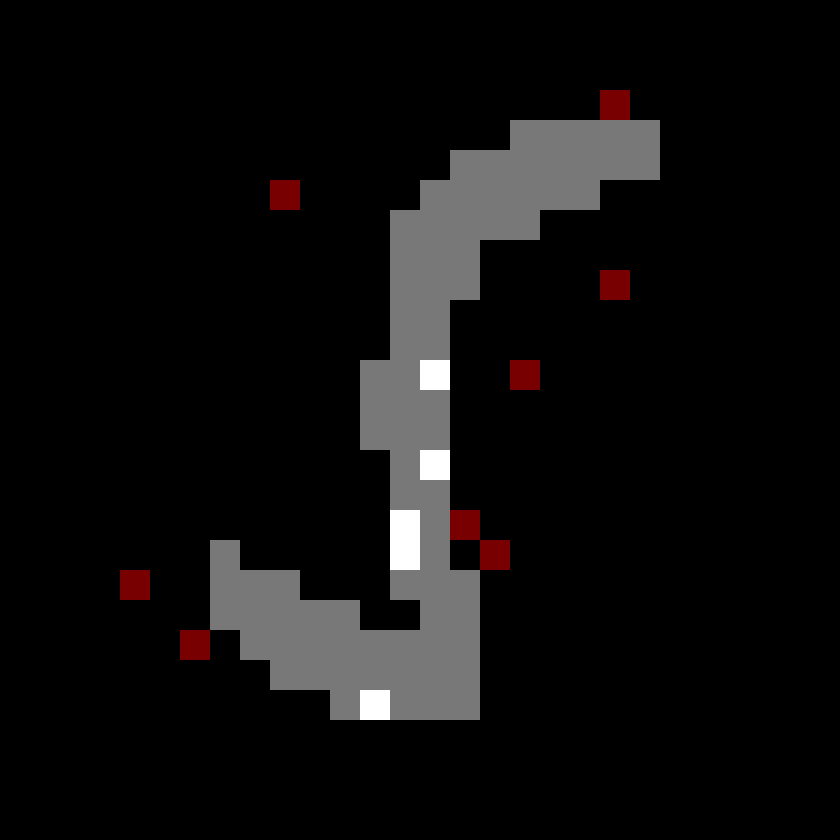}
			\caption{$k^\star = 13$}
		\end{subfigure}
		\begin{subfigure}{.3\textwidth}
			\centering
			\includegraphics[scale=0.1]{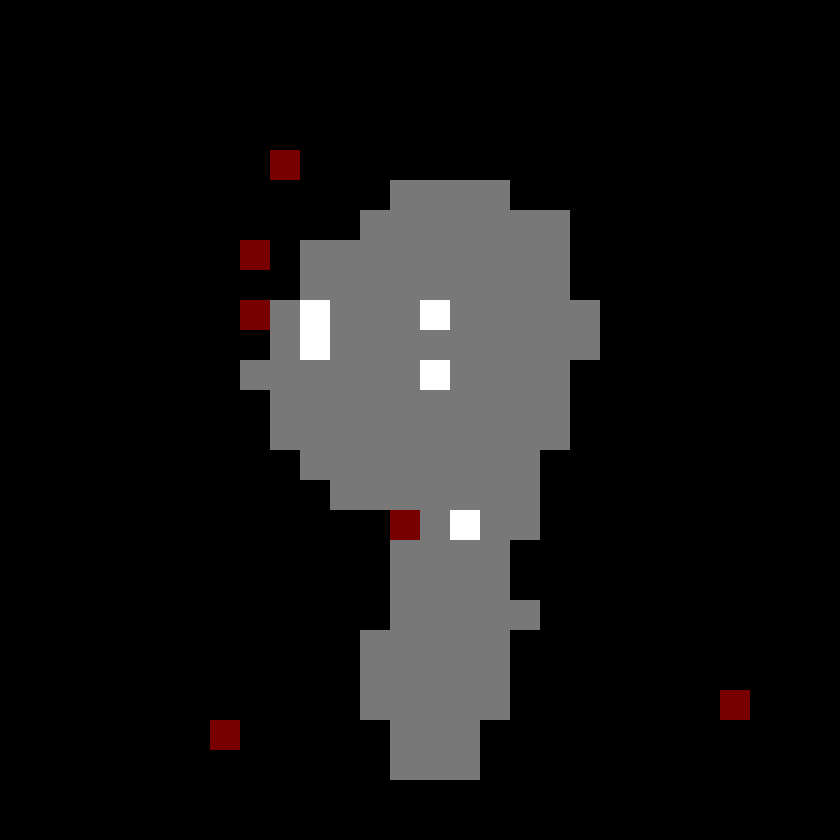}
			\caption{$k^\star = 11$}
		\end{subfigure}
		\begin{subfigure}{.3\textwidth}
			\centering
			\includegraphics[scale=0.1]{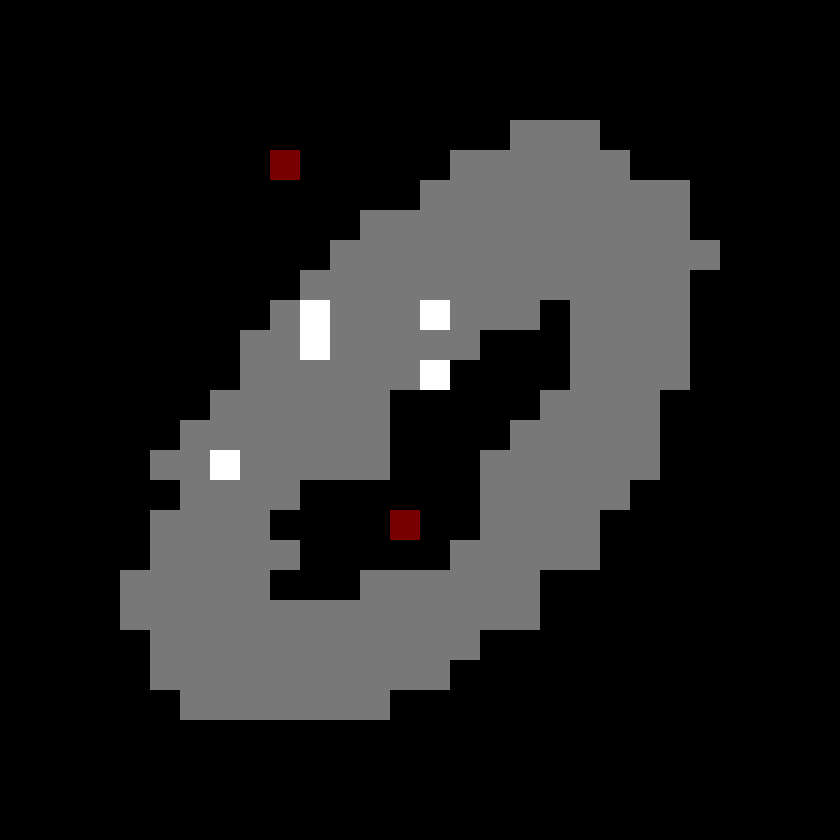}
			\caption{$k^\star = 7$}
		\end{subfigure}

		\caption{Examples of \emph{Minimum Sufficient Reasons} over the MNIST dataset. All images correspond to negative instances for a decision tree of $1486$ leaves that detects the digit $3$ and are classified correctly.  Light pixels of the original image are depicted in grey, and the light pixels of the original image that are part of the minimum sufficient reason are colored white. Dark pixels that are part of the minimum sufficient reason are colored with red. Individual captions denote the size of the minimum sufficient reasons with $k^\star$.}
		\label{fig:experiments-negative-3}
\end{figure}

\begin{figure}
\centering
		\begin{subfigure}{.3\textwidth}
			\centering 
			\includegraphics[scale=0.1]{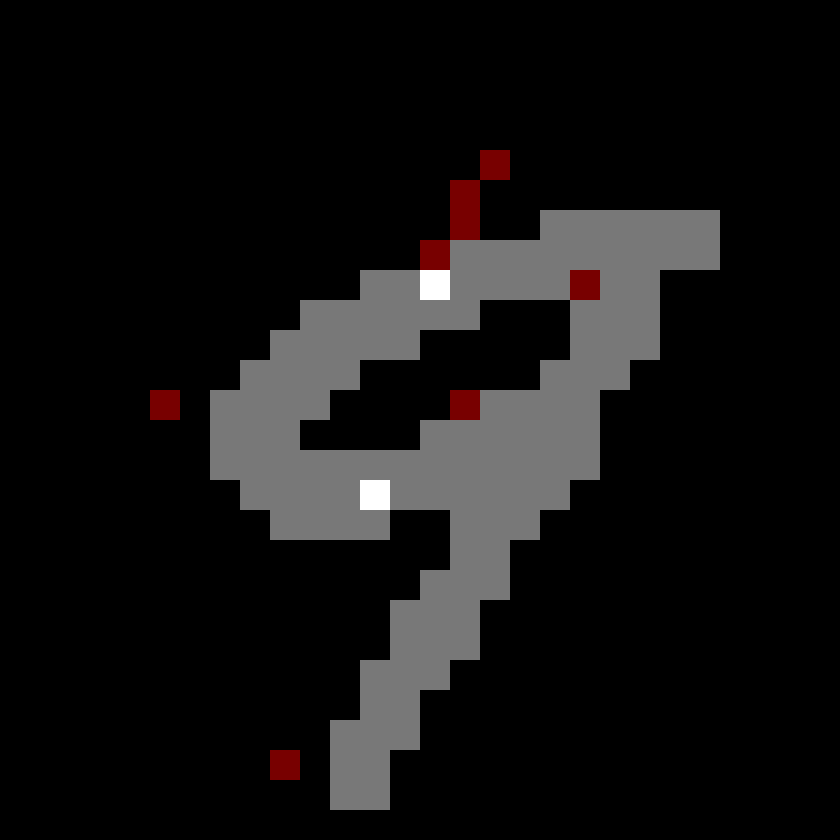}
			\caption{(Misclassified) $k^\star = 10$}
		\end{subfigure}  
		\begin{subfigure}{.3\textwidth}
			\centering
			\includegraphics[scale=0.1]{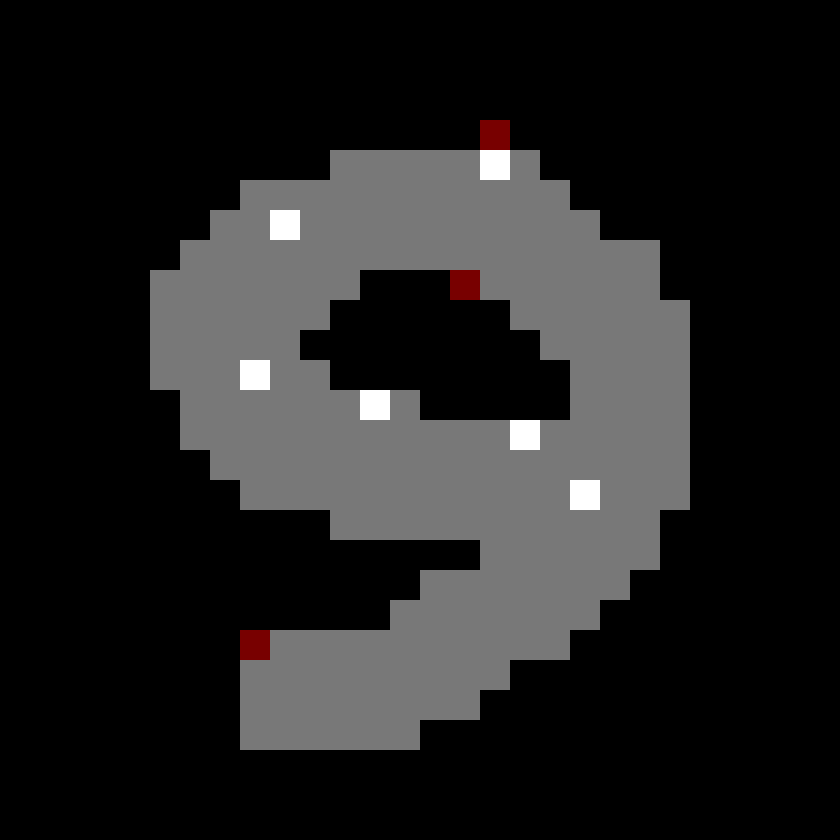}
			\caption{(Misclassified) $k^\star = 9$}
		\end{subfigure}
		\begin{subfigure}{.3\textwidth}
			\centering
			\includegraphics[scale=0.1]{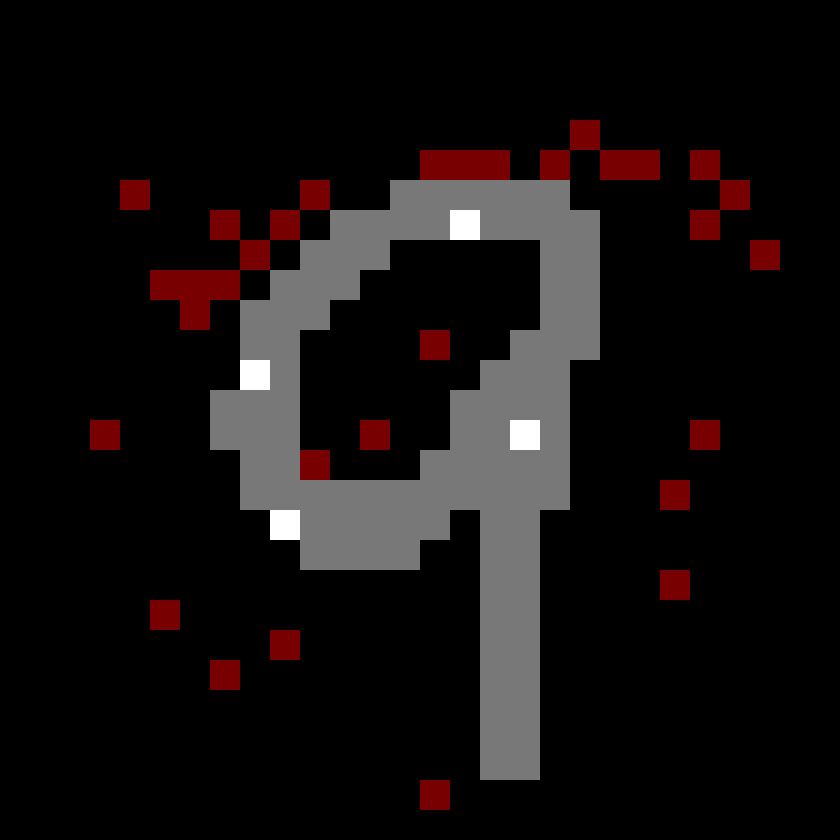}
			\caption{$k^\star = 35$}
		\end{subfigure}

		\begin{subfigure}{.3\textwidth}
			\centering
			\includegraphics[scale=0.1]{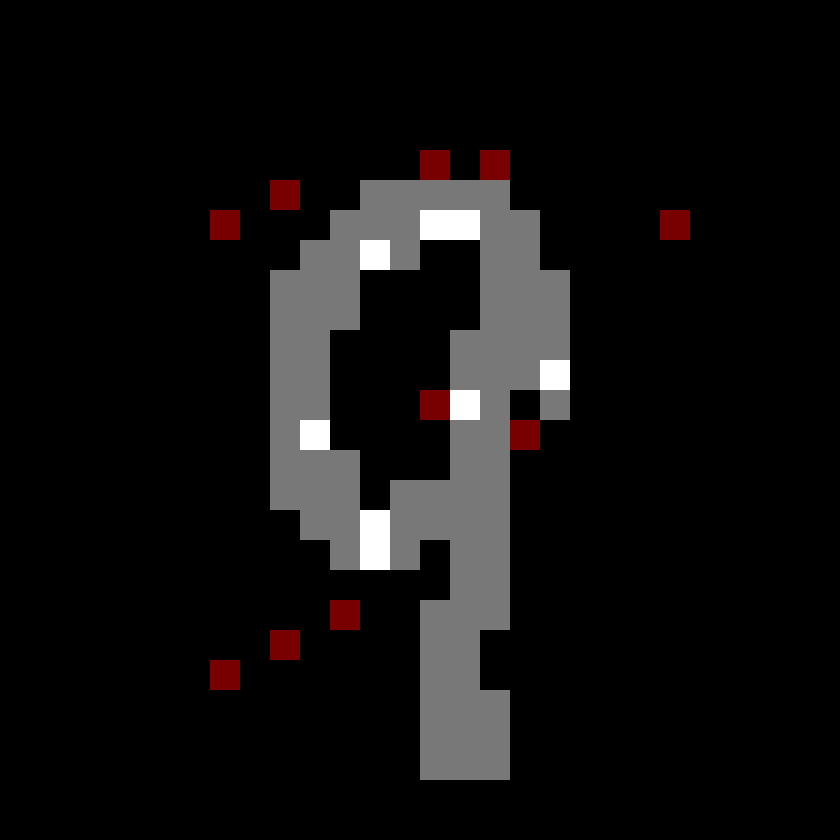}
			\caption{ $k^\star = 18$}
		\end{subfigure}
		\begin{subfigure}{.3\textwidth}
			\centering
			\includegraphics[scale=0.1]{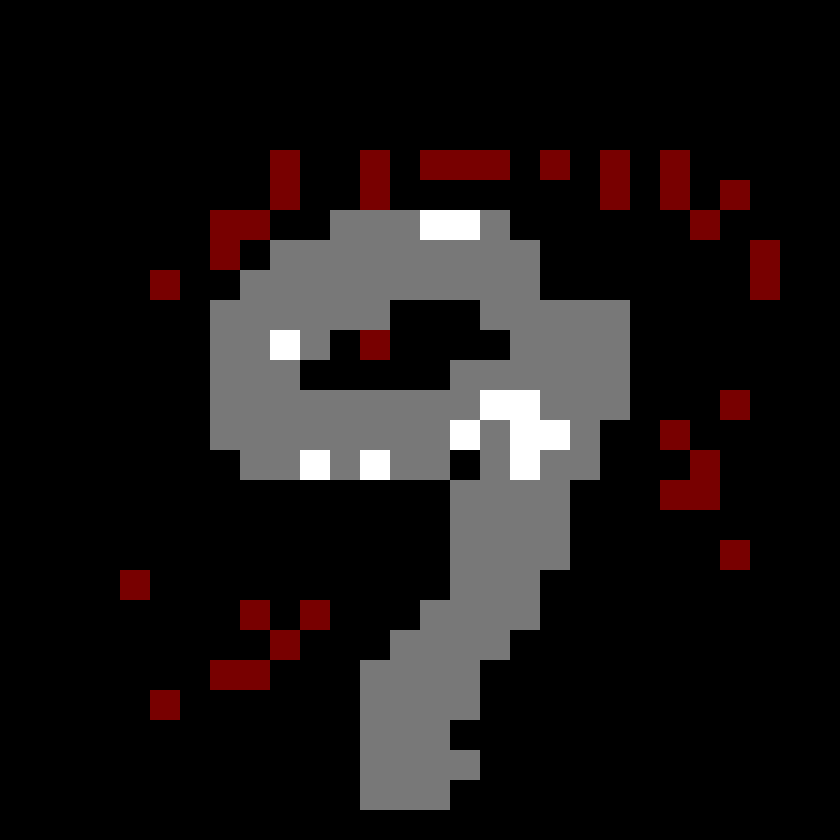}
			\caption{$k^\star = 45$}
		\end{subfigure}
		\begin{subfigure}{.3\textwidth}
			\centering
			\includegraphics[scale=0.1]{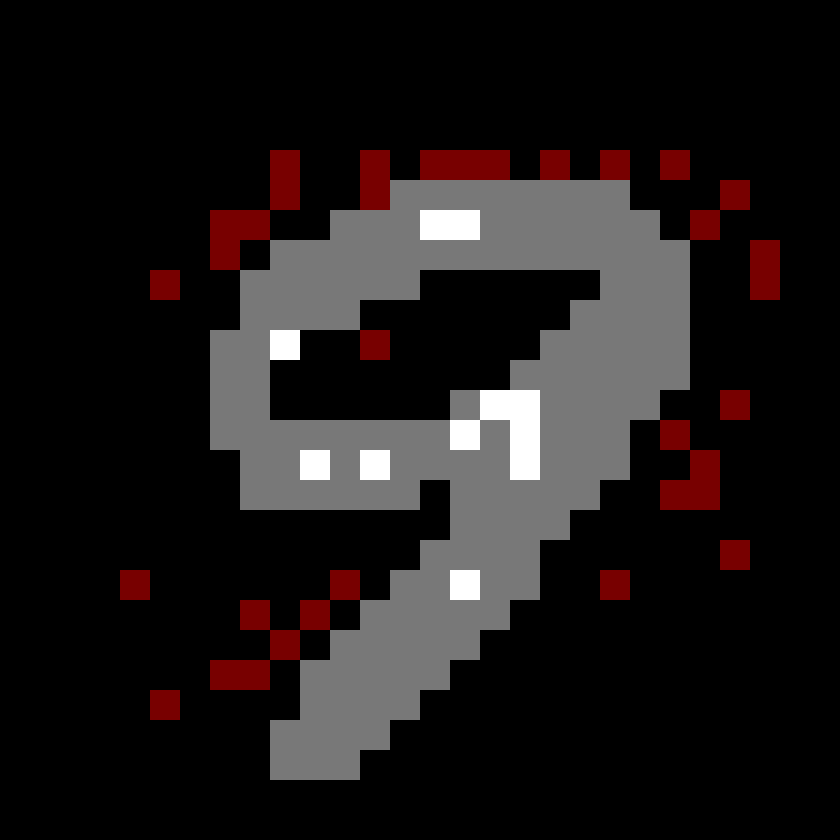}
			\caption{($k^\star = 45$}
		\end{subfigure}
		
		\begin{subfigure}{.3\textwidth}
			\centering
			\includegraphics[scale=0.1]{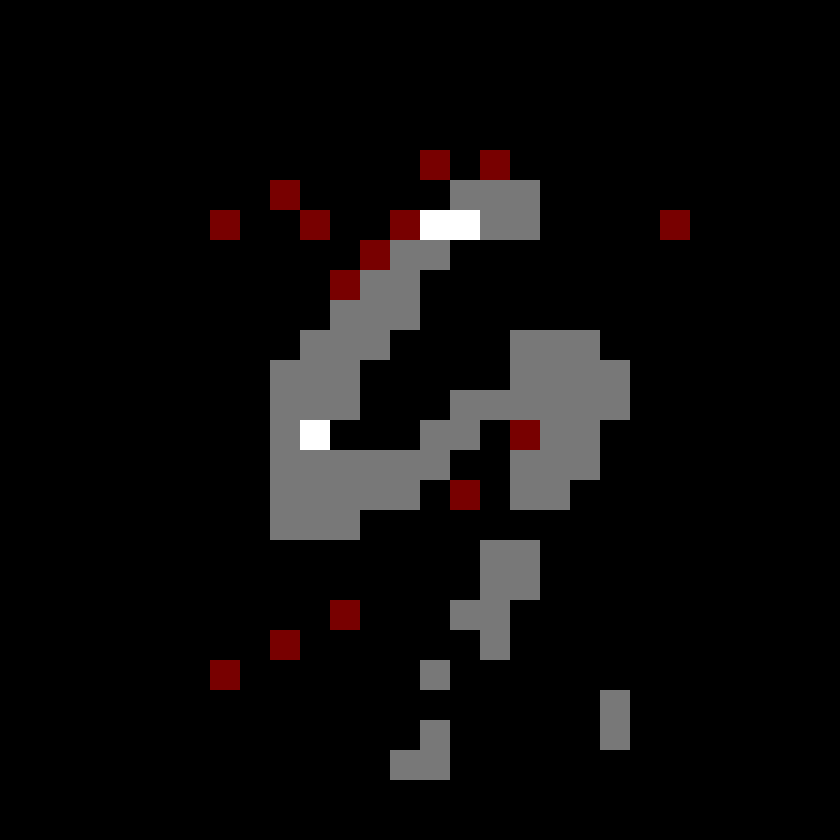}
			\caption{(Misclassified) $k^\star = 17$}
		\end{subfigure}
		\begin{subfigure}{.3\textwidth}
			\centering
			\includegraphics[scale=0.1]{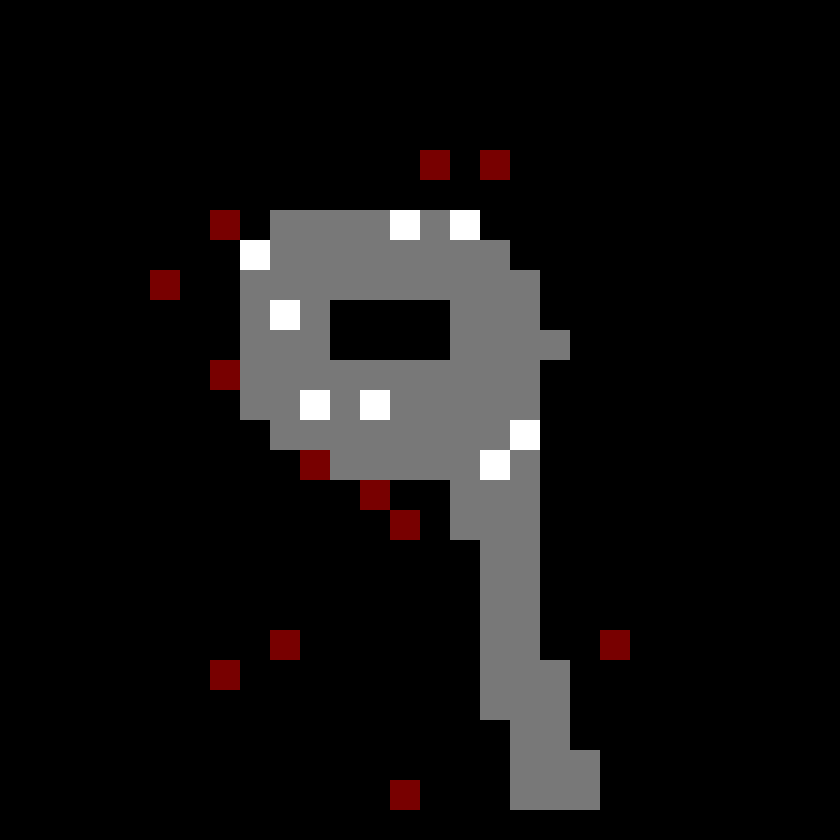}
			\caption{$k^\star = 20$}
		\end{subfigure}
		\begin{subfigure}{.3\textwidth}
			\centering
			\includegraphics[scale=0.1]{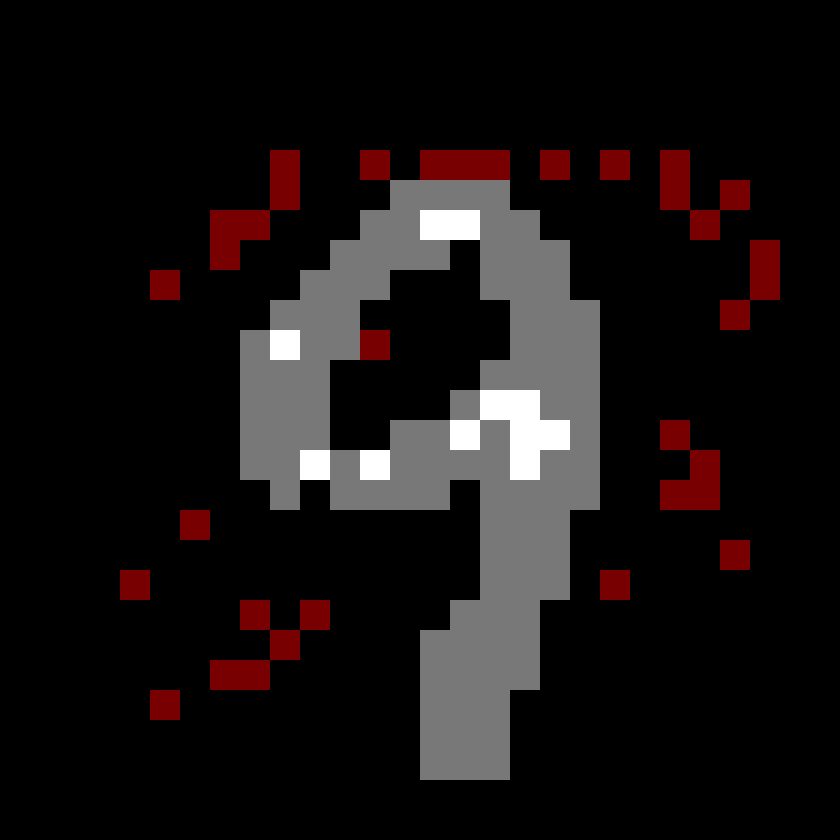}
			\caption{$k^\star = 45$}
		\end{subfigure}
		
		\begin{subfigure}{.3\textwidth}
			\centering
			\includegraphics[scale=0.1]{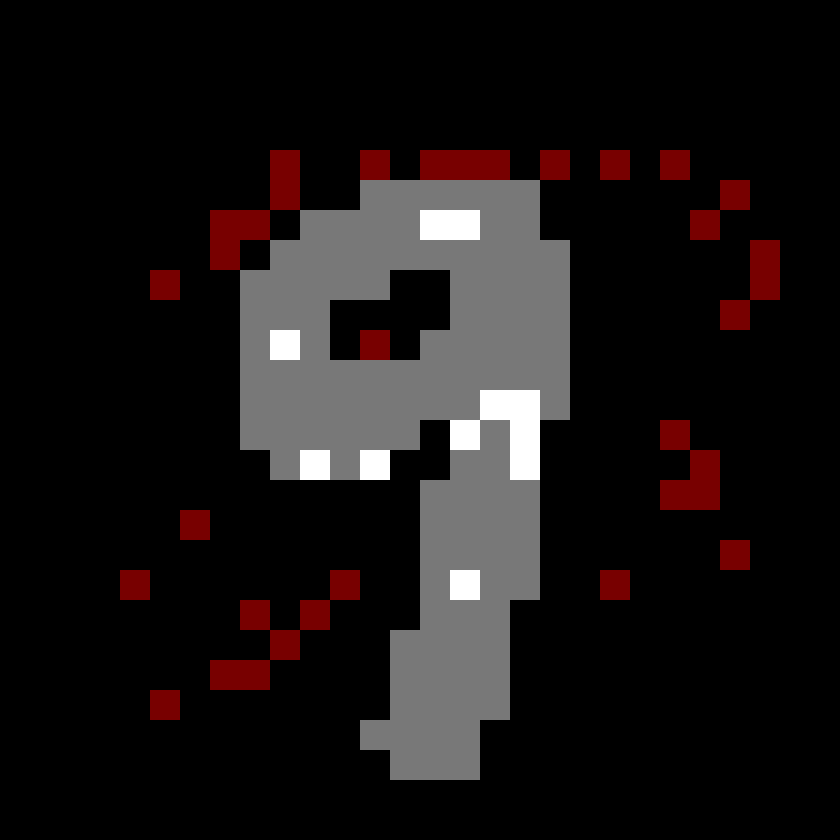}
			\caption{$k^\star = 45$}
		\end{subfigure}
		\begin{subfigure}{.3\textwidth}
			\centering
			\includegraphics[scale=0.1]{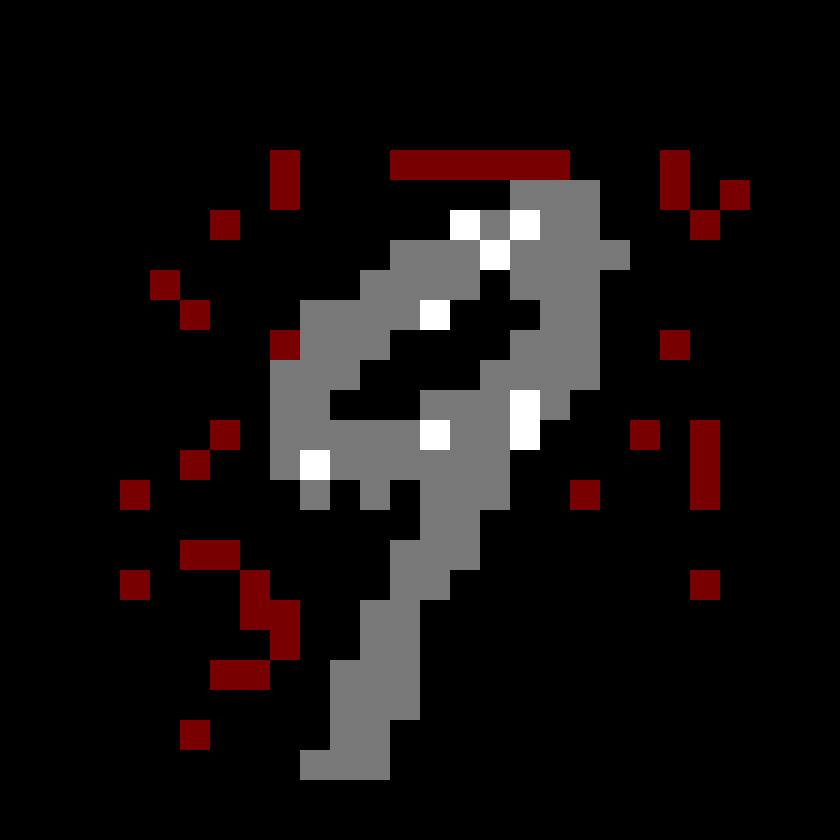}
			\caption{$k^\star = 44$}
		\end{subfigure}
		\begin{subfigure}{.3\textwidth}
			\centering
			\includegraphics[scale=0.1]{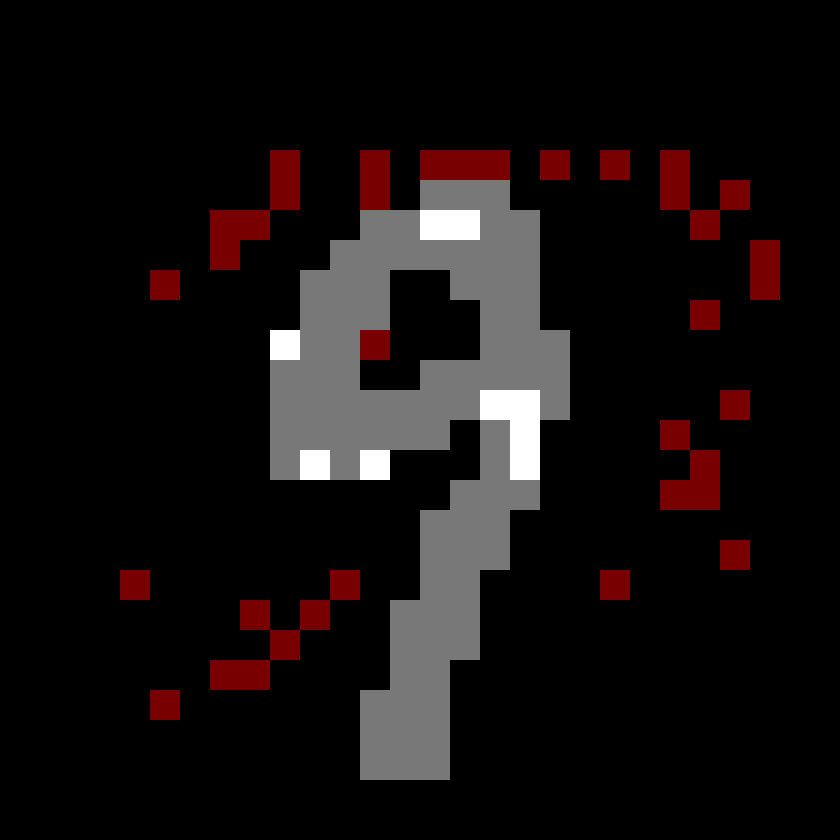}
			\caption{$k^\star = 45$}
		\end{subfigure}

		\caption{Examples of \emph{Minimum Sufficient Reasons} over the MNIST dataset. All images correspond to positive instances for a decision tree of $1652$ leaves that detects the digit $9$. Three instances are misclassified.  Light pixels of the original image are depicted in grey, and the light pixels of the original image that are part of the minimum sufficient reason are colored white. Dark pixels that are part of the minimum sufficient reason are colored with red. Individual captions denote the size of the minimum sufficient reasons with $k^\star$.}
		\label{fig:experiments-positive-9}
\end{figure}

\begin{figure}
\centering
		\begin{subfigure}{.3\textwidth}
			\centering 
			\includegraphics[scale=0.1]{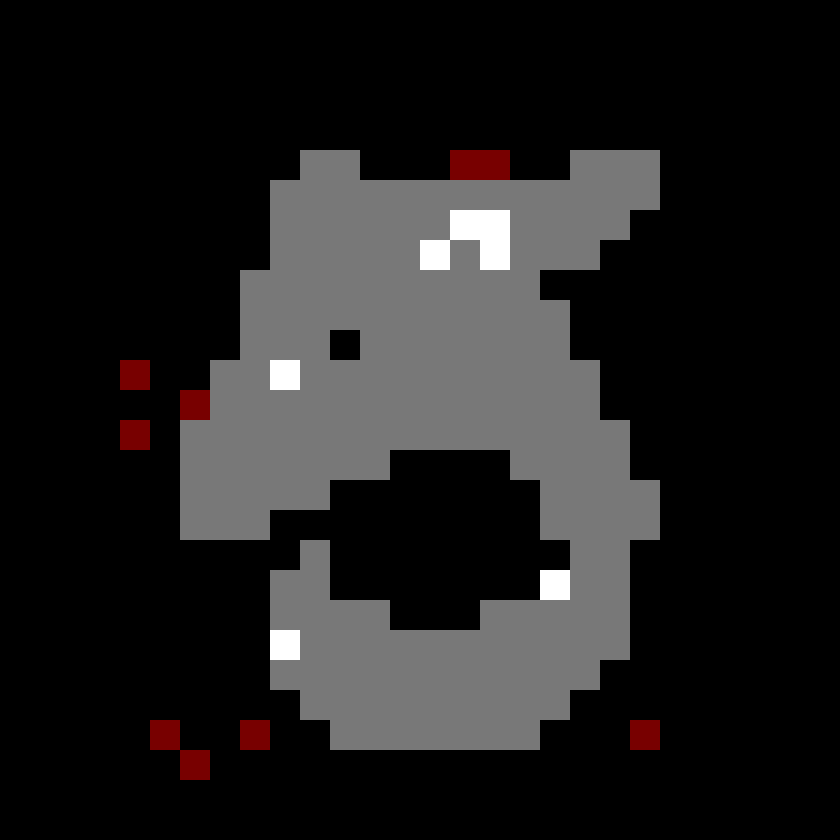}
			\caption{$k^\star = 16$}
		\end{subfigure}  
		\begin{subfigure}{.3\textwidth}
			\centering
			\includegraphics[scale=0.1]{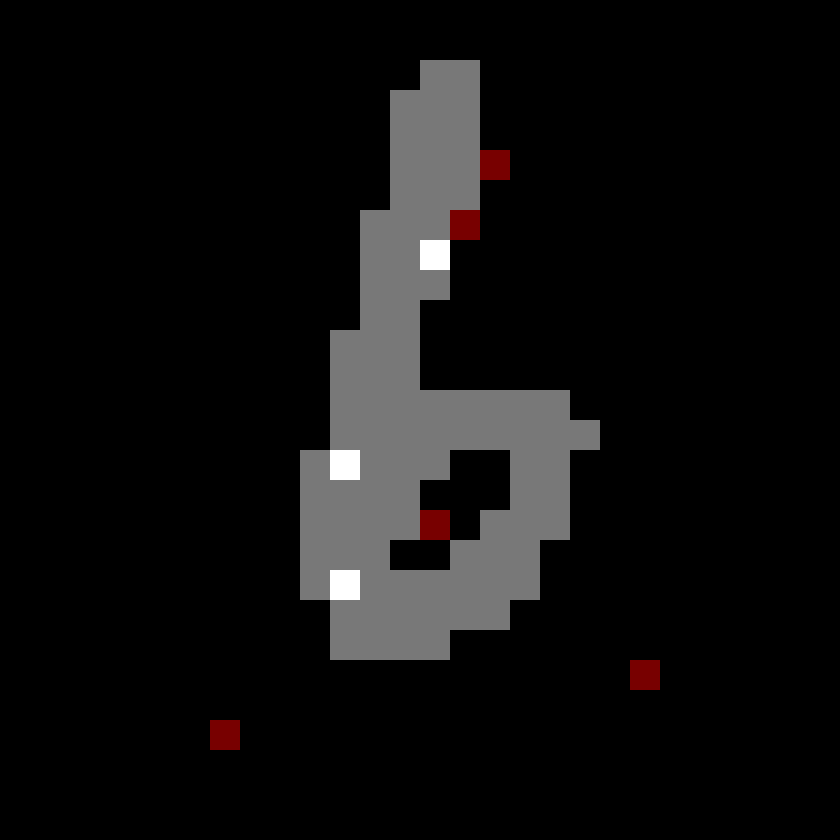}
			\caption{$k^\star = 8$}
		\end{subfigure}
		\begin{subfigure}{.3\textwidth}
			\centering
			\includegraphics[scale=0.1]{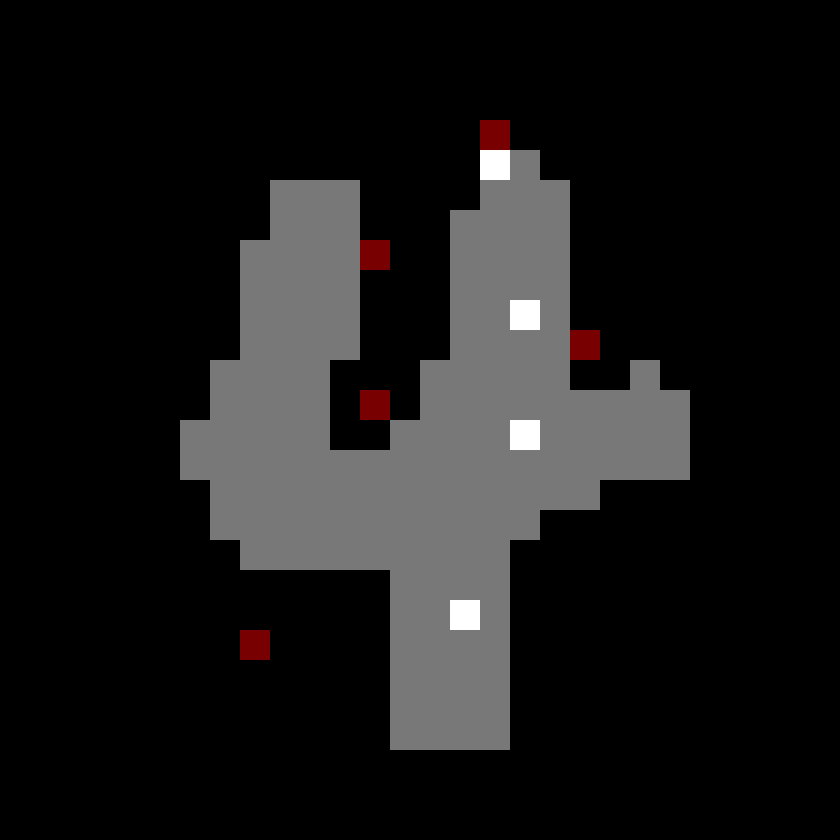}
			\caption{$k^\star = 9$}
		\end{subfigure}

		\begin{subfigure}{.3\textwidth}
			\centering
			\includegraphics[scale=0.1]{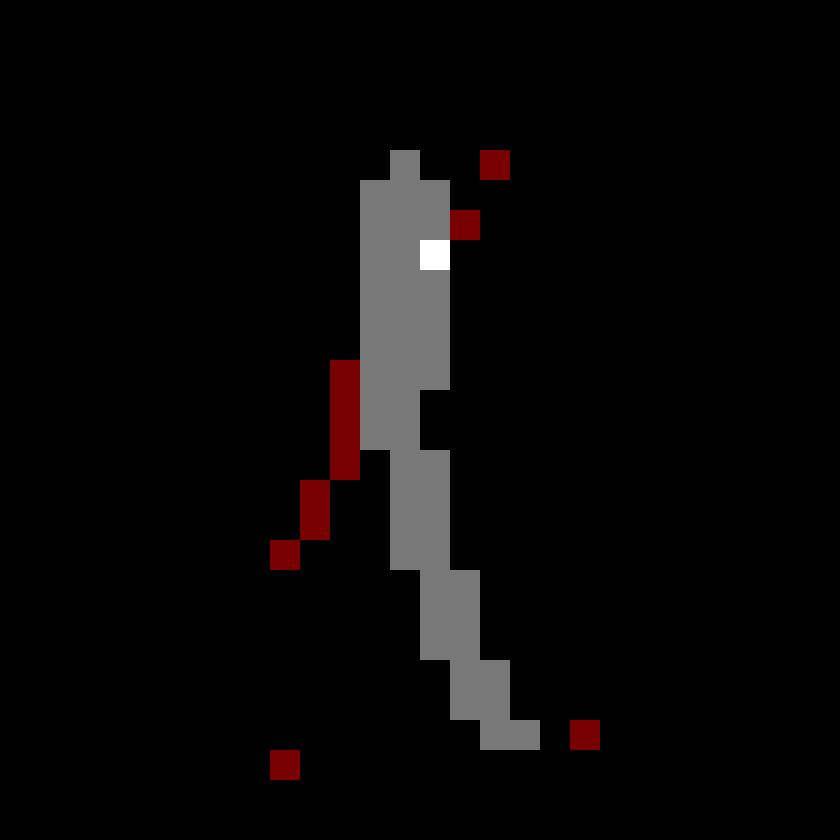}
			\caption{ $k^\star = 12$}
		\end{subfigure}
		\begin{subfigure}{.3\textwidth}
			\centering
			\includegraphics[scale=0.1]{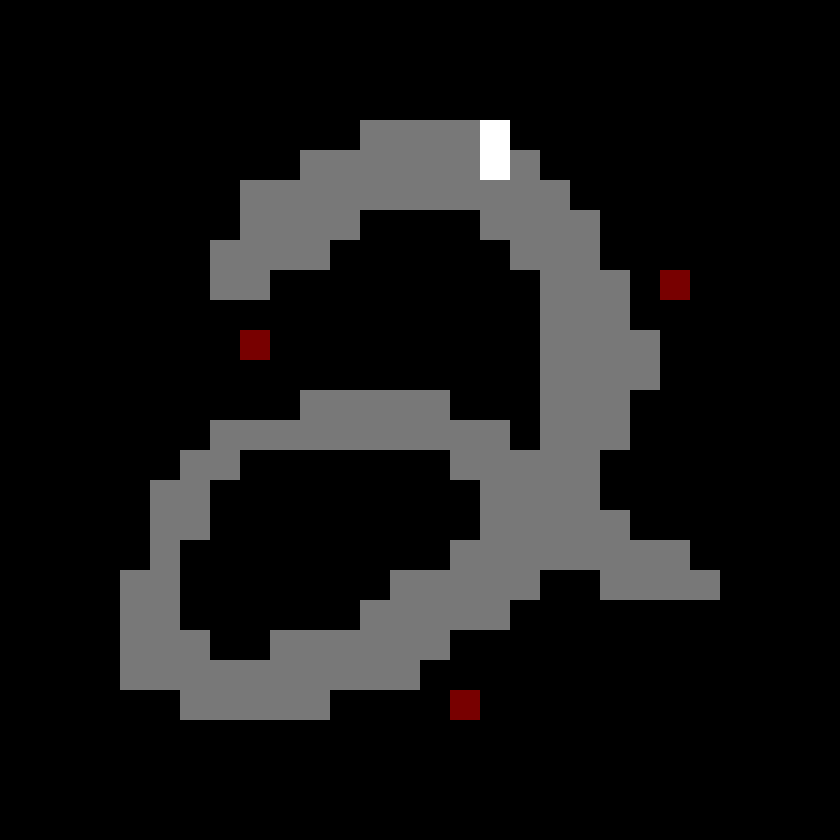}
			\caption{$k^\star = 5$}
		\end{subfigure}
		\begin{subfigure}{.3\textwidth}
			\centering
			\includegraphics[scale=0.1]{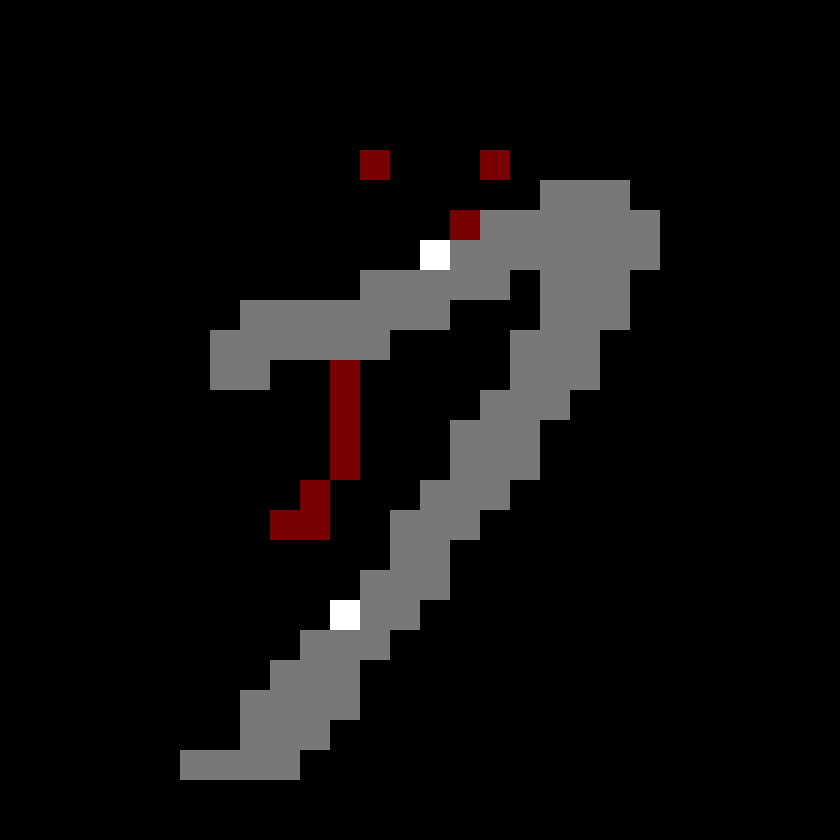}
			\caption{($k^\star = 12$}
		\end{subfigure}
		
		\begin{subfigure}{.3\textwidth}
			\centering
			\includegraphics[scale=0.1]{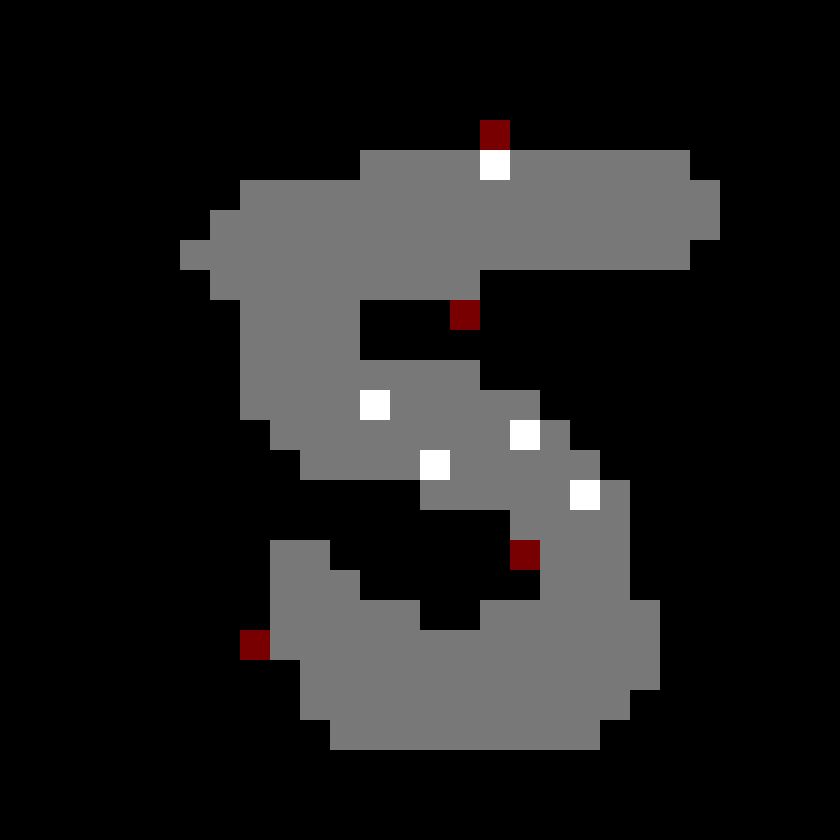}
			\caption{( $k^\star = 9$}
		\end{subfigure}
		\begin{subfigure}{.3\textwidth}
			\centering
			\includegraphics[scale=0.1]{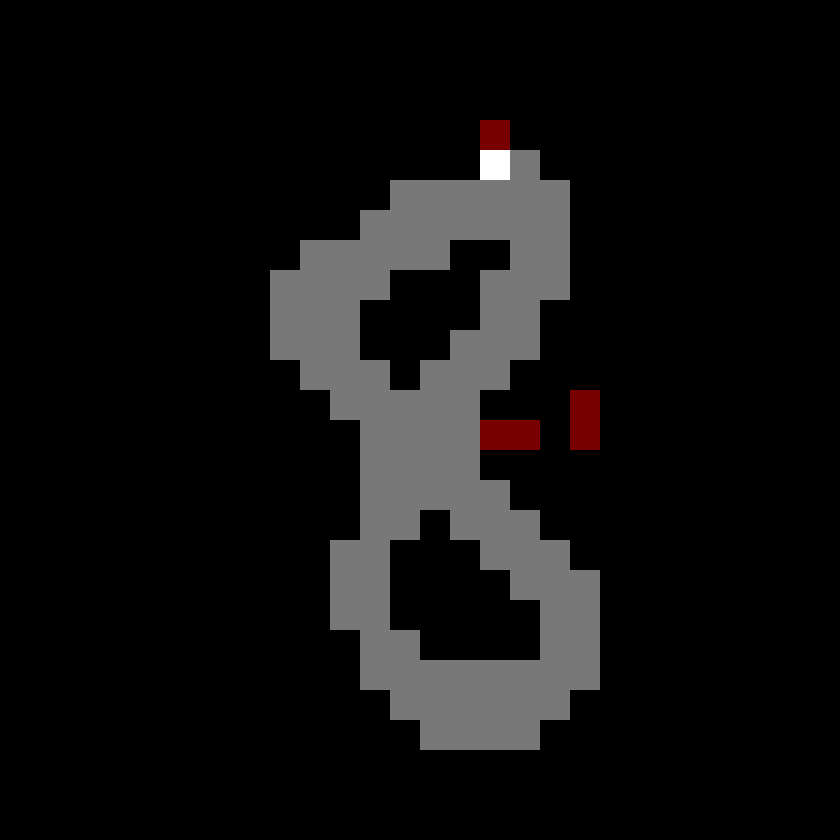}
			\caption{$k^\star = 6$}
		\end{subfigure}
		\begin{subfigure}{.3\textwidth}
			\centering
			\includegraphics[scale=0.1]{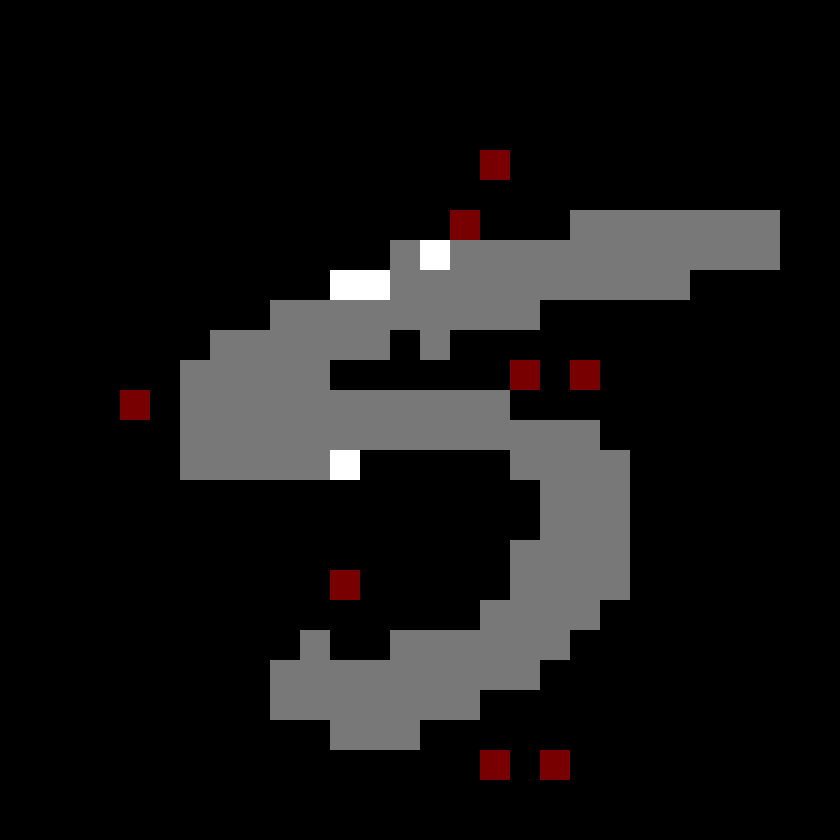}
			\caption{$k^\star = 12$}
		\end{subfigure}
		
		\begin{subfigure}{.3\textwidth}
			\centering
			\includegraphics[scale=0.1]{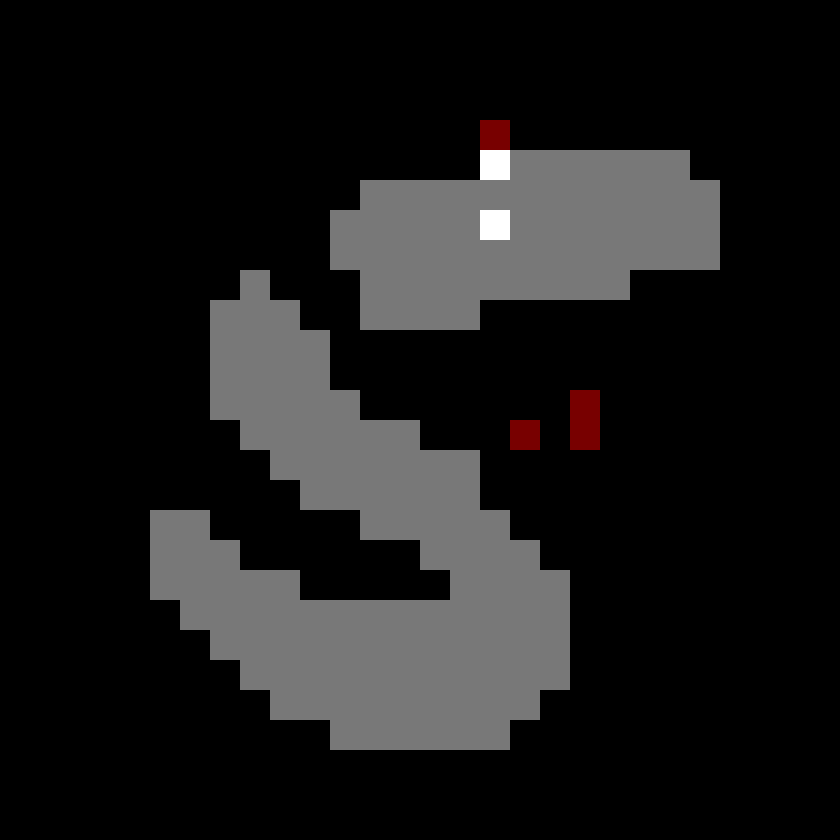}
			\caption{$k^\star = 6$}
		\end{subfigure}
		\begin{subfigure}{.3\textwidth}
			\centering
			\includegraphics[scale=0.1]{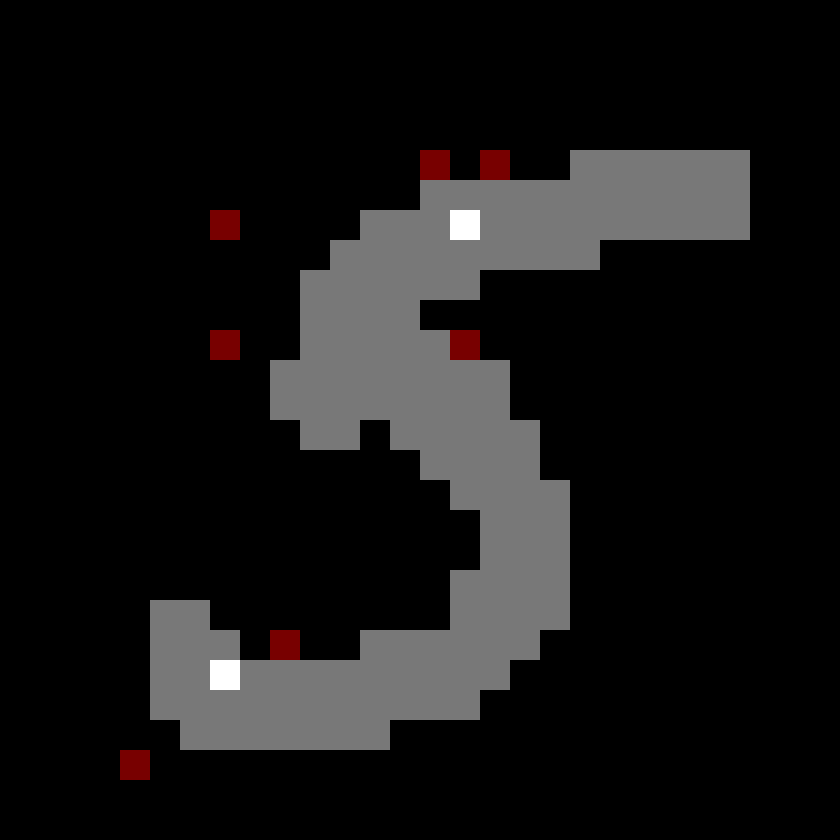}
			\caption{$k^\star = 9$}
		\end{subfigure}
		\begin{subfigure}{.3\textwidth}
			\centering
			\includegraphics[scale=0.1]{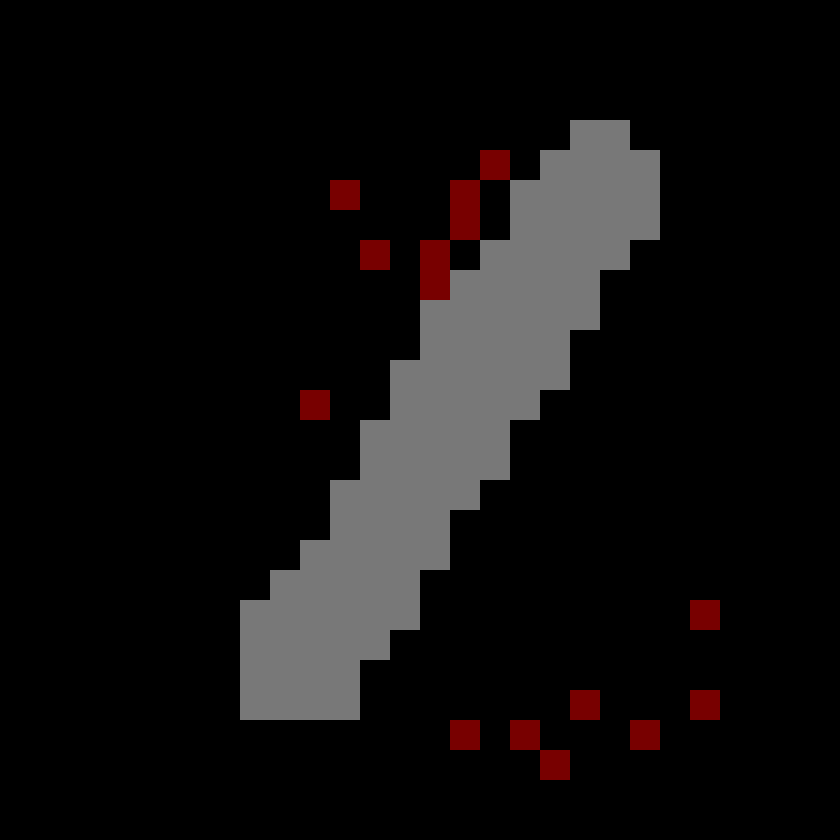}
			\caption{$k^\star = 15$}
		\end{subfigure}

		\caption{Examples of \emph{Minimum Sufficient Reasons} over the MNIST dataset. All images correspond to (correctly classified) negative instances for a decision tree of $1652$ leaves that detects the digit $9$.  Light pixels of the original image are depicted in grey, and the light pixels of the original image that are part of the minimum sufficient reason are colored white. Dark pixels that are part of the minimum sufficient reason are colored with red. Individual captions denote the size of the minimum sufficient reasons with $k^\star$.}
		\label{fig:experiments-negative-9}
\end{figure}

\end{document}